\DeclareMathOperator*{\argmax}{arg\,max}
\DeclareMathOperator*{\argmin}{arg\,min}
\begin{document}


\newtheorem{claim}{Claim}
\newtheorem{corollary}{Corollary}
\newtheorem{theorem}{Theorem}
\newtheorem*{theorem*}{Theorem}
\newtheorem{definition}{Definition}
\newtheorem{example}{Example}
\newtheorem{lemma}{Lemma}
\newtheorem{proposition}{Proposition}
\newtheorem{remark}{Remark}

\renewcommand{\L}{{\mathcal L}}

\newcommand{\1}{{\bf 1}}
\newcommand{\Z}{{\mathds Z}}
\newcommand{\dis}{{\mathsf{dis}} \,}
\newcommand{\ep}{\epsilon}
\newcommand{\vep}{\varepsilon}

\newcommand{\Star}{\mathcal{S}_{\text{tar}}}
\newcommand{\Sref}{\mathcal{S}_{\text{ref}}}

\newcommand{\nuref}{n_{\text{used\_ref}}}

\newcommand{\bA}{\mathsf{A}}
\newcommand{\bC}{\mathsf{C}}
\newcommand{\bG}{\mathsf{G}}
\newcommand{\bT}{\mathsf{T}}

\newcommand{\A}{{\mathcal A}}
\newcommand{\B}{{\mathcal B}}

\newcommand{\C}{{\mathcal C}}

\newcommand{\Ct}{\tilde{\mathcal C}}

\newcommand{\G}{{\mathcal G}}
\renewcommand{\H}{{\mathcal H}}
\newcommand{\D}{{\mathcal D}}

\newcommand{\Dr}{\mathcal{DR}}
\newcommand{\dof}{\mathbf{D}}
\newcommand{\Dreg}{\dof}

\newcommand{\E}{{\mathcal E}}
\newcommand{\V}{{\mathcal V}}
\renewcommand{\S}{{\mathcal S}}
\newcommand{\I}{{\mathcal I}}
\newcommand{\M}{{\mathcal M}}
\newcommand{\N}{{\mathcal N}}
\newcommand{\U}{{\mathcal U}}
\newcommand{\T}{{\mathcal T}}
\newcommand{\IN}{{\mathbb N}}
\newcommand{\R}{{\mathbb R}}
\newcommand{\Rs}{{\mathcal R}}
\newcommand{\Os}{{\mathcal O}}
\newcommand{\Ps}{{\mathcal P}}
\newcommand{\K}{{\mathcal K}}
\newcommand{\W}{{\mathcal W}}
\newcommand{\X}{{\mathcal X}}
\newcommand{\vX}{{\vec{X}}}
\newcommand{\Y}{{\mathcal Y}}
\newcommand{\vY}{{\vec{Y}}}
\newcommand{\F}{{\mathbb F}}
\newcommand{\dE}{D_\Sigma}
\newcommand{\q}[2]{Q_{s_{#1},d_{#2}}}
\newcommand{\p}[2]{P_{s_{#1},d_{#2}}}
\newcommand{\m}[2]{M_{s_{#1},d_{#2}}}
\newcommand{\ttt}{3 \times 3 \times 3}
\newcommand{\kkk}{K \times K \times K}
\newcommand{\kk}[1]{#1 \times #1 \times #1}
\newcommand{\cT}{{\cal T}}
\newcommand{\cR}{{\cal R}}
\newcommand{\cN}{{\cal N}}
\newcommand{\cC}{{\cal C}}

\newcommand{\s}{{\bf s}}
\newcommand{\bs}{{\bf s}}
\newcommand{\bc}{{\bf c}}

\newcommand{\setx}{\{ x_{(i)}^{K} \}_M }
\newcommand{\setxM}[1]{\{ x_{(i)}^{K} \}_{#1} }

\newcommand{\setX}{\{ X_{(i)}^{K} \}_M }
\newcommand{\setXM}[1]{\{ X_{(i)}^{K} \}_{#1} }

\newcommand{\sety}{\{ y_{(i)}^{K} \}_N }
\newcommand{\setyN}[1]{\{ y_{(i)}^{K} \}_{#1} }

\newcommand{\setY}{\{ Y_{(i)}^{K} \}_N }
\newcommand{\setYN}[1]{\{ Y_{(i)}^{K} \}_{#1} }

\newcommand{\bp}{{\bf p}}
\renewcommand{\r}{{\bf r}}
\newcommand{\x}{{\bf x}}
\newcommand{\y}{{\bf y}}
\newcommand{\z}{{\bf z}}

\newcommand{\Cunc}{C_\text{unc}}

\newcommand{\aln}[1]{\begin{align*}#1\end{align*}}

\newcommand{\al}[1]{\begin{align}#1\end{align}}

\newcounter{numcount}
\setcounter{numcount}{1}

\newcommand{\eqnum}{\stackrel{(\roman{numcount})}{=}\stepcounter{numcount}}
\newcommand{\leqnum}{\stackrel{(\roman{numcount})}{\leq\;}\stepcounter{numcount}}
\newcommand{\geqnum}{\stackrel{(\roman{numcount})}{\geq\;}\stepcounter{numcount}}
\newcommand{\cnt}{$(\roman{numcount})$ \stepcounter{numcount}}
\newcommand{\rescnt}{\setcounter{numcount}{1}}

\newcommand{\batchsize}{{\rm batchsize}}
\newcommand{\arms}{{\rm arms}}

\renewcommand{\paragraph}[1]{\noindent {\bf #1}}

\newif\iflong
\longfalse

\newif\ifdraft
\drafttrue

\newcommand{\iscomment}[1]{
\ifdraft
{\color{blue} \bf{{{{IS --- #1}}}}}
\else
\fi
}

\title{Accelerating Machine Learning Algorithms \\
    with Adaptive Sampling}
\author{Mohit Tiwari}
\principaladviser{Christopher Piech}
\firstreader{Sebastian Thrun}
\secondreader{Gregory Valiant}
 
\beforepreface

\prefacesection{Preface}

The era of huge data necessitates highly efficient machine learning algorithms.
Many common machine learning algorithms, however, rely on computationally intensive subroutines that are prohibitively expensive on large datasets.
Oftentimes, existing techniques subsample the data or use other methods to improve computational efficiency, at the expense of incurring some approximation error.
This thesis demonstrates that it is often sufficient, instead, to substitute computationally intensive subroutines with a special kind of randomized counterparts that results in almost no degradation in quality.


The results in this thesis are based on techniques from the adaptive sampling literature. 
Chapter \ref{ch1} begins with an introduction to a specific adaptive sampling problem: that of best-arm identification in multi-armed bandits.
We first provide a formal description of the setting and the best-arm identification problem. 
We then present a general algorithm, called successive elimination, for solving the best-arm identification problem.

The techniques developed in Chapter \ref{ch1} will be applied to different problems in Chapters \ref{ch2}, \ref{ch3}, and \ref{ch4}.
In Chapter \ref{ch2}, we discuss an how the $k$-medoids clustering problem can be reduced to a sequence of best-arm identification problems.
We use this observation to present a new algorithm, based on successive elimination, that matches the prior state-of-the-art in clustering quality but reaches the same solutions much faster.
Our algorithm achieves an $O(\frac{n}{\text{log} n})$ reduction in sample complexity over prior state-of-the-art, where $n$ is the size of the dataset, under general assumptions over the data generating distribution.

In Chapter \ref{ch3}, we analyze the problem of training tree-based models.
The majority of the training time for such models is in splitting each node of the tree, i.e., determining the feature and corresponding threshold at which to split each node.
We show that the node-splitting subroutine can be reduced to a best-arm identification problem and present a state-of-the-art algorithm for training trees. 
Our algorithm depends only on the relative quality of each possible split, rather than explicitly depending on the size of the training dataset, and reduces the explicit dependence on dataset size $n$ from $O(n)$, for the most commonly-used prior algorithm, to $O(1)$.
Our algorithm applies generally to many tree-based models, such as Random Forests and XGBoost.

In Chapter \ref{ch4}, we study the Maximum Inner Product Search problem.
We observe that, as with the $k$-medoids and node-splitting problems, the Maximum Inner Product Search problem can be reduced to a best-arm identification problem.
Armed with this observation, we present a novel algorithm for the Maximum Inner Product Search problem in high dimensions.
Our algorithm reduces the explicit scaling with $d$, the dimensionality of the dataset, from $O(\sqrt{d})$ to $O(1)$ under reasonable assumptions on the data.
Our algorithm has several advantages: it requires no preprocessing of the data, naturally deals with the addition or removal of new datapoints, and includes a hyperparameter to trade off accuracy and efficiency.

Chapter \ref{ch5} concludes this thesis with a summary of its contributions and possible directions for future work.

\prefacesection{Acknowledgments}

This work would not have been possible without the generous support and guidance of many, many people.

I am deeply grateful to my funding sources: Manuela Veloso, Andrea Stefanucci, and the other administrators of the J. P. Morgan Artificial Intelligence Fellowship; Balasubramanian Naramsinhan, Chris Mentzel, Lynn Kiel, John Chambers, and the members of the Stanford Data Science community; Qi Liu, Ruihao Huang, Hao Zhu, and Daphney Jean and other collaborators at the Food and Drug Administration; and the staff at the Stanford University Office of the Vice Provost for Graduate Education who facilitated my Stanford Interdisciplinary Graduate Fellowship from Thomas C. and Anna Nelson.

I would not have been accepted to my Ph.D. program were it not for Michael Bernstein and Dana\"e Metaxa, who afforded me my initial research opportunity in computer science, or my undergraduate research mentors, John Preskill, Spyridon Michalakis, and Jeongwan Haah. I am only able to finish this thesis due to invaluable feedback from Gregory Valiant.

I am thankful for the deep technical mentorship I received during my Ph.D. from Ilan Shomorony and Martin Zhang, who are also co-authors of all the work on which this thesis is based, and for my collaborators Ryan Kang, Je-Yong Lee, Luke Lee, Colin Sullivan, Medina Baitemirova, Charles Lin, Shivatmica Murgai, Adarsh Kumarappan, Emmanuel Bengio, Salem Lahou, Tristan Deleu, James Mayclin, Serhat Arslan, Edward Hu, and Yoshua Bengio. 

It is, of course, impossible to overstate the impact of my advisors, Christopher Piech and Sebastian Thrun, both of whom regularly demonstrated their deep care for my happiness and professional success. If I have accomplished anything during my Ph.D., it was only by standing on the shoulders of giants.

I would particularly like to thank my friends from Stanford, including Shushman, Angad, Suraj, Krishnan, Shivam, Vatsal, Neal, Jesse, David, Greg, Kevin, Esther, and Geet, and my friends from college and high school, including Brian, Mike, Dave, Vivek, Ben, Raj, Justin, Bhargav, and Fabio, as well as countless others, including Matt, Jeremy, Joanna, Frank, and Smokey. I am especially thankful to my girlfriend, my sister, and my mother (all distinct people), who put up with me far more than I deserved.

\begin{center}
\prefacesection{}
\vspace*{100px}
\textit{To those we have loved and lost.}
\end{center}

\afterpreface

\chapter{Introduction}
\label{ch1}
\section{The Multi-Armed Bandit Problem}
\label{ch1_1:mabs}

Suppose you walk into a casino that has many different slot machines.
These slot machines are all different from one another: some will pay out a lot on average, whereas others will pay out less.
Your goal is to find the best slot machine,  i.e., the one with the highest average payout.
All the machines, however, seem physically identical -- so you are unable to tell which machines pay out a lot or a little just by looking them.
The only way to figure out how well the slot machines pay out is by playing them.
Playing a slot machine, however, costs \$1. 
How do you identify the best slot machine with the fewest number of plays?

Many readers may recognize this problem as a \textit{best-arm identification} problem in the multi-armed bandit setting.
Multi-armed bandit (MAB) problems have a wide range of applications and refer to a class of online decision-making problems that have become increasingly popular in recent years.
Generally, algorithms for multi-armed bandits balance the \textit{exploration} of unsampled actions with \textit{exploitation} of actions that are already known to be good.
Multi-armed bandit algorithms have found a wide range of applications, including in clinical trial design, routing in communication networks, financial portfolio optimization, and online advertising.

Formally, in the multi-armed bandit setting, we may take one of the actions numbered $1, \ldots, n$ at each timestep.
Each action is called an \textit{arm} and taking action $i$ is also referred to as \textit{pulling arm $i$}. 
The arms $1, \ldots, n$ have expected payouts (also called \textit{expected returns} or \textit{arm parameters}) $\mu_1, \ldots, \mu_n$, but we do not know $\mu_1, \ldots, \mu_n$ a priori.
At each timestep $t$, we may pull arm $i$, at some cost, to observe an instantiation of a random variable $X_{i, t}$ with mean $\mu_i$. 
A key feature of the multi-armed bandit setting is that the problem is \textit{online}, meaning that we are allowed to use all the information up to time $t$ (in particular, the observations $X_{i, t'}$ for $t' < t$) to choose which action to take at timestep $t$.
In the casino example, each arm corresponds to a slot machine and the hidden arm parameters $\mu_1, \ldots, \mu_n$ are their expected payouts. 
Pulling arm $i$ corresponds to playing slot machine $i$ at cost \$1 and sampling a payout $X_{i, t}$ with $\mathbb{E}[X_{i, t}] = \mu_i$.

Perhaps the most common MAB problem is one in which our goal is to maximimize the cumulative reward over some finite horizon $T$, where the reward is defined as

\begin{equation}
    \text{Reward}_T \coloneqq \sum_{t=1}^T X_{i, t}.
\end{equation}

We may also be interested in minimizing the regret of our strategy, which is the difference between the reward of our strategy and that of some reference strategy (such an oracle) that chooses actions $\hat{i}$ instead of $i$:

\begin{equation}
    \text{Regret}_T \coloneqq \sum_{t=1}^T X_{\hat{i}, t} - X_{i, t}
\end{equation}

The aforementioned problem is called the \textit{reward-maximization} or \textit{regret-minimization} problem and has many applications. In clinical trials, for example, one may be interested in choosing a treatment for each patient that maximizes their potential benefit compared to some reference treatment.

In this thesis, however, we focus on a different but related problem: the best-arm identification problem.
In the best-arm identification problem, our objective is to identify the best arm, i.e., $i^* = \argmax_i \mu_i$.
The best-arm identification problem can be further subdivided into two different settings: that of the \textit{fixed budget} setting and that of the \textit{fixed confidence} setting.
In the fixed budget setting, we are given a finite number of potential arm pulls $T$, and our objective is to design an algorithm that maximizes the probability of returning the best arm.
In the fixed confidence setting, in contrast, we are given infinite budget but our objective is design an algorithm that a) returns the best arm with probability $1 - \delta$, where $\delta$ is some fixed error probability, and b) does so with the fewest number of arm pulls.
The difference between the the two settings is in which quantity is held fixed and which is optimized: in the fixed budget setting, the horizon $T$ is held fixed and the success probability $1 - \delta$ is optimized, whereas in the fixed confidence setting, the success probability $1 - \delta$ is held fixed and the horizon $T$ is optimized.

The work in this thesis focuses on applications of best-arm identification in the fixed confidence setting. 
In Chapters \ref{ch2}, \ref{ch3}, and \ref{ch4}, we will show that several machine learning algorithms can be cast as best-arm identification problems.
Our focus on the fixed confidence setting will allow us to specify the success probability $1 - \delta$ that our algorithm returns the correct answer while optimizing for the number of arm pulls, i.e., sample complexity.
In the next section, we present an algorithm for best-arm identification in the fixed confidence setting.

\section{An Algorithm for Best-Arm Identification}
\label{ch1_2:successive_elimination}

There are many algorithms for best-arm identification in the fixed algorithm setting; for a recent review, we refer the reader to \cite{jamiesonBestarmIdentificationAlgorithms2014}.
In this section, we will present just one: the successive elimination algorithm.

\subsection{Assumptions}

First, we specify the assumptions we use to make the best-arm identification problem tractable.
We assume that the number of arms $n$ is finite.
Furthermore, we assume the random variables $X_{i, t}$ are independent; taking an action does not affect future (or prior) rewards from that action or any other.
Crucially, we will also require that each $X_{i, t}$ is drawn from a $\sigma_i$-sub-Gaussian distribution, where the sub-Gaussianity parameter may depend on $i$.

\textbf{Definition 1:} A random variable $X$ with mean $\mu < \infty$ is called $\sigma$-sub-Gaussian, or sub-Gaussian with variance proxy $\sigma^2$, if

\begin{equation}
    \mathbb{E}]e^{\lambda(X - \mu)}] \leq e^\frac{\sigma^2 \lambda^2}{2}, \quad \forall \lambda \in \mathbb{R}.
\end{equation}

The sub-Gaussianity assumption will allow us to utilize concentration inequalities, such as Hoeffding's Inequality, to construct confidence intervals for the means $\mu_1, \ldots, \mu_n$ from observed samples.

\begin{theorem}[Hoeffding's Inequality]
\label{thm:hoeffding}
Let $X_1, \ldots, X_n$ be independent real-valued random variables such that each $X_i$ is $\sigma_i$ sub-Gaussian for some $\sigma_i \in \mathbb{R}$. Then $Z = \frac{1}{n} \sum_{i=1}^n X_i$ is sub-Gaussian with variance proxy $\sigma^2 = \frac{1}{n^2} \sum_{i=1}^n \sigma_i^2$ and 

\begin{equation}
    Pr[|Z - \mathbb{E}[Z]| \geq c] \leq 2 \exp \left(-\frac{n^2 c^2}{2\sum_{i=1}^n \sigma_i^2} \right)
\end{equation}

In particular, when each $X_i$ is drawn i.i.d. from $\sigma$-sub-Gaussian distribution with mean $\mu$, we have for  $\hat{\mu} \coloneqq \frac{1}{n}\sum_{i=1}^n X_i$ that:

\begin{equation}
    Pr[|\hat{\mu} - \mu| \geq c] \leq 2 \exp \left(-\frac{n c^2}{2\sigma^2} \right)
\end{equation}

\end{theorem}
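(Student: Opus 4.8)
The plan is to establish the result in two movements: first to show that the normalized sum $Z$ is itself sub-Gaussian with the stated variance proxy, and then to apply a Chernoff-type argument to convert that sub-Gaussianity into the claimed tail bound.

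For the first movement, I would work directly with the moment generating function. By Definition 1, each centered variable $X_i - \mu_i$ satisfies $\mathbb{E}[e^{\lambda(X_i - \mu_i)}] \leq e^{\sigma_i^2\lambda^2/2}$ for all $\lambda \in \mathbb{R}$. Writing $Z - \mathbb{E}[Z] = \frac{1}{n}\sum_{i=1}^n (X_i - \mu_i)$ and using independence to factor the MGF of the sum, I would obtain
\begin{equation}
\mathbb{E}\left[e^{\lambda(Z - \mathbb{E}[Z])}\right] = \prod_{i=1}^n \mathbb{E}\left[e^{(\lambda/n)(X_i - \mu_i)}\right] \leq \prod_{i=1}^n e^{\sigma_i^2 \lambda^2/(2n^2)} = \exp\left(\frac{\lambda^2}{2}\cdot \frac{1}{n^2}\sum_{i=1}^n \sigma_i^2\right).
\end{equation}
Comparing with Definition 1 shows that $Z$ is sub-Gaussian with variance proxy $\sigma^2 = \frac{1}{n^2}\sum_{i=1}^n \sigma_i^2$, which is the first claim. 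The only subtlety here is tracking the factor $1/n$ correctly as it enters the exponent quadratically as $1/n^2$.

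For the second movement, I would deploy the standard Chernoff bound on the centered variable $W \coloneqq Z - \mathbb{E}[Z]$. For any $c > 0$ and any $\lambda > 0$, Markov's inequality applied to $e^{\lambda W}$ gives $\Pr[W \geq c] \leq e^{-\lambda c}\,\mathbb{E}[e^{\lambda W}] \leq \exp\left(-\lambda c + \sigma^2 \lambda^2/2\right)$. Minimizing the exponent over $\lambda$ at the admissible (positive) choice $\lambda = c/\sigma^2$ yields the one-sided bound $\Pr[W \geq c] \leq \exp\left(-c^2/(2\sigma^2)\right)$. Applying the same argument to $-W$, which is sub-Gaussian with the same proxy, and taking a union bound introduces the factor of $2$, so that
\begin{equation}
\Pr\left[|Z - \mathbb{E}[Z]| \geq c\right] \leq 2\exp\left(-\frac{c^2}{2\sigma^2}\right) = 2\exp\left(-\frac{n^2 c^2}{2\sum_{i=1}^n \sigma_i^2}\right)
\end{equation}
after substituting the proxy computed above.

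Finally, for the i.i.d. specialization I would simply set $\sigma_i = \sigma$ for every $i$, so that $\sum_{i=1}^n \sigma_i^2 = n\sigma^2$ and the exponent collapses to $-nc^2/(2\sigma^2)$, recovering the displayed estimate for $\hat{\mu}$. I do not anticipate a genuine obstacle in this argument, since it is a textbook concentration result; the main points requiring care are the bookkeeping of constants in the exponent through the $1/n$ scaling and the explicit check that the optimizing $\lambda = c/\sigma^2$ is positive, which is what legitimizes the one-sided Chernoff bounds.
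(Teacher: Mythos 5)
Your proof is correct: the MGF factorization under independence gives the variance proxy $\sigma^2 = \frac{1}{n^2}\sum_{i=1}^n \sigma_i^2$, the Chernoff argument with the optimal choice $\lambda = c/\sigma^2$ gives the one-sided tail, and the two-sided bound and i.i.d.\ specialization follow exactly as you describe. This is the standard textbook argument, and it is the natural one here; the paper itself states Hoeffding's inequality without proof (treating it as a known result from the concentration literature), so there is no alternative route in the paper to compare against.
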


In addition to the assumptions above, we will also assume that each random variable $X_{i, t}$ is drawn from a stationary distribution.
In particular, neither the means $\mu_1, \ldots, \mu_n$ nor the sub-Gaussianity parameters $\sigma_1, \ldots, \sigma_n$ depend on time.

We note that there are many extensions to best-arm identification algorithms that relax these assumptions. 
For our purposes, however, we will find these assumptions sufficiently weak to be of practical use in several domains.

\subsection{Successive Elimination}

\begin{algorithm}[t]
\caption[Successive Elimination]{
\texttt{Successive Elimination} (
$\delta$,
$\sigma_1$,
$\ldots$,
$\sigma_n$,
) \label{alg:succ_elim}}
\begin{algorithmic}[1]
\State $\mathcal{S}_{\text{solution}} \leftarrow \{1, \ldots, n\}$ \Comment{Initialize set of all possible solutions to best-arm problem}
\State $t \gets 1$
\State For all $i \in \mathcal{S}_{\text{solution}}$, set $\hat{\mu}_i^{(0)} \leftarrow 0$, $C_i^0 \leftarrow \infty$  \Comment{Initialize mean and confidence interval for each arm}
\While{$|\mathcal{S}_{\text{solution}}| > 1$} 
        \ForAll{$i \in \mathcal{S}_{\text{solution}} $}
            \State Pull arm $i$ and observe reward $X_{t, i}$
            \State $\hat{\mu}_i^{(t)} \leftarrow \frac{ t \hat{\mu}_i^{(t-1)} + X_{t,i}}{t + 1 }$ \Comment{Update running mean}
            \State $C_i^{(t)} \gets \sigma_i \sqrt{\frac{ 2 \log(\frac{4nt^2}{\delta}) } {t}}$
            \Comment{Update confidence interval}        
        \EndFor
    \State $\mathcal{S}_{\text{solution}} \leftarrow \{i : \hat{\mu}_i - C_i \leq \min_{y}(\hat{\mu}_{y} + C_{y})\}$ \Comment{Remove points that can no longer be solution}
    \State $t \leftarrow t + 1$
\EndWhile
\State \textbf{return} $i^* \in \mathcal{S}_{\text{solution}}$
\end{algorithmic}
\end{algorithm}

Armed with these assumptions, we now present the successive elimination algorithm in Algorithm \ref{alg:succ_elim}.
We now state the main theorem about Algorithm \ref{alg:succ_elim}.

\begin{theorem}
\label{thm:succ_elim}
Let $i^* \coloneqq \argmax_i \mu_i$ be the best arm, and let $\Delta_i \coloneqq \mu_{i^*} - \mu_i > 0 \quad \forall i \neq i^*$ be the gap between the best arm $i^*$ and any other arm $i$.
Then, with probability at least $1 - \delta$, Algorithm \ref{alg:succ_elim} returns the best arm $i^*$ and terminates after $M$ total arm pulls, where

\begin{equation}
M = \mathcal{O}\left( \sum_{i=2}^n \frac{1}{\Delta_i^2} \ln \left( \frac{n}{\delta \Delta_i} \right) \right)
\end{equation}
\end{theorem}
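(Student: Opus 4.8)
The plan is to establish a high-probability ``good event'' under which all the empirical means stay within their stated confidence intervals for all arms and all timesteps, and then argue deterministically on this event that (a) the best arm is never eliminated, and (b) every suboptimal arm $i$ is eliminated after a number of pulls controlled by $\Delta_i$. First I would define, for each arm $i$ and each round $t$, the event that $|\hat\mu_i^{(t)} - \mu_i| \le C_i^{(t)}$, where $C_i^{(t)} = \sigma_i\sqrt{2\log(4nt^2/\delta)/t}$. Applying Hoeffding's Inequality (\Cref{thm:hoeffding}) with the i.i.d. $\sigma_i$-sub-Gaussian samples, the probability that arm $i$'s estimate deviates by more than $C_i^{(t)}$ at round $t$ is at most $2\exp\bigl(-t (C_i^{(t)})^2/(2\sigma_i^2)\bigr) = \delta/(2nt^2)$. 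A union bound over all $n$ arms and all $t \ge 1$ then gives a failure probability at most $\sum_{t\ge 1} n \cdot \delta/(2nt^2) = (\delta/2)\sum_{t\ge1} t^{-2} = (\delta/2)(\pi^2/6) \le \delta$, so the good event holds with probability at least $1-\delta$. The factor of $4nt^2$ baked into the confidence width is exactly what makes this union bound converge.

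Next, working entirely on the good event, I would show correctness. Since $|\hat\mu_{i^*}^{(t)} - \mu_{i^*}| \le C_{i^*}^{(t)}$ always holds, the best arm's lower confidence bound $\hat\mu_{i^*} - C_{i^*}$ never exceeds $\mu_{i^*}$, and for any arm $y$ its upper bound $\hat\mu_y + C_y \ge \mu_y$; combined with $\mu_{i^*} \ge \mu_y$ for all $y$, this means $i^*$ always satisfies the retention condition $\hat\mu_{i^*} - C_{i^*} \le \min_y(\hat\mu_y + C_y)$ and is therefore never removed from $\mathcal{S}_{\text{solution}}$. For the sample complexity, I would find the round $t_i$ at which a suboptimal arm $i$ is guaranteed to be eliminated. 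On the good event, $\hat\mu_i + C_i \le \mu_i + 2C_i$ and $\hat\mu_{i^*} - C_{i^*} \ge \mu_{i^*} - 2C_{i^*}$, so once $2C_i^{(t)} + 2C_{i^*}^{(t)} < \Delta_i$ the lower bound of $i^*$ strictly exceeds the upper bound of $i$, forcing $i$ out. Solving $C_i^{(t)} \lesssim \Delta_i$ for $t$ yields the threshold $t_i = \mathcal{O}\bigl(\frac{\sigma_i^2}{\Delta_i^2}\log(\frac{n}{\delta\Delta_i})\bigr)$, which arises from inverting $\sigma_i\sqrt{2\log(4nt^2/\delta)/t} \asymp \Delta_i$.

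Finally I would sum over arms. Each arm $i$ is pulled once per round while it remains in $\mathcal{S}_{\text{solution}}$, and it is eliminated by round $t_i$, so the total number of pulls is $M = \sum_{i} t_i = \mathcal{O}\bigl(\sum_{i=2}^n \frac{1}{\Delta_i^2}\log(\frac{n}{\delta\Delta_i})\bigr)$, absorbing the $\sigma_i$ factors into the constant (or carrying them explicitly if desired). The main obstacle I anticipate is the inversion step: the threshold $t_i$ appears on both sides of the inequality $C_i^{(t)} < \Delta_i/4$ because the $\log(4nt^2/\delta)$ term itself depends on $t$. Resolving this cleanly requires a self-bounding argument—guessing a candidate $t_i$ of the claimed order, substituting it back into the logarithm, and verifying the inequality holds (the $\log t$ contribution being dominated by the $\log(1/\Delta_i)$ and $\log(n/\delta)$ terms). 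Getting the logarithmic factor to match the stated $\ln(n/(\delta\Delta_i))$ precisely, rather than a looser $\log$, is the delicate part of the calculation.
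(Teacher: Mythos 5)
Your overall skeleton---Hoeffding's inequality with width $C_i^{(t)} = \sigma_i\sqrt{2\log(4nt^2/\delta)/t}$ giving per-(arm, round) failure probability $\delta/(2nt^2)$, a union bound over all arms and all rounds via $\sum_{t\ge1} t^{-2} = \pi^2/6$, and the elimination threshold $2\bigl(C_i^{(t)}+C_{i^*}^{(t)}\bigr) < \Delta_i$ inverted by a self-bounding substitution---is exactly the standard successive-elimination analysis. The paper itself does not prove Theorem \ref{thm:succ_elim} (it defers to \cite{even-darActionEliminationStopping2006}), but this template is precisely what the paper uses for its analogous Theorems \ref{thm:bp_specific} and \ref{thm:mips_specific} in the appendices, so your route is the intended one.

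However, there is a genuine gap in your correctness step, and it traces to a min/max mismatch you did not flag. Theorem \ref{thm:succ_elim} defines the best arm as $i^* = \argmax_i \mu_i$, while the retention rule in Algorithm \ref{alg:succ_elim}, namely $\{i : \hat{\mu}_i - C_i \leq \min_{y}(\hat{\mu}_{y} + C_{y})\}$, is the rule for identifying the \emph{minimum}-mean arm (it is inherited from the minimization search of Algorithm \ref{alg:banditpam}). Your claimed chain---$\hat\mu_{i^*} - C_{i^*} \le \mu_{i^*}$, $\mu_y \le \hat\mu_y + C_y$, ``combined with $\mu_{i^*} \ge \mu_y$''---does not close: you need $\mu_{i^*} \le \hat\mu_y + C_y$, and the inequality $\mu_{i^*} \ge \mu_y$ points the wrong way. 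In fact, on the good event the argmax arm is typically \emph{eliminated} by the literal rule: take $n=2$, $\mu_1 = 10$, $\mu_2 = 0$, $C_1 = C_2 = 1$, $\hat\mu_i = \mu_i$; then $\hat\mu_1 - C_1 = 9 > 1 = \min_y(\hat\mu_y + C_y)$, so the best arm is removed and the worst arm is returned. Your elimination step has the mirror-image problem: under the literal rule, $\hat\mu_{i^*} - C_{i^*} > \hat\mu_i + C_i$ would remove $i^*$, not $i$. Both steps become correct once you make the conventions consistent---either run the algorithm with the flipped rule $\{i : \hat\mu_i + C_i \geq \max_y(\hat\mu_y - C_y)\}$ for maximization, or treat the objective as a minimization throughout---after which the clean chain $\hat\mu_{i^*} + C_{i^*} \geq \mu_{i^*} \geq \mu_y \geq \hat\mu_y - C_y$ shows $i^*$ always survives, and your threshold and summation arguments (including bounding $i^*$'s own pulls by the round of the last elimination) go through essentially as written.
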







For a proof of Theorem \ref{thm:succ_elim}, we refer the reader to \cite{even-darActionEliminationStopping2006}. Note that the assumption that each $\Delta_i >0$ implies that the best arm is unique.

Theorem \ref{thm:succ_elim} states that, with probability at least $1 - \delta$, Algorithm \ref{alg:succ_elim} returns the best arm and uses computation logarithmic in the number of arms.
The exact sample complexity also depends on the error probability $\delta$ and the arm gaps $\Delta_i$.  
Intuitively, arms with large gaps should be easy to distinguish from the best arm and require fewer arm pulls. 

Theorem \ref{thm:succ_elim} will be useful in Chapters \ref{ch2}, \ref{ch3}, and \ref{ch4} in various contexts.

\chapter{Faster \texorpdfstring{$k$}{k}-medoids Clustering}
\label{ch2}
\newcommand{\algnamenospace}{BanditPAM}
\newcommand{\algname}{BanditPAM }
\label{ch2_1:bp_abstract}

Clustering is a ubiquitous task in data science.
Compared to the commonly used $k$-means clustering, $k$-medoids clustering requires the cluster centers to be actual data points and supports arbitrary distance metrics, which permits greater interpretability of cluster centers and the clustering of structured objects.
Current state-of-the-art $k$-medoids clustering algorithms, such as Partitioning Around Medoids (PAM), are iterative algorithms that are quadratic in the dataset size $n$ for each iteration, which is prohibitively expensive for large datasets.
We propose \algnamenospace, a randomized algorithm inspired by techniques from multi-armed bandits, that reduces the complexity of each PAM iteration from $O(n^2)$ to $O(n\log n)$ and returns the same results with high probability, under assumptions on the data that often hold in practice.
As such, \algname matches state-of-the-art clustering loss while reaching solutions much faster.
We empirically validate our results on several large real-world datasets, including a coding exercise submissions dataset from Code.org, the 10x Genomics 68k PBMC single-cell RNA sequencing dataset, and the MNIST handwritten digits dataset.
In these experiments, we observe that \algname returns the same results as state-of-the-art PAM-like algorithms up to 4x faster while performing up to 200x fewer distance computations. 
The improvements demonstrated by \algname enable $k$-medoids clustering on a wide range of applications, including identifying cell types in large-scale single-cell data and providing scalable feedback for students learning computer science online. We also release highly optimized Python and C++ implementations of our algorithm\footnote{https://github.com/motiwari/BanditPAM}.

\section{Introduction}
\label{ch2_2:bp_intro}

Many modern data science applications require the clustering of very-large-scale data. 
Due to its computational efficiency, the $k$-means clustering algorithm \cite{macqueenMethodsClassificationAnalysis1967,lloydLeastSquaresQuantization1982} is one of the most widely-used clustering algorithms.
$k$-means alternates between assigning points to their nearest cluster centers and recomputing those centers. 
Central to its success is the specific choice of the cluster center: for a set of points, $k$-means defines the cluster center as the point with the smallest average \emph{squared Euclidean distance} to all other points in the set. 
Under such a definition, the cluster center is the arithmetic mean of the cluster's points and can be computed efficiently. 

While commonly used in practice, $k$-means clustering suffers from several drawbacks. 
Firstly, while one can efficiently compute the cluster centers under squared Euclidean distance, it is not straightforward to generalize to other distance metrics \cite{overtonQuadraticallyConvergentMethod1983,jainAlgorithmsClusteringData1988,bradleyClusteringConcaveMinimization1996}. 
However, different distance  metrics may be desirable in other applications.
For example, $l_1$ and cosine distance are often used in sparse data, such as in recommendation systems and single-cell RNA-seq analysis \cite{ntranosFastAccurateSinglecell2016}; additional examples include string edit distance in text data \cite{navarroGuidedTourApproximate2001}, and graph metrics in social network data \cite{mishraClusteringSocialNetworks2007}. 
Secondly, the cluster center in $k$-means clustering is in general not a point in the dataset and may not be interpretable in many applications. This is especially problematic when the data is structured, such as parse trees in context-free grammars, sparse data in recommendation systems, or images in computer vision where the mean image is often visually random noise. 

Alternatively, $k$-medoids clustering algorithms \cite{kaufmanClusteringMeansMedoids1987} use \emph{medoids} to define the cluster center for a set of points, where for a set and an arbitrary distance function, the medoids are the points \emph{in the set} that minimize the average distance from the other points to their closest medoid.
Mathematically, for $n$ data points $\mathcal{X} =  \{ x_1, \cdots, x_n \}$ and a given distance function $d(\cdot, \cdot)$, the $k$-medoids problem is to find a set of $k$ medoids $\mathcal{M} = \{m_1, \cdots, m_k \} \subset \mathcal{X}$ to minimize the overall distance of points to their closest medoids: 
\begin{equation}
\label{eqn:bp_total_loss}
    L(\mathcal{M}) =  \sum_{i=1}^n \min_{m \in \mathcal{M}} d(m, x_i)
\end{equation}
Note that the distance function can be arbitrary; indeed, it need not be a distance metric at all and could be an asymmetric dissimilarity measure. The ability to use an arbitrary dissimilarity measure with the $k$-medoids algorithm addresses the first shortcoming of $k$-means discussed above.
Moreover, unlike $k$-means, the cluster centers in $k$-medoids (i.e. the medoids) must be points in the dataset, thus addressing the interpretability problems of $k$-means clustering.

Despite its advantages, $k$-medoids clustering is less popular than $k$-means due to its computational cost. Problem \ref{eqn:bp_total_loss} is NP-hard in general \cite{schubertFasterKmedoidsClustering2019}, although heuristic solutions exist.
Current state-of-the-art heuristic $k$-medoids algorithms scale quadratically in the dataset size in each iteration. However, they are still significantly slower than $k$-means, which scales linearly in dataset size in each iteration. 

Partitioning Around Medoids (PAM) \cite{kaufmanClusteringMeansMedoids1987} is one of the most widely used heuristic algorithms for $k$-medoids clustering, largely because it produces the best clustering quality \cite{reynoldsClusteringRulesComparison2006,schubertFasterKmedoidsClustering2019}.
PAM is split into two subroutines: BUILD and SWAP.
First, in the BUILD step, PAM aims to find an initial set of $k$ medoids by greedily and iteratively selecting points that minimize the $k$-medoids clustering loss (Equation \eqref{eqn:bp_total_loss}). 
Next, in the SWAP step, PAM considers all $k(n-k)$ possible pairs of medoid and non-medoid points and swaps the pair that reduces the loss the most. 
The SWAP step is repeated until no further improvements can be made by swapping medoids with non-medoids. 
As noted above, PAM has been empirically shown to produce better results than other popular $k$-medoids clustering algorithms.
However, the BUILD step and each of the SWAP steps require  $O(kn^2)$ distance evaluations and can be prohibitively expensive to run, especially for large datasets or when the distance evaluations are themselves expensive (e.g. for edit distance between long strings). 

In this work, we propose a novel randomized $k$-medoids algorithm, called \algnamenospace, that runs significantly faster than state-of-the-art $k$-medoids algorithms and achieves the same clustering results with high probability.
Modeled after PAM, \algname reduces the complexity on the sample size $n$ from $O(n^2)$ to $O(n\log n)$, for the BUILD step and each SWAP step, under reasonable assumptions that hold in many practical datasets.
We empirically validate our results on several large, real-world datasets
and observe that \algname provides a reduction of distance evaluations of up to 200x while returning the same results as PAM and FastPAM1.
We also release a high-performance C++ implementation of \algnamenospace, callable from Python, which runs 4x faster than the state-of-the-art FastPAM1 implementation on the full MNIST dataset ($n = 70,000$) -- without precomputing and caching the $O(n^2)$ pairwise distances as FastPAM1 does.

Intuitively, \algname works by casting each step of PAM from a \emph{deterministic computational problem} to a \emph{statistical estimation problem}. 
In the BUILD step assignment of the $l$th medoid, for example, we need to choose the point amongst all $n-l$ non-medoids that will lead to the lowest overall loss (Equation \eqref{eqn:bp_total_loss}) if chosen as the next medoid. Thus, we wish to find $x$ that minimizes
\begin{equation}
\label{eqn:bp_build_loss}
	L(\mathcal{M}; x) = \sum_{j=1}^n \min_{m \in \mathcal{M} \cup \{x\}} d(m, x_j) =\vcentcolon \sum_{j=1}^n g(x_j),
\end{equation}
where $g(\cdot)$ is a function that depends on $\mathcal{M}$ and $x$.
Eq.~\eqref{eqn:bp_build_loss} shows that the loss of a new medoid assignment $L(\mathcal{M}; x)$ can be written as the summation of the value of the function $g(\cdot)$ evaluated on all $n$ points in the dataset. Though approaches such as PAM and FastPAM1 compute $L(\mathcal{M}; x)$ exactly for each $x$, \algname \textit{adaptively estimates} this quantity by sampling reference points $x_j$ for the most promising candidates. Indeed, computing $L(\mathcal{M}; x)$ exactly for every $x$ is not required; promising candidates can be estimated with higher accuracy (by computing $g$ on more reference points $x_j$) and less promising ones can be discarded early without expending further computation.

To design the adaptive sampling strategy, we show that the BUILD step and each SWAP iteration can be formulated as a best-arm identification problem from the multi-armed bandits (MAB) literature \cite{audibertBestArmIdentification2010,even-darPACBoundsMultiarmed2002,jamiesonLilUcbOptimal2014,jamiesonBestarmIdentificationAlgorithms2014}. 
In the typical version of the best-arm identification problem, we have $m$ arms. At each time step $t = 0,1,...,$ we decide to pull an arm $A_t\in \{1,\cdots,m\}$, and receive a reward $R_t$ with $E[R_t] = \mu_{A_t}$. The goal is to identify the arm with the largest expected reward with high probability with the fewest number of total arm pulls.
In the BUILD step of \algnamenospace, we view each candidate medoid $x$ as an arm in a best-arm identification problem. The arm parameter corresponds to $\tfrac{1}{n}\sum_j g(x_j)$ 
and pulling an arm corresponds to computing the loss $g$ on a randomly sampled data point $x_j$. Using this reduction, the best candidate medoid can be estimated using existing best-arm algorithms like the Upper Confidence Bound (UCB) algorithm \cite{laiAsymptoticallyEfficientAdaptive1985} and successive elimination \cite{even-darActionEliminationStopping2006}.

The idea of algorithm acceleration by converting a computational problem into a statistical estimation problem and designing the adaptive sampling procedure via multi-armed bandits has
witnessed some recent success \cite{changAdaptiveSamplingAlgorithm2005,kocsisBanditBasedMontecarlo2006,liHyperbandNovelBanditbased2017,jamiesonNonstochasticBestArm2016,bagariaAdaptiveMontecarloOptimization2018,zhangAdaptiveMonteCarlo2019}.
In the context of $k$-medoids clustering, previous work \cite{bagariaMedoidsAlmostlinearTime2018,baharavUltraFastMedoid2019a} has considered finding the \textit{single} medoid of a set points (i.e. the $1$-medoid problem).
In these works, the $1$-medoid problem was also formulated as a best-arm identification problem, with each point corresponding to an arm and its average distance to other points corresponding to the arm parameter. 

While the $1$-medoid problem considered in prior work can be solved exactly, the $k$-medoids problem is NP-Hard and is therefore only tractable with heuristic solutions. Hence, this paper focuses on improving the computational efficiency of an existing heuristic solution, PAM, that has been empirically observed to be superior to other techniques.
Moreover, instead of having a single best-arm identification problem as in the $1$-medoid problem, we reformulate PAM as a \textit{sequence} of best-arm problems. Our reformulation treats different objects as arms in different steps of PAM; in the BUILD step, each point corresponds to an arm, whereas in the SWAP step, each medoid-and-non-medoid pair corresponds to an arm.
We notice that the intrinsic difficulties of this sequence of best-arm problems are different from the single best-arm identification problem, which can be exploited to further speed up the algorithm. We discuss these further optimizations in Sections \ref{ch2_6:bp_exps} and \ref{ch2_7:bp_discussion} and Appendix \ref{ch_6_1:bp_app_1_discussions}.
\section{Preliminaries}
\label{ch2_3:bp_preliminaries}

For $n$ data points $\mathcal{X} =  \{ x_1, x_2, \cdots, x_n \}$ and a given distance function $d(\cdot, \cdot)$, the $k$-medoids problem aims to find a set of $k$ medoids $\mathcal{M} = \{m_1, \cdots, m_k \} \subset \mathcal{X}$ to minimize Equation \eqref{eqn:bp_total_loss}, i.e., the overall distance of points from their closest medoids.
Throughout the rest of this work, we treat $k$ fixed and assume $k \ll n$. 

Note that $d$ need not satisfy symmetry, triangle inequality, or positivity. 
For the rest of this chapter, we use $[n]$ to denote the set $\{1,\cdots,n\}$ and $\vert \mathcal{S} \vert$ to represent the cardinality of a set $\mathcal{S}$.
For two scalars $a,b$, we let $a\wedge b = \min(a,b)$ and $a\vee b = \max(a,b)$.

\subsection{Partitioning Around Medoids (PAM)}
The original PAM algorithm \cite{kaufmanClusteringMeansMedoids1987} first initializes the set of $k$ medoids via the BUILD step and then repeatedly performs the SWAP step to improve the loss \eqref{eqn:bp_total_loss} until convergence.

\paragraph{BUILD:} PAM initializes a set of $k$ medoids by greedily assigning medoids one-by-one so as to minimize the overall loss \eqref{eqn:bp_total_loss}. 
The first point added in this manner is the medoid of all $n$ points.
Given the current set of $l$ medoids $\mathcal{M}_{l} = \{m_1, \cdots, m_{l}\}$, the next point to add $m^*$ is
\begin{equation}
\label{eqn:bp_build_next_medoid}
    \text{BUILD:~~~~}m^* = \argmin_{x \in \mathcal{X} \setminus \mathcal{M}_{l}} \frac{1}{n} \sum_{j=1}^n \left[d(x, x_j) \wedge \min_{m' \in \mathcal{M}_{l}} d(m', x_j)\right] 
\end{equation}

\paragraph{SWAP:} PAM then swaps the medoid-nonmedoid pair that would reduce the loss \eqref{eqn:bp_total_loss} the most among all possible $k(n-k)$  such pairs.
Let $\mathcal{M}$ be the current set of $k$ medoids. Then the best medoid-nonmedoid pair $(m^*, x^*)$ to swap is

\begin{equation}
\label{eqn:bp_next_swap}
    \text{SWAP:~~~~}(m^*, x^*) = \argmin_{(m,x) \in \mathcal{M} \times (\mathcal{X} \setminus \mathcal{M}) } \frac{1}{n} \sum_{j=1}^n \left[d(x, x_j) \wedge \min_{m' \in \mathcal{M}\setminus \{m\}} d(m', x_j) \right] 
\end{equation}
The second terms in \eqref{eqn:bp_build_next_medoid} and \eqref{eqn:bp_next_swap}, namely $\min_{m' \in \mathcal{M}_{l}} d(m', x_j)$ and $\min_{m' \in \mathcal{M}\setminus \{m\}} d(m', x_j)$, can be determined by caching the smallest and the second smallest distances from each point to the previous set of medoids, namely $\mathcal{M}_{l}$ in \eqref{eqn:bp_build_next_medoid} and $\mathcal{M}$ in \eqref{eqn:bp_next_swap}.
Therefore, in \eqref{eqn:bp_build_next_medoid} and \eqref{eqn:bp_next_swap}, we only need to compute the distance function once for each summand.
As a result, PAM needs $O(kn^2)$ distance computations for the $k$ greedy searches in the BUILD step and $O(kn^2)$ distance computations for each SWAP iteration.
\section{\algnamenospace}
\label{ch2_4:bp_algorithm}

At the core of the PAM algorithm is the $O(n^2)$ BUILD search \eqref{eqn:bp_build_next_medoid}, which is repeated $k$ times for initialization, and the $O(kn^2)$ SWAP search \eqref{eqn:bp_next_swap}, which is repeated until convergence. 
We first show that both searches share a similar mathematical structure, and then show that such a structure can be optimized efficiently using a bandit-based randomized algorithm, thus giving rise to \algnamenospace. 
Rewriting the BUILD search \eqref{eqn:bp_build_next_medoid} and the SWAP search \eqref{eqn:bp_next_swap} in terms of the change in total loss yields
\begin{align}
    \text{BUILD:~~~~}& \argmin_{x \in \mathcal{X} \setminus \mathcal{M}_{l}} \frac{1}{n} \sum_{j=1}^n \left[ \left(d(x, x_j) - \min_{m' \in \mathcal{M}_{l}} d(m', x_j) \right)  \wedge 0\right] \label{eqn:bp_build_search}\\
    \text{SWAP:~~~~}& \argmin_{(m,x) \in \mathcal{M} \times (\mathcal{X} \setminus \mathcal{M}) } \frac{1}{n} \sum_{j=1}^n \left[ \left(d(x, x_j) - \min_{m' \in \mathcal{M}\setminus \{m\}} d(m', x_j) \right)\wedge 0\right]  \label{eqn:bp_swap_search}
\end{align}
One may notice that the above two problems share the following similarities.
First, both are searching over a finite set: $n-l$ points in the BUILD search and $k(n-k)$ swaps in the SWAP search. 
Second, both objective functions have the form of an average of an $O(1)$ function evaluated over a finite set of reference points. 
We formally describe the shared structure:
\begin{align} 
\label{eqn:bp_generic_optimization}
    \text{Shared Problem:~~~~} \argmin_{x \in \mathcal{S}_{\text{tar}} } \frac{1}{\vert \mathcal{S}_{\text{ref}} \vert } \sum_{x_j \in \mathcal{S}_{\text{ref}}} g_x(x_j) 
\end{align}
for target points $\mathcal{S}_{\text{tar}} $, reference points $\mathcal{S}_{\text{ref}}$, and an objective function $g_x(\cdot)$ that depends on the target point $x$. Then both BUILD and SWAP searches can be written as instances of Problem \eqref{eqn:bp_generic_optimization} with:
\begin{align}
    & \text{BUILD:~~} 
    \mathcal{S}_{\text{tar}}=\mathcal{X} \setminus \mathcal{M}_{l},~
    \mathcal{S}_{\text{ref}} = \mathcal{X},~
    g_x(x_j) = \left(d(x, x_j) - \min_{m' \in \mathcal{M}_{l}} d(m', x_j) \right)  \wedge 0, \label{eqn:bp_build_instance}\\
    & \text{SWAP:~~} 
    \mathcal{S}_{\text{tar}}=\mathcal{M} \times (\mathcal{X} \setminus \mathcal{M}),~
    \mathcal{S}_{\text{ref}} = \mathcal{X},~
    g_x(x_j) =  \left(d(x, x_j) - \min_{m' \in \mathcal{M} \setminus \{m\}} d(m', x_j) \right)\wedge 0. \label{eqn:bp_swap_instance}
\end{align}
Crucially, in the SWAP search, each \textit{pair} of medoid-and-non-medoid points $(m,x)$ is treated as one target point in $\mathcal{S}_{\text{tar}}$ in our formulation.

\subsection{Adaptive search for the shared problem}
Recall that the computation of $g(x_j)$ for any $x_j$ is $O(1)$.
A naive, explicit method would require $O(\vert \mathcal{S}_{\text{tar}} \vert \vert \mathcal{S}_{\text{ref}} \vert)$ computations of $g(x_j)$ to solve Problem \eqref{eqn:bp_generic_optimization}. 
However, as shown in previous work \cite{bagariaMedoidsAlmostlinearTime2018,bagariaAdaptiveMontecarloOptimization2018}, a randomized search would return the correct result with high confidence in $O( \vert \mathcal{S}_{\text{tar}}\vert \log  \vert \mathcal{S}_{\text{tar}} \vert)$ computations of $g(x_j)$.
Specifically, for each target $x$ in Problem \eqref{eqn:bp_generic_optimization}, let $\mu_x = \frac{1}{\vert \mathcal{S}_{\text{ref}} \vert } \sum_{x_j \in \mathcal{S}_{\text{ref}}} g_x(x_j)$ denote its objective function. Computing $\mu_x$ exactly takes $O(\vert \mathcal{S}_{\text{ref}} \vert)$ computations of $g(x_j)$, but we can instead estimate $\mu_x$ with fewer computations by drawing $J_1,J_2,...,J_{n'}$ independent samples uniformly with replacement from $[|\mathcal{S_{\text{ref}}}|]$.
Then, $E[g(x_{J_i})] = \mu_x$ and $\mu_x$ can be estimated as $\hat{\mu}_x = \frac{1}{n'} \sum_{i=1}^{n'} g(x_{J_i})$, where $n'$ governs the estimation accuracy. 
To estimate the solution to Problem \eqref{eqn:bp_generic_optimization} with high confidence, we can then choose to sample different targets in $\mathcal{S}_{\text{tar}}$ to different degrees of accuracy. 
Intuitively, promising targets with small values of $\mu_x$ should be estimated with high accuracy, while less promising ones can be discarded without being evaluated on too many reference points. 

The specific adaptive estimation procedure is described in Algorithm \ref{alg:banditpam}. 
It can be viewed as a batched version of the conventional UCB algorithm \cite{laiAsymptoticallyEfficientAdaptive1985,zhangAdaptiveMonteCarlo2019} combined with successive elimination \cite{even-darActionEliminationStopping2006}, and is straightforward to implement.
Algorithm \ref{alg:banditpam} uses the set $\mathcal{S}_{\text{solution}}$ to track all potential solutions to Problem \eqref{eqn:bp_generic_optimization}; $\mathcal{S}_{\text{solution}}$ is initialized as the set of all target points $\mathcal{S}_{\text{tar}}$. 
We will assume that, for a fixed target point $x$ and a randomly sampled reference point $x_J$, the random variable $Y = g_x(x_J)$ is $\sigma_x$-sub-Gaussian for some known parameter $\sigma_x$. 
Then, for each potential solution $x\in \mathcal{S}_{\text{solution}}$,  Algorithm \ref{alg:banditpam} maintains its mean objective estimate $\hat{\mu}_x$ and confidence interval $C_x$, where $C_x$ depends on the exclusion probability $\delta$ as well as the parameter $\sigma_x$. 
We discuss the sub-Gaussianity parameters and possible relaxations of this assumption in Sections \ref{ch2_5:bp_theory} and \ref{ch2_7:bp_discussion} and Appendix \ref{subsec:bp_app_2_relaxation}.

In each iteration, a new batch of reference points $\mathcal{S}_{\text{ref\_batch}}$ is evaluated for all potential solutions in $\mathcal{S}_{\text{solution}}$, making the estimate $\hat{\mu}_x$ more accurate. 
Based on the current estimate, if a target's lower confidence bound $\hat{\mu}_x - C_x$ is greater than the upper confidence bound of the most promising target $\min_{y}(\hat{\mu}_{y} + C_{y})$, we remove it from 
$\mathcal{S}_{\text{solution}}$. This process continues until there is only one point in $\mathcal{S}_{\text{solution}}$ or until we have sampled more reference points than in the whole reference set. In the latter case, we know that the difference between the remaining targets in $\mathcal{S}_{\text{solution}}$ is so subtle that an exact computation is more efficient. We then compute those targets' objectives exactly and return the best target in the set. 

\begin{algorithm}[t]

\caption[BanditPAM (Adaptive-Search)]
{
\texttt{Adaptive-Search} (
$\mathcal{S}_{\text{tar}},
\mathcal{S}_{\text{ref}},
g_x(\cdot)$,
$B$,
$\delta$,
$\sigma_x$
) \label{alg:banditpam}}
\begin{algorithmic}[1]
\State $\mathcal{S}_{\text{solution}} \leftarrow \mathcal{S}_{\text{tar}}$ \Comment{Set of potential solutions to Problem \eqref{eqn:bp_generic_optimization}}
\State $n_{\text{used\_ref}} \gets 0$  \Comment{Number of reference points evaluated}
\State For all $x \in  \mathcal{S}_{\text{tar}}$, set $\hat{\mu}_x \leftarrow 0$, $C_x \leftarrow \infty$  \Comment{Initial mean and confidence interval for each arm}

\While{$n_{\text{used\_ref}} < \vert \mathcal{S}_{\text{ref}} \vert $ and $|\mathcal{S}_{\text{solution}}| > 1$} 
        \State Draw a batch samples of size $B$ with replacement from reference $\mathcal{S}_{\text{ref\_batch}} \subset \mathcal{S}_{\text{ref}}$ 
        \ForAll{$x \in \mathcal{S}_{\text{solution}} $}
            \State $\hat{\mu}_x \leftarrow \frac{ n_{\text{used\_ref}} \hat{\mu}_x + \sum_{y \in \mathcal{S}_{\text{ref\_batch}}} g_x(y)}{n_{\text{used\_ref}} + B }$ \Comment{Update running mean}
            \State $C_x \gets \sigma_x \sqrt{  \frac{ \log(\frac{1}{\delta}) } {n_{\text{used\_ref}} + B }}$
            \Comment{Update confidence interval}        
        \EndFor
    \State $\mathcal{S}_{\text{solution}} \leftarrow \{x : \hat{\mu}_x - C_x \leq \min_{y}(\hat{\mu}_{y} + C_{y})\}$ \Comment{Remove points that can no longer be solution}
    \State $n_{\text{used\_ref}} \leftarrow n_{\text{used\_ref}} + B$
\EndWhile
\If{$\vert \mathcal{S}_{\text{solution}} \vert$ = 1}
    \State \textbf{return} $x^* \in \mathcal{S}_{\text{solution}}$
\Else
    \State Compute $\mu_x$ exactly for all $x \in \mathcal{S}_{\text{solution}}$
    \State \textbf{return} $x^* = \argmin_{x \in \mathcal{S}_{\text{solution}}} \mu_x$
\EndIf
\end{algorithmic}
\end{algorithm}

\subsection{Algorithmic details}
\label{subsec:bp_algdetails}

\textbf{Estimation of each $\sigma_x$:} \algname uses Algorithm \ref{alg:banditpam} in both the BUILD step and each SWAP iteration, with input parameters specified in \eqref{eqn:bp_build_instance} and \eqref{eqn:bp_swap_instance}. In practice, $\sigma_x$ is not known \emph{a priori} and we estimate $\sigma_x$ for each $x \in \mathcal{S}_{\text{tar}}$ from the data. In the first batch of sampled reference points in Algorithm \ref{alg:banditpam}, we estimate each $\sigma_x$ as:
\begin{equation}
\label{eqn:bp_sigma_est}
    \sigma_x = {\rm STD}_{y \in \mathcal{S}_{\text{ref\_batch}}}g_x(y)
\end{equation}
where ${\rm STD}$ denotes standard deviation. Intuitively, this allows for smaller confidence intervals in later iterations, especially in the BUILD step, when the average arm returns to become smaller as we add more medoids (since we are taking the minimum over a larger set on the RHS of Eq.~\eqref{eqn:bp_build_next_medoid}). We also allow for arm-dependent $\sigma_x$, as opposed to a fixed global $\sigma$, which allows for narrower confidence intervals for arms whose returns are heavily concentrated (e.g. distant outliers). 
Empirically, this results in significant speedups and results in fewer arms being computed exactly (Line 14 in Algorithm \ref{alg:banditpam}). In all experiments, the batch size $B$ is set to 100 and the error probability $\delta$ is set to $\delta = \frac{1}{1000\vert \mathcal{S}_{\text{tar}} \vert}$. Empirically, these values of batch size and this setting of $\delta$ are such that \algname recovers the same results in PAM in almost all cases.

\textbf{Combination with FastPAM1:} We also combine \algname with the FastPAM1 optimization \cite{schubertFasterKmedoidsClustering2019}. We discuss this optimization in Appendix \ref{ch_6_1:bp_app_1_discussions}.
\section{Analysis of the Algorithm}
\label{ch2_5:bp_theory}

The goal of \algname is to track the optimization trajectory of the standard PAM algorithm, ultimately identifying the same set of $k$ medoids with high probability.
In this section, we formalize this statement
and provide bounds on the number of distance computations required by \algnamenospace. We begin by considering a single call to Algorithm \ref{alg:banditpam} and showing it returns the correct result with high probability. We then repeatedly apply Algorithm \ref{alg:banditpam} to track PAM's optimization trajectory throughout the BUILD and SWAP steps.

Consider a single call to Algorithm \ref{alg:banditpam} and suppose $x^* = \argmin_{x \in \Star} \mu_x$ is the optimal target point.
For another target point $x \in \Star$,
let $\Delta_x \vcentcolon = \mu_x - \mu_{x^*}$.
To state the following results, we will assume that, for a fixed target point $x$ and a randomly sampled reference point $x_J$,
the random variable $Y = g_x(x_J)$ is $\sigma_x$-sub-Gaussian
for some known parameter $\sigma_x$. 
In practice, one can estimate each $\sigma_x$ by performing a small number of distance computations as described in Section \ref{subsec:bp_algdetails}.
Allowing $\sigma_x$ to be estimated separately for each arm is beneficial in practice, as discussed in Section \ref{subsec:bp_algdetails}.
With these assumptions, the following theorem is proved in Appendix \ref{ch_6_1:bp_app_3_proofs}:

\begin{theorem}
\label{thm:bp_specific}
For $\delta = n^{-3}$, with probability at least $1-\tfrac{2}{n}$, Algorithm \ref{alg:banditpam}
returns the correct solution to \eqref{eqn:bp_build_search} (for a BUILD step) or \eqref{eqn:bp_swap_search} (for a SWAP step),
using a total of $M$ distance computations, where
\aln{
E[M] \leq 4n + \sum_{x \in \X}  \min \left[ \frac{12}{\Delta_x^2} \left(\sigma_x+\sigma_{x^*} \right)^2 \log n + B, 2n \right].
}
\end{theorem}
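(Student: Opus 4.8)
The plan is to split the two claims of the theorem — correctness and the bound on $E[M]$ — across a single ``good event'' on which all the empirical estimates concentrate, and to handle the small-probability complement separately. Define $\mathcal{E}$ to be the event that $|\hat{\mu}_x - \mu_x| \le C_x$ holds simultaneously for every arm $x \in \Star$ and at every iteration of the while-loop in Algorithm \ref{alg:banditpam}. The key observation is that, for a fixed arm $x$, the per-arm samples $g_x(x_{J_1}), g_x(x_{J_2}), \ldots$ are i.i.d.\ and $\sigma_x$-sub-Gaussian, so after any fixed number of batches the running mean $\hat{\mu}_x$ is an average of a fixed number of i.i.d.\ draws to which Theorem \ref{thm:hoeffding} applies. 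I would therefore build $\mathcal{E}$ as a \emph{uniform} confidence band, union-bounding over the deterministic set of at most $n$ arms and at most $n$ possible rounds, rather than over the data-dependent rounds actually realized; this is what lets me ignore the adaptivity of the elimination rule, which is the one genuinely delicate point. Matching $C_x = \sigma_x\sqrt{\log(1/\delta)/(\nuref+B)}$ (up to its normalizing constant) to the Hoeffding tail and plugging in $\delta = n^{-3}$ makes each per-(arm, round) failure probability at most $O(\delta)$, so the union bound gives $\Pr[\mathcal{E}^c] \le 2/n$.

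Next I would establish correctness on $\mathcal{E}$. Since $x^*$ minimizes $\mu_x$, on $\mathcal{E}$ we have $\hat{\mu}_{x^*} - C_{x^*} \le \mu_{x^*} \le \mu_y \le \hat{\mu}_y + C_y$ for every $y$, so $\hat{\mu}_{x^*} - C_{x^*} \le \min_y(\hat{\mu}_y + C_y)$ and $x^*$ is never removed from $\mathcal{S}_{\text{solution}}$. Consequently, whichever branch the algorithm exits through — either $\mathcal{S}_{\text{solution}}$ collapses to $\{x^*\}$, or the fallback computes the surviving objectives exactly and returns their true $\argmin$ — the returned target is $x^*$, the correct solution to \eqref{eqn:bp_build_search} or \eqref{eqn:bp_swap_search}. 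This proves the correctness claim with probability at least $1 - 2/n$.

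For the sample-complexity bound I would again condition on $\mathcal{E}$ and count evaluations arm by arm. A suboptimal arm $x$ can survive only while $\hat{\mu}_x - C_x \le \min_y(\hat{\mu}_y+C_y)$; combining this with the band $\mathcal{E}$ and the inequality $\min_y(\hat{\mu}_y+C_y) \le \hat{\mu}_{x^*}+C_{x^*}$ shows that $x$ must be eliminated once $\Delta_x > 2(C_x + C_{x^*})$. Since $C_x + C_{x^*} = (\sigma_x+\sigma_{x^*})\sqrt{\log(1/\delta)/(\nuref+B)}$ shrinks with $\nuref$, this happens as soon as $\nuref \gtrsim (\sigma_x+\sigma_{x^*})^2\log(1/\delta)/\Delta_x^2$; with $\delta = n^{-3}$ and accounting for at most one extra batch of overshoot, arm $x$ is evaluated on at most $\tfrac{12}{\Delta_x^2}(\sigma_x+\sigma_{x^*})^2\log n + B$ reference points. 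Independently, the loop guard $\nuref < |\Sref| = n$ caps the adaptive phase at $n$ evaluations per arm and the exact fallback at another $n$, so no arm is ever evaluated more than $2n$ times. Taking the smaller of the two bounds gives a per-arm count of $\min[\tfrac{12}{\Delta_x^2}(\sigma_x+\sigma_{x^*})^2\log n + B,\, 2n]$ (with $x^*$, for which $\Delta_{x^*}=0$, contributing through the $2n$ cap), and summing over $x \in \X$ bounds $M$ on $\mathcal{E}$ by the stated sum.

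Finally I would combine the two regimes through the law of total expectation. On $\mathcal{E}^c$ the only available bound is the trivial one: at most $2n$ evaluations for each of the $|\Star| \le n$ arms (here using $k \ll n$, so that $k(n-k) = O(n)$ for the SWAP instance), hence $M \le 2n^2$; multiplied by $\Pr[\mathcal{E}^c] \le 2/n$ this contributes at most $4n$ to $E[M]$, which is exactly the additive $4n$ term. Adding the good-event bound yields the theorem. I expect the main obstacle to be the concentration step: making $\mathcal{E}$ a legitimate uniform band over all rounds — so that the sequential, data-dependent elimination does not invalidate Hoeffding's inequality — and tracking the constants through the confidence-interval normalization so that the threshold lands at the claimed $12/\Delta_x^2$ factor.
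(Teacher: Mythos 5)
Your proposal is correct and follows essentially the same route as the paper's own proof: a uniform confidence band over all (arm, round) pairs via Hoeffding plus a union bound over at most $n^2$ intervals with $\delta = n^{-3}$, correctness because $x^*$ is never eliminated on the good event, per-arm elimination once $\Delta_x > 2(C_x + C_{x^*})$ yielding the $\tfrac{12}{\Delta_x^2}(\sigma_x+\sigma_{x^*})^2\log n + B$ threshold capped at $2n$, and the additive $4n$ term from bounding $M \le 2n^2$ on the complement event of probability at most $2/n$. No substantive differences from the paper's argument.
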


Intuitively, Theorem \ref{thm:bp_specific} states that with high probability, each step of \algname returns the same result as PAM.
For the general result, we assume that the data is generated in such a way that the mean rewards $\mu_x$ follow a sub-Gaussian distribution (see Section \ref{ch2_7:bp_discussion} for a discussion). 
Additionally, we assume that both PAM and \algname place a hard constraint $T$ on the maximum number of SWAP iterations that are allowed. 
Informally, as long as \algname finds the correct solution to the search problem \eqref{eqn:bp_build_search} at each BUILD step and to the search problem \eqref{eqn:bp_swap_search} at each SWAP step, it will reproduce the sequence of BUILD and SWAP steps of PAM and return the same set of final medoids. We formalize this statement with Theorem \ref{thm:bp_nlogn}, and discuss the proof in Appendix \ref{ch_6_1:bp_app_3_proofs}.
When the number of desired medoids $k$ is a constant 
and the number of allowed SWAP steps is small (which is often sufficient in practice), Theorem \ref{thm:bp_nlogn} implies that only $O(n \log n)$ distance computations are necessary to reproduce the results of PAM with high probability.

\begin{theorem}
\label{thm:bp_nlogn}
If \algname is run on a dataset $\X$ with $\delta = n^{-3}$, then it returns the same set of $k$ medoids as PAM with probability $1-o(1)$. 
Furthermore, the total number of distance computations $M_{\rm total}$ satisfies
\aln{
E[M_{\rm total}] = O\left( n \log n \right).
}
\end{theorem}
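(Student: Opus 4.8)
The plan is to reduce the global guarantee to the per-call guarantee of Theorem~\ref{thm:bp_specific} via a union bound, and to control the aggregate cost by taking expectations over the assumed distribution of the arm means. I would split the argument into a correctness part and a cost part.

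\textbf{Correctness.} First I would observe that a full run of \algname consists of exactly $k$ invocations of Algorithm~\ref{alg:banditpam} in the BUILD phase and at most $T$ invocations in the SWAP phase, for a total of at most $k+T$ calls. I would then argue by induction on the call index that, conditioned on every call so far having returned the PAM-optimal target, the internal state ($\mathcal{M}_l$ in BUILD, the current medoid set $\mathcal{M}$ in SWAP) fed to the next call is identical to PAM's state at the corresponding point; consequently the next search instance \eqref{eqn:bp_build_search} or \eqref{eqn:bp_swap_search} is \emph{exactly} the one PAM solves, so if that call also succeeds the two trajectories remain synchronized. Applying Theorem~\ref{thm:bp_specific} with $\delta = n^{-3}$, each individual call fails with probability at most $\tfrac{2}{n}$, and a union bound over the $\le k+T$ calls bounds the total failure probability by $\tfrac{2(k+T)}{n}$. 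When $k=O(1)$ and $T=O(1)$ this is $o(1)$, so with probability $1-o(1)$ all calls succeed and \algname returns PAM's exact medoid set.

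\textbf{Cost.} By linearity of expectation the total expected number of distance computations is the sum of the per-call bounds from Theorem~\ref{thm:bp_specific}, so it suffices to show each call costs $O(n\log n)$ in expectation; since there are $O(1)$ calls this yields the claim. The baseline term $4n$ and the $\sum_x B = nB$ contribution are already $O(n)$, so the crux is the key lemma
\[
E\!\left[\sum_{x \in \X}\min\!\left(\frac{C\log n}{\Delta_x^2},\,2n\right)\right] = O(n\log n),
\]
where $C$ absorbs the (uniformly bounded) factor $12(\sigma_x+\sigma_{x^*})^2$ and the expectation is over the draw of the means $\mu_x$. I would prove this with a layer-cake representation: writing $N(t) := \#\{x : \Delta_x \le t\}$ and $t_{\min} := \sqrt{C\log n/(2n)}$, a change of variables gives $\sum_x \min(\cdot) = 2C\log n\int_{t_{\min}}^{\infty} N(t)\,t^{-3}\,dt$. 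The contribution of the term $N(t)\ge 1$ (i.e.\ of $x^*$ itself, whose cap is always active) integrates to exactly $2n = O(n)$. For the remaining arms I would bound $E[N(t)]$ using the sub-Gaussian hypothesis: $\mu_{x^*}$ is an extreme order statistic concentrating in the lower tail, and the expected number of arms within $t$ of it is at most $n$ times the CDF increment $F(\mu_{x^*}+t)-F(\mu_{x^*})$, which the sub-Gaussian tail controls; substituting this estimate and evaluating the integral gives the $O(n\log n)$ bound.

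\textbf{Main obstacle.} I expect the hard part to be the key lemma, specifically handling the dependence introduced by measuring every gap $\Delta_x$ from the \emph{same} random minimum $\mu_{x^*}$. The gaps are therefore not independent, and $\mu_{x^*}$ is an extreme rather than a typical value, so a naive bound that assumes a constant density near the minimum badly overcounts near-optimal arms and produces a spurious $n^{3/2}$ rate. The resolution is that the density of a sub-Gaussian law at its order-$1/n$ lower quantile is itself of order $1/n$, which is exactly what suppresses the number of small-gap arms; making this rigorous — either by conditioning on $\mu_{x^*}$ and bounding the conditional law of the remaining arms, or by a direct extreme-value/Poisson estimate of $E[N(t)]$ — is the technical heart of the proof.
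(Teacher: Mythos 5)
Your proposal is correct and follows the same skeleton as the paper's proof: correctness via a union bound over the at most $k+T$ calls to Algorithm \ref{alg:banditpam}, each failing with probability at most $2/n$ by Theorem \ref{thm:bp_specific}, so the total failure probability is $2(k+T)/n = o(1)$ for constant $k$ and $T$; and cost by summing the per-call instance-wise bounds over these $O(1)$ calls. (Your explicit induction showing the two optimization trajectories stay synchronized is a point the paper only treats informally, and is a welcome addition of rigor.) The one place you genuinely diverge is the key distributional step: the paper never proves the conversion of the gap-dependent bound $\sum_{x} \min\left(C\log n/\Delta_x^2,\, 2n\right)$ into $O(n \log n)$ — it replaces each $\sigma_x$ by a universal bound $\sigma_{\rm ub}$ and then cites Appendix 2 of \cite{bagariaMedoidsAlmostlinearTime2018} for the fact that such an instance-wise bound becomes $O(n\log n)$ when the $\mu_x$'s follow a sub-Gaussian distribution. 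You instead propose to prove this lemma from scratch, via the layer-cake identity $\sum_x \min(\cdot) = 2C\log n \int_{t_{\min}}^{\infty} N(t)\, t^{-3}\, dt$ and an extreme-value bound on $E[N(t)]$; your identification of the dependence of all gaps on the shared random minimum $\mu_{x^*}$, and of the order-$1/n$ density at the lower $1/n$-quantile as the mechanism that suppresses near-optimal arms, is precisely the technical content that the paper's citation hides. What the paper's route buys is brevity and reuse of a verified result; what yours buys is a self-contained argument (essentially a reconstruction of the cited lemma), at the cost of the order-statistics technicalities you correctly flag as the hard part.
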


\textbf{Remark 1:} While the limit on the maximum number of swap steps, $T$, may seem restrictive, it is not uncommon to place a maximum number of iterations on iterative algorithms. Furthermore, $T$ has been observed  empirically to be $O(k)$ \cite{schubertFasterKmedoidsClustering2019}, consistent with our experiments in Section \ref{ch2_6:bp_exps}.

\textbf{Remark 2:} We note that $\delta$ is a hyperparameter governing the error rate. It is possible to prove results analogous to Theorems \ref{thm:bp_specific} and \ref{thm:bp_nlogn} for arbitrary $\delta$; we discuss this in Appendix \ref{ch_6_1:bp_app_3_proofs}.

\textbf{Remark 3:} Throughout this work, we have assumed that evaluating the distance between two points is $O(1)$ rather than $O(d)$, where $d$ is the dimensionality of the datapoints. If we were to include this dependence explicitly, we would have $E[M_{\rm total}] = O(d n \log n)$ in Theorem \ref{thm:bp_nlogn}. 
We discuss improving the scaling with $d$ in Appendix \ref{subsec:bp_app_2_scalingwithd}, and the explicit dependence on $k$ in Appendix \ref{subsec:bp_app_2_scalingwithk}.
\section{Empirical Results}
\label{ch2_6:bp_exps}

\textbf{Setup:} As discussed in Section \ref{ch2_2:bp_intro}, PAM has been empirically observed to produce the best results for the $k$-medoids problem in terms of clustering quality. Other existing algorithms can generally be divided into several classes: those that agree exactly with PAM (e.g. FastPAM1), those that do not agree exactly with PAM but provide comparable results (e.g. FastPAM), and other randomized algorithms that sacrifice clustering quality for runtime.
In Subsection \ref{subsec:bp_loss}, we show that \algname returns the same results as PAM, thus matching the state-of-the-art in clustering quality, and also results in better or comparable final loss when compared to other popular $k$-medoids clustering algorithms, including FastPAM \cite{schubertFasterKmedoidsClustering2019}, CLARANS \cite{ngCLARANSMethodClustering2002}, and Voronoi Iteration \cite{parkSimpleFastAlgorithm2009}.
In Subsection \ref{subsec:bp_scaling}, we demonstrate that \algname scales almost linearly in the number of samples $n$ for all datasets and all metrics considered, which is superior to the quadratic scaling of PAM, FastPAM1, and FastPAM. Combining these observations, we conclude that \algname matches state-of-the-art algorithms in clustering quality, while reaching its solutions much faster.
In the experiments, each parameter setting was repeated $10$ times with data subsampled from the original dataset and 95\% confidence intervals are provided. 

\paragraph{Datasets:} We run experiments on three real-world datasets to validate the behavior of \algnamenospace,
all of which are publicly available. The MNIST dataset \cite{lecunGradientbasedLearningApplied1998} consists of 70,000 black-and-white images of handwritten digits, where each digit is represented as a 784-dimensional vector. On MNIST, We consider two distance metrics: $l_2$ distance and cosine distance.
The scRNA-seq dataset contains the gene expression levels of 10,170 different genes in each of 40,000 cells after standard filtering. On scRNA-seq, we consider $l_1$ distance, which is recommended \cite{ntranosFastAccurateSinglecell2016}.
The HOC4 dataset from Code.org \cite{hourOfCode2013} consists of 3,360 unique solutions to a block-based programming exercise. Solutions to the programming exercise are represented as abstract syntax trees (ASTs), and we consider the tree edit distance \cite{zhangSimpleFastAlgorithms1989} to quantify similarity between solutions. 

\begin{center}
\begin{figure}[ht]
    \begin{subfigure}{.49\textwidth}
        \centering
        \includegraphics[width=\linewidth]{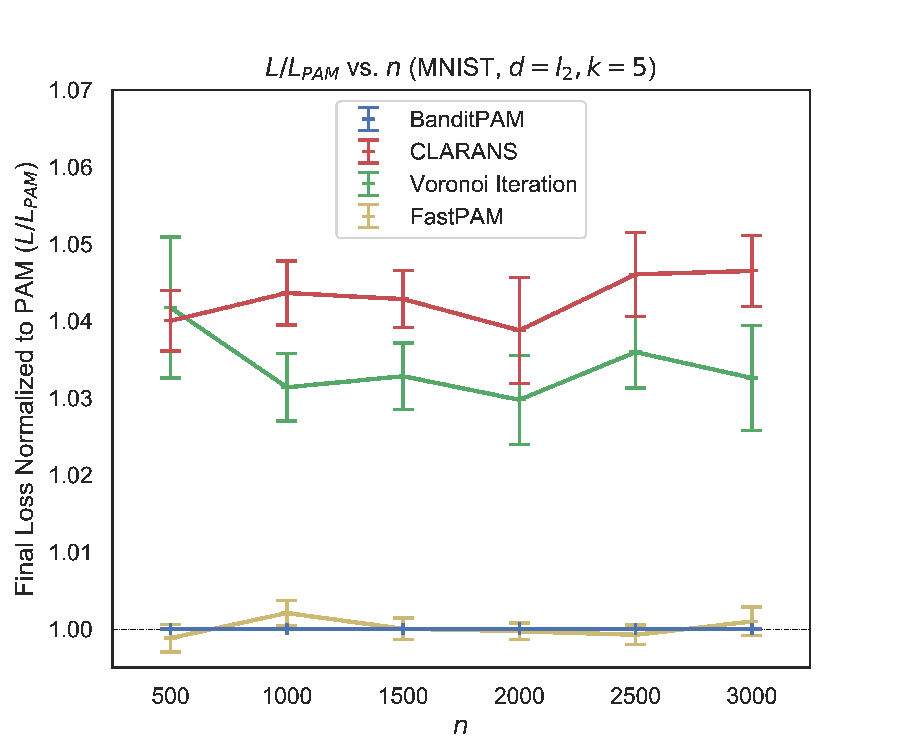} 
        \caption{}
    \end{subfigure}
    \begin{subfigure}{.49\textwidth}
      \centering
      \includegraphics[width=\linewidth]{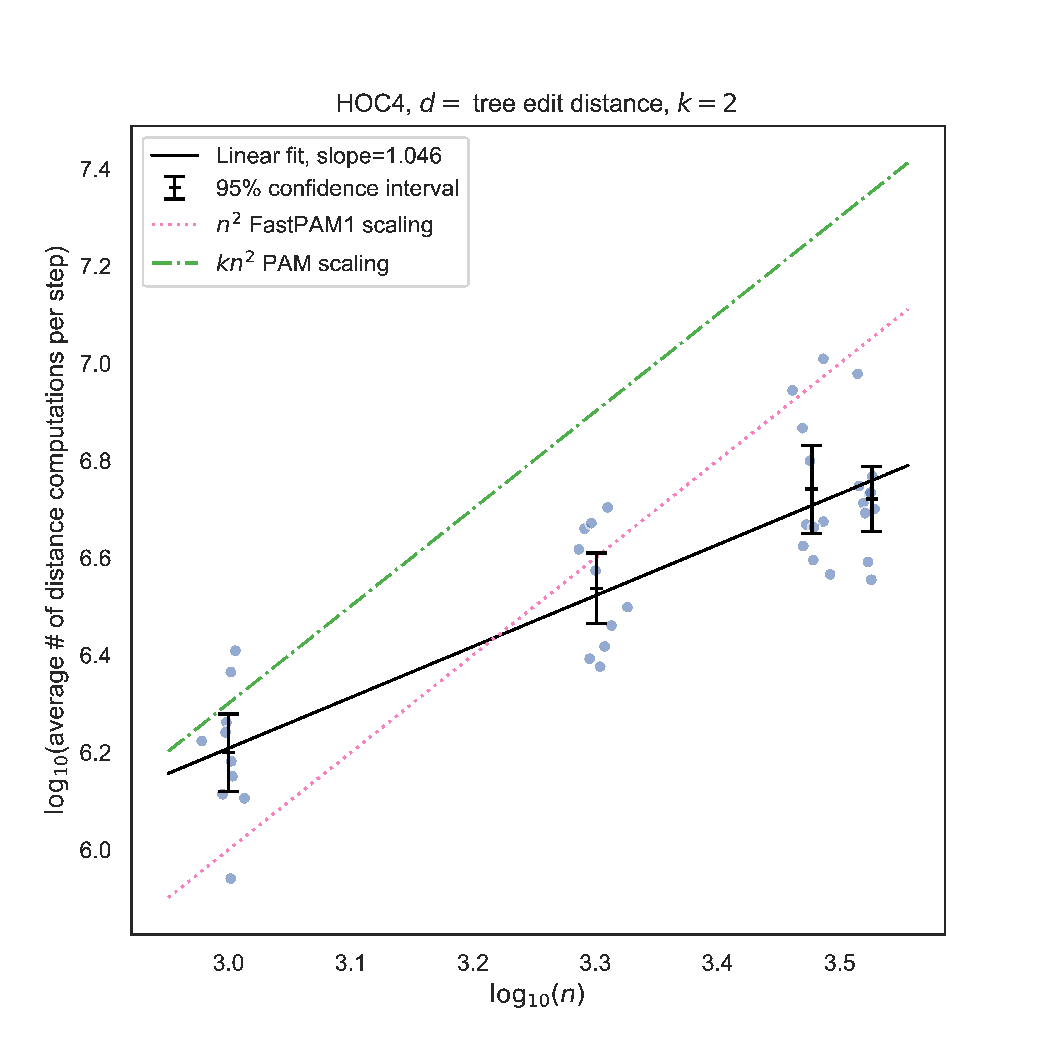}   
      \caption{}
    \end{subfigure}
    \caption[Relative clustering loss of various $k$-medoids algorithm, and scaling of BanditPAM's sample complexity per iteration versus sample size for the HOC4 dataset and tree edit distance with $k = 2$]
    {(a) Clustering loss relative to PAM loss. Data is subsampled from MNIST, sample size $n$ varies from $500$ to $3,000$, $k = 5$, and 95\% confidence intervals are provided. \algname always returns the same solution as PAM and hence has loss ratio exactly $1$, as does FastPAM1 (omitted for clarity). FastPAM also demonstrates comparable final loss, while the other two algorithms are significantly worse. (b) Average number of distance evaluations per iteration versus sample size $n$ for HOC4 dataset and tree edit distance, with $k = 2$, on a log-log plot. Reference lines for PAM and FastPAM1 are also shown. \algname scales better than PAM and FastPAM1 and is significantly faster for large datasets.}
    \label{fig:bp_losses}
\end{figure}
\end{center}

\begin{figure}[ht]
    \begin{subfigure}{.49\textwidth}
      \centering
      \includegraphics[width=\linewidth]{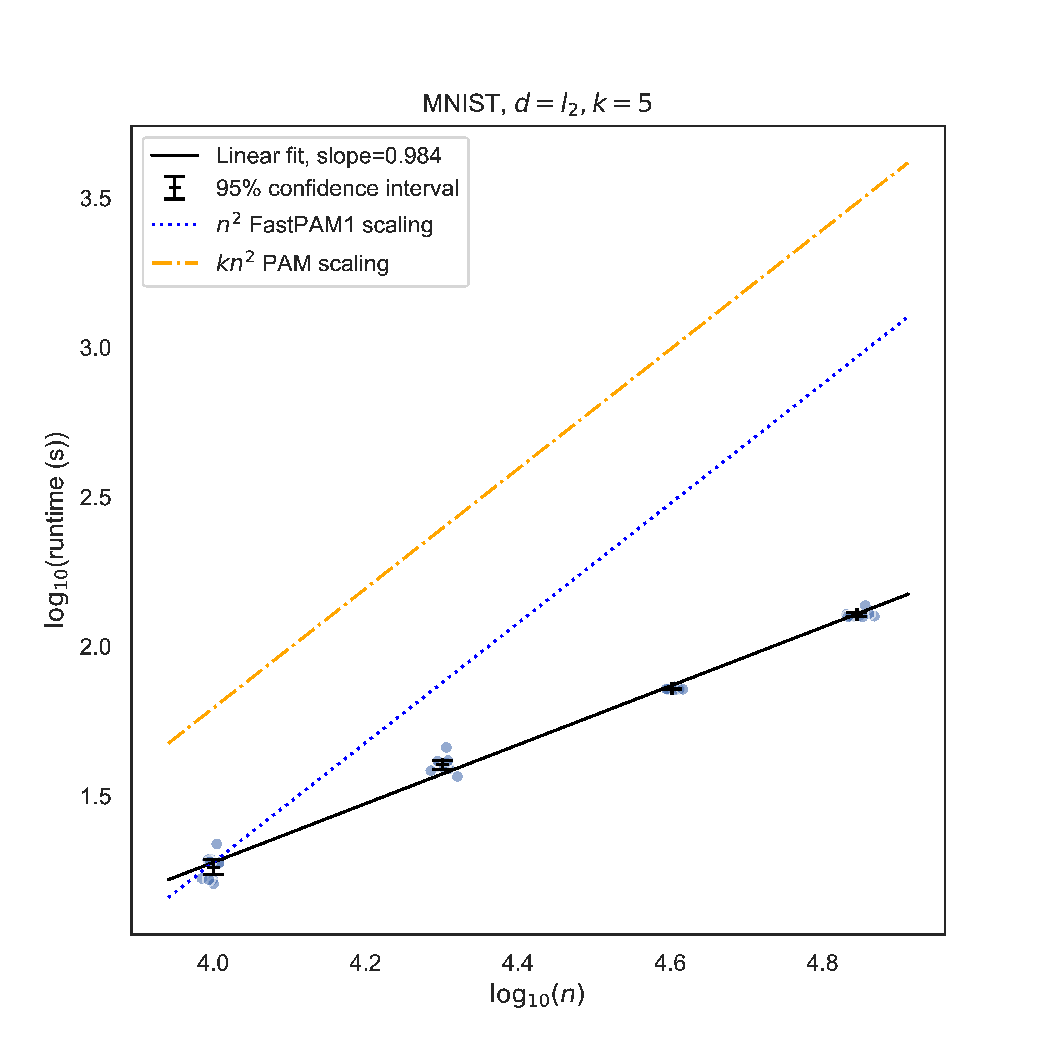}  
      \caption{}
    \end{subfigure}
    \begin{subfigure}{.49\textwidth}
      \centering
      \includegraphics[width=\linewidth]{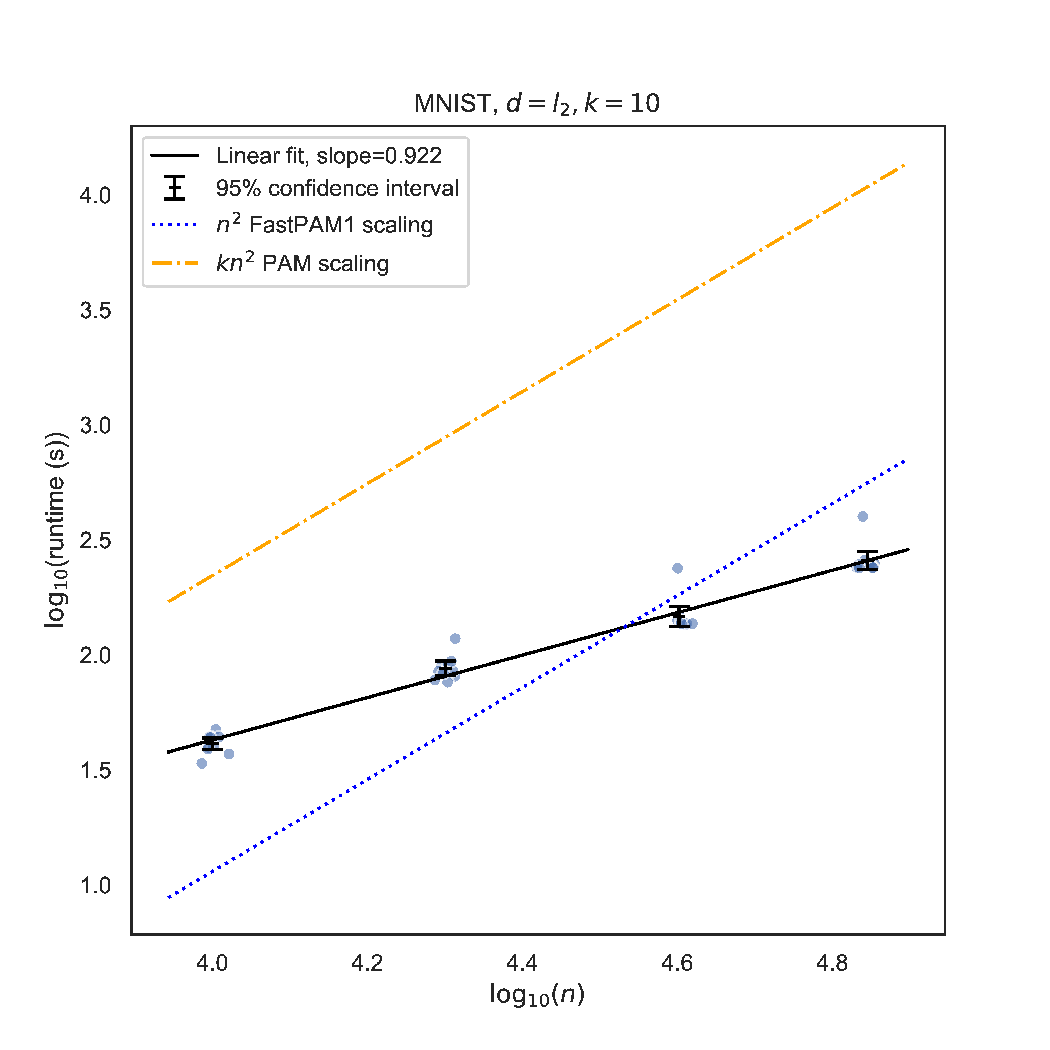}   
      \caption{}
    \end{subfigure}
    \caption[A1verage runtime per iteration of BanditPAM versus sample size for the MNIST dataset and $l_2$ distance with $k = 5$ and $k = 10$]
    {Average runtime per iteration versus sample size $n$ for the MNIST dataset and $l_2$ distance with (a) $k=5$ and (b) $k=10$, on a log-log scale. Lines of best fit (black) are plotted, as are reference lines demonstrating the expected scaling of PAM and FastPAM1.}
    \label{fig:bp_mnist_l2}
\end{figure}

\subsection{Clustering/loss quality}
\label{subsec:bp_loss}

Figure \ref{fig:bp_losses} (a) shows the relative losses of algorithms with respect to the loss of PAM. 
\algname and three other baselines: FastPAM \cite{schubertFasterKmedoidsClustering2019}, CLARANS \cite{ngCLARANSMethodClustering2002}, and Voronoi Iteration \cite{parkSimpleFastAlgorithm2009}. 
We clarify the distinction between FastPAM and FastPAM1: both are $O(n^2)$ in each SWAP step but FastPAM1 is guaranteed to return the same solution as PAM whereas FastPAM is not. 
In these experiments, \algname returns the same solution as PAM and hence has loss ratio $1$. FastPAM has a comparable performance, while the other two algorithms are significantly worse.

\subsection{Scaling with \texorpdfstring{$n$}{n} for different datasets, distance metric, and \texorpdfstring{$k$}{k} values}
\label{subsec:bp_scaling}

\begin{figure}[ht]
    \begin{subfigure}{.49\textwidth}
      \centering
      \includegraphics[width=\linewidth]{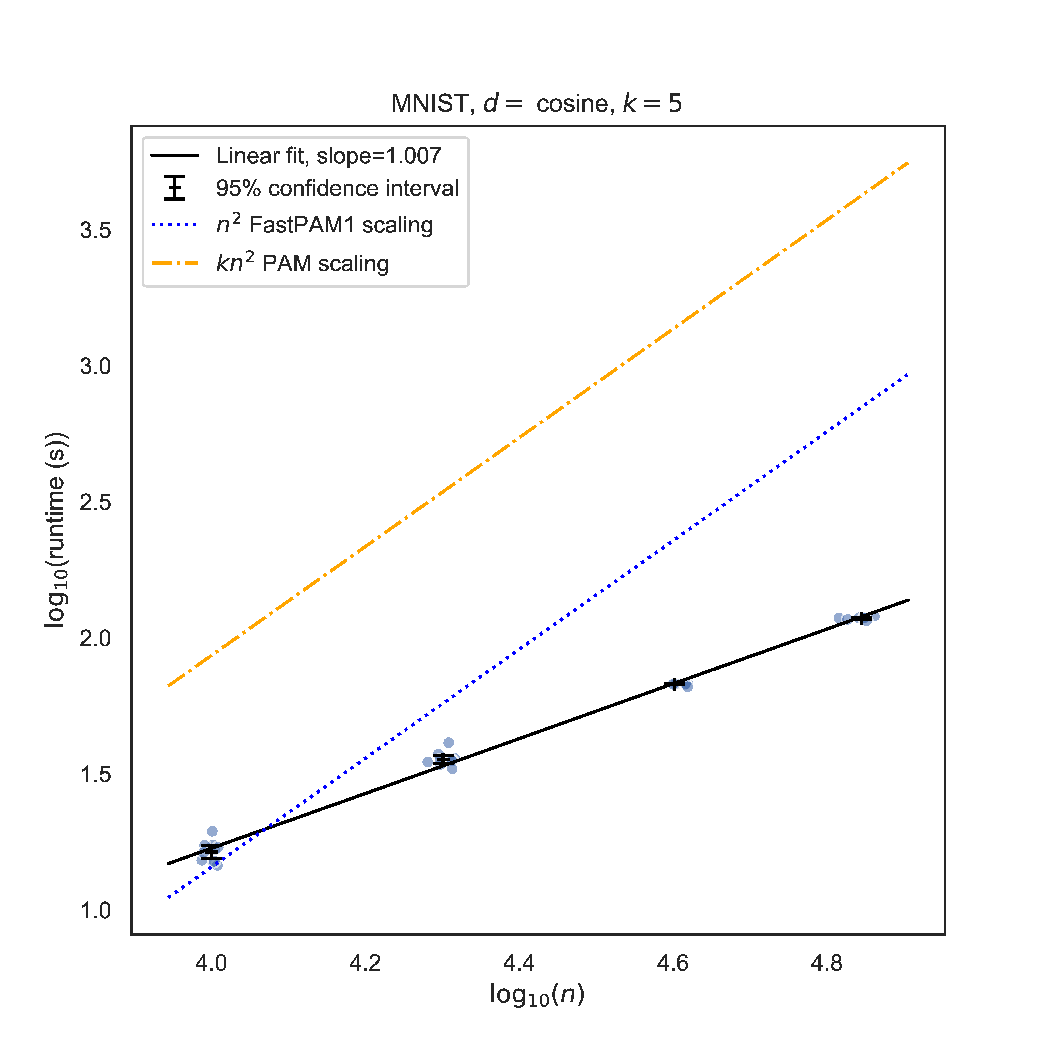}  
      \caption{}
    \end{subfigure}
    \begin{subfigure}{.49\textwidth}
      \centering
      \includegraphics[width=\linewidth]{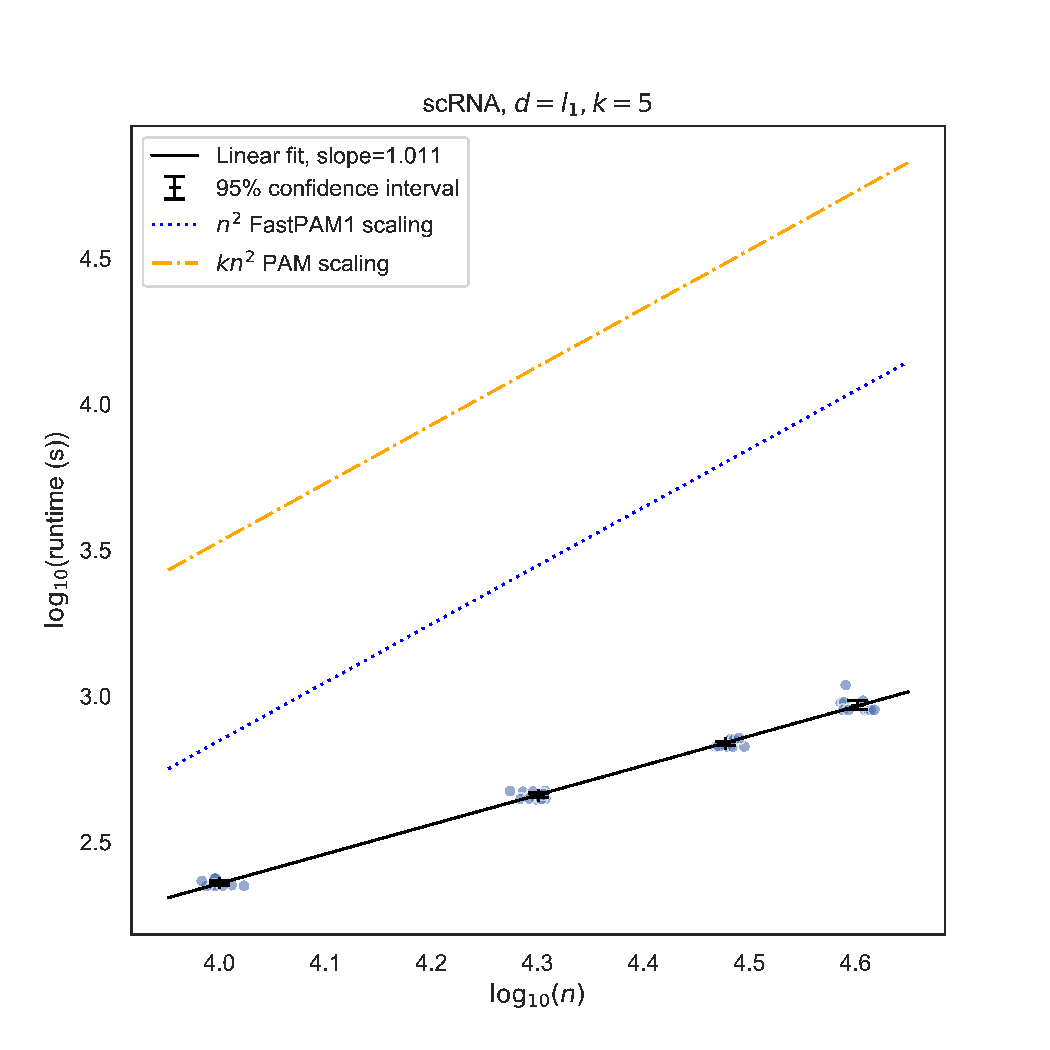}   
      \caption{}
    \end{subfigure}
    \caption[Average runtime per iteration of BanditPAM versus sample size for the MNIST dataset and cosine distance and the scRNA-seq dataset with $l_1$ distance, with $k = 5$]
    {Average runtime per iteration versus sample size $n$, for (a) MNIST and cosine distance and (b) scRNA-seq and $l_1$ distance, with $k = 5$. Lines of best fit (black) are plotted, as are reference lines demonstrating the expected scaling of PAM and FastPAM1.}
    \label{fig:bp_mnist_scrna}
\end{figure}

We next consider the runtime per iteration of \algnamenospace, especially in comparison to PAM and FastPAM1. To calculate the runtime per iteration of \algnamenospace, we divide the total wall clock time by the number of SWAP iterations plus 1, where each SWAP step has expected complexity $O(kn\log n)$ and the plus 1 accounts for the $O(kn\log n)$ complexity of all $k$ BUILD steps. 
Figure \ref{fig:bp_mnist_l2} demonstrates the runtime per iteration of \algname versus $n$ on a log-log plot. The slopes for the lines of best fit for (a) $k=5$ and (b) $k=10$ are 0.984 and 0.922, respectively, indicating the scaling is linear in $n$ for different values of $k$.

Figure \ref{fig:bp_mnist_scrna} demonstrates the runtime per iteration of \algname for other datasets and metrics. On MNIST with cosine distance (a), the slope of the line of best fit is 1.007. On the scRNA-seq dataset with $l_1$ distance (b), the slope of the line of best fit is 1.011. These results validate our theory that \algname takes an almost linear number of distance evaluations per iteration for different datasets and metrics.

Because the exact runtime of \algname and other baselines depends on implementation details such as programming language, we also analyze the number of distance evaluations required by each algorithm. Indeed, a profile of \algname program reveals that it spends over 98\% of its wall clock time computing distances; as such, the number of distance evaluations provides a reasonable proxy for complexity of \algnamenospace. For the other baselines PAM and FastPAM1, the number of distance evaluations is expected to be exactly $kn^2$ and $n^2$, respectively, in each iteration. Figure \ref{fig:bp_losses} (b) demonstrates the number of distance evaluations per iteration of \algname with respect to $n$. The slope of the line of best fit on the log-log plot is 1.046, which again indicates that \algname scales almost linearly in dataset size even for more exotic objects and metrics, such as trees and tree edit distance.
\section{Discussion and Conclusions}
\label{ch2_7:bp_discussion}

In this work, we proposed \algnamenospace, a randomized algorithm for the $k$-medoids problem that matches state-of-the-art approaches in clustering quality while achieving a reduction in complexity from $O(n^2)$ to $O(n\log n)$ under certain assumptions. In our experiments, the randomly sampled distances have an empirical distribution similar to a Gaussian (Appendix Figures \ref{fig:bp_app_1_sigma_ex_MNIST}-\ref{fig:bp_app_1_sigma_ex_SCRNAPCA}), justifying the sub-Gaussian assumption in Section \ref{ch2_5:bp_theory}.
We also observe that the the sub-Gaussian parameters are different across steps and target points (Appendix Figure \ref{fig:bp_app_1_MNIST_sigmas_example}), justifying the adaptive estimation of the sub-Gaussianity parameters in Subsection \ref{subsec:bp_algdetails}.
Additionally, the empirical distribution of the true arm parameters (Appendix Figure \ref{fig:bp_app_1_mu_dist}) appears to justify the distributional assumption of $\mu_x$s in Section \ref{ch2_5:bp_theory}.
\section{Related Work}
\label{ch2_8:bp_relatedwork}

Prior to our work, randomized algorithms like CLARA \cite{kaufmanClusteringMeansMedoids1987} and CLARANS \cite{ngCLARANSMethodClustering2002} were proposed to improve computational efficiency, but result in worse clustering quality than PAM.
More recently, \cite{schubertFasterKmedoidsClustering2019} proposed a deterministic algorithm, dubbed FastPAM1, that guarantees the same output as PAM but improves the complexity to $O(n^2)$.
However, the factor $O(k)$ improvement becomes less important when the sample size $n$ is large and the number of medoids $k$ is small compared to $n$.

Many other $k$-medoids algorithms exist.
These algorithms can generally be divided into those that agree with or produce comparable results to PAM (matching state-of-the-art clustering quality, such as FastPAM and FastPAM1 \cite{schubertFasterKmedoidsClustering2019}) and other randomized algorithms that sacrifice clustering quality for runtime (such as CLARA and CLARANS). 
\cite{parkSimpleFastAlgorithm2009} proposed a $k$-means-like algorithm that alternates between reassigning the points to their closest medoid and recomputing the medoid for each cluster until the $k$-medoids clustering loss can no longer be improved. 
Other proposals include optimizations for Euclidean space and tabu search heuristics \cite{estivill2001robust}.
Recent work has also focused on distributed PAM, where the dataset cannot fit on one machine \cite{song2017pamae}. 
All of these algorithms, however, scale quadratically in dataset size or concede the final clustering quality for improvements in runtime.
In an alternate approach for the single medoid problem, trimed \cite{trimed}, scales sub-quadratically in dataset size but exponentially in the dimensionality of the points.
Other recent work \cite{activekmedoids} attempts to minimize the number of \textit{unique} pairwise distances.
Similarly, \cite{heckel, bagariaBanditbasedMonteCarlo2021} attempt to adaptively estimate these distances or coordinate-wise distances in specific settings.

\chapter{Faster Forest Training}
\label{ch3}
\renewcommand{\algnamenospace}{MABSplit}
\renewcommand{\algname}{MABSplit }
\label{ch3_1:ff_abstract}

Random forests are some of the most widely used machine learning models today, especially in domains that necessitate interpretability.
We present an algorithm that accelerates the training of random forests and other popular tree-based learning methods.
At the core of our algorithm is a novel node-splitting subroutine, dubbed \algnamenospace, used to efficiently find split points when constructing decision trees.
Our algorithm borrows techniques from the multi-armed bandit literature to judiciously determine how to allocate samples and computational power across candidate split points. 
We provide theoretical guarantees that \algname improves the sample complexity of each node split from $O(n)$ to $O(1)$ in the number of data points $n$; i.e., \algname does not explicitly depend on the dataset size, but rather on the relative qualities of possible splits. 
In some settings, \algname leads to 100x faster training (an 99\% reduction in training time) without any decrease in generalization performance. 
We demonstrate similar speedups when \algname is used across a variety of forest-based variants, such as Extremely Random Forests and Random Patches. 
We also show our algorithm can be used in both classification and regression tasks.
Finally, we show that \algname outperforms existing methods in generalization performance and feature importance calculations under a fixed computational budget. 
All of our experiments are reproducible via a one-line script at \href{https://github.com/ThrunGroup/FastForest} {\texttt{https://github.com/ThrunGroup/FastForest}}.

\section{Introduction}
\label{ch3_2:ff_intro}

Random Forest (RF) is a supervised learning technique that is widely used for classification and regression tasks \cite{hoRandomDecisionForests1995,breimanRandomForests2001}.
In RF, an ensemble of decision trees (DTs) is trained for the same prediction task.
Each DT consists of a series of nodes that represent \texttt{if/then/else} comparisons on the feature values of a given datapoint and are used to produce an output label.
In RF, each DT is typically trained on a \emph{bootstrap} sample of the original dataset and considers a random sample of available features at each node split \cite{breimanBaggingPredictors1996}. 
The prediction of the each DT is aggregated to provide an output label for the whole RF.
By aggregating the prediction of each DT in the ensemble, RFs tend to be more robust to noise and overfitting  \cite{rokachTopdownInductionDecision2005} and are capable of capturing more complex patterns in the data than a single DT \cite{tumerErrorCorrelationError1996}.

RF has gained tremendous popularity due to its flexibility, usefulness in multi-class classification and regression tasks, 
high performance across a broad range of data types, natural support for missing features, and relatively low computational complexity \cite{yatesFastForestIncreasingRandom2021,silveiraObjectbasedRandomForest2019,benaliSolarRadiationForecasting2019,lakshmanaprabuRandomForestBig2019,jamesIntroductionStatisticalLearning2013}.
Furthermore, RF inherits the interpretability of decision trees because the prediction of each constituent DT can be explained through a sequence of binary decisions.
RFs have been successfully applied in contexts as varied as the prediction of legal court decisions \cite{katzGeneralApproachPredicting2017},  solar radiation analysis \cite{benaliSolarRadiationForecasting2019}, and the Higgs Boson classification problem~\cite{azhariHiggsBosonDiscovery2020}.
In the era of big data, a simple and flexible machine learning technique such as RF is expected to play a key role in processing large datasets and providing accurate and interpretable predictions.

The need for training prediction models on massive datasets and doing so on compute-constrained hardware, such as smartphones and Internet-of-Things devices, requires the development of new algorithms that can deliver faster results without sacrificing generalization performance~\cite{yatesFastForestIncreasingRandom2021}.
For this reason, recent work has proposed ways to accelerate the training of RFs, both at the algorithmic level and at the hardware level.

At the algorithmic level, most work focuses on fast construction of each individual DT. 
Each DT is built by identifying the feature $f$ and threshold $t$ that best split the data points according to the prediction targets. 
The data points are split into subsets based on whether their feature $f$ has a value less than $t$ or feature $f$ has a value greater than $t$. The process is then recursed for each resulting subset.
Most of the cost in this process is in identifying the pair $(f,t)$ that provides the best split for a set of $N$ data points, which typically requires $O(N)$ computation per split.
Recent proposals include computing (or estimating) $f$ and $t$ from a subsample of the data points and features, or quantizing the feature values. The latter technique creates a histogram of values of each feature across the data points and restricts $t$ to be at the edges of histogram bins.

While existing approaches provide significant speed up in the training of RFs, they often require prespecification of fixed hyperparameters, such as the proportion of data points or features to subsample, and are not adaptive to the underlying data distribution.
Moreover, when comparing different candidate features for a split, all features are treated on equal footing and the quality of their split is computed based on the same number of data points.
Intuitively, this is wasteful because features that are not informative for the prediction task can be identified based on a smaller number of data points. 
Alternatively, an adaptive scheme could better allocate computational resources towards a promising set of candidate features and achieve a better tradeoff between computational cost and generalization performance.

In this work, we propose \algnamenospace, a fast subroutine for the node-splitting problem, which adaptively refines the estimate of the ``quality'' of each feature-threshold pair $(f,t)$ as a candidate split. 
Bad split candidates can be discarded early, which can lead to significant computational savings.
The core idea behind our algorithm is to formulate the node-splitting task as a multi-armed bandit problem \cite{laiAsymptoticallyEfficientAdaptive1985,audibertBestArmIdentification2010,bagariaMedoidsAlmostlinearTime2018,tiwariBanditpamAlmostLinear2020}, where each pair $(f,t)$ is a distinct arm.
The unknown parameter of each arm, $\mu_{ft}$, corresponds to the quality of the split based on feature $f$ and threshold $t$, where the split quality is measured in terms of how much the split would reduce label impurity. 
An arm $(f,t)$ can be ``pulled'' by computing the reduction of label impurity induced by a new data point sampled from the dataset.
This allows us to compute an estimate $\hat \mu_{ft}$ and a corresponding confidence interval, which can be used in a batched variant of the Upper Confidence Bound (UCB) and successive elimination algorithms~\cite{laiAsymptoticallyEfficientAdaptive1985,zhangAdaptiveMonteCarlo2019} to identify the best arm $(f,t)$. Crucially, \algname uses the adaptive sampling tools of multi-armed bandits to avoid computing the split qualities over the entire dataset.

We demonstrate the benefits of \algname on a variety of datasets, for both classification and regression tasks.
In some settings, \algname algorithm leads to 100x faster training (a 99\% reduction in training time), without any decrease in test accuracy, over an exact implementation of RF that searches for the optimal $(f,t)$ pair via brute-force computation. 
Additionally, we demonstrate similar speedups when using \algname across a variety of forest-based variants, such as Extremely Random Forests and Random Patches. 
\section{Preliminaries}
\label{ch3_3:ff_preliminaries}

We now formally describe the RF algorithm and other tree-based models, all of which rely on a node-splitting subroutine.
We consider $N$ data points $\{(\mathbf{x}_i, y_i)\}_{i=1}^N$ where each $\mathbf{x}_i$ is an $M$-dimensional feature vector and $y_i$ is its target. Following standard literature, we consider flexible feature types such as numerical or categorical. We consider both categorical targets for classification and numerical targets for regression. With a slight abuse of notation, we use $\mathcal{X}$ to mean either the set of indices $\{i\}$ or the values $\{(\mathbf{x}_i, y_i)\}$, with the meaning clear from context.

An RF contains $n_{\text{tree}}$ decision trees, each trained on a set of $N$ bootstrapped data points (sampled with replacement) and a random subset of features at each node. 
The whole RF, an ensemble model, outputs the trees' majority vote in classification and the trees' average prediction in regression \cite{hoRandomDecisionForests1995,breimanRandomForests2001}.
We focus on the top-down, greedy approach of constructing DTs by choosing the feature-threshold pair at each step that best splits the set of targets at a given node \cite{rokachTopdownInductionDecision2005}. 

We now discuss the method by which the best split is chosen in each node. Consider a node with $n$ data points $\mathcal{X}$ and $m$ features $\mathcal{F}$. 
Note that $n$ and $m$ at the given node may or may not be the same as $N$ and $M$ of the entire dataset (for example, RF considers a random subset of $m = \sqrt{M}$ features at each node \cite{bernardInfluenceHyperparametersRandom2009}).
Let $\mathcal{X}_{\text{L}, ft}$ and $\mathcal{X}_{\text{R}, ft}$ be the left and right child subsets of $\mathcal{X}$ when $\mathcal{X}$ is split according to feature $f$ at threshold $t$. The approach finds the split that best reduces the label impurity; i.e., finds
\begin{align}
\label{eqn:ff_split}
    f^*, t^* = \argmin_{f \in \mathcal{F}, t \in \mathcal{T}_f} \frac{\vert \mathcal{X}_{\text{L}, ft} \vert}{n} I (\mathcal{X}_{\text{L}, ft} ) + \frac{\vert \mathcal{X}_{\text{R}, ft} \vert}{n} I ( \mathcal{X}_{\text{R}, ft} ) - I(\mathcal{X}),
\end{align}

where $I(\mathcal{S})$ measures the impurity of targets $\{y_i\}_{i\in \mathcal{S}}$ and $\mathcal{T}_f$ is the set of allowed thresholds for feature $f$. Common impurity measures include Gini impurity or entropy for classification, and mean-squared-error (MSE) for regression \cite{breimanClassificationRegressionTrees2017}:
\begin{align}
\label{eqn:ff_metrics}
    \text{Gini}: 1 - \sum_{k=1}^K p_k^2,~~~~ 
    \text{Entropy}: - \sum_{k=1}^K p_k \log_2p_k,~~~~\text{MSE}: \frac{1}{n}\sum_{i \in \mathcal{X}} (y_i - \bar{y})^2,
\end{align}
where $K$ is the total number of classes and $p_k = \frac{1}{n}\sum_{i \in \mathcal{X}} \mathbb{I}(y_i=k)$ is the proportion of class $k$ in $\mathcal{X}$ in classification, and $\bar{y} = \frac{1}{n}\sum_{i \in \mathcal{X}} y_i$ is the average target value in regression. We note that our proposed algorithm, \algnamenospace, does not assume any particular structure of $I(\cdot)$. 

While the conventional RF considers all $(n-1)$ possible splits among the $n$ generally different values in the dataset for a given feature $f$, in this work we focus on the histogram-based variant that chooses the threshold from a set of predefined values $\mathcal{T}_f$, e.g., $\vert \mathcal{T}_f \vert$ equally-spaced histogram bin edges; this variant is substantially more efficient, offers comparable accuracy, and has been used in most state-of-the-art implementations such as XGBoost and LightGBM \cite{rankaCLOUDSDecisionTree1998,chenXgboostScalableTree2016,keLightgbmHighlyEfficient2017,yatesSPAARCFastDecision2019}.
A na\"{i}ve algorithm finds the best feature-threshold pairs in Equation \eqref{eqn:ff_split} by evaluating the label impurity reduction for each feature-threshold over all $n$ data points, which incurs computation linear in $n$. 

\section{\algnamenospace}
\label{ch3_4:ff_algorithm}

We now discuss \algname and how it can reduce the complexity of the node-splitting problem to logarithmic in $n$.

With the same notation as in Section \ref{ch3_3:ff_preliminaries}, we note that Equation \eqref{eqn:ff_split} is equivalent to 
\begin{align}
\label{eqn:ff_split_mab}
    f^*, t^* = \argmin_{f \in \mathcal{F}, t \in \mathcal{T}_f} \frac{\vert \mathcal{X}_{\text{L}, ft} \vert}{n} I (\mathcal{X}_{\text{L}, ft} ) + \frac{\vert \mathcal{X}_{\text{R}, ft} \vert}{n} I ( \mathcal{X}_{\text{R}, ft} )
\end{align}
and so we focus on solving Equation \eqref{eqn:ff_split_mab}.

Let $\mu_{ft} = \frac{\vert \mathcal{X}_{\text{L}, ft} \vert}{n} I (\mathcal{X}_{\text{L}, ft} ) + \frac{\vert \mathcal{X}_{\text{R}, ft} \vert}{n} I ( \mathcal{X}_{\text{R}, ft} )$ be the optimization objective for feature-threshold pair $(f,t)$.
Omitting the dependence on $K$, computing $\mu_{ft}$ exactly is at least $O(n)$.
\algnamenospace, however, \emph{estimates} $\mu_{ft}$ with less computation by drawing $n' < n$ independent samples with replacement from $\mathcal{X}$.
As shown in Subsection \ref{subsec:ff_CI}, it is possible to construct a point estimate $\hat{\mu}_{ft}(n')$ and a $(1-\delta)$ confidence interval (CI) $C_{ft}(n', \delta)$ for the parameter $\mu_{ft}$, where $n'$ and $\delta$ determine estimation accuracy. 
The width of these CIs generally scales with $\sqrt{\tfrac{\log 1/\delta}{n'}}$.
To estimate the solution to Problem \eqref{eqn:ff_split_mab} with high confidence, we can then choose to sample different amounts of data points for different features to estimate their impurity reductions to varying degrees of accuracy. 
Intuitively, promising features-threshold splits with high impurity reductions (lower values of $\mu_{ft}$) should be estimated with high accuracy with many data points, while less promising ones with low impurity reductions (higher values of $\mu_{ft}$) can be discarded early.

The exact adaptive estimation procedure, \algnamenospace, is described in Algorithm \ref{alg:mabsplit}. 
It can be viewed as a batched version of the conventional UCB algorithm \cite{laiAsymptoticallyEfficientAdaptive1985,zhangAdaptiveMonteCarlo2019} combined with successive elimination \cite{even-darActionEliminationStopping2006}, is straightforward to implement, and has been used in other applications \cite{tiwariBanditpamAlmostLinear2020, tiwariFasterMaximumInner2022, baharavApproximateFunctionEvaluation2022, baharavUltraFastMedoid2019a}.
Algorithm \ref{alg:mabsplit} uses the set $\mathcal{S}_{\text{solution}}$ to track all potential solutions to Problem \eqref{eqn:ff_split_mab}; $\mathcal{S}_{\text{solution}}$ is initialized as the set of all feature-threshold pairs $\{(f,t)\}$ and Algorithm \ref{alg:mabsplit} maintains the mean objective estimate $\hat{\mu}_{ft}$ and $(1-\delta)$ CI $C_{ft}$ for each potential solution $(f,t) \in \mathcal{S}_{\text{solution}}$. 

In each iteration, a new batch of data points $\mathcal{X}_{\text{batch}}$ is used to evaluate the split quality for all potential feature-threshold splits in $\mathcal{S}_{\text{solution}}$, which allows the estimate of each $\hat{\mu}_{ft}$ to be made more accurate. 
Based on the current estimate, if a candidate's lower confidence bound $\hat{\mu}_{ft} - C_{ft}$ is greater than the upper confidence bound of the most promising candidate $\min_{f,t}(\hat{\mu}_{ft} + C_{ft})$, we remove it from  $\mathcal{S}_{\text{solution}}$. This process continues until there is only one candidate in $\mathcal{S}_{\text{solution}}$ or until we have sampled more than $n$ data points. In the latter case, we know that the difference between the remaining candidates in $\mathcal{S}_{\text{solution}}$ is so subtle that an exact computation is warranted.
\algname then compute those candidates' objectives exactly and returns the best candidate in the set. 

\subsection{Point estimates and confidence intervals for impurity metrics}
\label{subsec:ff_CI}

We now discuss \algnamenospace's construction of point estimates and confidence intervals of $\mu_{ft}$ based on a set of $n'$ points, $\{(\mathbf{X}_i, Y_i)\}_{i=1}^{n'}$, sampled independently and with replacement from $\mathcal{X}$.
We consider two widely used impurity metrics in classification, Gini impurity and entropy, although mean estimates and confidence intervals for other settings and metrics can be derived similarly (more details are provided in Appendix \ref{ch_6_1:ff_app_3_mean_and_cis}).  

Let $p_{\text{L}, k} \coloneqq \frac{1}{n} \sum_{i=1}^{n} \mathbb{I}(x_{if}<t, y_i=k)$ and $p_{\text{R}, k} \coloneqq \frac{1}{n} \sum_{i=1}^{n} \mathbb{I}(x_{if}\geq t, y_i=k)$ denote the proportion of the full $n$ data points in class $k$ and each of the two subsets created by the split $(f,t)$ (we call these subsets ``left'' and ``right'', respectively). 
Note that
\begin{align}
    \label{eqn:ff_leftright}
    &\sum_{k} p_{\text{L}, k} = \frac{|\mathcal{X}_{\text{L}, ft}|}{n} \quad \text{ and } \quad \sum_{k} p_{\text{R}, k} = \frac{|\mathcal{X}_{\text{R}, ft}|}{n}. 
\end{align}
Furthermore, let $\hat{p}_{\text{L}, k} \coloneqq \frac{1}{n'} \sum_{i=1}^{n'} \mathbb{I}(X_{if}<t, Y_i=k)$ and $\hat{p}_{\text{R}, k} \coloneqq \frac{1}{n'} \sum_{i=1}^{n'} \mathbb{I}(X_{if}\geq t, Y_i=k)$ denote the empirical estimates of $p_{\text{L}, k}$ and $p_{\text{R}, k}$  based on the $n'$ subsamples drawn thus far. Then $\{\hat{p}_{\text{L}, k}, \hat{p}_{\text{R}, k}\}_{k=1}^K$ jointly follow a multinomial distribution
satisfying
\begin{align*}
    \mathbb{E}[\hat{p}_{\text{L}, k}] = p_{\text{L}, k},~~~~\text{Var}[\hat{p}_{\text{L}, k}] = \frac{1}{n'} p_{\text{L}, k} (1 - p_{\text{L}, k}), \\
    \mathbb{E}[\hat{p}_{\text{R}, k}] = p_{\text{R}, k},~~~~\text{Var}[\hat{p}_{\text{R}, k}] = \frac{1}{n'} p_{\text{R}, k} (1 - p_{\text{R}, k}).
\end{align*}
since for each random data point $(\mathbf{X}_i, Y_i)$, exactly one element of the set $\{\hat{p}_{\text{L}, k}, \hat{p}_{\text{R}, k}\}_{k=1}^K$ is incremented.
Using Equations~\eqref{eqn:ff_metrics} and \eqref{eqn:ff_leftright}, and the definition of $\mu_{ft}$ after Equation~\eqref{eqn:ff_split_mab}, we write
\begin{align}
    & \text{Gini impurity}:~\mu_{ft} =  1 - \frac{\sum_{k} p_{\text{L}, k}^2}{\sum_{k} p_{\text{L}, k}} - \frac{\sum_{k} p_{\text{R}, k}^2}{\sum_{k} p_{\text{R}, k}}, \\
    & \text{Entropy}:~\mu_{ft} = - \sum_k p_{\text{L}, k} \log_2 \frac{p_{\text{L}, k}}{\sum_{k'} p_{\text{L}, k'}} - \sum_k p_{\text{R}, k} \log_2 \frac{p_{\text{R}, k}}{\sum_{k'} p_{\text{R}, k'}},
\end{align}
where we can use the empirical parameters $\{\hat{p}_{\text{L}, k}, \hat{p}_{\text{R}, k}\}_{k=1}^K$ as plug-in estimators for the true parameters $\{p_{\text{L}, k}, p_{\text{R}, k}\}_{k=1}^K$ to produce the point estimate $\hat{\mu}_{ft}$.

In \algname (Algorithm~\ref{alg:mabsplit}), each batch of $B$ data points is used to update each $\hat{p}_{\text{L}, k}$ and $\hat{p}_{\text{R}, k}$, which are used in turn to update the point estimates $\hat{\mu}_{ft}$ and the corresponding CIs. 
The CIs of $\hat{\mu}_{ft}$ are based on standard error derived using the delta method \cite{vandervaartAsymptoticStatistics2000}.
As in standard applications of the delta method, the estimates $\hat{\mu}_{ft}$ are asymptotically unbiased and their corresponding CIs are asymptotically valid.
Appendix \ref{ch_6_1:ff_app_3_mean_and_cis} provides further details, including a derivation of the CIs and discussion of convergence properties.

\begin{algorithm}[t]

\caption[MABSplit]
{
\algname (
$\mathcal{X}, 
\mathcal{F}, 
\mathcal{T}_f,
I(\cdot)$,
$B$,
$\delta$
) \label{alg:mabsplit}}
\begin{algorithmic}[1]
\State $\mathcal{S}_{\text{solution}} \leftarrow \{(f,t),~ \forall~f \in \mathcal{F},~\forall t \in \mathcal{T}_f\}$ \Comment{Set of potential solutions to Problem \eqref{eqn:ff_split_mab}}
\State $n_{\text{used}} \gets 0$  \Comment{Number of data points sampled}
\State For all $(f,t) \in  \mathcal{S}_{\text{solution}}$, set $\hat{\mu}_{ft} \leftarrow \infty$, $C_{ft} \leftarrow \infty$  \Comment{Initialize mean and CI for each arm}
\ForAll{$f \in \mathcal{F}$}
    \State Create empty histogram, $h_f$, with $|\mathcal{T}_f| = T$ equally spaced bins
\EndFor
\While{$n_{\text{used}} < n $ and $|\mathcal{S}_{\text{solution}}| > 1$} 
        \State Draw a batch sample $\mathcal{X}_{\text{batch}}$ of size $B$ with replacement from $\mathcal{X}$
        \ForAll{unique $f$ in $\mathcal{S}_{\text{solution}}$}
            \ForAll{$x$ in $\mathcal{X}_{\text{batch}}$}
            \State Insert $x_f$ into histogram $h_f$ \Comment{Each insertion is $O(1)$}
            \EndFor
        \EndFor
        \ForAll{$(f,t) \in \mathcal{S}_{\text{solution}} $}
            \State Update $\hat{\mu}_{ft}$ and $C_{ft}$ based on histogram $h_f$, $I$, and $\delta$ \Comment{For fixed $f$, this is $O(T)$}
        \EndFor
    \State $\mathcal{S}_{\text{solution}} \leftarrow \{(f,t) : \hat{\mu}_{ft} - C_{ft} \leq \min_{f,t}(\hat{\mu}_{ft} + C_{ft})\}$ \Comment{Retain only promising splits}
    \State $n_{\text{used}} \leftarrow n_{\text{used}} + B$
\EndWhile
\If{$\vert \mathcal{S}_{\text{solution}} \vert$ = 1}
    \State \textbf{return} $(f^*,t^*) \in \mathcal{S}_{\text{solution}}$
\Else
    \State Compute $\mu_{ft}$ exactly for all $(f,t) \in \mathcal{S}_{\text{solution}}$
    \State \textbf{return} $(f^*,t^*) = \argmin_{(f,t) \in \mathcal{S}_{\text{solution}}} \mu_{ft}$
\EndIf
\end{algorithmic}
\end{algorithm}

\subsection{Algorithmic and implementation details}
\label{subsec:ff_algdetails}

We considered sampling with replacement in \algname (Algorithm \ref{alg:mabsplit}) primarily for the ease of theoretical analysis. In practice, we found sampling without replacement was more computationally efficient and did not significantly change the results, and was used in the actual implementation.

For DTs, in classification tasks, we allow individual DTs to provide soft votes and average their predicted class probabilities to determine the forest's predicted class label, following existing approaches \cite{pedregosaScikitlearnMachineLearning2011}. 
This is in contrast with hard votes, in which each DT is only permitted to produce its best label and the forest's prediction is determined by majority voting.
For the fixed-budget experiments in Section \ref{ch3_6:ff_exps}, we terminate tree construction and do not split nodes further if doing so would violate our budget constraints.

\section{Analysis of the Algorithm}
\label{ch3_5:ff_theory}

In this section, we prove that \algname returns the optimal feature-threshold pair for a node split with high probability. 
Furthermore, we provide bounds on computational complexity of \algnamenospace, which can lead to a $O(1)$ dependence on the number of data points $n$ under weak assumptions, specifically when the gaps $\Delta_{ft}$ do not depend on $n$.

As above, consider a node with $n$ data points $\mathcal{X}$, $m$ features $\mathcal{F}$, and $T$ possible thresholds for each feature ($\vert \mathcal{T}_f \vert = T$ for all $f$).
Suppose $(f^*, t^*) = \argmin_{f \in \mathcal{F}, t \in \mathcal{T}_f} \mu_{ft}$ is the optimal feature-threshold pair at which to split the node.
For any other feature-threshold pair $(f, t)$,
define $\Delta_{ft} \coloneqq \mu_{ft} - \mu_{f^*t^*}$.
To state the following results, we will assume that, for a fixed feature-threshold pair $(f,t)$ and $n'$ randomly sampled datapoints, the $(1-\delta)$ confidence interval scales as $C_{ft}(n', \delta) = O(\sqrt{\frac{\log 1/\delta}{n'}})$.
(This assumption is justified for the confidence intervals of Gini impurity and entropy under weak assumptions on the $\mu_{f,t}$'s \cite{vandervaartAsymptoticStatistics2000}).
With this assumption, we state the following theorem:

\begin{theorem}
\label{thm:ff_specific}
Assume $\exists c_0 > 0$ s.t. $\forall \delta'>0, n'>0,$ we have $C_{ft}(n', \delta) < c_0\sqrt{\frac{\log 1/\delta'}{n'}}$. 
Then, given $\delta$, Algorithm \ref{alg:mabsplit} returns the correct solution to Equation \eqref{eqn:ff_split_mab} and uses a total of $M$ computations, where
\begin{align}
\label{eqn:ff_instance_bd}
M \leq \sum_{f\in\mathcal{F},t\in \mathcal{T}_f}  \min \left[ \frac{4c_0^2}{\Delta_{ft}^2} \log  \left( \frac{mT}{\delta \Delta_{ft}} \right) + B, 2n \right] + 2mT
\end{align}
with probability at least $1-\delta$.
\end{theorem}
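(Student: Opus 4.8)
The plan is to adapt the standard successive-elimination argument (the one underlying Algorithm \ref{alg:succ_elim}) to the batched setting of Algorithm \ref{alg:mabsplit}: I would condition on a single \emph{good event} on which every confidence interval is simultaneously valid, derive both correctness and the per-arm pull count deterministically on that event, and then bound the event's probability by a union bound over the $mT$ arms and over iterations. Concretely, let $\mathcal E$ be the event that for every arm $(f,t)$ and every iteration of the while loop, $\mu_{ft}\in[\hat\mu_{ft}-C_{ft},\,\hat\mu_{ft}+C_{ft}]$. Using the hypothesis $C_{ft}(n',\delta')<c_0\sqrt{\log(1/\delta')/n'}$ together with concentration of the (delta-method) point estimates, I would assign each arm-iteration pair a failure budget that decays in the iteration index $t$ (proportional to $\delta/(mT t^2)$, mirroring the $\log(4nt^2/\delta)$ choice in Algorithm \ref{alg:succ_elim}), so that summing over all arms and iterations leaves total failure probability at most $\delta$, i.e.\ $\Pr[\mathcal E]\ge 1-\delta$.

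On $\mathcal E$ correctness is immediate: the optimal pair is never eliminated, since for every surviving arm $(f,t)$ one has the chain $\hat\mu_{f^*t^*}-C_{f^*t^*}\le \mu_{f^*t^*}\le \mu_{ft}\le \hat\mu_{ft}+C_{ft}$, so the optimal arm's lower bound never exceeds $\min_{f,t}(\hat\mu_{ft}+C_{ft})$ and the elimination test in Line 16 never fires for it. Hence whether the loop exits with a singleton or with several survivors (Lines 18--21, where the objectives are recomputed exactly), the returned pair is $(f^*,t^*)$.

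For the complexity bound I would show that on $\mathcal E$ a suboptimal arm $(f,t)$ is removed quickly. From $\hat\mu_{ft}-C_{ft}\ge \mu_{ft}-2C_{ft}$ and $\hat\mu_{f^*t^*}+C_{f^*t^*}\le \mu_{f^*t^*}+2C_{f^*t^*}$, elimination is guaranteed once the common confidence half-width bound $c_0\sqrt{\log(1/\delta')/n'}$ (common because every surviving arm, including the optimum, receives every batch and so shares the same sample count $n'$) drops below a constant fraction of $\Delta_{ft}$. Inverting this inequality with the iteration-dependent $\delta'$ from the first step yields the threshold $n'_{ft}=\tfrac{4c_0^2}{\Delta_{ft}^2}\log\!\big(\tfrac{mT}{\delta\Delta_{ft}}\big)$, where the $mT$ is the union over arms and the $\Delta_{ft}$ inside the logarithm is the union over the $\sim 1/\Delta_{ft}^2$ relevant iterations. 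Batching overshoot adds at most one extra batch $B$; arms whose gap is too small to resolve before $n_{\text{used}}$ reaches $n$ are instead handled by the exact computation, costing at most $2n$ each, which produces the $\min[\,\cdot\,,2n]$ per arm. Summing over all $mT$ arms and absorbing the lower-order bookkeeping (histogram setup and per-arm confidence-bound updates) into the additive $+2mT$ gives the stated bound on $M$.

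The main obstacle I expect is the first step, namely making the confidence intervals \emph{anytime-valid}: obtaining the $\log(mT/(\delta\Delta_{ft}))$ factor rather than a naive $\log(mT/\delta)$ requires union-bounding over an unknown, arm-dependent number of iterations, which forces the decaying per-iteration level and relies on the delta-method intervals being valid non-asymptotically up to the assumed constant $c_0$. Reconciling the fixed-$\delta$ interval written inside Algorithm \ref{alg:mabsplit} with the iteration-dependent level the union bound demands is the delicate part; once $\mathcal E$ is in hand, everything downstream is deterministic and routine.
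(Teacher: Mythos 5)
Your proposal is correct and follows essentially the same route as the paper's proof of Theorem \ref{thm:ff_specific}: a good event on which all confidence intervals are simultaneously valid (via a union bound over the $mT$ arms and over iterations with a per-iteration failure budget decaying like $1/n_{\text{used}}^2$), correctness because $(f^*,t^*)$ can never be eliminated on that event, and the complexity bound by inverting the confidence-width inequality against $\Delta_{ft}$, including the one-batch overshoot $B$, the $\min\left[\,\cdot\,,2n\right]$ fallback to exact computation, and the sum over all $mT$ arms. If anything, you spell out the reconciliation between the algorithm's iteration-dependent confidence levels and the $\log\left(mT/(\delta\Delta_{ft})\right)$ factor more explicitly than the paper does; otherwise the two arguments coincide.
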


Theorem \ref{thm:ff_specific} is proven in Appendix \ref{ch_6_1:ff_app_1_proofs}.
Intuitively, Theorem \ref{thm:ff_specific} states that with high probability, \algname returns the optimal feature-threshold pair at which to split the node.
The bound Equation \eqref{eqn:ff_instance_bd} suggests the computational cost of evaluating a feature-threshold pair $(f,t)$, i.e., $\min \left[ \frac{4c_0^2}{\Delta_{ft}^2} \log(\frac{mT}{\delta \Delta_{ft}}) + B, 2n \right]$, depends on $\Delta_{ft}$, which measures how close its optimization parameter $\mu_{ft}$ is to $\mu_{f^*t^*}$. Most reasonably different features $f\neq f^*$ will have a large $\Delta_{ft}$ and incur computational cost $O(\log \frac{mT}{\delta \Delta_{ft}} )$ that is independent of $n$.

In turn, this implies \algname takes only $O(\log \frac{mT}{\delta \Delta_{ft}} )$ computations per feature-threshold pair if there is reasonable heterogeneity among them. As proven in Appendix 2 of \cite{bagariaMedoidsAlmostlinearTime2018}, this is the case under a wide range of distributional assumptions on the $\mu_{ft}$'s, e.g., when the $\mu_{ft}$'s follow a sub-Gaussian distribution across the pairs $(f, t)$.
Such assumptions ensure that \algname has an overall complexity of $O(m T \log ( \frac{mT}{\delta \Delta_{ft}} ) )$, which is independent of the number of data points $n$.
We note that in the worst case, however, \algname may take $O(n(m+T))$ computations per feature-threshold pair when most splits are equally good, in which case \algname reduces to a batched version of the na\"ive algorithm.
This may happen, for example, in highly symmetric datasets where all splits reduce the impurity equally.
Other recent work provides further discussion on the conversion between a bound like Equation \eqref{eqn:ff_instance_bd}, which depends on the $\Delta_{ft}$'s, and a bound in terms of other problem parameters ($m$, $T$, $\delta$, and the $\Delta_{ft}$'s) under various assumptions on the $\mu_{ft}$'s \cite{bagariaMedoidsAlmostlinearTime2018,zhangAdaptiveMonteCarlo2019,baharavUltraFastMedoid2019a,tiwariBanditpamAlmostLinear2020,bagariaBanditbasedMonteCarlo2021,baharavApproximateFunctionEvaluation2022}.
\section{Experimental Results}
\label{ch3_6:ff_exps}

We demonstrate the advantages of \algname in two settings. In the first setting, the baseline models with and without \algname are trained to completion and we report wall-clock training time and generalization performance.
In the second setting, we consider training each forest with a fixed computational budget and study effect of \algname on generalization performance. We provide a description of each dataset in Appendix \ref{ch_6_1:ff_app_6_experiment_details}.

We note that head-to-head wall-clock time comparisons with common implementations of forest-based algorithms, such as Weka's \cite{hallWEKADataMining2009} or \texttt{scikit-learn}'s \cite{pedregosaScikitlearnMachineLearning2011}, would be unfair due to their extensive hardware- and language-level optimizations.
As such, we reimplement these baselines in Python and focus on algorithmic improvements.
The only difference between our model and the baselines is the call to the node-splitting subroutine (\algname in our model, and the exact, brute-force solver for the baseline models); thus, any improvements in runtime are due to improvements in the node-splitting algorithm.
This is verified by profiling the implementations and measuring the relative time spent in the node-splitting subroutine versus total runtime (see Appendix \ref{ch_6_1:ff_app_5_profiles}).
Our approach allows us to focus on algorithmic improvements as opposed implementation-specific optimizations. Our implementation of these baselines may also be of independent interest and we verify the quality of our implementations, via agreement with \texttt{scikit-learn}, in Appendix \ref{ch_6_1:ff_app_4_sklearn}. 
We also provide a brief discussion of how an optimized version of our reimplementations may outperform \texttt{scikit-learn} in Appendix \ref{ch_6_1:ff_app_7_limitations}. On the MNIST dataset, our optimized implementation trains approximately 4x faster than \texttt{scikit-learn}'s \texttt{DecisionTreeClassifier}.

\paragraph{Baseline Models:} We compare the histogrammed versions of three baselines with and without \algnamenospace: Random Forest (RF), ExtraTrees \cite{geurtsExtremelyRandomizedTrees2006}, and Random Patches (RP) \cite{louppeEnsemblesRandomPatches2012}. 
In Random Forests, each tree fits a bootstrap sample of $N$ datapoints and considers random subset of $\sqrt{M}$ features at every node split. 
Extra Trees (also known as Extremely Randomized Forests) are identical to Random Forests except for two differences. 
First, in regression problems, all features are considered at every node split (in classification problems, we still use only $\sqrt{M}$ features). 
Second, the histogram edges of a feature are randomly chosen from a uniform distribution over that feature's minimum and maximum value. 
In classification problems, each histogram has $\sqrt{M}$ bins and in regression problems, each histogram has $M$ bins. 
These conventions follow standard implementations \cite{pedregosaScikitlearnMachineLearning2011}. 
Note that in ExtraTrees, the bins in a feature's histogram need not be equally spaced. 
Random Patches is identical to Random Forests but the training dataset is reduced to $\alpha_n$ of its original datapoints and $\alpha_f$ of its original features, where $\alpha_n$ and $\alpha_f$ are prespecified constants, and the subsampled dataset is fixed for the training of the entire forest.
Full settings for all experiments are given in Appendix \ref{ch_6_1:ff_app_6_experiment_details}.

\subsection{Wall-clock time comparisons}

In the first setting, we compare baseline models with and without \algname in terms of wall-clock training time. Tables \ref{table:ff_classificationruntime} and \ref{table:ff_regressionruntime} show that \algname provides similar generalization performance but faster training than the usual na\"ive algorithm for node-splitting for almost all baselines in both classification and regression tasks. Across various tasks, \algname leads to approximately 2x-100x faster training, a reduction of training time of 50-99\%. These benefits are wholly attributable to \algname as the only difference between successive minor rows in each table is the node-splitting subroutine (see Appendix \ref{ch_6_1:ff_app_5_profiles} for further discussion).

\begin{table}
\resizebox{\textwidth}{!}{
\begin{tabular}{|c|c|c|c|}
\hline
\multicolumn{4}{|c|}{MNIST Dataset ($N = 60,000$)}                                                                                                                                       \\         \specialrule{0.1pt}{0pt}{0pt}

\multicolumn{1}{|c|}{Model}                          & \multicolumn{1}{c|}{Training Time (s)}                  & \multicolumn{1}{c|}{Number of Insertions}           & Test Accuracy                 \\ \hline
\multicolumn{1}{|c|}{RF}                             & \multicolumn{1}{c|}{1542.83 $\pm$ 5.837}         & \multicolumn{1}{c|}{1.44E+08 $\pm$ 4.85E+05}          & 0.777 $\pm$ 0.005          \\
\multicolumn{1}{|c|}{\textbf{RF + MABSplit}}         & \multicolumn{1}{c|}{\textbf{40.359 $\pm$ 0.246}} & \multicolumn{1}{c|}{\textbf{3.37E+06 $\pm$ 1.62E+04}} & \textbf{0.763 $\pm$ 0.008} \\ \hline
\multicolumn{1}{|c|}{ExtraTrees}                     & \multicolumn{1}{c|}{1789.653 $\pm$ 2.396}        & \multicolumn{1}{c|}{1.68E+08 $\pm$ 0.00E+00}          & 0.762 $\pm$ 0.003          \\
\multicolumn{1}{|c|}{\textbf{ExtraTrees + MABSplit}} & \multicolumn{1}{c|}{\textbf{50.217 $\pm$ 0.304}} & \multicolumn{1}{c|}{\textbf{4.32E+06 $\pm$ 7.69E+03}} & \textbf{0.755 $\pm$ 0.002} \\ \hline
\multicolumn{1}{|c|}{RP}                             & \multicolumn{1}{c|}{1421.963 $\pm$ 8.368}        & \multicolumn{1}{c|}{1.32E+08 $\pm$ 6.95E+05}          & 0.771 $\pm$ 0.003          \\
\multicolumn{1}{|c|}{\textbf{RP + MABSplit}}         & \multicolumn{1}{c|}{\textbf{38.415 $\pm$ 0.245}} & \multicolumn{1}{c|}{\textbf{3.17E+06 $\pm$ 1.40E+04}} & \textbf{0.768 $\pm$ 0.003} \\ \hline
\multicolumn{4}{|c|}{APS Failure at Scania Trucks Dataset ($N = 60,000$)}                                                                                                                \\ \specialrule{0.1pt}{0pt}{0pt}
\multicolumn{1}{|c|}{Model}                          & \multicolumn{1}{c|}{Training Time (s)}                  & \multicolumn{1}{c|}{Number of Insertions}           & Test Accuracy                 \\ \hline
\multicolumn{1}{|c|}{RF}                             & \multicolumn{1}{c|}{20.542 $\pm$ 0.048}          & \multicolumn{1}{c|}{3.77E+06 $\pm$ 9.66E+03}          & 0.985 $\pm$ 0.0            \\
\multicolumn{1}{|c|}{\textbf{RF + MABSplit}}         & \multicolumn{1}{c|}{\textbf{0.455 $\pm$ 0.002}}  & \multicolumn{1}{c|}{\textbf{6.94E+04 $\pm$ 2.19E+02}} & \textbf{0.985 $\pm$ 0.0}   \\ \hline
\multicolumn{1}{|c|}{ExtraTrees}                     & \multicolumn{1}{c|}{18.849 $\pm$ 0.027}          & \multicolumn{1}{c|}{3.78E+06 $\pm$ 0.00E+00}          & 0.985 $\pm$ 0.0            \\
\multicolumn{1}{|c|}{\textbf{ExtraTrees + MABSplit}} & \multicolumn{1}{c|}{\textbf{0.406 $\pm$ 0.001}}  & \multicolumn{1}{c|}{\textbf{7.00E+04 $\pm$ 0.00E+00}} & \textbf{0.985 $\pm$ 0.0}   \\ \hline
\multicolumn{1}{|c|}{RP}                             & \multicolumn{1}{c|}{17.63 $\pm$ 0.054}           & \multicolumn{1}{c|}{3.22E+06 $\pm$ 1.18E+04}          & 0.985 $\pm$ 0.0            \\
\multicolumn{1}{|c|}{\textbf{RP + MABSplit}}         & \multicolumn{1}{c|}{\textbf{0.399 $\pm$ 0.003}}  & \multicolumn{1}{c|}{\textbf{5.96E+04 $\pm$ 2.19E+02}} & \textbf{0.985 $\pm$ 0.0}   \\ \hline
\multicolumn{4}{|c|}{Forest Covertype Dataset ($N = 581,012$)}                                                                                                                           \\ \specialrule{0.1pt}{0pt}{0pt}
\multicolumn{1}{|c|}{Model}                          & \multicolumn{1}{c|}{Training Time (s)}                  & \multicolumn{1}{c|}{Number of Insertions}           & Test Accuracy                 \\ \hline
\multicolumn{1}{|c|}{RF}                             & \multicolumn{1}{c|}{117.351 $\pm$ 0.123}         & \multicolumn{1}{c|}{1.86E+07 $\pm$ 0.00E+00}          & 0.559 $\pm$ 0.028          \\
\multicolumn{1}{|c|}{\textbf{RF + MABSplit}}         & \multicolumn{1}{c|}{\textbf{0.88 $\pm$ 0.009}}   & \multicolumn{1}{c|}{\textbf{3.98E+04 $\pm$ 1.79E+02}} & \textbf{0.505 $\pm$ 0.004} \\ \hline
\multicolumn{1}{|c|}{ExtraTrees}                     & \multicolumn{1}{c|}{117.984 $\pm$ 0.119}         & \multicolumn{1}{c|}{1.86E+07 $\pm$ 0.00E+00}          & 0.539 $\pm$ 0.022          \\
\multicolumn{1}{|c|}{\textbf{ExtraTrees + MABSplit}} & \multicolumn{1}{c|}{\textbf{2.942 $\pm$ 0.856}}  & \multicolumn{1}{c|}{\textbf{3.69E+05 $\pm$ 1.22E+05}} & \textbf{0.5 $\pm$ 0.005}   \\ \hline
\multicolumn{1}{|c|}{RP}                             & \multicolumn{1}{c|}{104.456 $\pm$ 0.737}         & \multicolumn{1}{c|}{1.62E+07 $\pm$ 8.31E+04}          & 0.51 $\pm$ 0.008           \\
\multicolumn{1}{|c|}{\textbf{RP + MABSplit}}         & \multicolumn{1}{c|}{\textbf{0.815 $\pm$ 0.004}}  & \multicolumn{1}{c|}{\textbf{3.50E+04 $\pm$ 0.00E+00}} & \textbf{0.507 $\pm$ 0.005} \\ \hline
\end{tabular}
}
\vspace{1pt}
\caption[Wall-clock training time, sample complexity, and test accuracy for various tree-based classification models with and without MABSplit]{Wall-clock training time, number of histogram insertions, and test accuracy for various models with and without \algnamenospace. \algname can accelerate these models by over 100x in some cases (an 99\% reduction in training time) while achieving comparable accuracy. The number of histogram insertions correlates strongly with wall-clock training time, which justifies our focus on accelerating the node-splitting algorithm via reductions in sample complexity.}
\label{table:ff_classificationruntime} 
\end{table}

\begin{table}
\centering
\begin{tabular}{|ccc|}
\hline
\multicolumn{3}{|c|}{}                                                                                           \\[-1em]
\multicolumn{3}{|c|}{Beijing Multi-Site Air-Quality Dataset (Regression, $N = 420,768$)}                                                                                               \\ \specialrule{0.1pt}{0pt}{0pt}
\multicolumn{3}{|c|}{}                                                                                           \\[-1em]
\multicolumn{1}{|c|}{Model}                          & \multicolumn{1}{c|}{Training Time (s)}                & Test MSE                     \\ \hline
\multicolumn{1}{|c|}{RF}                             & \multicolumn{1}{c|}{138.782 $\pm$ 1.581}         & 1164.576 $\pm$ 0.761             \\
\multicolumn{1}{|c|}{\textbf{RF + MABSplit}}         & \multicolumn{1}{c|}{\textbf{67.089 $\pm$ 1.682}} & \textbf{1109.542 $\pm$ 23.776}   \\ \hline
\multicolumn{1}{|c|}{ExtraTrees}                     & \multicolumn{1}{c|}{115.592 $\pm$ 3.061}         & 1028.054 $\pm$ 11.355            \\
\multicolumn{1}{|c|}{\textbf{ExtraTrees + MABSplit}} & \multicolumn{1}{c|}{\textbf{53.607 $\pm$ 1.278}} & \textbf{1015.234 $\pm$ 6.535}    \\ \hline
\multicolumn{1}{|c|}{RP}                             & \multicolumn{1}{c|}{108.174 $\pm$ 1.24}          & 1128.299 $\pm$ 25.78             \\
\multicolumn{1}{|c|}{\textbf{RP + MABSplit}}         & \multicolumn{1}{c|}{\textbf{60.639 $\pm$ 3.642}} & \textbf{1125.816 $\pm$ 33.56}    \\ \hline
\multicolumn{3}{|c|}{}                                                                                           \\[-1em]
\multicolumn{3}{|c|}{SGEMM GPU Kernel Performance Dataset (Regression, $N = 241,600$)}                                                                                        \\ \specialrule{0.1pt}{0pt}{0pt} 
\multicolumn{3}{|c|}{}                                                                                           \\[-1em]
\multicolumn{1}{|c|}{Model}                          & \multicolumn{1}{c|}{Training Time (s)}                & Test MSE                     \\ \hline
\multicolumn{1}{|c|}{RF}                             & \multicolumn{1}{c|}{32.606 $\pm$ 0.859}          & 69733.002 $\pm$ 57.401           \\
\multicolumn{1}{|c|}{\textbf{RF + MABSplit}}         & \multicolumn{1}{c|}{\textbf{16.51 $\pm$ 0.224}}  & \textbf{69493.921 $\pm$ 73.133}  \\ \hline
\multicolumn{1}{|c|}{ExtraTrees}                     & \multicolumn{1}{c|}{30.624 $\pm$ 0.686}          & 69734.948 $\pm$ 54.876           \\
\multicolumn{1}{|c|}{\textbf{ExtraTrees + MABSplit}} & \multicolumn{1}{c|}{\textbf{14.086 $\pm$ 0.295}} & \textbf{69585.029 $\pm$ 80.281}  \\ \hline
\multicolumn{1}{|c|}{RP}                             & \multicolumn{1}{c|}{26.091 $\pm$ 0.417}          & 66364.998 $\pm$ 894.568          \\
\multicolumn{1}{|c|}{\textbf{RP + MABSplit}}         & \multicolumn{1}{c|}{\textbf{16.409 $\pm$ 0.952}} & \textbf{66310.138 $\pm$ 896.237} \\ \hline
\end{tabular}
\vspace{1pt}
\caption[Wall-clock training time and test MSE for various tree-based classification models with and without MABSplit]
{Wall-clock training time and test MSE for various models with and without \algnamenospace. \algname can accelerate these models by up to 2x (an 50\% reduction in training time) while achieving comparable results. We omit the number of histogram insertions in favor of wall-clock time for simplicity; unlike in classification, the different baseline regression models have widely varying histogram bin counts. Since the histogram insertion complexity is different across models, the comparison across models would not be fair.}
\label{table:ff_regressionruntime}
\end{table}

\subsection{Fixed budget comparisons}

In the second setting, we consider training models under a fixed computational budget. As before, insertion into a histogram is taken to be an $O(1)$ operation. This is justified if the histogram's thresholds are evenly spaced, wherefore the correct bin in which to insert a value can be indexed into directly.
(If the bins are unevenly spaced, we may perform binary searches to locate the correct bin, which is $O(\log T)$ and does not depend on $n$, or cache the results of these binary searches for an evenly-spaced grid across the range of the given feature's value.)

Intuitively, the \algname algorithm allows for splitting a given node with less data point queries and histogram insertions than the na\"ive solution. As such, when the computational budget is fixed, forests trained with \algname should be able to split more nodes and therefore train more trees than forests trained with the na\"ive solver. Prior work suggests that increasing the number of trees in a forest improves generalization performance by reducing variance at the cost of slightly increased bias \cite{hastieElementsStatisticalLearning2009}.

Tables \ref{table:ff_classificationbudget} and \ref{table:ff_regressionbudget} demonstrate the generalization performance of different models as the computational budget is held constant for different classification and regression tasks. When using \algnamenospace, the trained forests consist of more trees and demonstrate better generalization performance across all baseline models.

\begin{table}
\centering
\begin{tabular}{|ccc|}
\hline
\multicolumn{3}{|c|}{MNIST Dataset ($N = 60,000$)}                                                                             \\ \specialrule{0.1pt}{0pt}{0pt}
\multicolumn{1}{|c|}{Model}                          & \multicolumn{1}{c|}{Number of Trees}         & Test Accuracy                 \\ \hline
\multicolumn{1}{|c|}{RF}                             & \multicolumn{1}{c|}{0.2 $\pm$ 0.179}           & 0.143 $\pm$ 0.026          \\
\multicolumn{1}{|c|}{\textbf{RF + MABSplit}}         & \multicolumn{1}{c|}{\textbf{15.8 $\pm$ 0.179}} & \textbf{0.83 $\pm$ 0.002}  \\ \hline
\multicolumn{1}{|c|}{ExtraTrees}                     & \multicolumn{1}{c|}{0.2 $\pm$ 0.179}           & 0.144 $\pm$ 0.027          \\
\multicolumn{1}{|c|}{\textbf{ExtraTrees + MABSplit}} & \multicolumn{1}{c|}{\textbf{12.0 $\pm$ 0.0}}   & \textbf{0.814 $\pm$ 0.001} \\ \hline
\multicolumn{1}{|c|}{RP}                             & \multicolumn{1}{c|}{1.0 $\pm$ 0.0}             & 0.253 $\pm$ 0.003          \\
\multicolumn{1}{|c|}{\textbf{RP + MABSplit}}         & \multicolumn{1}{c|}{\textbf{16.8 $\pm$ 0.179}} & \textbf{0.832 $\pm$ 0.002} \\ \hline
\multicolumn{3}{|c|}{APS Failure at Scania Trucks Dataset ($N = 60,000$)}                                                                                                      \\ \specialrule{0.1pt}{0pt}{0pt}
\multicolumn{1}{|c|}{Model}                          & \multicolumn{1}{c|}{Number of Trees}         & Test Accuracy                 \\ \hline
\multicolumn{1}{|c|}{RF}                             & \multicolumn{1}{c|}{1.0 $\pm$ 0.0}             & 0.985 $\pm$ 0.0            \\
\multicolumn{1}{|c|}{\textbf{RF + MABSplit}}         & \multicolumn{1}{c|}{\textbf{5.8 $\pm$ 0.179}}  & \textbf{0.989 $\pm$ 0.0}   \\ \hline
\multicolumn{1}{|c|}{ExtraTrees}                     & \multicolumn{1}{c|}{1.0 $\pm$ 0.0}             & 0.985 $\pm$ 0.0            \\
\multicolumn{1}{|c|}{\textbf{ExtraTrees + MABSplit}} & \multicolumn{1}{c|}{\textbf{5.6 $\pm$ 0.219}}  & \textbf{0.989 $\pm$ 0.0}   \\ \hline
\multicolumn{1}{|c|}{RP}                             & \multicolumn{1}{c|}{1.0 $\pm$ 0.0}             & 0.985 $\pm$ 0.0            \\
\multicolumn{1}{|c|}{\textbf{RP + MABSplit}}         & \multicolumn{1}{c|}{\textbf{6.8 $\pm$ 0.179}}  & \textbf{0.989 $\pm$ 0.0}   \\ \hline
\multicolumn{3}{|c|}{Forest Covertype Dataset ($N = 581,012$)}                                                                                         \\ \specialrule{0.1pt}{0pt}{0pt}
\multicolumn{1}{|c|}{Model}                          & \multicolumn{1}{c|}{Number of Trees}         & Test Accuracy                 \\ \hline
\multicolumn{1}{|c|}{RF}                             & \multicolumn{1}{c|}{0.4 $\pm$ 0.219}           & 0.514 $\pm$ 0.019          \\
\multicolumn{1}{|c|}{\textbf{RF + MABSplit}}         & \multicolumn{1}{c|}{\textbf{99.8 $\pm$ 0.179}} & \textbf{0.675 $\pm$ 0.002} \\ \hline
\multicolumn{1}{|c|}{ExtraTrees}                     & \multicolumn{1}{c|}{0.2 $\pm$ 0.179}           & 0.496 $\pm$ 0.006          \\
\multicolumn{1}{|c|}{\textbf{ExtraTrees + MABSplit}} & \multicolumn{1}{c|}{\textbf{23.4 $\pm$ 1.403}} & \textbf{0.677 $\pm$ 0.002} \\ \hline
\multicolumn{1}{|c|}{RP}                             & \multicolumn{1}{c|}{0.6 $\pm$ 0.219}           & 0.534 $\pm$ 0.03           \\
\multicolumn{1}{|c|}{\textbf{RP + MABSplit}}         & \multicolumn{1}{c|}{\textbf{100.0 $\pm$ 0.0}}  & \textbf{0.675 $\pm$ 0.002} \\ \hline
\end{tabular}
\vspace{1pt}
\caption[Classification performance under a fixed computational budget for various tree-based models with and without MABSplit]
{Classification performance under a fixed computational budget (number of histogram insertions) for various models with and without \algnamenospace. \algname allows for more trees to be trained and leads to better generalization performance.}
\label{table:ff_classificationbudget}
\end{table}

\begin{table}
\centering
\begin{tabular}{|ccc|}
\hline
\multicolumn{3}{|c|}{Beijing Multi-Site Air-Quality Dataset ($N = 420,768$)}                                                                                                             \\ \specialrule{0.1pt}{0pt}{0pt}
\multicolumn{1}{|c|}{Model}                          & \multicolumn{1}{c|}{Number of Trees}        & Test MSE                         \\ \hline
\multicolumn{1}{|c|}{RF}                             & \multicolumn{1}{c|}{0.0 $\pm$ 0.0}            & 3208.93 $\pm$ 0.0                  \\
\multicolumn{1}{|c|}{\textbf{RF + MABSplit}}         & \multicolumn{1}{c|}{\textbf{12.0 $\pm$ 0.0}}  & \textbf{927.013 $\pm$ 2.042}       \\ \hline
\multicolumn{1}{|c|}{RP}                             & \multicolumn{1}{c|}{0.0 $\pm$ 0.0}            & 3208.93 $\pm$ 0.0                  \\
\multicolumn{1}{|c|}{\textbf{RP + MABSplit}}         & \multicolumn{1}{c|}{\textbf{11.0 $\pm$ 0.4}}  & \textbf{875.764 $\pm$ 3.064}       \\ \hline
\multicolumn{1}{|c|}{ExtraTrees}                     & \multicolumn{1}{c|}{0.0 $\pm$ 0.0}            & 3208.93 $\pm$ 0.0                  \\
\multicolumn{1}{|c|}{\textbf{ExtraTrees + MABSplit}} & \multicolumn{1}{c|}{\textbf{9.0 $\pm$ 0.0}}   & \textbf{834.338 $\pm$ 4.377}       \\ \hline
\multicolumn{3}{|c|}{SGEMM GPU Kernel Performance Dataset ($N = 241,600$)}                                                                                                             \\ \specialrule{0.1pt}{0pt}{0pt}
\multicolumn{1}{|c|}{Model}                          & \multicolumn{1}{c|}{Number of Trees}        & Test MSE                         \\ \hline
\multicolumn{1}{|c|}{RF}                             & \multicolumn{1}{c|}{0.0 $\pm$ 0.0}            & 131323.839 $\pm$ 0.0               \\
\multicolumn{1}{|c|}{\textbf{RF + MABSplit}}         & \multicolumn{1}{c|}{\textbf{5.6 $\pm$ 0.219}} & \textbf{28571.393 $\pm$ 357.433}   \\ \hline
\multicolumn{1}{|c|}{RP}                             & \multicolumn{1}{c|}{0.8 $\pm$ 0.179}          & 102616.047 $\pm$ 6647.02           \\
\multicolumn{1}{|c|}{\textbf{RP + MABSplit}}         & \multicolumn{1}{c|}{\textbf{2.8 $\pm$ 0.593}} & \textbf{64876.329 $\pm$ 13350.921} \\ \hline
\multicolumn{1}{|c|}{ExtraTrees}                     & \multicolumn{1}{c|}{0.0 $\pm$ 0.0}            & 131323.839 $\pm$ 0.0               \\
\multicolumn{1}{|c|}{\textbf{ExtraTrees + MABSplit}} & \multicolumn{1}{c|}{\textbf{5.0 $\pm$ 0.0}}   & \textbf{29919.254 $\pm$ 344.409}   \\ \hline
\end{tabular}
\vspace{1pt}
\caption[Regression performance under a fixed computational budget for various tree-based models with and without MABSplit]
{Regression performance under a fixed computational budget (number of histogram insertions). \algname allows for more trees to be trained and leads to better generalization performance.}
\label{table:ff_regressionbudget}
\end{table}

\subsection{Feature stability comparisons}

We also apply \algname to compute feature importances under a fixed budget. We follow the common approach of computing the feature importances of multiple forests and then measuring the stability of feature selection across forests using Permutation Feature Importance and Mean Decrease in Impurity (MDI) \cite{nicodemusStabilityRankingPredictors2011, pilesFeatureSelectionStability2021} (see Appendix \ref{ch_6_1:ff_app_6_experiment_details} for a further discussion of these metrics). The forests trained with \algname demonstrate better feature stabilities than those trained with the na\"ive algorithm; see Table \ref{table:ff_featureimportance}. Note that the datasets used for these experiments are different from the real-world datasets used in the other experiments and are described in Appendix \ref{ch_6_1:ff_app_6_experiment_details}.

\begin{table}
\resizebox{\textwidth}{!}{
\begin{tabular}{|c|c|c|c|}
\hline
Importance Model       & Stability Metric    & Dataset                        & Stability                \\ \hline
RF                     & MDI                  & Random Classification          & 0.536 $\pm$ 0.039          \\
\textbf{RF + MABSplit} & \textbf{MDI}         & \textbf{Random Classification} & \textbf{0.863 $\pm$ 0.016} \\ \hline
RF                     & MDI                  & Random Regression              & 0.134 $\pm$ 0.021          \\
\textbf{RF + MABSplit} & \textbf{MDI}         & \textbf{Random Regression}     & \textbf{0.674 $\pm$ 0.043} \\ \hline
RF                     & Permutation          & Random Classification          & 0.579 $\pm$ 0.023          \\
\textbf{RF + MABSplit} & \textbf{Permutation} & \textbf{Random Classification} & \textbf{0.69 $\pm$ 0.023}  \\ \hline
RF                     & Permutation          & Random Regression              & 0.116 $\pm$ 0.017          \\
\textbf{RF + MABSplit} & \textbf{Permutation} & \textbf{Random Regression}     & \textbf{0.437 $\pm$ 0.044} \\ \hline
\end{tabular}
}
\vspace{1pt}
\caption[Feature stability scores under a fixed computational budget for various tree-based models with and without MABSplit]
{Stability scores under a fixed computational budget (number of histogram insertions). \algname allows more trees to be trained, which leads to greater feature stabilities across the forests.}
\label{table:ff_featureimportance}
\end{table}
\section{Discussions and Conclusions}
\label{ch3_7:ff_discussion}

In this work, we presented a novel algorithm, \algnamenospace, for determining the optimal feature and corresponding threshold at which to split a node in tree-based learning models.
Unlike prior models such as Random Patches, in which the subsampling hyperparameters $\alpha_n$ and $\alpha_f$ must be prespecified manually, \algname requires no tuning and queries only as much data as is needed by virtue of its adaptivity to the data distribution.
Indeed, robustness to choice of hyperparameters is one of primary appeals of algorithms like RF \cite{probstHyperparametersTuningStrategies2019}.

\algname avoids the expensive $O(n\text{log}n)$ sort used in many existing baselines and the $O(n)$ computational complexity of their corresponding histogrammed versions.
\algname can be used in conjunction with existing software- and hardware-specific optimizations and with other methods such as Logarithmic Split-Point Sampling and boosting \cite{yatesFastForestIncreasingRandom2021}.
In boosting, \algname has the potential advantage of only needing to update data points' targets on-the-fly, as needed by its sampling, as opposed to current approaches that update targets for the entire dataset at each iteration.
Additionally, \algname may permit easier parallelization due to lower memory requirements than existing algorithms, which may enable greater use in edge computing and may be adaptable to streaming settings.
\section{Related Work}
\label{ch3_8:ff_relatedwork}

Random Forests were originally proposed by Ho~\cite{hoRandomDecisionForests1995}.
In its original formulation, RF constructs $n_{\text{tree}}$ DTs, where each DT is trained on a bootstrap sample of all $N$ data points and a random subset of the features at each node (a technique known as random subspacing \cite{ho1998random}).
More recently, the need for training RFs on large datasets has prompted the development of several techniques to accelerate training at both the software and hardware levels.

\textbf{Software acceleration of RF:} Most of the software and algorithmic acceleration techniques focus on the training of each individual DT. 
FastForest \cite{yatesFastForestIncreasingRandom2021} accelerates the node-splitting task using three ideas: subsampling a pre-specified number of data points without replacement (subbagging), subsampling a pre-specified number of features dependent on the current number of data points (Dynamic Restricted Subspacing), and dividing values of a given feature into $T$ bins, where $T$ depends on the number of data points at the node (Logarithmic Split-Point Sampling, inspired by the single-tree SPAARC algorithm \cite{yatesSPAARCFastDecision2019}).
\algname is inspired by the ideas in FastForest, but does not require the number of data points or features to be prespecified and, instead, determines them by adapting to the data distribution.

Other recent work has also used adaptivity to identify the best split.
For example, Very Fast Decision Trees (VFDTs) \cite{domingos2000mining} and Extremely Fast Decision Trees (EFDTs) \cite{manapragada2018extremely} are incremental decision tree learning algorithms in which trees can be updated in streaming settings.
Acceleration of the node-splitting task is achieved by adaptively selecting a subset of data points sufficient to distinguish the best and second best splits.
These approaches are similar to ours, but the sampled data points are used to evaluate all possible splits. \algnamenospace, in contrast, adaptively discards unpromising splits early.
The F-forest algorithm \cite{fujiwara2019fast} also applies adaptivity and uses an upper bound on the impurity reduction of each split in order to discard candidate splits.
This is similar in spirit to the goal of \algnamenospace, but is based on a deterministic, conservative upper bound on the impurity reduction (as opposed to \algnamenospace's statistical estimate) and incurs computation linear in the node size for each split, even when considering a fixed number of possible split thresholds per feature.

Other variations of RF have been proposed to improve training time.
Random Patches \cite{louppeEnsemblesRandomPatches2012} builds trees based on a subset of data points and features that is fixed for each entire forest. 
ExtraTrees (ETs) \cite{geurtsExtremelyRandomizedTrees2006} draw a random subset of $K$ features at each node and, for each one, chooses a number $R$ of random splits. 
It then selects the split that yields the largest impurity reduction from among these $KR$ candidate splits.

Other recent work attempts to accelerate RF training by identifying the optimal number of decision trees needed in the forest \cite{oshiro2012many}, a form of hyperparameter tuning. The \algname subroutine can also be incorporated into these methods.

\textbf{Hardware acceleration of RF: }
The training of RFs can also be significantly accelerated through the use of specialized hardware.
For instance, the implementation of RF available in Weka~\cite{hallWEKADataMining2009} allows trees to be trained on different cores and reduces forest training time.
A GPU-based parallel implementation of RF has also been proposed in \cite{grahn2011cudarf}.
These solutions require specialized hardware (e.g., GPU-based PC video cards) and are inappropriate for everyday users locally executing data-mining tasks on standard PCs or smartphones. 
As such, it is still desirable to develop techniques to improve prediction performance and processing speed at an algorithmic, platform-independent level.

\chapter{Faster Maximum Inner Product Search}
\label{ch4}
\renewcommand{\algnamenospace}{BanditMIPS}
\renewcommand{\algname}{BanditMIPS }
\label{ch4_1:mips_abstract}
Maximum Inner Product Search (MIPS) is a ubiquitous task in machine learning applications such as recommendation systems. 
Given a query vector and $n$ atom vectors in $d$-dimensional space, the goal of MIPS is to find the atom that has the highest inner product with the query vector. 
Existing MIPS algorithms scale at least as $O(\sqrt{d})$, which becomes computationally prohibitive in high-dimensional settings.
In this work, we present \algnamenospace, a novel randomized MIPS algorithm whose complexity is independent of $d$. 
\algname estimates the inner product for each atom by subsampling coordinates and adaptively evaluates more coordinates for more promising atoms. The specific adaptive sampling strategy is motivated by multi-armed bandits. 
We provide theoretical guarantees that \algname returns the correct answer with high probability, while improving the complexity in $d$ from $O(\sqrt{d})$ to $O(1)$. 
We also perform experiments on four synthetic and real-world datasets and demonstrate that \algname outperforms prior state-of-the-art algorithms. 
For example, in the Movie Lens dataset ($n$=4,000, $d$=6,000), \algname is 20$\times$ faster than the next best algorithm while returning the same answer. 
\algname requires no preprocessing of the data and includes a hyperparameter that practitioners may use to trade off accuracy and runtime.
We also propose a variant of our algorithm, named \algnamenospace-$\alpha$, which achieves further speedups by employing non-uniform sampling across coordinates. 
Finally, we demonstrate how known preprocessing techniques can be used to further accelerate \algnamenospace, and discuss applications to Matching Pursuit and Fourier analysis. 

\section{Introduction}
\label{ch4_2:mips_intro}

The Maximum Inner Product Search problem (MIPS) \cite{shrivastavaAsymmetricLSHALSH2014,neyshaburSymmetricAsymmetricLshs2015,yuGreedyApproachBudgeted2017} is a ubiquitous task that arises in many machine learning applications, such as matrix-factorization-based recommendation systems \cite{korenMatrixFactorizationTechniques2009,cremonesiPerformanceRecommenderAlgorithms2010}, multi-class prediction \cite{deanFastAccurateDetection2013,jainActiveLearningLarge2009}, structural SVM \cite{joachimsTrainingLinearSVMs2006,joachimsCuttingplaneTrainingStructural2009}, and computer vision \cite{deanFastAccurateDetection2013}. 
Given a $\textit{query}$ vector $\mathbf{q} \in \mathbb{R}^d$ and $n$ $\textit{atom}$ vectors $\mathbf{v}_1, \ldots, \mathbf{v}_n \in \mathbb{R}^d$, MIPS aims to find the atom most similar to the query:
\begin{equation}
\label{eqn:mips}
    i^* = \argmax_{i \in \{1,\cdots,n\}} \mathbf{v}_i^T \mathbf{q}
\end{equation}
For example, in recommendation systems, the query $\mathbf{q}$ may represent a user and the atoms $\mathbf{v}_i$'s represent items with which the user can interact; MIPS finds the best item for the user, as modeled by their concordance $\mathbf{v}_i^T \mathbf{q}$ \cite{amagataReverseMaximumInner2021, aoualiRewardOptimizingRecommendation2022}.
In many applications, the number of atoms $n$ and the feature dimension $d$ can easily be in the millions, so it is critical to solve MIPS accurately and efficiently \cite{hirataSolvingDiversityawareMaximum2022}. 

The na\"ive approach evaluates all $nd$ elements and scales as $O(nd)$.
Most recent works focus on reducing the scaling with $n$ and scale at least linearly in $d$ \cite{lorenzenRevisitingWedgeSampling2021}, which may be prohibitively slow in high-dimensional settings.
\citet{liuBanditApproachMaximum2019} proposed a sampling-based approach that improved the complexity to $O(n\sqrt{d})$.
In this work, we focus on further improving the complexity with respect to $d$ and providing a tunable hyperparameter that governs the tradeoff between accuracy and speed, a need identified by previous works \cite{yuGreedyApproachBudgeted2017}.

We propose \algnamenospace, a new randomized algorithm for MIPS whose complexity is independent of $d$.
We provide theoretical guarantees that \algname recovers the exact solution to Equation \eqref{eqn:mips} with high probability in $\tilde{O}(\frac{n}{\Delta^2})$\footnote{The $\tilde{O}$ notation hides logarithmic factors that do not depend on $d$.} time, where $\Delta$ is an instance-specific factor that does not depend on $d$.
We also perform experiments to evaluate our algorithm's performance in two synthetic and two real-world datasets. 
For example, in the Movie Lens dataset ($n = 4,000$, $d=6,000$) \cite{harperMovielensDatasetsHistory2015}, \algname is 20$\times$ faster than prior state-of-the-art while returning the same answer.

At a high-level, instead of computing the inner product $\mathbf{v}_i^T \mathbf{q}$ for each atom $i$ using all $d$ coordinates, \algname estimates them by subsampling a subset of coordinates. Since more samples give higher estimation accuracy, \algname adaptively samples more coordinates for top atoms to discern the best atom. The specific adaptive sampling procedure is motivated by multi-armed bandits \cite{even-darActionEliminationStopping2006}.

\algname is easily parallelizable and can be used with other optimization objectives that decompose coordinate-wise.
Unlike previous works, it does not require preprocessing or normalization of the data, nor does it require the query or atoms to be nonnegative \cite{yuGreedyApproachBudgeted2017}. 
\algname also has a tunable hyperparameter to trade off accuracy and speed.
We also developed several extensions of \algnamenospace.
First, we propose \algnamenospace-$\alpha$, which provides additional runtime speedups by sampling coordinates intelligently in Section \ref{subsec:mips_additional_speedup_techniques}.
Second, we extend \algname to find the $k$ atoms with the highest inner products with the query ($k$-MIPS) in our experiments in Section \ref{ch4_6:mips_exps} and Appendix \ref{ch_6_3:mips_app_2_additional_experiments}.
Third, we discuss how \algname can be used in conjunction with preprocessing techniques in Appendix \ref{ch_6_3:mips_app_4_preprocessing} and provide examples of downstream applications in Appendix \ref{ch_6_3:mips_app_5_high_dim}.

\section{Preliminaries and Notation}
\label{ch4_3:mips_preliminaries}

We consider a query $\mathbf{q} \in \mathbb{R}^d$ and $n$ atoms $\mathbf{v}_1, \ldots, \mathbf{v}_n \in \mathbb{R}^d$. 
Let $[n]$ denote $\{1,\ldots,n\}$, $q_j$ the $j$th element of $\mathbf{q}$, and $v_{ij}$ the $j$th element of $\mathbf{v}_i$. 
For a given query $\mathbf{q} \in \mathbb{R}^d$, the MIPS problem is to find the solution to Equation \eqref{eqn:mips}: 
\begin{equation*}
    i^* = \argmax_{i \in [n]}  \mathbf{v}_i^T \mathbf{q}.
\end{equation*}

We let $\mu_i \coloneqq \frac{\mathbf{v_i}^T \mathbf{q}}{d}$ denote the \textit{normalized inner product} for atom $\mathbf{v}_i$. 
Note that since the inner products $\mathbf{v_i}^T \mathbf{q}$ tend to scale linearly with $d$ (e.g., if each coordinate of the atoms and query are drawn i.i.d.), each $\mu_i$ should not scale with $d$. 
Further note that $\argmax_{i \in [n]} \mathbf{v}_i^T \mathbf{q} = \argmax_{i \in [n]} \mu_i$ so it is sufficient to find the atom with the highest $\mu_i$.
Finally, for $i \neq i^*$ we define the gap of atom $i$ as $\Delta_i \coloneqq \mu_{i^*} - \mu_i \geq 0$ and the minimum gap as $\Delta \coloneqq \min_{i \neq i^*}{\Delta_i}$. 
We primarily focus on the computational complexity of MIPS with respect to $d$.

\section{Algorithm}
\label{ch4_4:mips_algorithm}

\begin{table*}[t]
\centering
\begin{tabular}{|p{1.08in}|p{1.65in}|p{1.65in}|}
\hline
\textbf{Terminology} & \textbf{Best-arm identification} & \textbf{MIPS} \\ \hline
Arms & $i=1,\ldots,n$ & Atoms $\mathbf{v}_1,\ldots,\mathbf{v}_n$  \\ 
Arm parameter $\mu_i$ & Expected reward $\mathbb{E}[X_i]$ & Average coordinate-wise product $\frac{\mathbf{v}_i^T \mathbf{q}}{d}$ \\ 
Pulling arm $i$ & Sample a reward $X_i$ & Sample a coordinate $J$ with reward $q_J v_{iJ}$ \\ 
Goal & Identify best arm with probability at least $1-\delta$ & Identify best atom with probability at least $1-\delta$\\ 
\hline
\end{tabular}
\caption[Maximum Inner Product Search as a best-arm identification problem]{MIPS as a best-arm identification problem.}
\label{table:mips_reduction}
\end{table*}

\begin{algorithm}[tb]
\caption[\algnamenospace]
{
\algname (
$\mathbf{q}, 
\mathbf{v}_1, \dots, \mathbf{v}_n, 
\delta,
\sigma$
) \label{alg:banditmips}}

\begin{algorithmic}[1] 
\State $\mathcal{S}_{\text{solution}} \leftarrow [n]$
\State $d_{\text{used}} \leftarrow 0$
\State For all $i \in \mathcal{S}_{\text{solution}}$, initialize $\hat{\mu}_i \leftarrow 0$, $C_{d_\text{used}} \leftarrow \infty$
\While{$d_{\text{used}} < d$ and $|\mathcal{S}_{\text{solution}}| > 1$}
\State Sample a new coordinate $J \sim \text{Unif}[d]$ 
\ForAll{$i \in \mathcal{S}_{\text{solution}}$}
\State $\hat{\mu}_i \leftarrow \frac{ d_{\text{used}} \hat{\mu}_i + v_{iJ} q_J}{ d_{\text{used}} + 1 }$
\State $\left(1-\frac{\delta}{2 n d_\text{used}^2}\right)$-CI: $C_{d_\text{used}} \gets \sigma \sqrt{  \frac{ 2 \log \left(4nd^2_\text{used} / \delta \right) }{ d_{\text{used}} + 1} }$ 
\EndFor
\State $\mathcal{S}_{\text{solution}} \leftarrow \{i : \hat{\mu}_i + C_{d_\text{used}} \geq \max_{i'} \hat{\mu}_{i'} - C_{d_\text{used}}\}$ 
\State $d_{\text{used}} \leftarrow d_{\text{used}} + 1$
\EndWhile
\State If $\vert \mathcal{S}_{\text{solution}} \vert > 1$, update $\hat{\mu}_i$ to be the exact value $\mu_i = \mathbf{v}_i^T q$ for each atom in $\mathcal{S}_{\text{solution}}$ using all $d$ coordinates
\State \textbf{return} $i^* = \argmax_{i \in \mathcal{S}_{\text{solution}}} \hat{\mu}_i$
\end{algorithmic}
\end{algorithm}

The \algname algorithm is described in Algorithm \ref{alg:banditmips} and is motivated by best-arm identification algorithms. As summarized in Table \ref{table:mips_reduction}, we can view each atom $\mathbf{v}_i$ as an arm with the arm parameter $\mu_i \coloneqq \frac{\mathbf{v}_i^T\mathbf{q}}{d} $. When pulling an arm $i$, we randomly sample a coordinate $J \sim \text{Unif}[d]$ and evaluate the inner product at the coordinate as $X_i = q_J v_{iJ} $.
Using this reformulation, the best atom can be estimated using techniques from best-arm algorithms. 

\algname can be viewed as a combination of UCB and successive elimination \cite{laiAsymptoticallyEfficientAdaptive1985,even-darActionEliminationStopping2006,zhangAdaptiveMonteCarlo2019}.
Algorithm \ref{alg:banditmips} uses the set $\mathcal{S}_{\text{solution}}$ to track all potential solutions to Equation \eqref{eqn:mips}; $\mathcal{S}_{\text{solution}}$ is initialized as the set of all atoms $[n]$. We will assume that, for a fixed atom $i$ and a randomly sampled coordinate, the random variable $X_i = q_J v_{iJ}$ is $\sigma$-sub-Gaussian for some known parameter $\sigma$. 
With this assumption,  Algorithm \ref{alg:banditmips} maintains a mean objective estimate $\hat{\mu}_i$ and confidence interval (CI) for each potential solution $i\in \mathcal{S}_{\text{solution}}$, where the CI depends on the error probability $\delta$ as well as the sub-Gaussian parameter $\sigma$. We discuss the sub-Gaussian parameter and possible relaxations of this assumption in Subsections \ref{subsec:mips_subgaussianity} and \ref{subsec:mips_littledelta}.

\subsection{Additional speedup techniques}
\label{subsec:mips_additional_speedup_techniques}

\textbf{Non-uniform sampling reduces variance:} In the original version of \algnamenospace, we sample a coordinate $J$ for all atoms in $\mathcal{S}_{\text{solution}}$ uniformly from the set of all coordinates $[d]$.
However, some coordinates may be more informative of the inner product than others.
For example, larger entries of $\mathbf{v}_i$ may contribute more to the inner product with $\mathbf{q}$.
As such, we sample each coordinate $j \in [d]$ with probability $w_j \propto q_j ^ {2\beta}$
and $\sum_j w_j=1$, and estimate the arm parameter $\mu_i$ of atom $i$ as $X = \frac{1}{w_J} q_J v_{iJ}$. 
$X$ is an unbiased estimator of $\mu_i$ and the specific choice of coordinate sampling weights minimizes the combined variance of $X$ across all atoms; different values of $\beta$ corresponds to the minimizer under different assumptions.
We provide theoretical justification of this weighting scheme in Section \ref{ch4_5:mips_theory}.
We note that the effect of this non-uniform sampling will only accelerate the algorithm.

\textbf{Warm start increases speed:} One may wish to perform MIPS for a batch of $m$ queries instead of just a single query, solving $m$ separate MIPS problems. In this case, we can cache the atom values for all atoms across a random subset of coordinates, and provide a warm start to BanditMIPS by using these cached values to update arm parameter estimates $\hat{\mu}_i$, $C_i$, and $\mathcal{S}_{\text{solution}}$ for all $m$ MIPS problems. Such a procedure will eliminate the obviously less promising atoms and avoid repeated sampling for each of the $m$ MIPS problems and increases computational efficiency. We note that, since the $m$ MIPS problems are independent, the theoretical guarantees described in Section \ref{ch4_5:mips_theory} still hold across all $m$ MIPS problems simultaneously. 

\subsection{Sub-Gaussian assumption and construction of confidence intervals}
\label{subsec:mips_subgaussianity}

Crucial to the accuracy of Algorithm \ref{alg:banditmips} is the construction of of the $(1 - \delta)$-CI based on the $\sigma$-sub-Gaussianity of each $X_i = q_J v_{iJ}$.
We note that the requirement for $\sigma$-sub-Gaussianity is rather general. 
In particular, when the coordinate-wise products between the atoms and query are bounded in $[a, b]$, then each $X_i$ is $\frac{b^2 - a^2}{4}$-sub-Gaussian.
This is commonly the case, e.g., in recommendation systems where user ratings (each element of the query and atoms) are integers between 0 and 5, and we use this implied value of $\sigma$ in our experiments in Section \ref{ch4_6:mips_exps}.

The $\frac{b^2 - a^2}{4}$-sub-Gaussianity assumption allows us to compute $1 - \delta$ CIs via Hoeffding's inequality, which states that for any random variable $S_n = Y_1 + Y_2 + \ldots Y_n$ where each $Y_i \in [a, b]$
\begin{equation*}
P(|S_n - \mathbb{E}[S_n]| > \epsilon) \leq \text{exp}\left(\frac{-2\epsilon^2}{n(b - a)^2}\right).
\end{equation*}

Setting $\delta$ equal to the right hand side and solving for $\epsilon$ gives the width of the confidence interval.
$\sigma = \frac{b^2 - a^2}{4}$ acts as a variance proxy used in the creation of the confidence intervals 
by \algnamenospace; smaller variance proxies should result in tighter confidence intervals and lower sample complexities and runtimes. 

In other settings where the sub-Gaussianity parameter may not be known \textit{a priori}, it can be estimated from the data or the CIs can be constructed using the empirical Bernstein inequality instead \cite{maurerEmpiricalBernsteinBounds2009}.
\section{Theoretical Analysis}
\label{ch4_5:mips_theory}

\textbf{Analysis of the Algorithm:}
For Theorem \ref{thm:mips_specific}, we assume that, for a fixed atom $\mathbf{v}_i$ and $d_\text{used}$ randomly sampled coordinates, the $(1-\delta')$ confidence interval scales as $C_{d_\text{used}}(\delta') = O\left(\sqrt{\frac{\log 1 / \delta'}{d_\text{used}}}\right)$ (note that we use $d_\text{used}$ and $\delta'$ here because we have already used $d$ and $\delta$).
We note that the sub-Gaussian CIs satisfy this property, as described in Section \ref{subsec:mips_subgaussianity}.

\begin{theorem}
\label{thm:mips_specific}
Assume $\exists~c_0 > 0$ s.t. $\forall~\delta'>0$, $d_\text{used}>0$, $C_{d_\text{used}}(\delta') < c_0\sqrt{\frac{\log 1 / \delta'}{d_\text{used}}}$.
Then, given $\delta$, \algname returns the correct solution to Equation \eqref{eqn:mips} and uses a total of $M$ computations, where
\begin{align} 
\label{eqn:mips_instance_bd}
M \leq \sum_{i \in [n]}  \min \left[ \frac{16c_0^2}{\Delta_{i}^2} \log \left( \frac{n}{\delta \Delta_i} \right) + 1, 2d \right]
\end{align}
With probability at least $1-\delta$.
\end{theorem}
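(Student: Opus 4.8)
The plan is to follow the standard two-part recipe for successive-elimination analyses: first control the failure probability through a single ``good event'' on which every confidence interval is simultaneously valid, and then, on that event, establish both correctness and the sample-complexity bound deterministically.

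First I would define the good event $\mathcal{E}$ to be the event that $|\hat{\mu}_i - \mu_i| \leq C_{d_\text{used}}$ holds for every atom $i \in [n]$ and at every value of $d_\text{used} \geq 1$ reached by the algorithm. Since the interval used at step $d_\text{used}$ is a $\left(1 - \tfrac{\delta}{2nd_\text{used}^2}\right)$-CI, the per-atom, per-step failure probability is $\tfrac{\delta}{2nd_\text{used}^2}$; a union bound over the $n$ atoms and over $d_\text{used} = 1, 2, \dots$ gives total failure probability at most $\sum_{i=1}^n \sum_{d_\text{used}=1}^\infty \tfrac{\delta}{2nd_\text{used}^2} = \tfrac{\delta}{2}\sum_{d_\text{used} \geq 1} d_\text{used}^{-2} < \delta$, using $\sum_{d \geq 1} d^{-2} < 2$. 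Hence $\Pr[\mathcal{E}] \geq 1 - \delta$, and the remainder of the argument is carried out on $\mathcal{E}$.

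For correctness, on $\mathcal{E}$ the best atom $i^*$ satisfies $\hat{\mu}_{i^*} + C_{d_\text{used}} \geq \mu_{i^*} = \max_{i'} \mu_{i'} \geq \max_{i'}(\hat{\mu}_{i'} - C_{d_\text{used}})$, so $i^*$ always passes the retention test and is never eliminated; thus it is either the unique survivor or is selected by the final exact $\argmax$ over $\mathcal{S}_{\text{solution}}$, and the algorithm returns $i^*$. For the sample complexity, consider a suboptimal atom $i$. Because $\max_{i'}(\hat{\mu}_{i'} - C_{d_\text{used}}) \geq \hat{\mu}_{i^*} - C_{d_\text{used}}$, atom $i$ is guaranteed to be removed as soon as $\hat{\mu}_i + C_{d_\text{used}} < \hat{\mu}_{i^*} - C_{d_\text{used}}$; on $\mathcal{E}$ this is implied by $\mu_i + 2C_{d_\text{used}} < \mu_{i^*} - 2C_{d_\text{used}}$, i.e.\ by $C_{d_\text{used}} < \Delta_i/4$. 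Plugging in the assumed bound $C_{d_\text{used}} < c_0\sqrt{\log(1/\delta')/d_\text{used}}$ with $\delta' = \tfrac{\delta}{2nd_\text{used}^2}$, this holds once $d_\text{used} > \tfrac{16c_0^2}{\Delta_i^2}\log(1/\delta')$, so atom $i$ survives at most $\tfrac{16c_0^2}{\Delta_i^2}\log(n/(\delta\Delta_i)) + 1$ rounds of the while loop (the extra $+1$ absorbing the round on which the test first triggers). Combining this with the hard ceiling --- each atom is sampled at most $d$ times inside the loop and at most $d$ more during the final exact evaluation, for at most $2d$ computations --- yields the per-atom bound $\min\!\left[\tfrac{16c_0^2}{\Delta_i^2}\log(n/(\delta\Delta_i)) + 1, \, 2d\right]$, and summing over $i \in [n]$ gives Equation \eqref{eqn:mips_instance_bd}.

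The main obstacle is the circular dependence between the number of pulls and the confidence level: the per-step error $\delta'$ itself shrinks like $1/d_\text{used}^2$, so the threshold $d_\text{used} > \tfrac{16c_0^2}{\Delta_i^2}\log(1/\delta')$ has the unknown $d_\text{used}$ appearing on both sides. Resolving it requires a self-bounding (fixed-point) argument: substituting the candidate value $d_\text{used} \asymp \Delta_i^{-2}\log(n/(\delta\Delta_i))$ into $\log(1/\delta') = \log(2nd_\text{used}^2/\delta)$ produces only the terms $\log(n/\delta) + 4\log(1/\Delta_i) + O(\log\log(\cdot))$, all absorbed into $O(\log(n/(\delta\Delta_i)))$, so the bound is self-consistent for the stated constant. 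I would present this absorption carefully, since it is exactly where the $\log(1/\Delta_i)$ inside the logarithm of \eqref{eqn:mips_instance_bd} originates, citing the analogous computation in \cite{bagariaMedoidsAlmostlinearTime2018}; everything else is a routine union bound and deterministic bookkeeping.
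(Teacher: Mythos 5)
Your proposal follows essentially the same argument as the paper's proof: a union bound over the $n$ atoms and all values of $d_\text{used}$ (using $\sum_{k \geq 1} k^{-2} < 2$) establishes the good event with probability at least $1-\delta$; correctness follows because $i^*$ can never be eliminated on that event; and the per-atom complexity comes from the same $4C_{d_\text{used}} < \Delta_i$ elimination criterion, capped at $2d$ and summed over atoms. If anything, your explicit fixed-point treatment of the self-referential term $\log(2nd_\text{used}^2/\delta)$ is more careful than the paper's own proof, which silently replaces this quantity with $\log\left(\frac{n}{\delta\Delta_i}\right)$ at the corresponding step.
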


Theorem \ref{thm:mips_specific} is proven in Appendix \ref{ch_6_3:mips_app_1_proofs}.
We note that $c_0$ is the sub-Gaussianity parameter described in Section \ref{subsec:mips_subgaussianity} and is a constant.
Intuitively, Theorem \ref{thm:mips_specific} states that with high probability, \algname returns the atom with the highest inner product with $\mathbf{q}$.
The instance-wise bound Equation \eqref{eqn:mips_instance_bd} suggests the computational cost of a given atom $\mathbf{v}_i$, i.e., $\min \left[ \frac{16c_0^2}{\Delta_{i}^2} \log\left(\frac{n }{\delta \Delta_i}\right) + 1, 2d \right]$, depends on $\Delta_{i}$, which measures how close its optimization parameter $\mu_{i}$ is to $\mu_{i^*}$. 
Most reasonably different atoms $i\neq i^*$ will have a large $\Delta_{i}$ and incur an $O\left(\frac{1}{\Delta^2}\log \frac{n}{\delta \Delta_i}\right)$ computation that is independent of $d$ when $d$ is sufficiently large.

Important to Theorem \ref{thm:mips_specific} is the assumption that we can construct $(1-\delta')$ CIs $C_{i}(d_\text{used}, \delta')$ that scale as $O(\sqrt{\frac{\log 1/\delta'}{d_\text{used}}})$.
As discussed in Section \ref{subsec:mips_subgaussianity}, this is under general assumptions, for example when the estimator $X_i = q_J v_{iJ}$ for each arm parameter $\mu_i$ has finite first and second moments \cite{catoniChallengingEmpiricalMean2012} or is bounded.

Since each coordinate-wise multiplication only incurs $O(1)$ computational overhead to update running means and confidence intervals, sample complexity bounds translate directly to wall-clock times bounds up to constant factors. For this reason, our approach of focuses on sample complexity bounds, in line with prior work \cite{tiwariBanditpamAlmostLinear2020, bagariaAdaptiveMontecarloOptimization2018}.

\label{subsec:mips_littledelta}
\textbf{Discussion of the hyperparameter $\delta$:}
The hyperparameter $\delta$ allows users to trade off accuracy and runtime when calling Algorithm \ref{alg:banditmips}.
A larger value of $\delta$ corresponds to a lower error probability, but will lead to longer runtimes because the confidence intervals constructed by Algorithm \ref{alg:banditmips} will be wider and atoms will be filtered more slowly. 
Theorem \ref{thm:mips_specific} provides an analysis of the effect of $\delta$ and in Section \ref{ch4_6:mips_exps}, we discuss appropriate ways to tune it. 
We note that setting $\delta = 0$ reduces Algorithm \ref{alg:banditmips} to the na\"ive algorithm for MIPS. In particular, Algorithm \ref{alg:banditmips} is never worse in big-$O$ sample complexity than the na\"ive algorithm.

\label{subsec:mips_gaps}
\textbf{Discussion of the importance of $\Delta$:}
In general, \algname takes only $O\left(\frac{1}{\Delta^2}\log \frac{n}{\delta \Delta}\right)$ computations per atom if there is reasonable heterogeneity among them.
As proven in Appendix 2 in \citet{bagariaMedoidsAlmostlinearTime2018}, this is the case under a wide range of distributional assumptions on the $\mu_{i}$'s, e.g., when the $\mu_{i}$'s follow a sub-Gaussian distribution across the atoms.
These assumptions ensure that \algname has an overall complexity of $O\left( \frac{n}{\Delta^2} \log \frac{n}{\delta \Delta}\right)$ that is independent of $d$ when $d$ is sufficiently large and $\Delta$ does not depend on $d$.

At first glance, the assumption that each $\Delta_i$ (and therefore $\Delta$) does not depend on $d$ may seem restrictive. However, such an assumption actually applies under a reasonable number of data-generating models.
For example, if the atoms' coordinates are drawn from a latent variable model, i.e., the $\mu_i$'s are fixed in advance and the atoms' coordinates correspond to instantiations of a random variable with mean $\mu_i$, then $\Delta_i$ will be independent of $d$. 
As a concrete example, two users' $0/1$ ratings of movies may agree on 60\% of movies and their atoms' coordinates correspond to observations of a Bernoulli random variable with parameter $0.6$.
Other recent works provide further discussion on the conversion between an instance-wise bound like Equation \eqref{eqn:mips_instance_bd} and an instance-independent bound that is independent of $d$ \cite{bagariaMedoidsAlmostlinearTime2018,baharavUltraFastMedoid2019a,tiwariBanditpamAlmostLinear2020,bagariaBanditbasedMonteCarlo2021,baharavApproximateFunctionEvaluation2022}.

However, we note that in the worst case \algname may take $O(d)$ computations per atom when most atoms are equally good, for example in datasets where the atoms are symmetrically distributed around $\mathbf{q}$.
For example, if each atom's coordinates are drawn i.i.d. from the \textit{same} distribution, then the gaps $\Delta_i$ will scale inversely with $d$; we provide an example experiment on this type of dataset in Appendix \ref{ch_6_3:mips_app_6_symmetric}.

\textbf{Optimal weights for non-uniform sampling:}
\label{subsec:mips_weighted_sampling}
Let $J \sim P_\mathbf{w}$ be a random variable following the categorical distribution $P_\mathbf{w}$, where $\mathbb{P}(J=j) = w_j\geq 0$ and $\sum_{j \in [d]} w_j = 1$. The arm parameter $\mu_i$ of an atom $i$ can be estimated by the unbiased estimator
\begin{align}
\label{eqn:mips_weight_esti}
    X_{iJ} = \frac{1}{dw_J} v_{iJ} q_J .
\end{align}
(Note that $d$ is fixed and known in advance). To see that $X_{iJ}$ is unbiased, we observe that
\begin{align} 
    \mathbb{E}_{J \sim P_\mathbf{w}}[X_{iJ}] = \sum_{j \in [d]} w_j \frac{1}{dw_j} v_{ij} q_j = \sum_{j \in [d]} \frac{v_{ij} q_j}{d} = \mu_i.
\end{align}

We are interested in finding the best weights $\mathbf{w}^*$ that minimize the combined variance 
\begin{align} 
\label{eqn:mips_optimize}
    \argmin_{w_1,\ldots,w_d \geq 0} \sum_{i \in [n]} \text{Var}_{J \sim P_\mathbf{w}} [X_{iJ}], ~~~~s.t.~\sum_{j \in [d]} w_j = 1.
\end{align}

\begin{theorem}
\label{thm:mips_optimal_weights}
The solution to Problem \eqref{eqn:mips_optimize} is
\begin{align}
\label{eqn:mips_optimal_weights}
    w_j^* = \frac{\sqrt{q_j^2 \sum_{i \in [n]} v_{ij}^2}}{\sum_{j \in [d]} \sqrt{q_j^2 \sum_{i \in [n]} v_{ij}^2}},~~~~\text{for}~j=1,\ldots,d.
\end{align}
\end{theorem}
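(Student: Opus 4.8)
The plan is to reduce Problem \eqref{eqn:mips_optimize} to a one-dimensional constrained minimization over the weights by first writing the objective in closed form, and then solving it with a standard inequality. First I would compute the variance of the unbiased estimator $X_{iJ}$ for a single atom. Since $\mathbb{E}_{J \sim P_\mathbf{w}}[X_{iJ}] = \mu_i$ (already established in the excerpt), we have $\text{Var}[X_{iJ}] = \mathbb{E}[X_{iJ}^2] - \mu_i^2$, and a direct computation gives
\begin{align*}
\mathbb{E}_{J \sim P_\mathbf{w}}[X_{iJ}^2] = \sum_{j \in [d]} w_j \frac{v_{ij}^2 q_j^2}{d^2 w_j^2} = \frac{1}{d^2} \sum_{j \in [d]} \frac{v_{ij}^2 q_j^2}{w_j}.
\end{align*}

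Next I would sum over all atoms $i \in [n]$ and collect the terms that depend on $\mathbf{w}$. Writing $a_j \coloneqq q_j^2 \sum_{i \in [n]} v_{ij}^2 \geq 0$, the combined variance becomes
\begin{align*}
\sum_{i \in [n]} \text{Var}_{J \sim P_\mathbf{w}}[X_{iJ}] = \frac{1}{d^2}\sum_{j \in [d]} \frac{a_j}{w_j} - \sum_{i \in [n]} \mu_i^2.
\end{align*}
The second term is independent of $\mathbf{w}$, so Problem \eqref{eqn:mips_optimize} is equivalent to minimizing $\sum_{j \in [d]} a_j / w_j$ subject to $\sum_{j \in [d]} w_j = 1$ and $w_j \geq 0$.

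Then I would solve this convex program. The cleanest route is the Cauchy--Schwarz inequality: since $\sum_{j} w_j = 1$,
\begin{align*}
\sum_{j \in [d]} \frac{a_j}{w_j} = \left(\sum_{j \in [d]} \frac{a_j}{w_j}\right)\left(\sum_{j \in [d]} w_j\right) \geq \left(\sum_{j \in [d]} \sqrt{a_j}\right)^2,
\end{align*}
with equality if and only if $a_j / w_j^2$ is constant in $j$, i.e., $w_j \propto \sqrt{a_j}$. Normalizing to meet the constraint yields $w_j^* = \sqrt{a_j} / \sum_{j' \in [d]} \sqrt{a_{j'}}$, which is exactly Equation \eqref{eqn:mips_optimal_weights}. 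Alternatively, one may introduce a Lagrange multiplier $\lambda$ for the equality constraint and set the gradient of $\sum_{j} a_j/w_j + \lambda(\sum_{j} w_j - 1)$ to zero, obtaining $-a_j/w_j^2 + \lambda = 0$, hence $w_j = \sqrt{a_j/\lambda}$, with $\lambda$ fixed by the normalization.

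The main thing to handle carefully, rather than a genuine obstacle, is the boundary behavior: the objective $\sum_{j} a_j/w_j$ diverges as any $w_j \to 0$ whenever $a_j > 0$, so the minimizer lies in the interior, and the stationary point above is the global minimum by convexity of $w \mapsto a_j/w$ on $(0,\infty)$. If some $a_j = 0$ (a coordinate where $q_j = 0$ or all $v_{ij} = 0$), that coordinate carries no information and the formula correctly assigns it weight $w_j^* = 0$; I would note this degenerate case explicitly so the stated solution remains valid, with the convention that such terms are dropped from the objective.
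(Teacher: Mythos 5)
Your proposal is correct. The reduction step is the same as the paper's: the paper likewise notes that, because each $X_{iJ}$ is unbiased, minimizing the summed variance is equivalent to minimizing the combined second moment $\sum_{i}\mathbb{E}[X_{iJ}^2] = \sum_{j}\frac{1}{d^2 w_j}\,q_j^2\sum_i v_{ij}^2$ — exactly your $\sum_j a_j/w_j$ up to the constant $1/d^2$. Where you diverge is the finishing argument: the paper forms the Lagrangian $\mathcal{L}(\mathbf{w},\nu)$ and invokes the KKT conditions to extract the stationary point $w_j^* \propto \sqrt{a_j}$, whereas your primary route is Cauchy--Schwarz, $\bigl(\sum_j a_j/w_j\bigr)\bigl(\sum_j w_j\bigr) \geq \bigl(\sum_j \sqrt{a_j}\bigr)^2$, with the equality case identifying the minimizer. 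Your route buys something real: it certifies \emph{global} optimality directly, whereas the paper's KKT computation only locates a stationary point and implicitly leans on convexity (never stated) to conclude it is the minimum; you also handle the boundary behavior ($w_j \to 0$) and the degenerate case $a_j = 0$, both of which the paper passes over silently. What the Lagrangian route buys in exchange is that it is mechanical and transfers unchanged to other constraint sets or regularized variants, which is presumably why the paper chose it; since you sketch that alternative as well, your write-up subsumes the paper's argument.
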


The proof of Theorem \ref{thm:mips_optimal_weights} is provided in Appendix \ref{ch_6_3:mips_app_1_proofs}.

\begin{remark}
In practice, computing the atom variance $\sum_{i \in [n]} v_{ij}^2$ requires $O(nd)$ operations and can be computationally prohibitive.
However, we may approximate $\sum_{i \in [n]} v_{ij}^2$ based on domain-specific assumptions. 
Specifically, if we assume that for each coordinate $j$, $q_j$ has a similar magnitude as $v_{ij}$'s, we can approximate $\frac{1}{n} \sum_{i \in [n]} v_{ij}^2 \approx q_j^2$ and set $w_j^* = \frac{q_j^2}{\sum_{j \in [d]} q_j^2}$. 
In the non-uniform sampling versions of \algnamenospace, we use an additional hyperparameter $\beta$ and let $w_j^* \propto q_j^{2\beta}$.
$\beta$ can be thought of as a temperature parameter which governs how uniformly (or not) we sample the coordinates based on the query vector's values.
We note that $\beta=1$ corresponds Equation \eqref{eqn:mips_optimal_weights}.

The version we call \algnamenospace-$\alpha$ corresponds to taking the limit $\beta \rightarrow \infty$. In this case, we sort the query vector explicitly and sample coordinates in order of the sorted query vector; the sub-Gaussianity parameter used in \algnamenospace-$\alpha$ is then the same as that in the original problem with uniform sampling. While the sort incurs $O(d\text{log}d)$ cost, we find this still improves the overall sample complexity of the algorithm relative to the closest baseline when $O(d\text{log}d + n)$ is better than $O(n\sqrt{d})$, as is often the case in practice.
\end{remark}

\section{Experiments}
\label{ch4_6:mips_exps}

We empirically evaluate the performance of \algname and the non-uniform sampling version \algnamenospace-$\alpha$ on four synthetic and real-world datasets, comparing them to 8 state-of-art MIPS algorithms. 
We considered the two synthetic datasets, \texttt{NORMAL\_CUSTOM} and \texttt{CORRELATED\_NORMAL\_CUSTOM}, to assess the performance across a wide parameter range. 
We also consider two real-world datasets, the Netflix Prize dataset ($n=6,000$, $d=400,000$) \cite{bennettNetflixPrize2007} and the Movie Lens dataset ($n=4,000$, $d=6,000$) \cite{harperMovielensDatasetsHistory2015}.
We compared our algorithms to 8 baseline MIPS algorithms: LSH-MIPS \cite{shrivastavaAsymmetricLSHALSH2014}, H2-ALSH-MIPS \cite{huangAccurateFastAsymmetric2018}, NEQ-MIPS \cite{daiNormexplicitQuantizationImproving2020}, PCA-MIPS \cite{bachrachSpeedingXboxRecommender2014}, BoundedME \cite{liuBanditApproachMaximum2019}, Greedy-MIPS \cite{yuGreedyApproachBudgeted2017}, HNSW-MIPS \cite{malkovEfficientRobustApproximate2018, morozovNonmetricSimilarityGraphs2018}. and NAPG-MIPS \cite{tanNormAdjustedProximity2021}.
Throughout the experiments, we focus on the sample complexity, defined as the number of coordinate-wise multiplications performed.
Appendix \ref{ch_6_3:mips_app_2_datasets} provides additional details on our experimental settings.

\label{subsec:mips_scaling}
\textbf{Scaling with $d$:}
We first assess the scaling with $d$ for \algname on the four datasets. We subsampled features from the full datasets, evaluating $d$ up to $1,000,000$ on simulated data and up to $400,000$ on real-world data. Results are reported in Figure \ref{fig:mips_bm_complexities}.
In all trials, \algname returns the correct answer to MIPS.
We determined that \algname did not scale with $d$ in all experiments, which validates our theoretical results on the sample complexity. 

\label{subsec:mips_sample_complexities}
\textbf{Comparison of sample complexity:}
We next compare the sample complexity of \algname and \algnamenospace-$\alpha$ to 8 state-of-art MIPS algorithms on the four datasets across different values of $d$. 
We only used a subset of up to $20,000$ features because some of the baseline algorithms were prohibitively slow for larger values of $d$.
Results are reported in Figure \ref{fig:mips_all_algo_complexities}.
We omit GREEDY-MIPS from Figure \ref{fig:mips_all_algo_complexities} because its sample complexity was significantly worse than all algorithms, and omit HNSW-MIPS as its performance was strictly worse than NAPG-MIPS (a related baseline).
In measuring sample complexity, we measure \textit{query-time} sample complexity, i.e., neglect the cost of preprocessing for the baseline algorithms; this is favorable to the baselines.
Nonetheless, our two algorithms substantially outperformed other algorithms on all four datasets, demonstrating their superiority in sample efficiency. 
For example, on the Movie Lens dataset, \algname and \algnamenospace-$\alpha$ are 20$\times$ and 27$\times$ faster than the closest baseline (NEQ-MIPS).
In addition, the non-uniform sampling version \algnamenospace-$\alpha$ outperformed the default version \algname in 3 out 4 datasets, suggesting the weighted sampling technique further improves sample efficiency. \algnamenospace-$\alpha$ demonstrated slightly worse performance than \algname on the Netflix dataset, possibly because the highest-value coordinates for the randomly sampled query vectors had low dot products with the atoms.

\label{subsec:mips_prec1_tradeoff}
\textbf{Trade-off between speed and accuracy:}
Finally, we evaluate the trade-off between speed and accuracy by varying the error probability $\delta$ in our algorithm and the corresponding hyper-parameters in the baseline algorithms (see Appendix \ref{subsec:mips_app_2_experimental_settings} for more details). 
As in \cite{liuBanditApproachMaximum2019}, we define the speedup of an algorithm to be:
\begin{equation*}
    \text{speedup} = \frac{\text{sample complexity of na\"ive algorithm}}{\text{sample complexity of compared algorithm}}.
\end{equation*}
The accuracy is defined as the proportion of times each algorithm returns the true MIPS solution.
Results are reported in Figure \ref{fig:mips_all_algo_complexities}. 
Our algorithms achieved the best tradeoff on all four datasets, again demonstrating the superiority of our algorithms in efficiently and accurately solving the MIPS problem. 
We also considered the $k$-MIPS setting where the goal was to find the top $k$ atoms. Results are reported for $k=5$ and $k=10$ in Appendix \ref{ch_6_3:mips_app_2_additional_experiments}. 
Our algorithms obtained a similar improvement over other baselines in these experiments.

\label{subsec:mips_real_world_high_dim}
\textbf{Real-world high-dimensional datasets:}
We also verify the $O(1)$ scaling with $d$ on two real-world, high-dimensional datasets. 
These datasets are the \texttt{Sift-1M} \cite{jegouProductQuantizationNearest2010} and \texttt{CryptoPairs} \cite{cryptopairs2022} datasets.
The \texttt{Sift-1M} dataset consists of scale-invariant feature transform \cite{loweObjectRecognitionLocal1999} features of 128 different images.
Note that we work with the ``transpose'' of the \texttt{Sift-1M} dataset as it is usually considered so that we have $128$ vectors each of $1,000,000$ dimensions. 
The \texttt{CryptoPairs} dataset consists of the historical trading data of more than 400 trading pairs at 1 minute resolution reaching back until the year 2013.
On these datasets, \algname appears to scale as $O(1)$ with $d$ even to a million dimensions; see Figure \ref{fig:mips_bm_scaling_sift_and_crypto}. 
This suggests that the necessary assumptions outlined in Sections \ref{subsec:mips_subgaussianity} and \ref{subsec:mips_gaps} are satisfied on these real-world, high-dimensional datasets and justifies our assumption that these conditions often hold in practice.
Note that the high dimensionality of these datasets makes them prohibitively expensive to run scaling experiments as in Section \ref{subsec:mips_scaling} or the tradeoff experiments as in Section \ref{subsec:mips_prec1_tradeoff} for comparisons to the baselines.

\section{Conclusions and Limitations}
\label{ch4_7:mips_conclusions}

In this work, we presented \algname and \algnamenospace-$\alpha$, novel algorithms for the MIPS problem.
In contrast with prior work, \algname requires no preprocessing of the data or that the data be nonnegative, and provides hyperparameters to trade off accuracy and runtime.
\algname scales better to high-dimensional datasets under reasonable assumptions and outperformed the prior state-of-the-art significantly.
Though the assumptions for \algname and \algnamenospace-$\alpha$ are often satisfied in practice, requiring them may be a limitation of our approach.
In particular, when many of the arm gaps are small, \algname will compute the inner products for the relevant atoms na\"ively.
In Appendix \ref{ch_6_3:mips_app_6_symmetric}, we provide an example dataset in which the assumptions necessary for \algname are violated. 

\begin{center}
\begin{figure}[htb]
    \begin{subfigure}{.49\textwidth}
        \includegraphics[width=\linewidth]{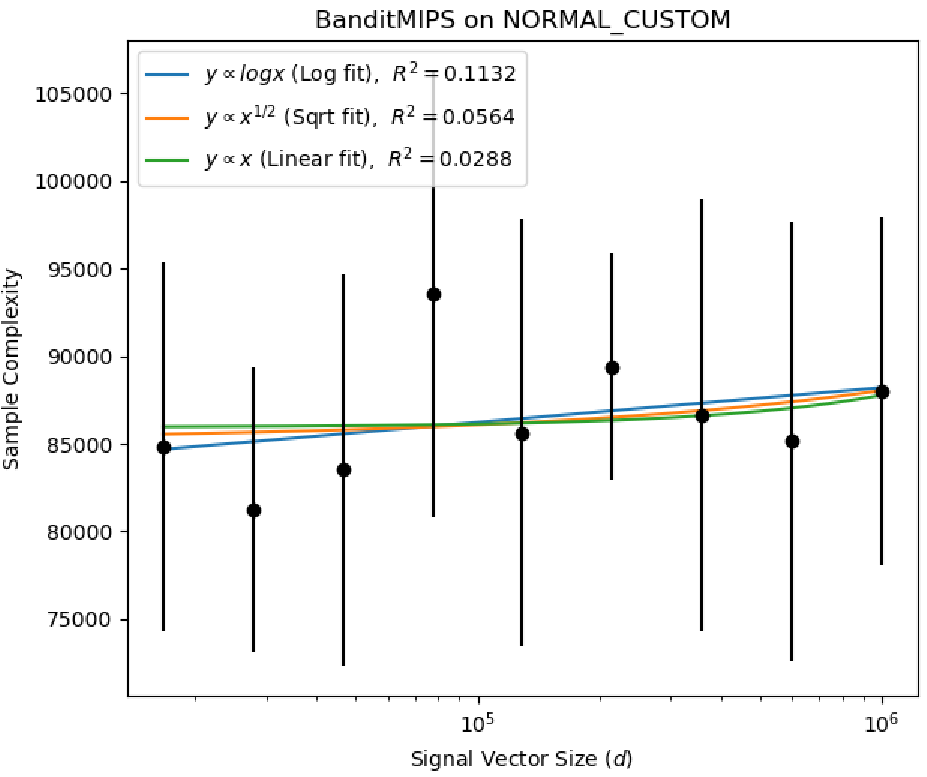}
        \label{fig:mips_scaling1_a} 
    \end{subfigure}
    \begin{subfigure}{.49\textwidth}
        \includegraphics[width=\linewidth]{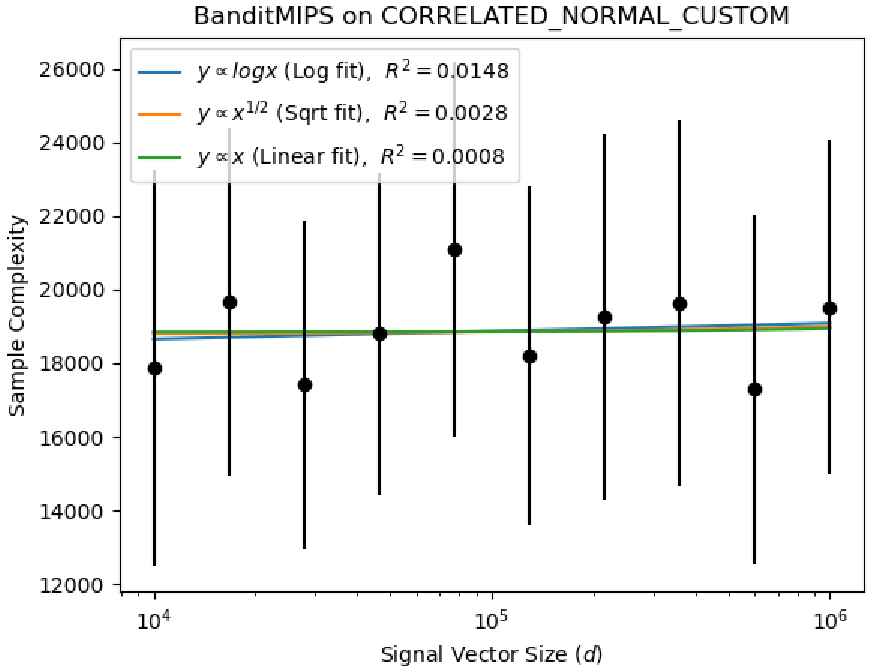}
        \label{fig:mips_scaling1_b} 
    \end{subfigure}
    \begin{subfigure}{.49\textwidth}
        \includegraphics[width=\linewidth]{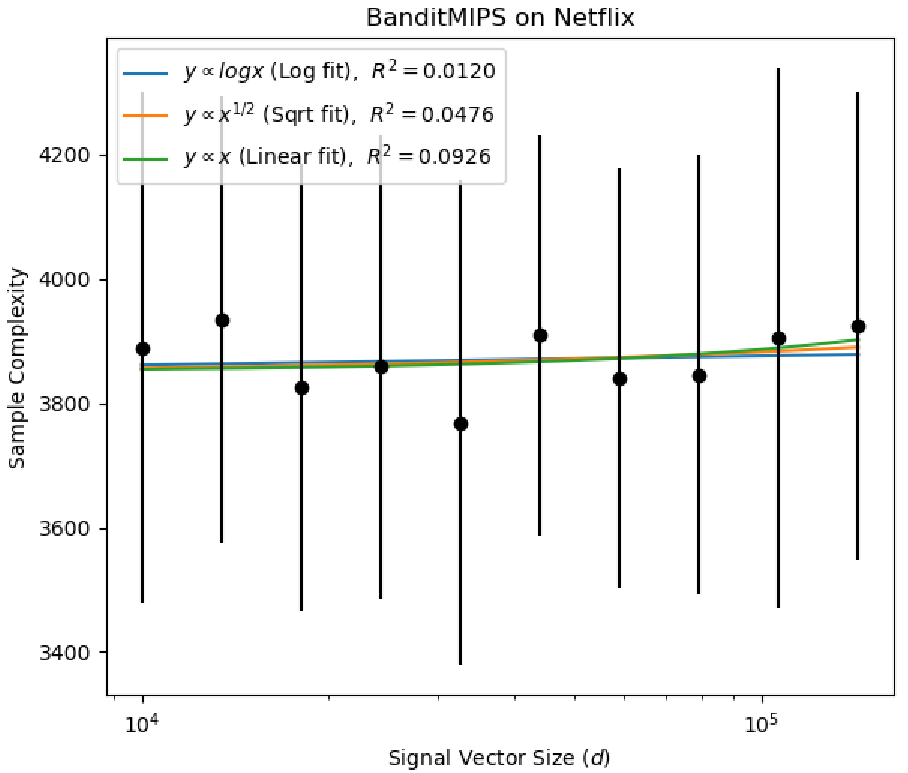}
        \label{fig:mips_scaling1_c} 
    \end{subfigure}
    \begin{subfigure}{.49\textwidth}
        \includegraphics[width=\linewidth]{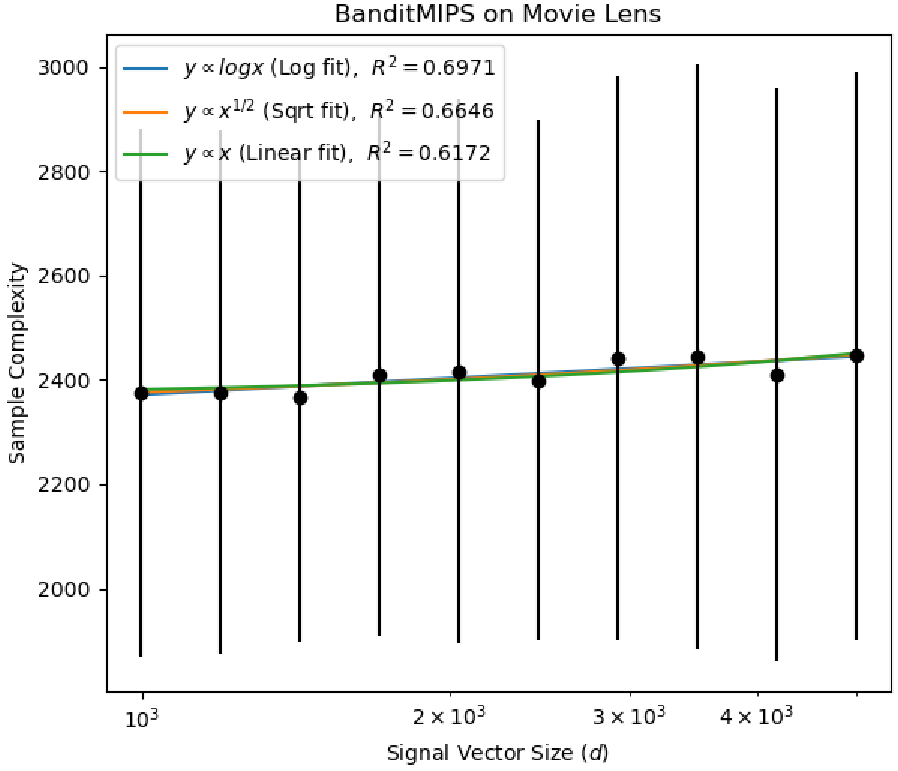}
        \label{fig:mips_scaling1_d} 
    \end{subfigure}
\caption[Sample complexity of \algname versus dataset dimensionality on four datasets]
{Sample complexity of \algname for different values of $d$ on all four datasets. 95\% CIs are provided around the mean are computed from 10 random trials. The sample complexity of \algname does not scale with $d$. Note that the values of $R^2$, the coefficient of determination, are similar for linear, logarithmic, and square root fits, which suggests the scaling is actually constant.}
\label{fig:mips_bm_complexities}
\end{figure}
\end{center}
\begin{center}
\begin{figure}[htb]
    \begin{subfigure}{.49\textwidth}
        \includegraphics[width=\linewidth]{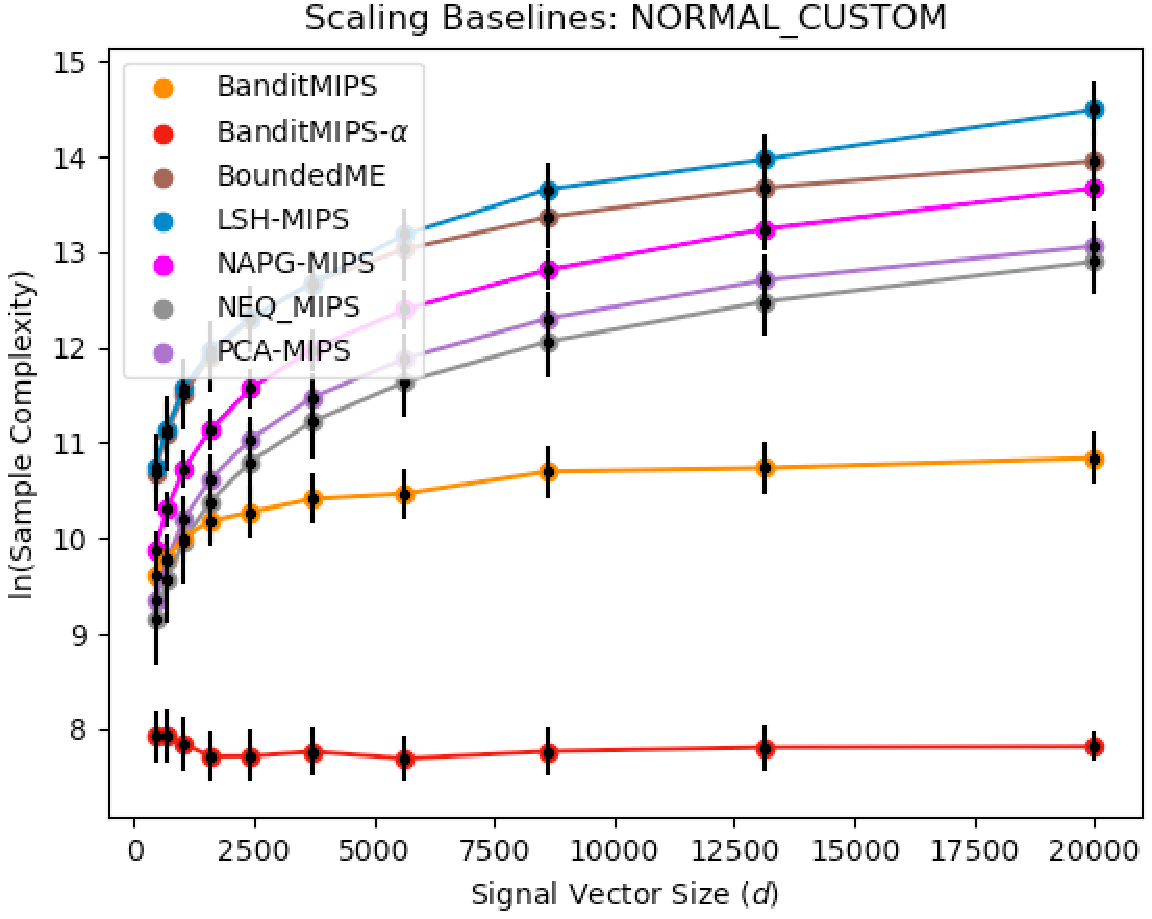}
        \label{fig:mips_scaling2_a}
    \end{subfigure}
    \begin{subfigure}{.49\textwidth}
        \includegraphics[width=\linewidth]{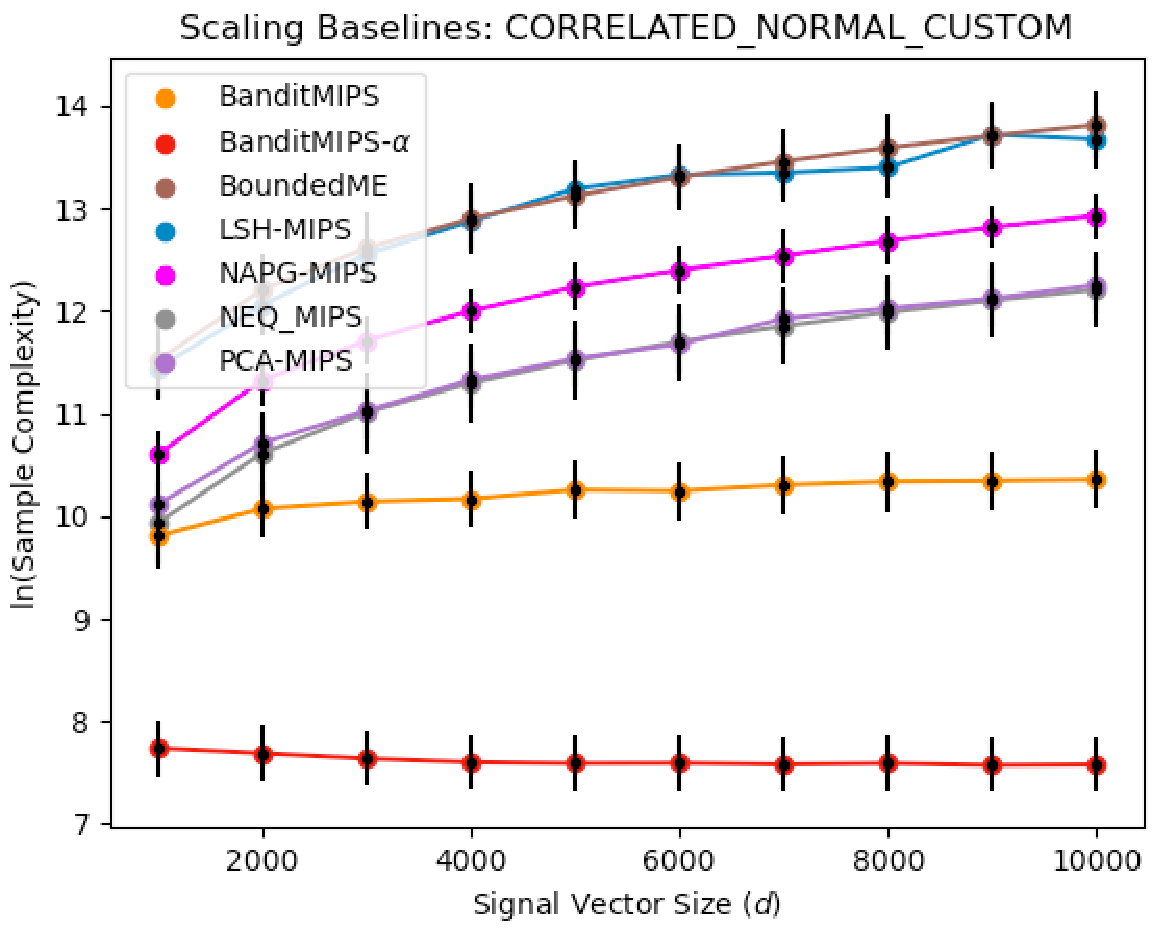}
        \label{fig:mips_scaling2_b}
    \end{subfigure}
    \begin{subfigure}{.49\textwidth}
        \includegraphics[width=\linewidth]{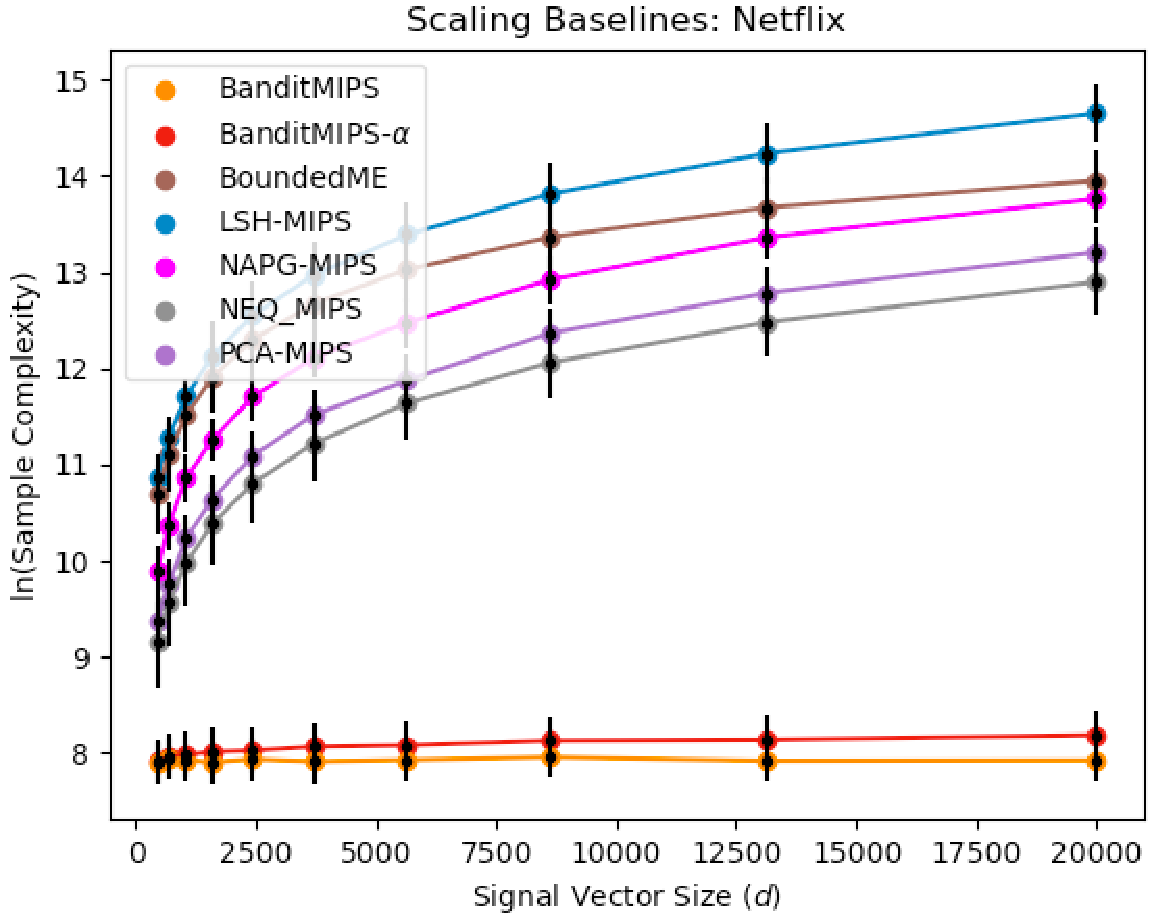}
        \label{fig:mips_scaling2_c}
    \end{subfigure}
    \begin{subfigure}{.49\textwidth}
        \includegraphics[width=\linewidth]{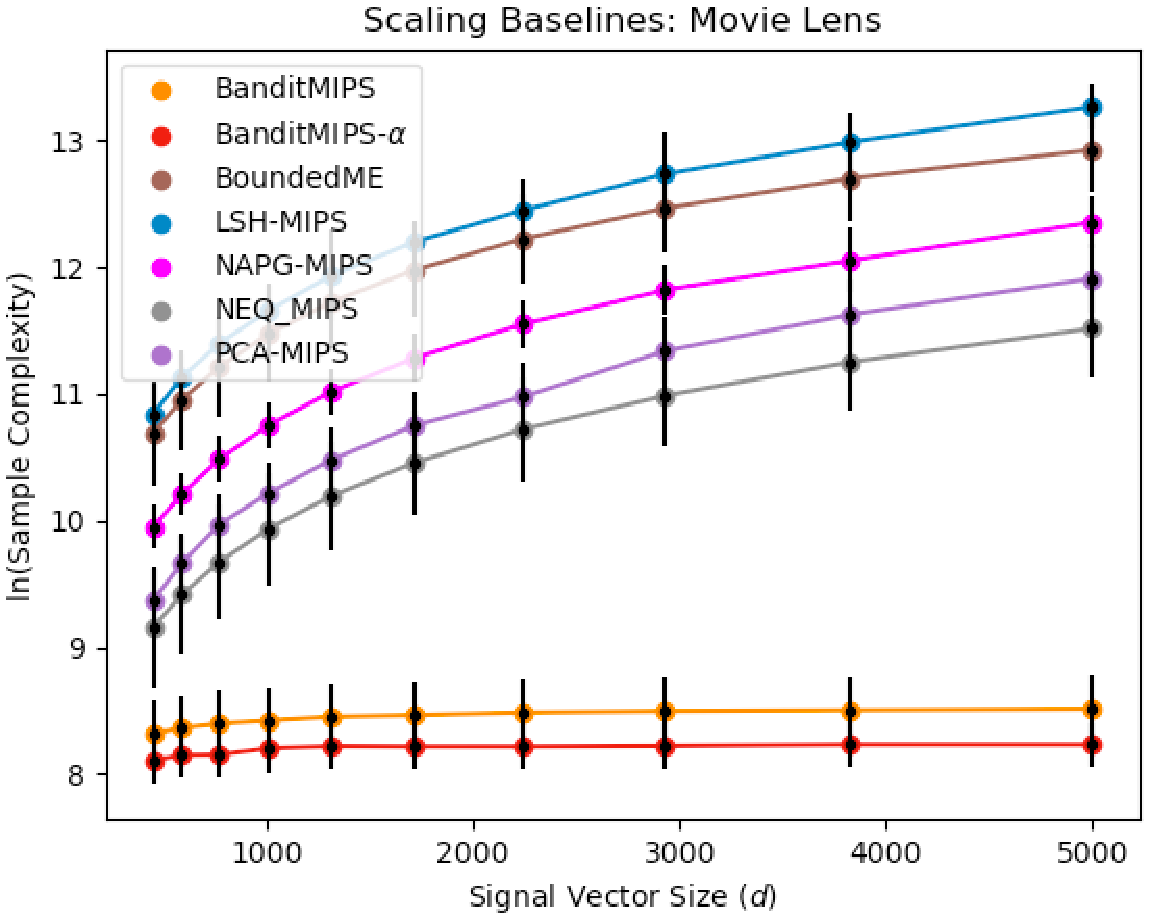}
        \label{fig:mips_scaling2_d}
    \end{subfigure}
\caption[Comparison of sample complexity between \algnamenospace, \algnamenospace-$\alpha$, and other baseline algorithms versus dataset dimensionality across four datasets]
{Comparison of sample complexity between \algnamenospace, \algnamenospace-$\alpha$, and other baseline algorithms for different values of $d$ across all four datasets. The $y$-axis is on a logarithmic scale. 95\% CIs are provided around the mean are computed from 10 random trials. \algname and \algnamenospace-$\alpha$ outperformed other baselines.}
\label{fig:mips_all_algo_complexities}
\end{figure}
\end{center}
\begin{center}
\begin{figure}[htb]
    \begin{subfigure}{.49\textwidth}
        \includegraphics[width=\linewidth]{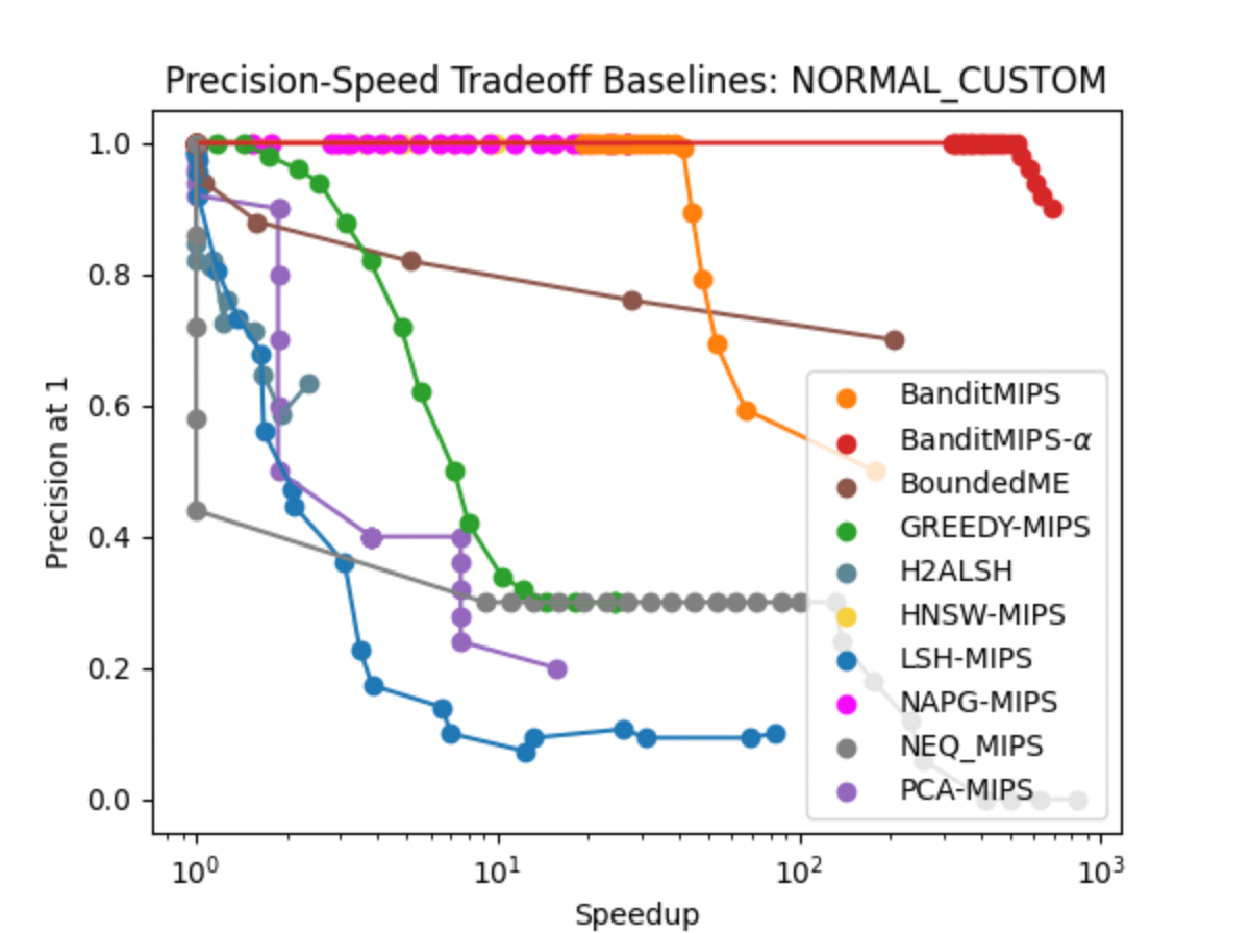}
        \label{fig:mips_p1st_a}
    \end{subfigure}
    \begin{subfigure}{.49\textwidth}
        \includegraphics[width=\linewidth]{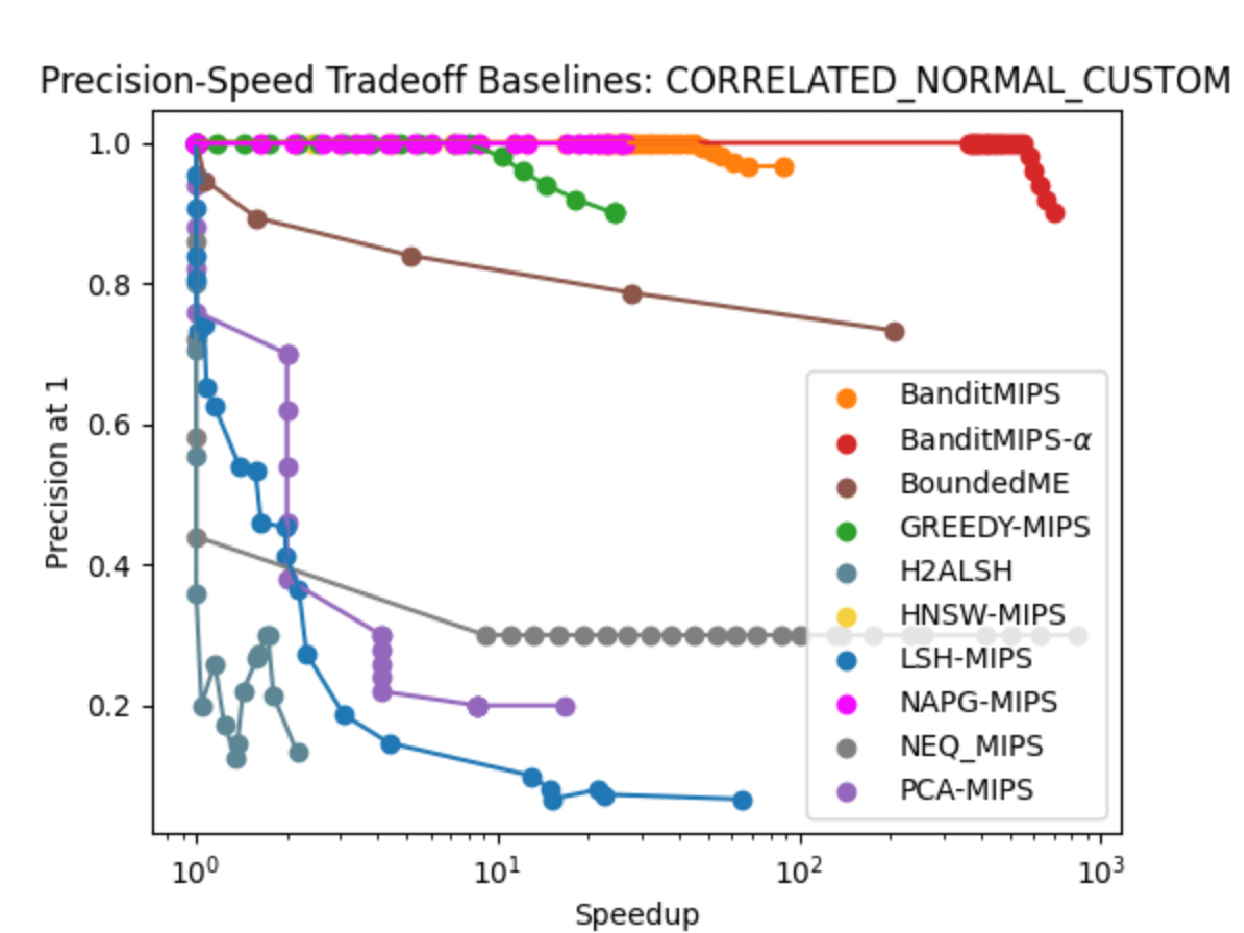}
        \label{fig:mips_p1st_b}
    \end{subfigure}
        \begin{subfigure}{.49\textwidth}
        \includegraphics[width=\linewidth]{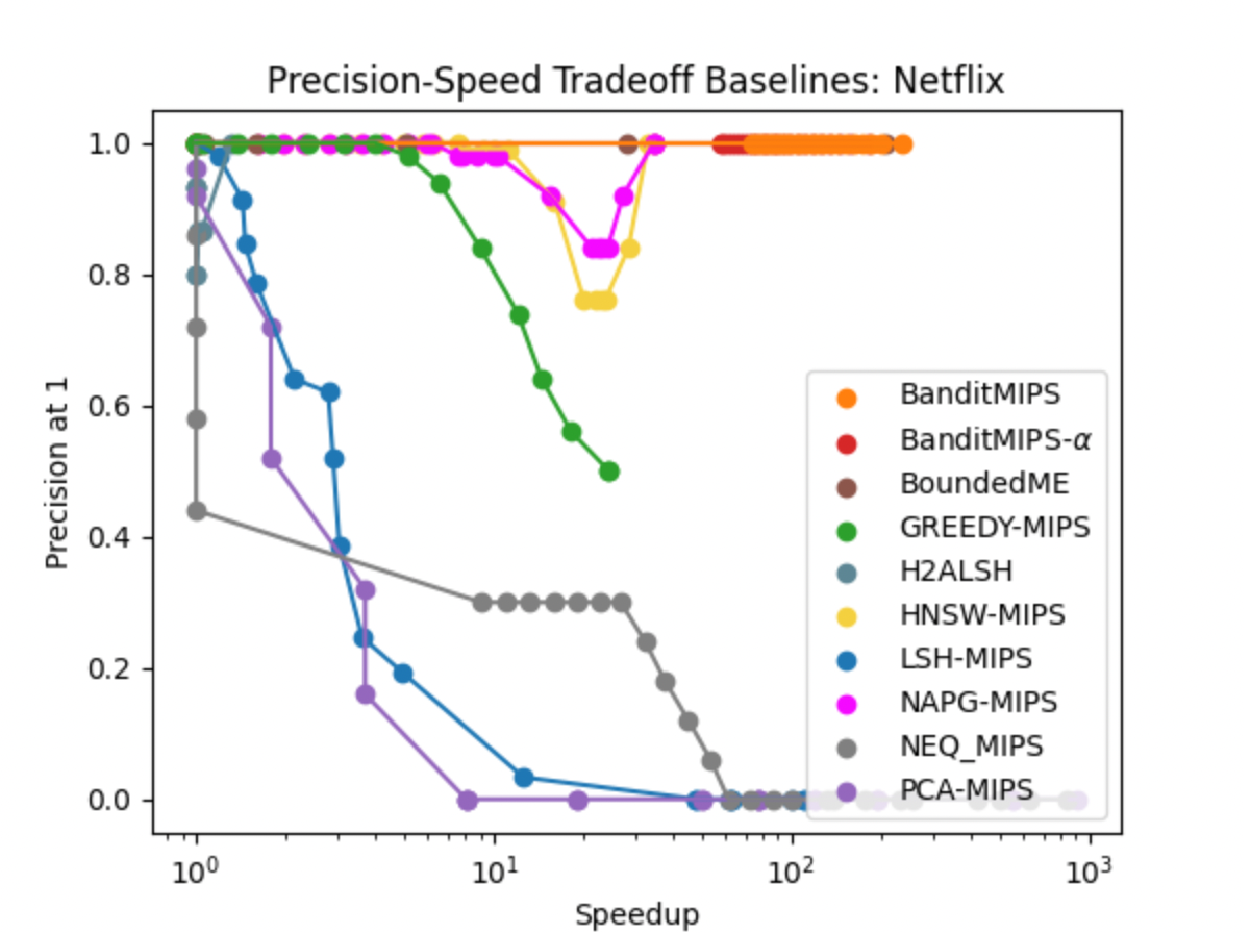}
        \label{fig:mips_p1st_c}
    \end{subfigure}
        \begin{subfigure}{.49\textwidth}
        \includegraphics[width=\linewidth]{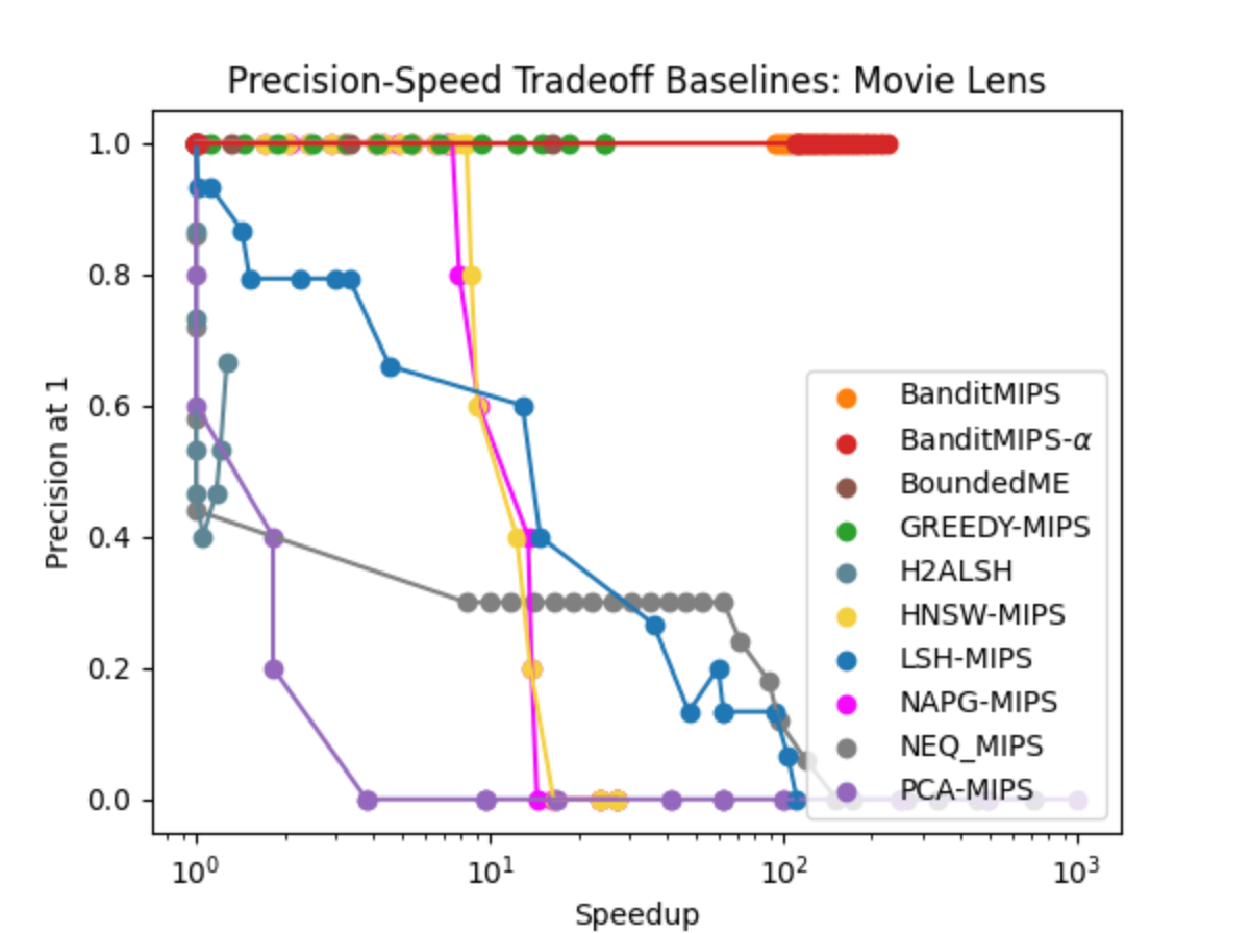}
        \label{fig:mips_p1st_d}
    \end{subfigure}
\caption[Trade-off between accuracy and speedup for various MIPS algorithms across four datasets]
{Trade-off between accuracy (equivalent to precision@$1$) and speed for various algorithms across all four datasets. The $x$-axis represents the speedup relative to the naive $O(nd)$ algorithm and the $y$-axis shows the proportion of times an algorithm returned correct answer; higher is better. Each dot represents the mean across 10 random trials and the CIs are omitted for clarity. Our algorithms consistently achieve better accuracies at higher speedup values than the baselines. 
}
\label{fig:mips_p1st}
\end{figure}
\end{center}
\begin{center}
\begin{figure}[htb]
    \begin{subfigure}{.49\textwidth}
        \includegraphics[width=\linewidth]{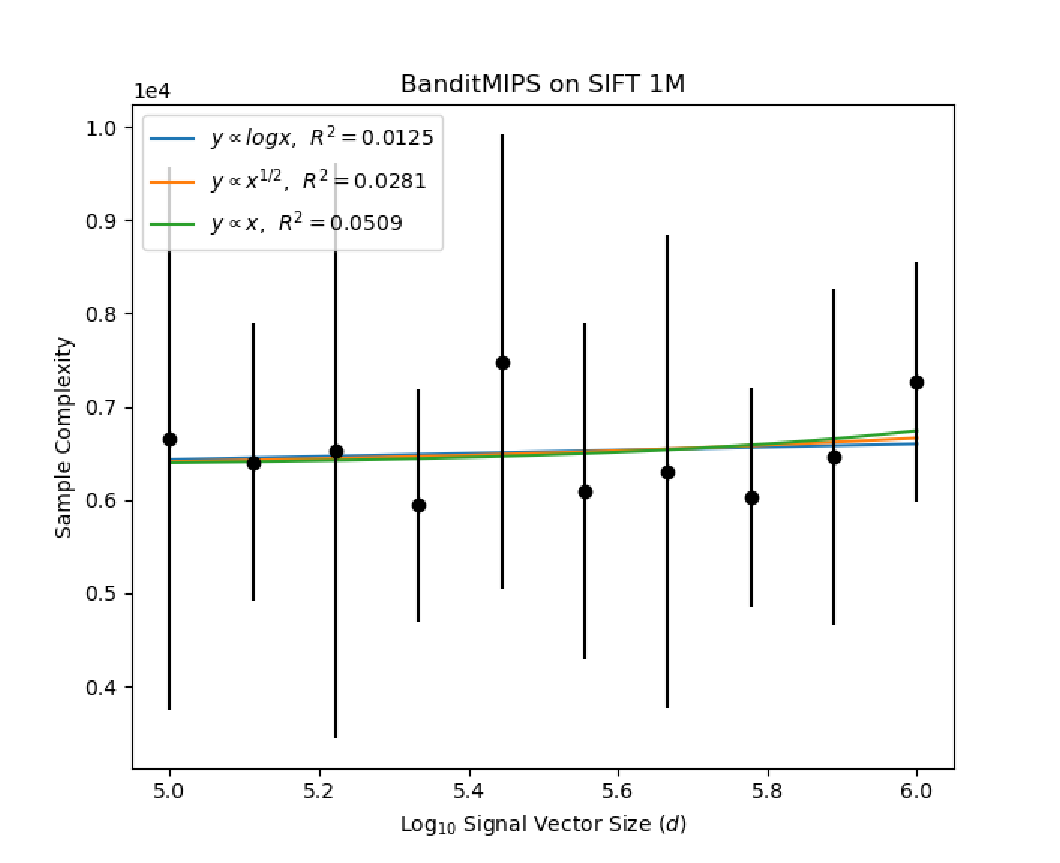}
        \label{fig:mips_bm_sift}
    \end{subfigure}
    \begin{subfigure}{.49\textwidth}
        \includegraphics[width=\linewidth]{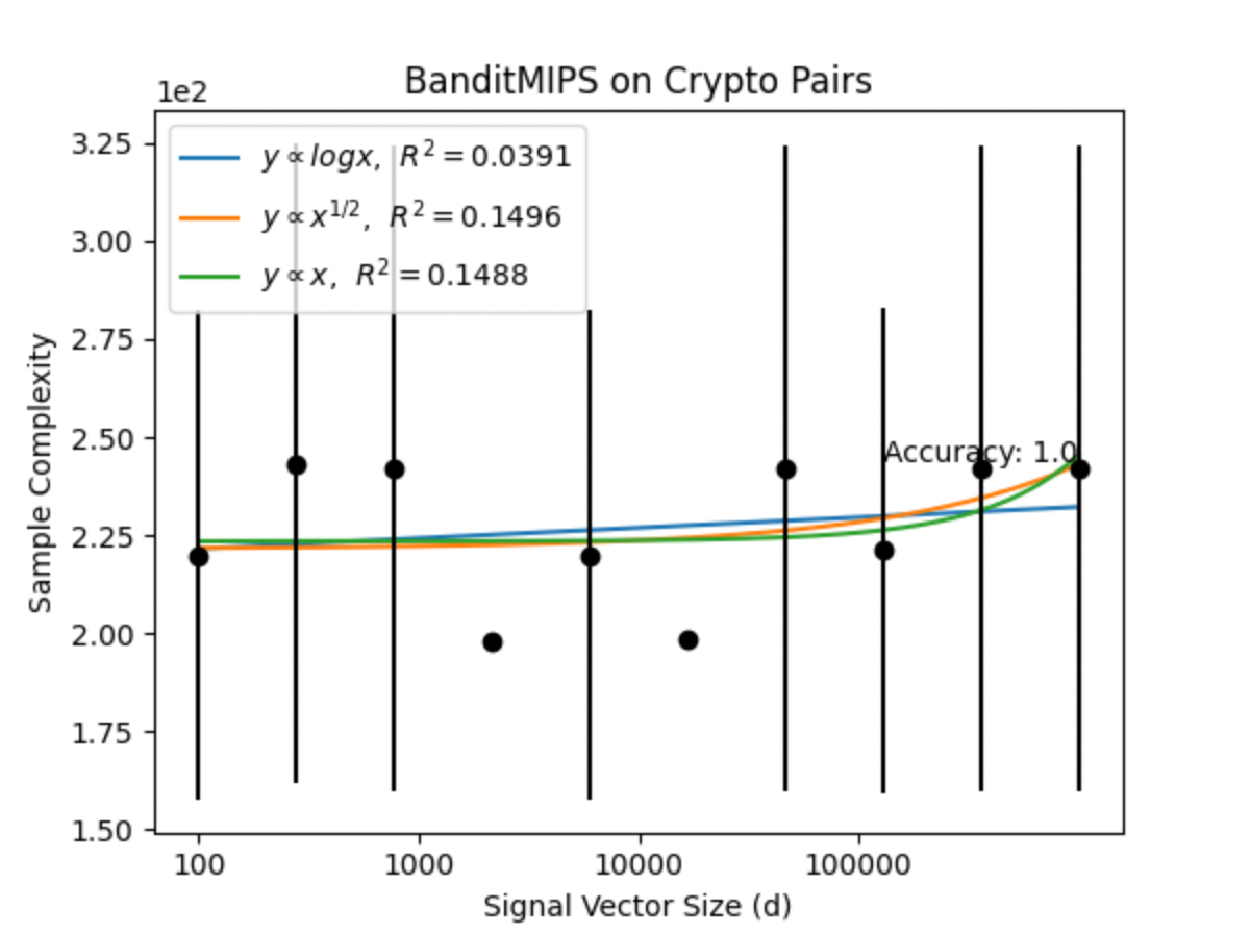}
        \label{fig:mips_bm_crypto}
    \end{subfigure}
\caption[Sample complexity of \algname versus dataset dimensionality for the \texttt{Sift-1M} and \texttt{CryptoPairs} datasets]
{Sample complexity of \algname versus $d$ for the \texttt{Sift-1M} and \texttt{CryptoPairs} datasets. \algname scales as $O(1)$ with respect to $d$ for both datasets. Means and uncertainties were obtained by averaging over 5 random seeds. \algname returns the correct solution to MIPS in each trial.}
\label{fig:mips_bm_scaling_sift_and_crypto}
\end{figure}
\end{center}
\section{Related work}
\label{ch4_8:mips_related_work}

\textbf{MIPS applications:} MIPS arises naturally in many information retrieval contexts \cite{sivicVideoGoogleText2003, dongHighconfidenceNearduplicateImage2012, boytsovBeatenPathLet2016}, e.g., augmenting large, auto-regressive language models \cite{borgeaudImprovingLanguageModels2022}. 
MIPS is also a subroutine in the Matching Pursuit problem (MP) and its variants, such as Orthogonal Matching Pursuit (OMP) \cite{locatelloUnifiedOptimizationView2017}.
MP and other iterative MIPS algorithms have found a many applications, e.g., to find a sparse solution of underdetermined systems of equations \cite{donohoSparseSolutionUnderdetermined2012} and accelerate conditional gradient methods \cite{songAcceleratingFrankWolfeAlgorithm2022, xuBreakingLinearIteration2021}.
MIPS also arises in the inference stages of many other applications, such as for deep-learning based multi-class or multi-label classifiers \cite{deanFastAccurateDetection2013, jainActiveLearningLarge2009} and has been used as a black-box subroutine to improve the learning and inference in unnormalized log-linear models when computing the partition function is intractable \cite{mussmannLearningInferenceMaximum2016}.

\textbf{MIPS algorithms:} Many approaches focus on solving approximate versions of MIPS. 
Such work often assumes that the vector entries are nonnegative, performs non-adaptive sampling \cite{luSamplingApproximateMaximum2017,ballardDiamondSamplingApproximate2015, lorenzenRevisitingWedgeSampling2021, dingFastSamplingAlgorithm2019, yuGreedyApproachBudgeted2017}, or rely on product quantization \cite{daiNormexplicitQuantizationImproving2020, wuLocalOrthogonalDecomposition2019, guoAcceleratingLargescaleInference2020, guoNewLossFunctions2019, matsuiSurveyProductQuantization2018, douzePolysemousCodes2016, geOptimizedProductQuantization2013, babenkoInvertedMultiindex2014, jegouSearchingOneBillion2011, jegouProductQuantizationNearest2010}.
Many of these algorithms require significant preprocessing, are limited in their adaptivity to the underlying data distribution, provide no theoretical guarantees, or scale linearly in $d$---all drawbacks that have been identified as bottlenecks for MIPS in high dimensions \cite{ponomarenkoComparativeAnalysisData2014}.

Many approaches to MIPS attempt to reduce it to a Nearest Neighbor (NN) problem. We note that the NN literature is extremely vast and has inspired the use of techniques based on permutation search \cite{naidanPermutationSearchMethods2015}, inverted files \cite{amatoApproximateSimilaritySearch2010}, vantage-point trees \cite{boytsovEngineeringEfficientEffective2013}, and more. 
The proliferation of NN algorithms has inspired several associated software packages \cite{bernhardssonAnnoyApproximateNearest2018, johnsonBillionscaleSimilaritySearch2019, boytsovEngineeringEfficientEffective2013} and tools for practical hyperparameter selection \cite{sunAutomatingNearestNeighbor2023}.
However, MIPS is fundamentally different from and harder than NN because the inner product is not a proper metric function \cite{morozovNonmetricSimilarityGraphs2018}. 
Nonetheless, NN techniques have inspired many direct approaches to MIPS, including those that rely on $k$-dimensional or random projection trees \cite{dasguptaRandomProjectionTrees2008}, concomitants of extreme order statistics \cite{phamSimpleEfficientAlgorithms2021}, ordering permutations \cite{chavezEffectiveProximityRetrieval2008}, principle component analysis (PCA) \cite{bachrachSpeedingXboxRecommender2014}, or hardware acceleration \cite{xiangGAIPSAcceleratingMaximum2021, abuzaidIndexNotIndex2019}. 
All of these approaches require significant preprocessing that scales linearly in $d$, e.g., for computing the norms of the query or atom vectors, whereas \algname does not.

A large family of MIPS algorithms are based on locality-sensitive hashing (LSH) \cite{indykApproximateNearestNeighbors1998,shrivastavaAsymmetricLSHALSH2014,shrivastavaImprovedAsymmetricLocality2014,neyshaburSymmetricAsymmetricLshs2015,huangQueryawareLocalitysensitiveHashing2015,songProMIPSEfficientHighdimensional2021, luAdaLSHAdaptiveLSH2021, shrivastavaImprovedAsymmetricLocality2014, wuH2SAALSHPrivacyPreservedIndexing2022, huangAccurateFastAsymmetric2018, maLearningSparseBinary2021, andoniPracticalOptimalLSH2015, yanNormrangingLshMaximum2018}.
A shortcoming of these LSH-based approaches is that, in high dimensions, the maximum dot product is often small compared to the vector norms, which necessitates many hashes and significant storage space (often orders of magnitude more than the data itself). 

Many other MIPS approaches are based on proximity graphs, such as ip-NSW \cite{morozovNonmetricSimilarityGraphs2018} and related work \cite{liuUnderstandingImprovingProximity2020, fengReinforcementRoutingProximity2023, tanEfficientRetrievalTop2019, tanNormAdjustedProximity2021, zhouMobiusTransformationFast2019, chenFINGERFastInference2022, zhangGraSPOptimizingGraphbased2022, ponomarenkoApproximateNearestNeighbor2011, malkovEfficientRobustApproximate2018, malkovApproximateNearestNeighbor2014}. 
These approaches use preprocessing to build an index data structure that allows for more efficient MIPS solutions at query time.
However, these approaches also do not scale well to high dimensions as the index structure (an approximation to the true proximity graph) breaks down due to the curse of dimensionality \cite{liuUnderstandingImprovingProximity2020}.

Perhaps most similar to our work is BoundedME \cite{liuBanditApproachMaximum2019}.
Similar to our method, their approach presents a solution to MIPS based on adaptive sampling but scales as $O(n\sqrt{d})$, worse than our algorithm that does not scale with $d$.
This is because in BoundedME, the number of times each atom is sampled is predetermined by $d$ and not adaptive to the actual \textit{values} of the sampled inner products; rather, is only adaptive to their relative \textit{ranking}. 
Intuitively, this approach is wasteful because information contained in the sampled inner product's values is discarded.

\textbf{Multi-armed bandits:} 
\algname is motivated by the best-arm identification problem in multi-armed bandits \cite{even-darActionEliminationStopping2006, karninAlmostOptimalExploration2013, audibertBestArmIdentification2010, jamiesonBestarmIdentificationAlgorithms2014, jamiesonLilUcbOptimal2014, jamiesonNonstochasticBestArm2016, bardenetConcentrationInequalitiesSampling2015, boucheronConcentrationInequalitiesNonasymptotic2013, even-darPACBoundsMultiarmed2002,kalyanakrishnanPACSubsetSelection2012}. In a typical setting, we have $n$ arms each associated with an expected reward $\mu_i$. At each time step $t = 0,1,\cdots,$ we decide to pull an arm $A_t\in \{1,\cdots,n\}$, and receive a reward $X_t$ with $E[X_t] = \mu_{A_t}$. The goal is to identify the arm  with the largest reward with high probability while using the fewest number of arm pulls.

\chapter{Conclusion}
\label{ch5}
\section{Summary}
\label{ch5_1:summary}

In this thesis, we have seen how multi-armed bandits and related adaptive sampling techniques can be used to accelerate common machine learning algorithm.

In Chapter \ref{ch1}, we introduced the best-arm identification problem in the fixed confidence setting and recapitulated a popular solution: successive elimination.
In Chapter \ref{ch2}, we proposed an algorithm based on successive elimination, dubbed BanditPAM, for the $k$-medoids problem. 
Under reasonable distributions on the data-generating distribution, BanditPAM identifies the same solution as the prior state-of-the-art algorithm, PAM, while reaching those solutions significantly faster.
More precisely, BanditPAM reaches its solutions in $O(n \log n)$ time compared to the $O(n^2)$ runtime of PAM.

In Chapter \ref{ch3}, we proposed an algorithm, called MABSplit, for the node-splitting subroutine of many common tree-based models.
The complexity of MABSplit does not depend explicitly on the training dataset size; rather, it depends on the relative qualities of the different splits.
If the qualities of different splits do not depend on the dataset size, then MABSplit is $O(1)$ with respect to $n$, the dataset size.
Intuitively, MABSplit only queries as much data as it needs to identify the best split and is significantly more efficient than the canonical algorithm that uses all $n$ datapoints.

Finally, in Chapter \ref{ch4}, we proposed an algorithm, named BanditMIPS, and a related variant, BanditMIPS-$\alpha$, for the Maximum Inner Product Search problem.
The complexity of BanditMIPS does not explicitly depend on the dimension $d$ of the underlying dataset; rather, it depends on the normalized inner products between each vector and the query vector.
When the normalized inner products do depend on the dataset dimension $d$, BanditMIPS is $O(1)$ with respect to $d$.
We find that MABSplit outperforms many baseline algorithms in sample and time complexity on high-dimensional datasets.

Our algorithms, all based on adaptive sampling, enable some common machine learning algorithms to be used on huge datasets.
We hope this work will be of significant import in the era of huge data.

\section{Future Work}
\label{ch5_2:future_work}

There are many possible extensions to the work presented in this thesis.

In $k$-medoids clustering, we anticipate that it is possible to accelerate BanditPAM as it is currently presented in Chapter \ref{ch2}.
It should be possible to merge the adaptive sampling of BanditPAM with additional techniques that have recently accelerated PAM.
Furthermore, it should be possible to avoid evaluating pairwise distances exactly and, instead, sample coordinate-wise distances as done in other related work \cite{bagariaBanditbasedMonteCarlo2021}.
Lastly, it may be possible to improve the complexity of BanditPAM by using other best-arm identification algorithms, such as median elimination \cite{jamiesonBestarmIdentificationAlgorithms2014}.

The techniques developed for training trees in Chapter \ref{ch3} should have a variety of applications.
In particular, we anticipate that domains such as law and healthcare, which often rely on tree-based models for interpretability, may now train models on huge datasets.
Furthermore, in datasets where splits for a given feature can be discarded quickly, we anticipate that MABSplit will work better with missing values, since MABSplit does not require many samples for features that are uninformative.

Our algorithm for the Maximum Inner Product Search problem presented in \ref{ch4}, BanditMIPS, should be applicable in many algorithms for which the MIPS problem is a subproblem.
For example, the canonical Matching Pursuit algorithm solves many MIPS problems and it should be possible to use BanditMIPS for each MIPS problem.
The MIPS problem also arises (in a more subtle way) in the inference stage of neural networks used for classification.
Often, the output layer of these neural networks is of the form $\sigma(Wx)$, where $\sigma$ is a component-wise softmax function and $W$ denotes the weights of the last linear layer.
At inference, when deducing only the highest-value is logit is required, it should be possible to accelerate the computation of the output layer by solving $\argmax_i w_i^T x$, i.e., a MIPS problem, where the $w_i^T$ are the rows of $W$.

We hope that this thesis will inspire additional research in multi-armed bandits and applications of adaptive sampling to other machine learning problems.

\appendix

\chapter{Appendix For Chapter \ref{ch2}}
\label{app:ch2}
\section{Additional Discussions on BanditPAM}
\label{ch_6_1:bp_app_1_discussions}

\subsection{FastPAM1 optimization}
\label{subsec:bp_app_1_fp1}

Algorithm \ref{alg:banditpam} can also be combined with the FastPAM1 optimization from \cite{schubertFasterKmedoidsClustering2019} to reduce the computation in each SWAP iteration. 
For a given candidate swap $(m, x)$, we rewrite $g_{(m,x)}(x_j)$ from Equation \eqref{eqn:bp_swap_instance} as:
\begin{equation}
    \label{eqn:bp_app_1_fastpam1_trick}
    g_{m,x}(x_j) = - d_1(x_j) + \mathbbm{1}_{x_j \notin \mathcal{C}_m}\min[d_1(x_j),d(x,x_j)] + \mathbbm{1}_{x_j \in \mathcal{C}_m}\min[d_2(x_j), d(x, x_j)]
\end{equation}
where $\mathcal{C}_m$ denotes the set of points whose closest medoid is $m$ and $d_1(x_j)$ and $d_2(x_j)$ are the distance from $x_j$ to its nearest and second nearest medoid, respectively, before the swap is performed. 
We cache the values $d_1(x_j), d_2(x_j)$, and the cluster assignments $\mathcal{C}_m$ so that Equation \eqref{eqn:bp_app_1_fastpam1_trick} no longer depends on $m$ and instead depend only on $\mathbbm{1}_{\{x_j \in \mathcal{C}_m \}}$, which is cached. This allows for an $O(k)$ speedup in each SWAP iteration since we do not need to recompute Equation \ref{eqn:bp_app_1_fastpam1_trick} for each of the $k$ distinct medoids (values of $m$).

\subsection{Value of re-estimating each \texorpdfstring{$\sigma_x$}{Sx}}
\label{subsec:bp_app_1_reestimation}

\begin{figure}[!ht]
    \centering
    \includegraphics[scale=0.5]{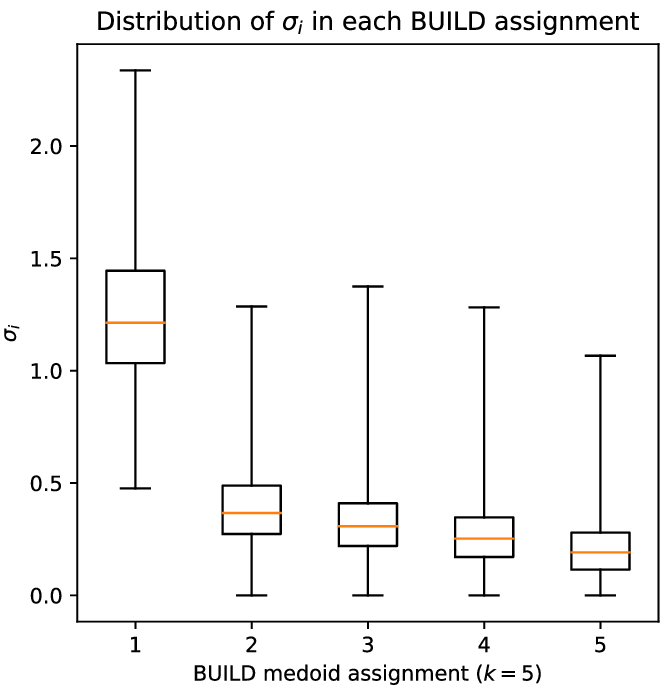}
    \caption[Standard deviations estimates $\sigma_x$ for the MNIST dataset during the BUILD step of BanditPAM]
    {Boxplot showing the min, max, and each quartile for the set of all $\sigma_x$ estimates for the full MNIST dataset, in the BUILD step.} 
    \label{fig:bp_app_1_MNIST_sigmas_example}
\end{figure}

The theoretical results in Section \ref{ch2_5:bp_theory} and empirical results in Section \ref{ch2_6:bp_exps} suggest that \algname scales almost linearly in dataset size for a variety of real-world datasets and commonly used metrics. One may also ask if Lines 7-8 of Algorithm \ref{alg:banditpam}, in which we re-estimate each $\sigma_x$ from the data, are necessary.
In some sense, we treat the set of \{$\sigma_x$\} as adaptive in two different ways: $\sigma_x$ is calculated on a \textit{per-arm} basis (hence the subscript $x$), as well recalculated in each BUILD and SWAP iteration.
In practice, we observe that re-estimating each $\sigma_x$ for each sequential call to Algorithm \ref{alg:banditpam} significantly improves the performance of our algorithm. 
Figure \ref{fig:bp_app_1_MNIST_sigmas_example} describes the distribution of estimates $\sigma_x$ for the MNIST data at different stages of the BUILD step.
The median $\sigma_x$ drops dramatically after the first medoid has been assigned and then steadily decreases, as indicated by the orange lines, and suggests that each $\sigma_x$ should be recalculated at every assignment step.
Furthermore, the whiskers demonstrate significant variation amongst the $\sigma_x$ in a given assignment step and suggest that having arm-dependent $\sigma_x$ parameters is necessary.
Without these modifications to our algorithm, we find that the confidence intervals used by \algname (Line 8) are unnecessarily large and cause computation to be expended needlessly as it becomes harder to identify the best target points.
Intuitively, this is due to the much larger confidence intervals that make it harder to distinguish between arms' mean returns.
For a more detailed discussion of the distribution of $\sigma_x$ and examples where the assumptions of Theorem \ref{thm:bp_nlogn} are violated, we refer the reader to Appendix \ref{ch_6_1:bp_app_3_proofs}.

\subsection{Violation of distributional assumptions}
\label{subsec:bp_app_1_violation}

In this section, we investigate the robustness of \algname to violations of the assumptions in Theorem \ref{thm:bp_nlogn} on an example dataset and provide intuitive insights into the degradation of scaling. We create a new dataset from the scRNA dataset by projecting each point onto the top 10 principal components of the dataset; we call the dataset of projected points scRNA-PCA. Such a transformation is commonly used in prior work; the most commonly used distance metric between points is then the $l_2$ distance \cite{lueckenCurrentBestPractices2019}.

Figure \ref{fig:bp_app_1_mu_dist} shows the distribution of arm parameters for various (dataset, metric) pairs in the first BUILD step. In this step, the arm parameter corresponds to the mean distance from the point (the arm) to every other point. We note that the true arm parameters in scRNA-PCA are more heavily concentrated about the minimum than in the other datasets. Intuitively, we have projected the points from a 10,170-dimensional space into a 10-dimensional one and have lost significant information in the process. This makes many points appear "similar" in the projected space.

Figures \ref{fig:bp_app_1_sigma_ex_MNIST} and \ref{fig:bp_app_1_sigma_ex_SCRNAPCA} show the distribution of arm rewards for 4 arms (points) in MNIST and scRNA-PCA, respectively, in the first BUILD step. We note that the examples from scRNA-PCA display much larger tails, suggesting that their sub-Gaussianity parameters $\sigma_x$ are very high.

Together, these observations suggest that the scRNA-PCA dataset may violate the assumptions of Theorems \ref{thm:bp_specific} and \ref{thm:bp_nlogn} and hurt the scaling of \algname with $n$, as measured by the number of distance calls per iteration as in Section \ref{ch2_6:bp_exps}.
Figure \ref{fig:bp_app_1_SCRNAPCA_L2_scaling} demonstrates the scaling of \algname with $n$ on scRNA-PCA. The slope of the line of best fit is 1.204, suggesting that \algname scales as approximately $O(n^{1.2})$ in dataset size.
We note that this is higher than the exponents suggested for other datasets by Figures \ref{fig:bp_mnist_l2} and \ref{fig:bp_mnist_scrna}, likely to the different distributional characteristics of the arm means and their spreads.

We note that, in general, it may be possible to characterize the distribution of arm returns $\mu_i$ at and the distribution of $\sigma_x$, the sub-Gaussianity parameter, at every step of \algnamenospace, from properties of the data-generating distribution, as done for several distributions in \cite{bagariaMedoidsAlmostlinearTime2018}. We leave this more general problem, as well as its implications for the complexity of our \algnamenospace, to future work.

\begin{figure}[ht]
\begin{subfigure}{0.49\textwidth}
  \centering
  \includegraphics[width=\linewidth]{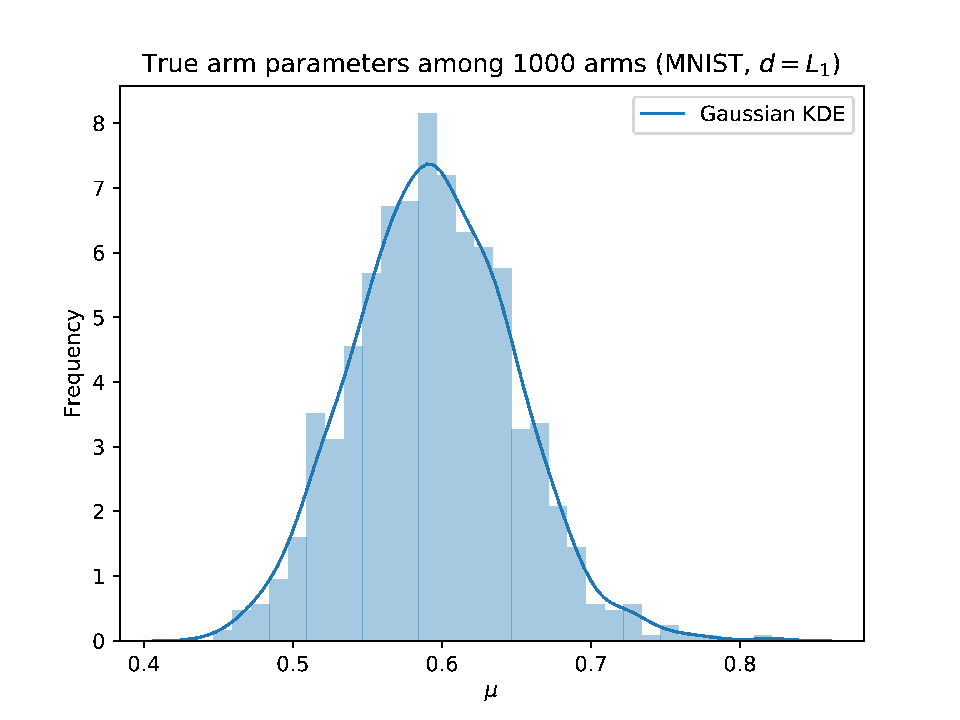}  
  \label{fig:bp_app_1_mu_dist1}
\end{subfigure}
\begin{subfigure}{0.49\textwidth}
  \centering
  \includegraphics[width=\linewidth]{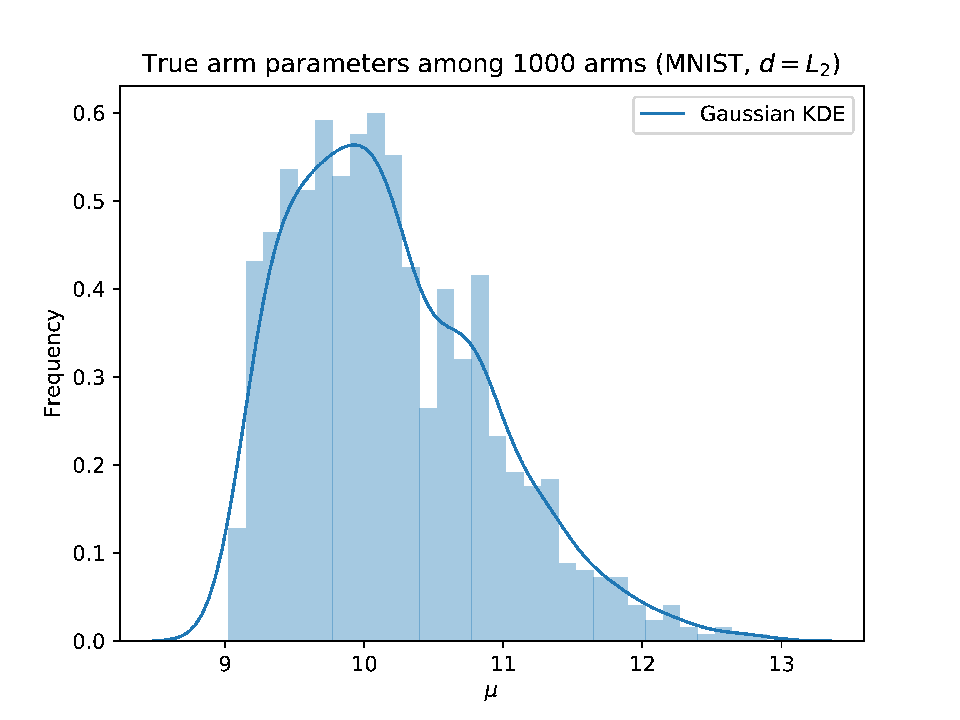}   
  \label{fig:bp_app_1_mu_dist2}
\end{subfigure}
\begin{subfigure}{0.49\textwidth}
  \centering
  \includegraphics[width=\linewidth]{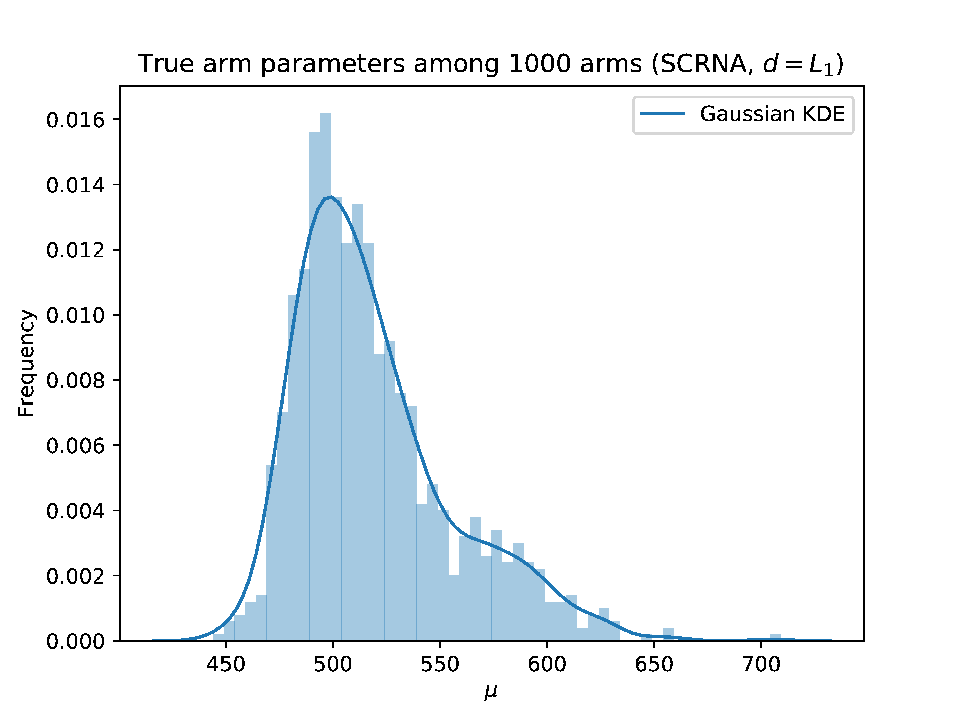}   
  \label{fig:bp_app_1_mu_dist3}
\end{subfigure}
\begin{subfigure}{0.49\textwidth}
  \centering
  \includegraphics[width=\linewidth]{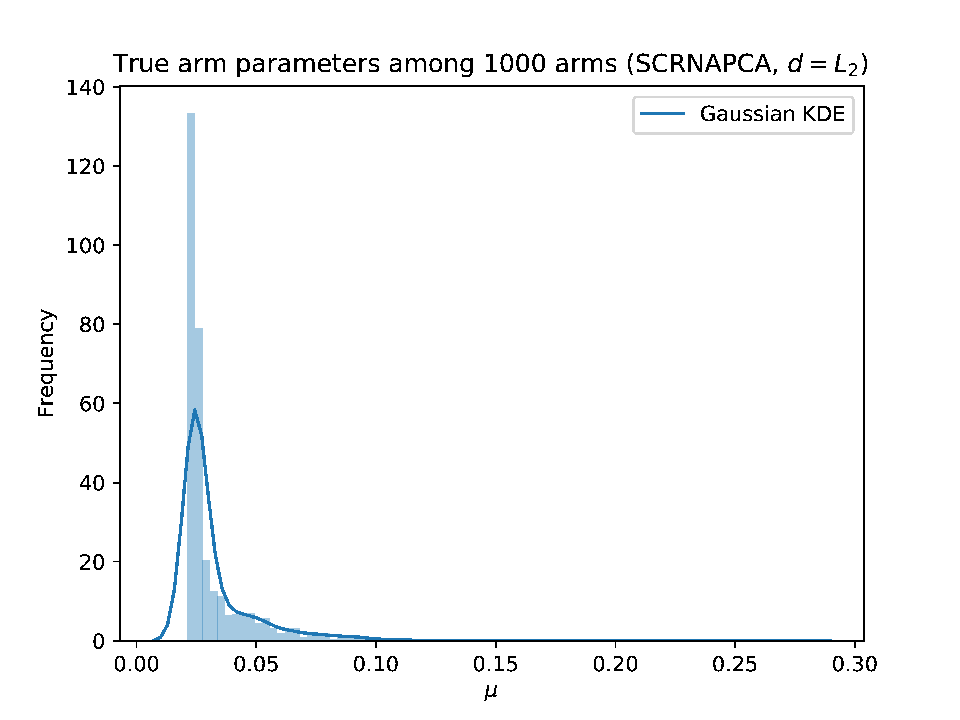}   
  \label{fig:bp_app_1_mu_dist4}
\end{subfigure}
\caption[Histogram of true arm parameters $\mu_i$ for 1000 randomly sampled arms in the first BUILD step of various datasets]
{Histogram of true arm parameters, $\mu_i$, for 1000 randomly sampled arms in the first BUILD step of various datasets. For scRNA-PCA with $d = l_2$ (bottom right), the arm returns are much more sharply peaked about the mininum than for the other datasets. In plots where the bin widths are less than 1, the frequencies can be greater than 1.}
\label{fig:bp_app_1_mu_dist}
\end{figure}

\begin{figure}[ht]
\begin{subfigure}{0.49\textwidth}
  \centering
  \includegraphics[width=\linewidth]{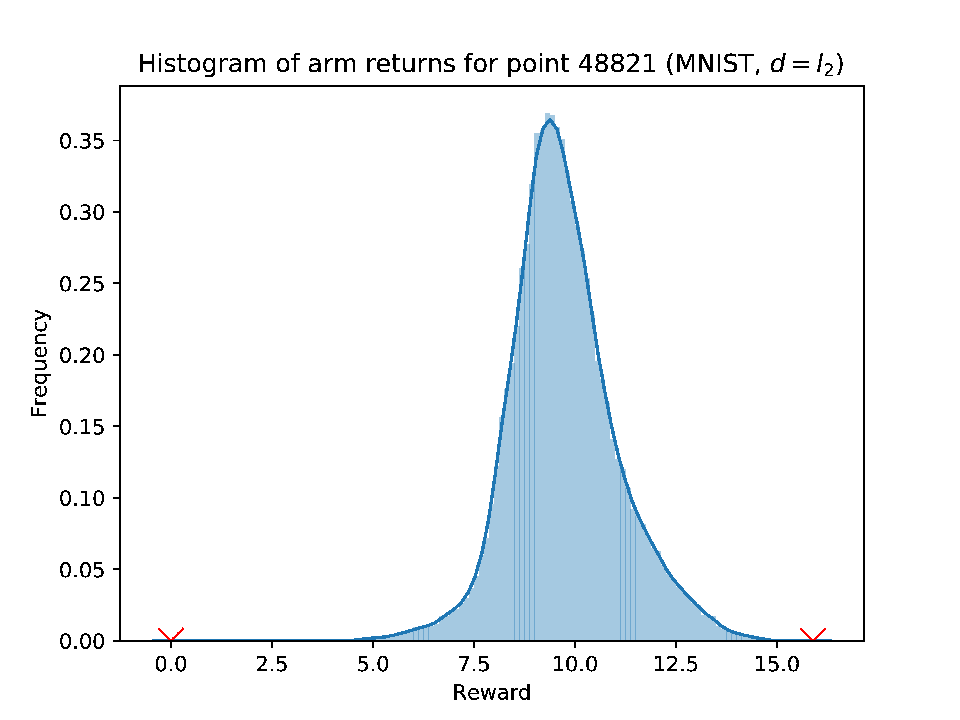}  
\end{subfigure}
\begin{subfigure}{0.49\textwidth}
  \centering
  \includegraphics[width=\linewidth]{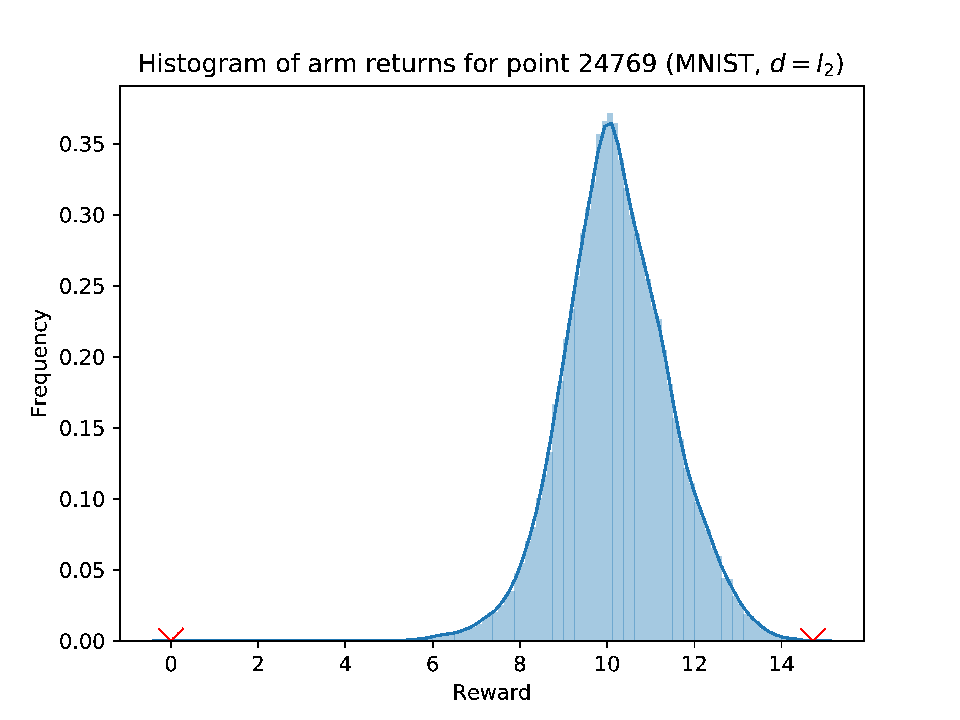}   
\end{subfigure}
\begin{subfigure}{0.49\textwidth}
  \centering
  \includegraphics[width=\linewidth]{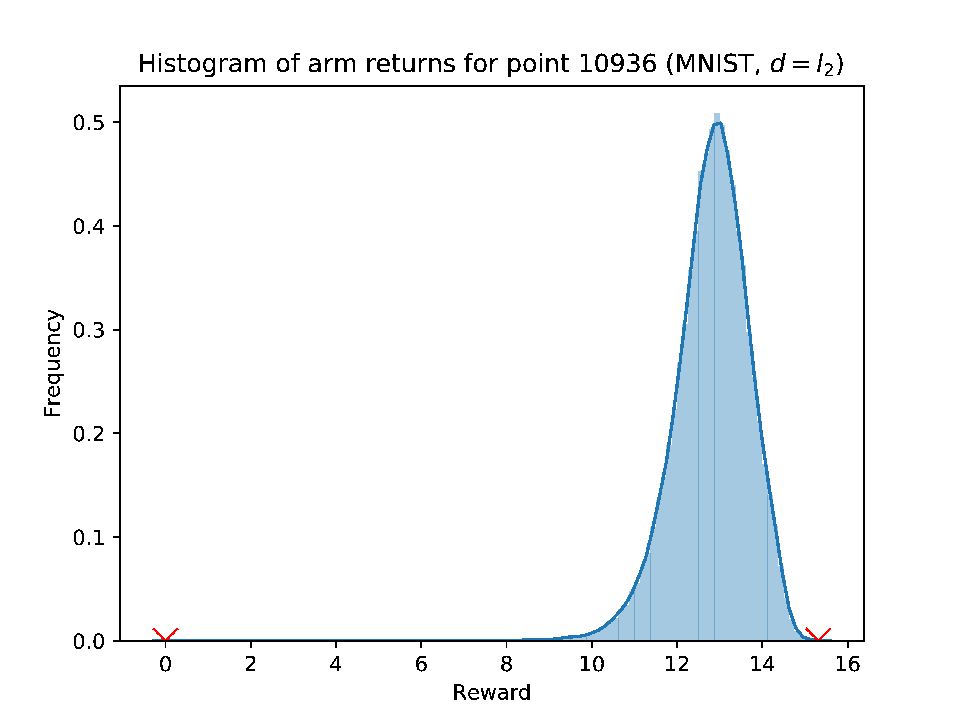}   
\end{subfigure}
\begin{subfigure}{0.49\textwidth}
  \centering
  \includegraphics[width=\linewidth]{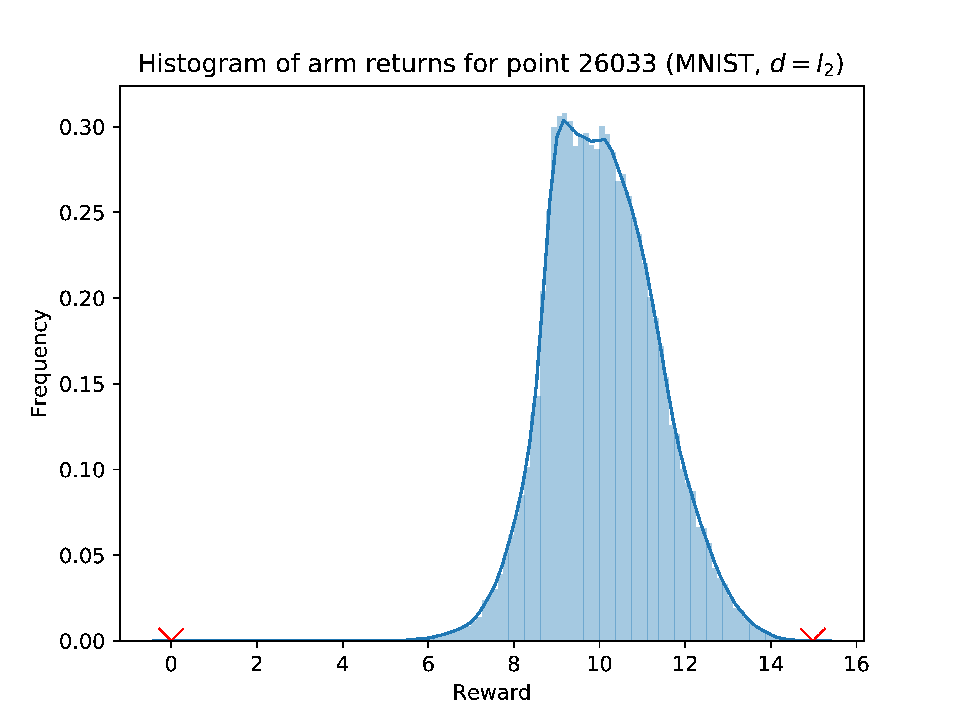}   
\end{subfigure}
\caption[Example distribution of rewards for four points in MNIST in the first BUILD step of BanditPAM]
{Example distribution of rewards for four points in MNIST in the first BUILD step of BanditPAM. The minimums and maximums are indicated with red markers.}
\label{fig:bp_app_1_sigma_ex_MNIST}
\end{figure}

\begin{figure}[ht]
\begin{subfigure}{0.49\textwidth}
  \centering
  \includegraphics[width=\linewidth]{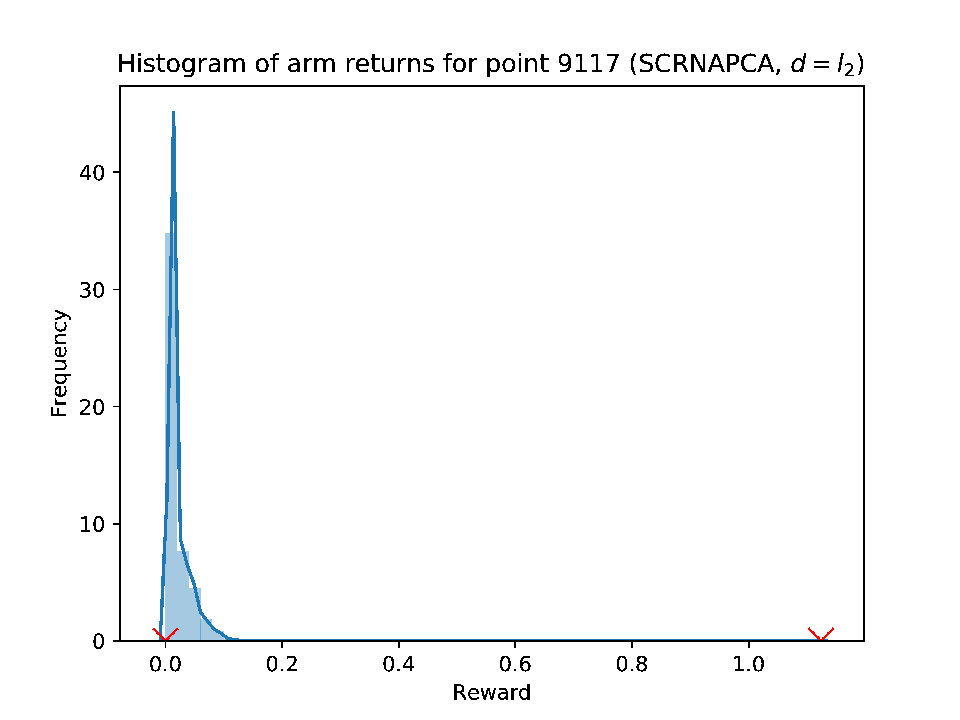}  
\end{subfigure}
\begin{subfigure}{0.49\textwidth}
  \centering
  \includegraphics[width=\linewidth]{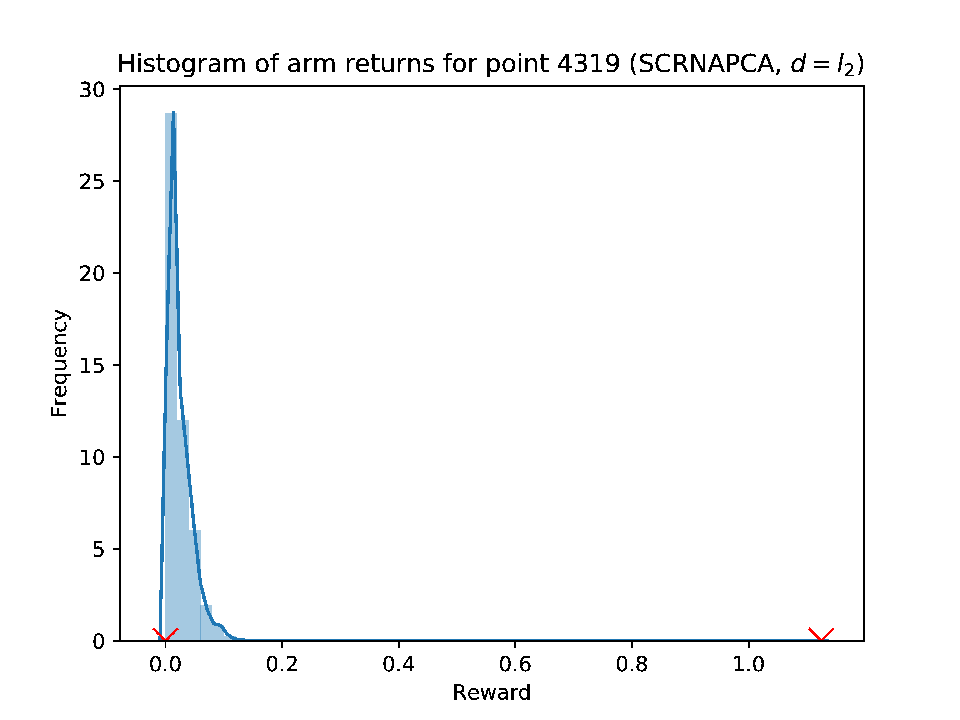}   
\end{subfigure}
\begin{subfigure}{0.49\textwidth}
  \centering
  \includegraphics[width=\linewidth]{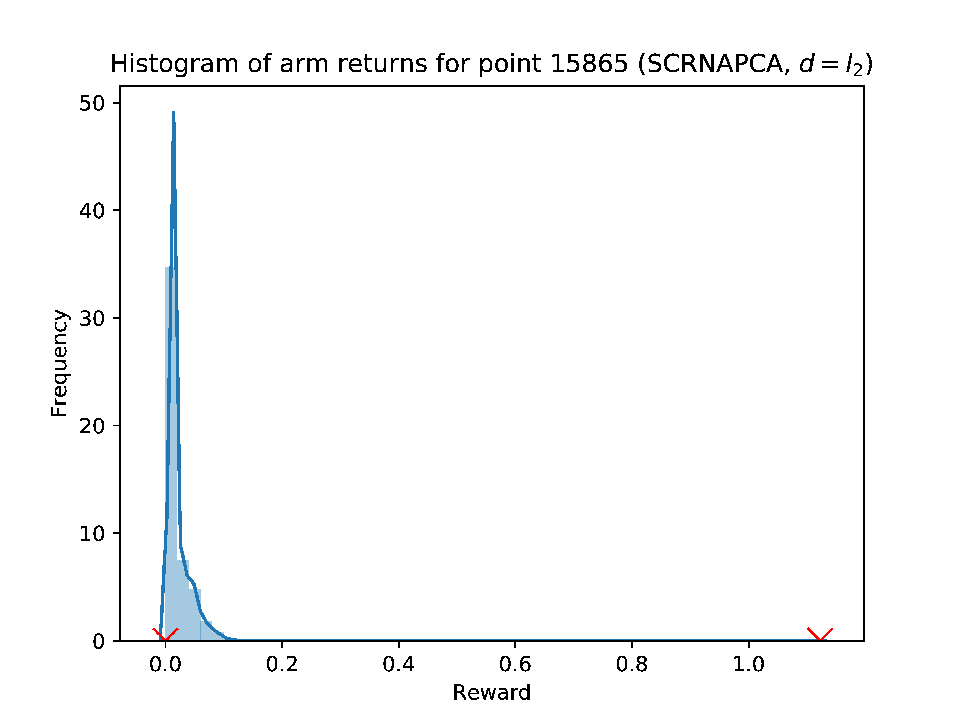}   
\end{subfigure}
\begin{subfigure}{0.49\textwidth}
  \centering
  \includegraphics[width=\linewidth]{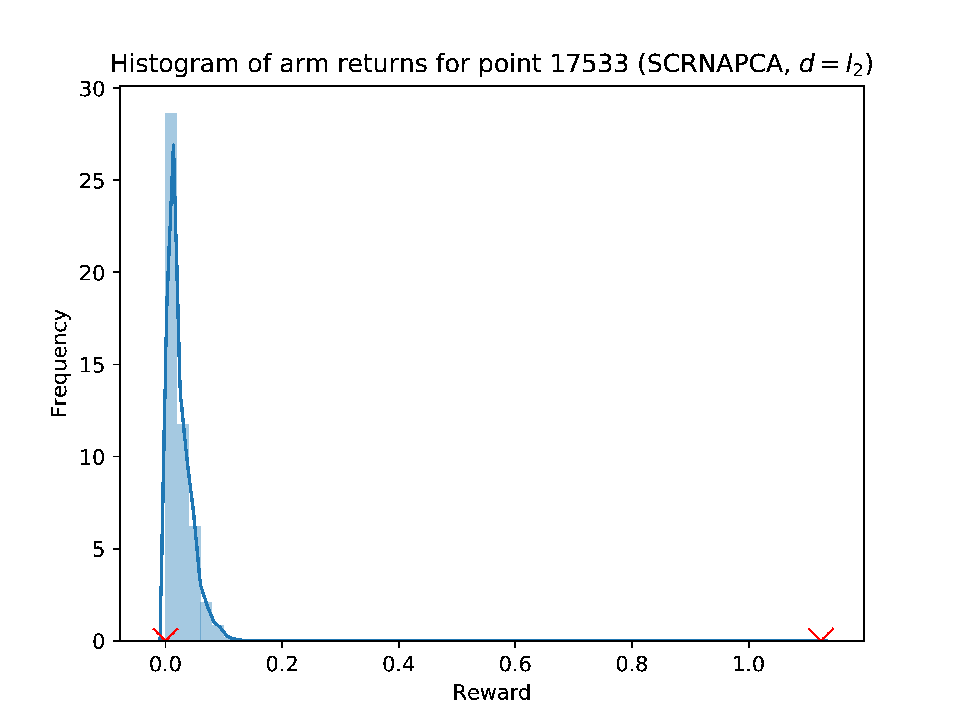}   
\end{subfigure}
\caption[Example distribution of rewards for four points in scRNA-PCA dataset in the first BUILD step of BanditPAM]
{Example distribution of rewards for four points in the scRNA-PCA dataset in the first BUILD step of BanditPAM. The minimums and maximums are indicated with red markers. The distributions shown here are more heavy-tailed than in Figure \ref{fig:bp_app_1_sigma_ex_MNIST}. In plots where the bin widths are less than 1, the frequencies can be greater than 1.}
\label{fig:bp_app_1_sigma_ex_SCRNAPCA}
\end{figure}

\begin{figure}[ht!]
    \centering
    \includegraphics[scale=0.5]{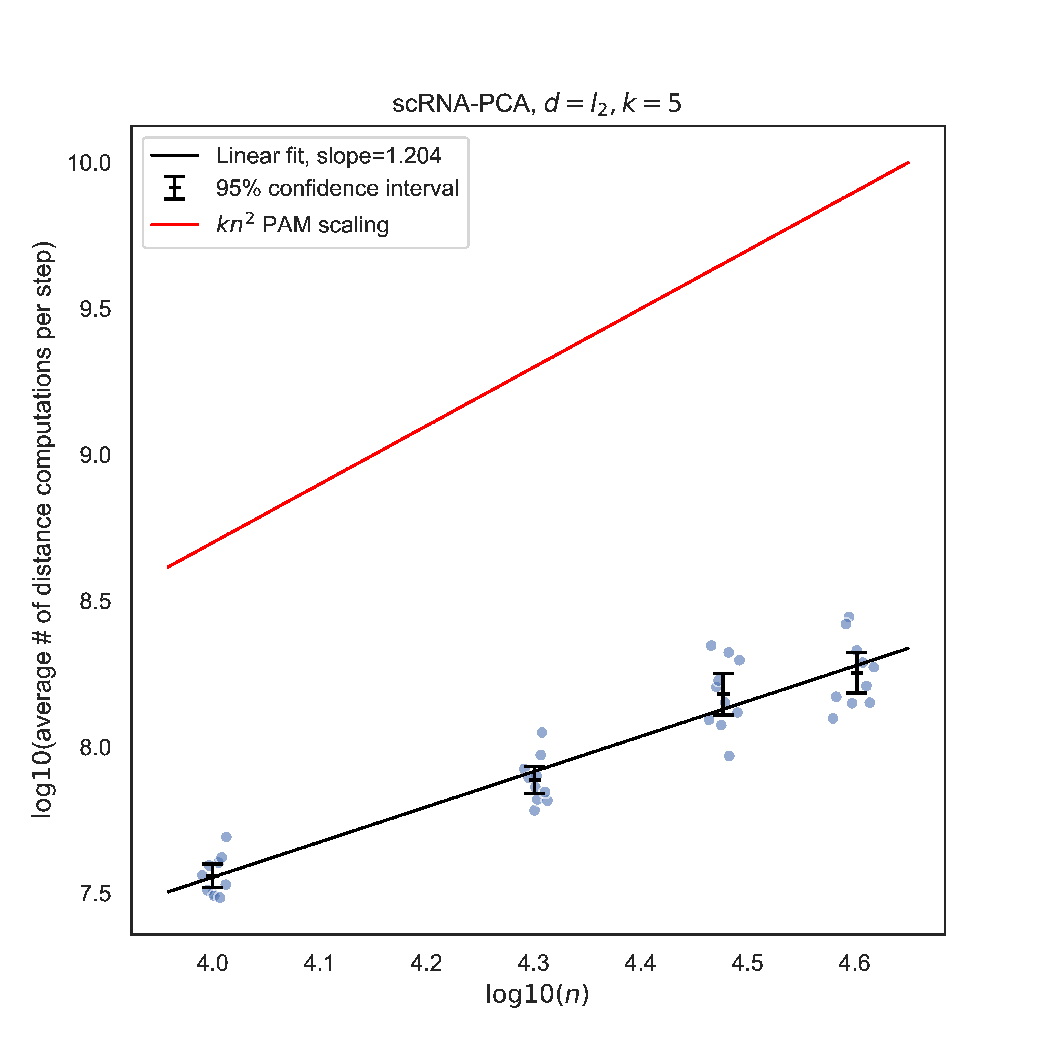}
    \caption[Average number of disance calls of BanditPAM per iteration versus dataset size for the scRNA-PCA dataset and $l_2$ distance]
    {Average number of distance calls per iteration versus $n$, for scRNA-PCA and $l_2$ distance on a log-log scale. The line of best fit (black) are plotted, as are reference lines demonstrating the expected scaling of PAM (red).} 
    \label{fig:bp_app_1_SCRNAPCA_L2_scaling}
\end{figure}
\section{Future Work}
\label{ch_6_1:bp_app_2_futurework}

There are several ways in which \algname could be improved or made more impactful. In this work, we chose to implement a UCB-based algorithm to find the medoids of a dataset. Other best-arm-identification approaches, however, could also be used for this problem. 
It may also be possible to generalize a recent single-medoid approach, Correlation-Based Sequential Halving \cite{baharavUltraFastMedoid2019a}, to more than $1$ medoid, especially to relax the sub-Gaussianity assumptions (discussed further in Appendix \ref{subsec:bp_app_2_relaxation}). Though we do not have reason to suspect an algorithmic speedup (as measured by big-$O$), we may see constant factor or wall clock time improvements. We also note that it may be possible to prove the optimality of \algname in regards to algorithmic complexity, up to constant factors, using techniques from \cite{bagariaMedoidsAlmostlinearTime2018} that were developed for sample-efficiency guarantees in hypothesis testing.

We also note that it may be possible to improve the theoretical bounds presented in Theorem 1; indeed, in experiments, a much larger error threshold $\delta$ was acceptable, which suggests that the bounds are weak; we discuss the hyperparameter $\delta$ further in Appendix \ref{subsec:bp_app_2_approximate}. 

\subsection{Relaxing the sub-Gaussianity assumption}
\label{subsec:bp_app_2_relaxation}

An alternate approach using bootstrap-based bandits \cite{wangResidualBootstrapExploration2020, kvetonGarbageRewardOut2019, kvetonPerturbedhistoryExplorationStochastic2019} could be valuable in relaxing the distributional assumptions on the data that the quantities of interest are $\sigma$-sub-Gaussian. Alternatively, if a bound on the distances is known, it may be possible to avoid the estimation of $\sigma_x$ by using the empirical Bernstein inequality to bound the number of distance computations per point, similar to how Hoeffding's inequality was used in the proof of Theorem \ref{thm:bp_specific}. It may also be possible to use a related method from \cite{abbasi-yadkoriBestBothWorlds2018b} to avoid these statistical assumptions entirely.

\subsection{Intelligent Cache Design}
\label{subsec:bp_app_2_cache}
The existing implementation does not cache pairwise distance computations, despite the fact that \algname spends upwards of 98\% of its runtime in evaluating distances, particularly when such distances are expensive to compute. This is in stark contrast to the state-of-the-art implementations of PAM, FastPAM1, and FastPAM1, which precompute and cache the entire $n^2$ distance matrix before any medoid assignments are made.

It should be possible to implement a cache in \algname that would dramatically reduce wall-clock-time requirements. Furthermore, it may be possible to cache only $O(n\log n)$ pairwise distances, instead of all $n^2$ distances. This could be done, for example, by fixing an ordering of the reference points to be used in each call to Algorithm \ref{alg:banditpam}. Since, on average, only $O(\log n)$ reference points are required for each target point, it should not be necessary to cache all $n^2$ pairwise distances. Furthermore, the same cache could be used across different calls to Algorithm \ref{alg:banditpam}, particularly since we did not require independence of the sampling of the reference points in the proof of Theorem \ref{thm:bp_nlogn}. Finally, it may be possible to reduce this cache size further by using techniques from \cite{newlingKmedoidsKmeansSeeding2017}.

\subsection{Approximate version of \algname}
\label{subsec:bp_app_2_approximate}
We note that \algname is a randomized algorithm which requires the specification of the hyperparameter $\delta$. Intuitively, $\delta$ governs the error probability that \algname returns a suboptimal target point $x$ in any call to Algorithm \ref{alg:banditpam}. The error parameter $\delta$ suggests the possibility for an approximate version of \algname that may not required to return the same results as PAM. If some concessions in final clustering loss are acceptable, $\delta$ can be increased to improve the runtime of \algnamenospace. An analysis of the tradeoff between the final clustering loss and runtime of \algnamenospace, governed by $\delta$, is left to future work. It may also be possible to combine the techniques in \cite{olukanmiPAMliteFastAccurate2019a} with \algname to develop an approximate algorithm.

\subsection{Dependence on \texorpdfstring{$d$}{d}}
\label{subsec:bp_app_2_scalingwithd}
Throughout this work, we assumed that computing the distance between two points was an $O(1)$ operation. This obfuscates the dependence on the dimensionality of the data, $d$. If we consider computing the distance between two points an $O(d)$ computation, the complexity of \algname could be expressed as $O(dn$log$n)$ in the BUILD step and each SWAP iteration. Recent work \cite{bagariaAdaptiveMontecarloOptimization2018} suggests that this could be further improved; instead of computing the difference in each of the $d$ coordinates, we may be able to adaptively sample which of the $d$ coordinates to use in our distance computations and reduce the dependence on dimensionality from $d$ to $O(\log d)$, especially in the case of sparse data.

\subsection{Dependence on \texorpdfstring{$k$}{k}}
\label{subsec:bp_app_2_scalingwithk}
In this paper, we treated $k$ as a constant in our analysis and did not analyze the explicit dependence of \algname on $k$. In experiments, we observed that the runtime of \algname scaled linearly in $k$ in each call to Algorithm \ref{alg:banditpam} when $k \ll n$. We also observe \algname scales linearly with $k$ when $k$ is less than the number of "natural" clusters of the dataset (e.g. $\sim$10 for MNIST). However, we were able to find other parameter regimes where the scaling of Algorithm \ref{alg:banditpam} with $k$ appears quadratic. Furthermore, we generally observed that the number of swap steps $T$ required for convergence was $O(k)$, consistent with \cite{schubertFasterKmedoidsClustering2019}, which could make the overall scaling of \algname with $k$ superlinear when each call to Algorithm \ref{alg:banditpam} is also $O(k)$. We emphasize that these are only empirical observations. We leave a formal analysis of the dependence of the overall \algname algorithm on $k$ to future work.
\section{Proofs of Theorems~\ref{thm:bp_specific} and \ref{thm:bp_nlogn}}
\label{ch_6_1:bp_app_3_proofs}

We provide the proofs of Theorems \ref{thm:bp_specific} and \ref{thm:bp_nlogn} here; we restate them for convenience.

\setcounter{theorem}{2}  
\begin{theorem}
For $\delta = n^{-3}$, with probability at least $1-\tfrac{2}{n}$, Algorithm \ref{alg:banditpam}
returns the correct solution to \eqref{eqn:bp_build_search} (for a BUILD step) or \eqref{eqn:bp_swap_search} (for a SWAP step),
using a total of $M$ distance computations, where
\aln{
E[M] \leq 4n + \sum_{x \in \X}  \min \left[ \frac{12}{\Delta_x^2} \left(\sigma_x+\sigma_{x^*} \right)^2 \log n + B, 2n \right].
}
\end{theorem}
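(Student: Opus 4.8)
The plan is to run the textbook UCB / successive-elimination argument: define a single ``good event'' on which every confidence interval is simultaneously valid, show correctness and a per-arm pull count on that event, and then absorb the rare complementary event into the expectation to produce the leading $4n$ slack term. Concretely, I would first define the good event $\mathcal{G}$ that $|\hat\mu_x - \mu_x| \le C_x$ holds for \emph{every} target $x$ at \emph{every} iteration of the while loop. For a fixed $x$ and a fixed iteration, $\hat\mu_x$ is an average of i.i.d.\ $\sigma_x$-sub-Gaussian samples $g_x(x_J)$ (reference points are drawn uniformly with replacement), so Hoeffding's inequality (Theorem~\ref{thm:hoeffding}) controls the per-arm, per-iteration failure probability through the width $C_x = \sigma_x\sqrt{\log(1/\delta)/(\nuref + B)}$. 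Since the loop runs at most $\lceil |\Sref|/B\rceil$ times and there are at most $n$ targets (the SWAP case is identical with $k(n-k)=O(n)$ arms), a union bound over all (arm, iteration) pairs, with $\delta = n^{-3}$, is calibrated to give $P(\mathcal{G}^c)\le 2/n$.

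On $\mathcal{G}$ the algorithm is correct: because $\hat\mu_{x^*} - C_{x^*}\le \mu_{x^*} = \min_x\mu_x \le \mu_y \le \hat\mu_y + C_y$ for every $y$, the optimum $x^*$ always passes the retention test $\hat\mu_{x^*}-C_{x^*}\le \min_y(\hat\mu_y + C_y)$ and is never discarded. Hence if the loop exits with $|\mathcal{S}_{\text{solution}}|=1$ the survivor is $x^*$, and if it exits because $\nuref \ge |\Sref|$ the exact recomputation returns $\argmin_x \mu_x = x^*$. For the pull count, still on $\mathcal{G}$, a suboptimal $x$ is guaranteed removed once $\mu_x - 2C_x > \mu_{x^*} + 2C_{x^*}$, i.e.\ once $\Delta_x > 2(C_x + C_{x^*}) = 2(\sigma_x+\sigma_{x^*})\sqrt{\log(1/\delta)/(\nuref+B)}$; the two widths share a denominator, which is exactly what collapses $C_x + C_{x^*}$ into the factor $(\sigma_x + \sigma_{x^*})$. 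Squaring and using $\log(1/\delta)=3\log n$ shows $x$ survives at most $\tfrac{12}{\Delta_x^2}(\sigma_x+\sigma_{x^*})^2\log n$ sampled points, rounding up to a full batch contributes the $+B$, and since sampling is capped at $|\Sref|=n$ followed by an exact evaluation of cost $n$, each arm costs at most $\min[\tfrac{12}{\Delta_x^2}(\sigma_x+\sigma_{x^*})^2\log n + B,\ 2n]$.

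Summing over $x\in\X$ bounds $M\,\mathbbm{1}_{\mathcal{G}}$ by $\sum_{x\in\X}\min[\tfrac{12}{\Delta_x^2}(\sigma_x+\sigma_{x^*})^2\log n + B,\ 2n]$. To finish, I would handle $\mathcal{G}^c$ deterministically: no matter the outcomes, each of the $n$ arms is sampled at most $n$ times and then evaluated exactly at most once at cost $n$, so $M \le 2n^2$ always; combining with $P(\mathcal{G}^c)\le 2/n$ gives $E[M\,\mathbbm{1}_{\mathcal{G}^c}] \le 2n^2\cdot\tfrac{2}{n} = 4n$. Adding the two contributions yields the stated bound on $E[M]$, and correctness holds on $\mathcal{G}$, i.e.\ with probability at least $1-\tfrac{2}{n}$.

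The delicate part is the union-bound calibration in the first step: one must match the interval width $C_x$, the choice $\delta = n^{-3}$, and the number of rounds so that $\mathcal{G}$ holds with probability at least $1-\tfrac{2}{n}$ \emph{while} keeping the elimination-threshold constant at $12$. This is the main obstacle because the number of (arm, iteration) pairs grows with both $n$ and $B$, so naively unioning the per-event Hoeffding bound across all rounds leaves slack that must be reconciled (e.g.\ by a peeling argument over the accumulating sample count, or by absorbing an extra constant into the width); one must also confirm that the running mean remains a genuine i.i.d.\ average across rounds so that Hoeffding applies without correction.
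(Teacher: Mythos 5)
Your proposal follows the paper's proof essentially step for step: the same good event established by Hoeffding's inequality plus a union bound over the at most $n^2/B \leq n^2$ (arm, iteration) pairs with $\delta = n^{-3}$, the same correctness argument that $x^*$ can never be eliminated, the same per-arm elimination threshold $\nuref > \tfrac{12}{\Delta_x^2}(\sigma_x+\sigma_{x^*})^2\log n$ capped at $2n$, and the same absorption of the failure event via $E[M\,\mathbbm{1}_{\mathcal{G}^c}] \leq 2n^2\cdot\tfrac{2}{n} = 4n$. The calibration slack you flag at the end is real but is equally present in the paper's own argument (the algorithm's stated CI width and the width implicitly used in its Hoeffding step differ by a $\sqrt{2}$ factor), so your route does not diverge from theirs in any substantive way.
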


\begin{proof}
First, we show that, with probability at least $1-\tfrac{2}{n}$, all confidence intervals computed throughout the algorithm are true confidence intervals, in the sense that they contain the true parameter $\mu_x$.
To see this, notice that for a fixed $x$ and a fixed iteration of the algorithm, $\hat \mu_x$ is the average of $\nuref$ i.i.d.~samples of a $\sigma_x$-sub-Gaussian distribution.
From Hoeffding's inequality, 
\aln{
\Pr\left( \left| \mu_x - \hat \mu_x \right| > C_x \right) \leq 2 \exp \left({-\frac{\nuref C_x^2}{2\sigma_x^2}}\right)  =\vcentcolon 2 \delta.
}
Note that there are at most $\frac{n^2}{B} \leq n^2$ such confidence intervals computed across all target points (i.e. arms) and all steps of the algorithm, where $B$ is the batch size.
If we set $\delta = 1/n^3$, we see that $\mu_x \in [\hat \mu_x - C_x, \hat \mu_x + C_x]$ for every $x$ and for every step of the algorithm with probability at least $1-\frac{2}{n}$, by the union bound over at most $n^2$ confidence intervals.

Next, we prove the correctness of Algorithm \ref{alg:banditpam}.
Let $x^* = \argmin_{x \in \Star} \mu_x$ be the desired output of the algorithm.
First, observe that the main \texttt{while} loop in the algorithm can only run $\frac{n}{B}$ times, so the algorithm must terminate.
Furthermore, if all confidence intervals throughout the algorithm are correct, it is impossible for $x^*$ to be removed from the set of candidate target points. 
Hence, $x^*$ (or some $y \in \Star$ with $\mu_y = \mu_{x^*}$) must be returned upon termination with probability at least $1-\frac{2}{n}$.

Finally, we consider the complexity of Algorithm \ref{alg:banditpam}. 
Let $\nuref$ be the total number of arm pulls computed for each of the arms remaining in the set of candidate arms at some point in the algorithm.
Notice that, for any suboptimal arm $x \ne x^*$ that has not left the set of candidate arms, we must have
$C_x = \sigma_x \sqrt{ \log(\tfrac{1}{\delta}) /\nuref}$.
With $\delta = n^{-3}$ as above and $\Delta_x \vcentcolon= \mu_x - \mu_{x^*}$, if $\nuref > \frac{12}{\Delta_x^2} \left(\sigma_x+\sigma_{x^*}\right)^2 \log n$,
then
\aln{
2(C_x + C_{x^*}) = 2 \left( \sigma_x + \sigma_{x^*}\right) \sqrt{  { \log(n^3) } / {\nuref  }} < \Delta_x = \mu_x - \mu_{x^*},
}
and
\begin{align*}
    \hat \mu_x - C_x &> \mu_x - 2C_x \\
    &= \mu_{x^*} + \Delta_x - 2C_x \\
    &\geq \mu_{x^*} + 2 C_{x^*} \\
    &> \hat \mu_{x^*} + C_{x^*}
\end{align*}

implying that $x$ must be removed from the set of candidate arms at the end of that iteration.
Hence, the number of distance computations $M_x$ required for target point $x \ne x^*$ is at most
\aln{
M_x \leq \min \left[ \frac{12}{\Delta_x^2} \left( \sigma_x + \sigma_{x^*}\right)^2 \log n + B, 2n \right].
}
Notice that this holds simultaneously for all $x \in \Star$ with probability at least $1-\tfrac{2}{n}$.
We conclude that the total number of distance computations $M$ satisfies
\aln{
E[M] & \leq E[M | \text{ all confidence intervals are correct}] + \frac{2}{n} (2n^2) \\
& \leq 4n + \sum_{x \in \X}  \min \left[ \frac{12}{\Delta_x^2} \left( \sigma_x + \sigma_{x^*}\right)^2 \log n + B, 2n \right]
}
where we used the fact that the maximum number of distance computations per target point is $2n$.
\end{proof}

\textbf{Remark A1:} An analogous claim can be made for arbitrary $\delta$. For arbitrary $\delta$, the probability that all confidence intervals are true confidence intervals is at least $1 - 2n^2 \delta$, and the expression for $E[M]$ becomes:
\aln{
E[M] & \leq E[M | \text{ all confidence intervals are correct}] + 4n^4 \delta \\
& \leq 4n^4 \delta + \sum_{x \in \X}  \min \left[ \frac{4}{\Delta_x^2} \left( \sigma_x + \sigma_{x^*}\right)^2 \log (\frac{1}{\delta}) + B, 2n \right]
}

\begin{theorem}
If \algname is run on a dataset $\X$ with $\delta = n^{-3}$, then it returns the same set of $k$ medoids as PAM with probability $1-o(1)$. 
Furthermore, 
the total number of distance computations $M_{\rm total}$ required satisfies
\aln{
E[M_{\rm total}] = O\left( n \log n \right).
}
\end{theorem}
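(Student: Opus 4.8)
The plan is to bootstrap the single-call guarantee of Theorem~\ref{thm:bp_specific} up to the full algorithm by controlling (i) the number of calls to \texttt{Adaptive-Search}, (ii) the failure probability of the entire trajectory via a union bound, and (iii) the aggregate complexity by combining the per-call bound with the distributional assumption on the $\mu_x$'s.

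First, for correctness: \algname makes exactly $k$ calls to Algorithm~\ref{alg:banditpam} during BUILD and at most $T$ calls during SWAP, so $k+T$ calls in total. I would argue by induction on the step index that if every call returns the exact minimizer of its search problem (\eqref{eqn:bp_build_search} or \eqref{eqn:bp_swap_search}), then \algname visits precisely the same sequence of medoid sets as PAM: the input to each step is determined entirely by the medoids chosen so far, so identical intermediate states force identical search problems and hence identical outputs, and the two procedures therefore terminate together on the same final set. By Theorem~\ref{thm:bp_specific}, each call errs with probability at most $\tfrac{2}{n}$; a union bound over the $k+T$ calls gives overall success probability at least $1-\tfrac{2(k+T)}{n}$, which is $1-o(1)$ whenever $k$ and $T$ are $o(n)$ (in particular constants).

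Next, for the complexity, I would write $M_{\rm total}=\sum_{\text{calls}}M_{\text{call}}$ and apply Theorem~\ref{thm:bp_specific} to each of the $O(k+T)=O(1)$ terms (its $E[M]$ bound already folds in the failure case through the $4n$ term). It then suffices to show, for a single search, that
\begin{equation*}
E\!\left[\sum_{x}\min\!\left(\tfrac{12}{\Delta_x^2}(\sigma_x+\sigma_{x^*})^2\log n + B,\; 2n\right)\right]=O(n\log n),
\end{equation*}
since summing $O(1)$ such bounds preserves the rate. Treating each $\sigma_x$ as bounded by a constant and absorbing the additive $B$, the task reduces to bounding $E[\sum_x\min(\tfrac{c\log n}{\Delta_x^2},2n)]$. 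Here I would invoke the assumption that the $\mu_x$'s are drawn from a sub-Gaussian distribution and pass to the induced gap distribution: splitting the arms at the truncation scale $\Delta_x\approx\sqrt{(\log n)/n}$, the near-optimal arms below threshold each cost $2n$ but are few, while the bulk of arms have gaps $\Delta_x=\Omega(1)$ and cost $O(\log n)$ each, summing to $O(n\log n)$. This conversion from the instance-dependent bound to the $O(n\log n)$ rate is exactly the step established in Appendix~2 of \cite{bagariaMedoidsAlmostlinearTime2018}, which I would cite and adapt.

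I expect this aggregate bound to be the main obstacle, specifically controlling the contribution of the near-optimal arms. The per-arm cost blows up as $1/\Delta_x^2$, so the argument hinges on showing that with high probability only a vanishing fraction of arms have gaps near the $\sqrt{(\log n)/n}$ truncation scale. This is precisely where the distributional hypothesis enters: it must guarantee that few arms pile up near the minimizer $\mu_{x^*}$ (equivalently, that the gap distribution carries little mass near zero), and this near-minimum behavior, rather than tail lightness alone, is what the instance-to-rate conversion of \cite{bagariaMedoidsAlmostlinearTime2018} exploits. A secondary point is that the SWAP search ranges over $k(n-k)$ pair-arms rather than $n$ point-arms, so I would state the distributional assumption for the pair-arm parameters as well and note that $k(n-k)=O(n)$ for constant $k$ leaves the rate unchanged. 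Combining the $1-o(1)$ correctness event with the $O(n\log n)$ expected-cost bound then yields the theorem.
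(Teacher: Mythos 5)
Your proposal is correct and follows essentially the same route as the paper's own proof: correctness via Theorem~\ref{thm:bp_specific} applied to each of the $k+T$ calls plus a union bound (giving failure probability $2(k+T)/n = o(1)$ for constant $k$ and $T$), and complexity by bounding each call's expected cost with a uniform bound on the $\sigma_x$'s and then invoking the instance-to-rate conversion of Appendix~2 of \cite{bagariaMedoidsAlmostlinearTime2018} under the sub-Gaussian assumption on the $\mu_x$'s. Your additional remarks --- the explicit induction showing trajectory agreement, the truncation-scale sketch of how the conversion controls near-optimal arms, and the observation that the SWAP step's $k(n-k) = O(n)$ pair-arms leave the rate unchanged --- are details the paper leaves implicit, but they do not change the argument.
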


From Theorem \ref{thm:bp_specific}, the probability that Algorithm \ref{alg:banditpam} does not return the target point $x$ with the smallest value of $\mu_x$ in a single call, i.e. that the result of Algorithm \ref{alg:banditpam} will differ from the corresponding step in PAM, is at most $2/n$.
By the union bound over all $k+T$ calls to Algorithm \ref{alg:banditpam}, the probability that \algname does not return the same set of $k$ medoids as PAM is at most $2(k+T)/n = o(1)$, since $k$ and $T$ are taken as constants. This proves the first claim of Theorem \ref{thm:bp_nlogn}.

It remains to show that $E[M_{\rm total}] = O( n \log n)$. Note that, if a random variable is $\sigma$-sub-Gaussian, it is also $\sigma'$-sub-Gaussian for $\sigma' > \sigma$.
Hence, if we have a universal upper bound $\sigma_{\rm ub}> \sigma_x$ for all $x$, Algorithm \ref{alg:banditpam} can be run with $\sigma_{\rm ub}$ replacing each $\sigma_x$.
In that case, a direct consequence of Theorem \ref{thm:bp_specific} is that the total number of distance computations per call to Algorithm \ref{alg:banditpam} satisfies
\al{
E[M] & \leq 4n + \sum_{x \in \X}  48 \frac{\sigma^2_{\rm ub}}{\Delta_x^2} \log n + B 
\leq  4n + 48 \left(\frac{\sigma_{\rm ub}}{\min_x \Delta_x}\right)^2
n \log n.
\label{eqn:bp_expectedM}
}
Furthermore, as proven in Appendix 2 of \cite{bagariaMedoidsAlmostlinearTime2018}, such an instance-wise bound, which depends on the $\Delta_x$s, converts to an $O(n \log n)$ bound when the $\mu_x$s follow a sub-Gaussian distribution. 
Moreover, since at most $k+T$ calls to Algorithm \ref{alg:banditpam} are made, from \eqref{eqn:bp_expectedM} we see that the total number of distance computations $M_{\rm total}$ required by \algname satisfies $E[M_{\rm total}] = O( n \log n)$.

\chapter{Appendix For Chapter \ref{ch3}}
\label{app:ch3}
\section{Proofs}
\label{ch_6_1:ff_app_1_proofs}

In this section, we present the proof of Theorem \ref{thm:bp_specific}. 

\begin{proof}
Following the multi-armed bandit literature, we refer to each feature-threshold pair $(f, t)$ as an arm and refer to its optimization objective $\mu_{ft}$ as the arm parameter. Pulling an arm corresponds to evaluating the change in impurity induced by one data point at one feature-threshold pair $(f,t)$ (i.e., arm) and incurs an $O(1)$ computation. This allows us to focus on the number of arm pulls, which translates directly to sample complexity.

First, we show that, with probability at least $1-\delta$, all confidence intervals computed throughout the algorithm are valid, in that they contain the true parameter $\mu_{ft}$.
Let $n_{\text{used}}$ be the total number of arm pulls computed for each arm remaining in the set of candidate arms at a given point in the algorithm.
For a fixed $(f,t)$ and a given iteration of the algorithm, the $(1-\frac{\delta}{m T n^2_{\text{used}}})$ confidence interval satisfies 
\begin{align*}
    \Pr\left( \left| \mu_{ft} - \hat \mu_{ft} \right| > C_{ft} \right) \leq 2e^{-C^2_{n_{\text{used}}} n_{\text{used}} / 2\sigma^2} \leq \frac{\delta}{m T n^2_{\text{used}}}.
\end{align*}
by Hoeffding's inequality and the choice of $C_{n_\text{used}} = \sigma \sqrt{\frac{2 \text{log}(4 m T n^2_\text{used} / \delta)}{n_\text{used}+1}}$. 
For a fixed arm $(f, t)$, for any value of $n_\text{used}$ we have that the confidence interval is correct with probability at least $1 - \frac{\delta}{m T}$, where we used the fact that $1 + \frac{1}{2^2} + \frac{1}{3^2} + \ldots = \frac{\pi^2}{6} < 2$.
By another union bound over all $m T$ arm indices, all confidence intervals constructed by the algorithm are correct with probability at least $1 - \delta$.

Next, we prove the correctness of Algorithm \ref{alg:mabsplit}.
Let $(f^*, t^*) = \argmin_{f \in \mathcal{F}, t \in \mathcal{T}_f} \mu_{ft}$ be the desired output of the algorithm.
Since the main \texttt{while} loop in the algorithm can only run $\frac{n}{B}$ times, the algorithm must terminate.
Furthermore, if all confidence intervals throughout the algorithm are correct, it is impossible for $(f^*, t^*)$ to be removed from the set of candidate arms. 
Hence, $(f^*, t^*)$ (or some $(f,t)$ with $\mu_{ft} = \mu_{f^*t^*}$) must be returned upon termination with probability at least $1-\frac{1}{n}$. This proves the correctness of Algorithm \ref{alg:mabsplit}.

Finally, we consider the complexity of Algorithm \ref{alg:mabsplit}. 
Notice that, for any suboptimal arm $(f,t) \ne (f^*,t^*)$ that has not left the set of candidate arms, we must have
$C_{ft} \leq c_0 \sqrt{ \frac{\log 1/\delta}{n_{\text{used}}}}$ by assumption.
With $\Delta_{ft} = \mu_{ft} - \mu_{f^*t^*}$, if $n_{\text{used}} > \frac{4c_0^2}{\Delta_{ft}^2} \log( \frac{m T}{\delta \Delta_{ft}})$ then
\aln{
2(C_{ft} + C_{f^*t^*}) \leq 2 c_0 \sqrt{ \frac{ \log \left( \frac{m T}{\delta \Delta_{ft}} \right) }{{n_{\text{used}}}}} < \Delta_{ft} = \mu_{ft} - \mu_{f^*t^*},
}
and
\begin{align*}
    \hat \mu_{ft} - C_{ft} &> \mu_{ft} - 2C_{ft} \\
    &= \mu_{f^*t^*} + \Delta_{ft} - 2C_{ft} \\
    &\geq \mu_{f^*t^*} + 2 C_{f^*t^*} \\
    &> \hat \mu_{f^*t^*} + C_{f^*t^*}
\end{align*}

which means that $(f,t)$ must be removed from the set of candidate arms at the end of that iteration.
Hence, the number of data point computations $M_{ft}$ required for any arm $(f,t) \ne (f^*, t^*)$ is at most
\aln{
M_{ft} \leq \min \left[ \frac{4c_0^2}{\Delta_{ft}^2} \log \left( \frac{m T}{\delta \Delta_{ft}} \right) + B, 2n \right].
}
Notice that this holds simultaneously for all arms $(f,t)$ with probability at least $1-\delta$.
As argued before, since each arm pull involves an $O(1)$ computation, $M$ also corresponds the total number of computations.
\end{proof}
\section{\texorpdfstring{$O(1)$}{O(1)} scaling of MABSplit with respect to \texorpdfstring{$n$}{n}}
\label{ch_6_1:ff_app_2_scaling}

In Theorem \ref{thm:ff_specific}, we demonstrated that \algname scales as $O(1)$ with respect to dataset size $n$. In this section, we empirically validate this claim in classification and regression tasks.

For classification, we investigate the sample complexity of \algname for a single node split, i.e., a single call to \algnamenospace, as the MNIST dataset as compared to a dataset that is 10 copies of the MNIST dataset.
The model is trained using Gini impurity in for the usual digit classification task.
The sample complexity of \algname is not statistically different on the MNIST or 10xMNIST datasets. 

For regression, we investigate the sample complexity of \algname for a single node split as the size of the Random Linear Model dataset increases.
As in classification, the complexity of \algname does not significantly change when scaling the training dataset size from 200,000 to 2,000,000.
\section{Mean Estimation and Confidence Interval Constructions}
\label{ch_6_1:ff_app_3_mean_and_cis}

In this section, we discuss the estimation of the means $\mu_{ft}$ and construction of their confidence intervals via plug-in estimators and the delta method.

Let $p_{\text{L}, k}$, $p_{\text{R}, k}$, $\hat{p}_{\text{L}, k}$, and $\hat{p}_{\text{R}, k}$ be the same as defined in Subsection \ref{subsec:ff_CI}. Furthermore, let $\mathbf{p} = [p_{\text{L}, 1}, \cdots, p_{\text{L}, K}, p_{\text{R}, 1}, \cdots, p_{\text{R}, K}]^T$ and $\hat{\mathbf{p}} = [\hat{p}_{\text{L}, 1}, \cdots, \hat{p}_{\text{L}, K}, \hat{p}_{\text{R}, 1}, \cdots, \hat{p}_{\text{R}, K}]^T$. Then, $n'\hat{\mathbf{p}}$ follows a multinomial distribution with parameters $(n', 2K, \mathbf{p})$. 

Let $\boldsymbol{\theta} = [p_{\text{L}, 1}, \cdots, p_{\text{L}, K}, p_{\text{R}, 1}, \cdots, p_{\text{R}, K-1}]^T$ and $\hat{\boldsymbol{\theta}} = [\hat{p}_{\text{L}, 1}, \cdots, \hat{p}_{\text{L}, K-1}, \hat{p}_{\text{R}, 1}, \cdots, \hat{p}_{\text{R}, K-1}]^T$. Then, by the Central Limit Theorem,
\begin{align}
    \sqrt{n'}(\hat{\boldsymbol{\theta}} - \boldsymbol{\theta}) \overset{D}{\sim} \mathcal{N}(0, \Sigma),
\end{align}
where $\Sigma_{ii} = \theta_i (1-\theta_i)$ and $\Sigma_{ij}=-\theta_i \theta_j$.

Next, we write $\mu_{ft}$ in terms of $\boldsymbol{\theta}$ for the impurity metrics as 
\begin{align}
    \text{Gini impurity}:~ & \mu_{ft}(\boldsymbol{\theta}) =  1 - \frac{\sum_{k=1}^{K} \theta_k^2}{\sum_{k=1}^{K} \theta_k} - \frac{\sum_{k=K+1}^{2K-1} \theta_k^2 + (1 - \sum_{k=1}^{2K-1} \theta_k)^2}{1 - \sum_{k=1}^{K} \theta_k}, \\
    \text{Entropy}:~ & \mu_{ft}(\boldsymbol{\theta}) = - \sum_{k=1}^K \theta_k \log_2 \frac{\theta_k}{\sum_{k'=1}^{K} \theta_k'} - \sum_{k=K+1}^{2K-1} \theta_k \log_2 \frac{\theta_k}{1-\sum_{k'=1}^{K} \theta_k'} - \notag \\
    & (1 - \sum_{k=1}^{2K-1} \theta_k) \log_2 \frac{(1 - \sum_{k=1}^{2K-1} \theta_k)}{1-\sum_{k=1}^{K} \theta_k}.
\end{align}

For a given impurity metric, let $\nabla \mu_{ft}(\boldsymbol{\theta})$ be the derivative of $\mu_{ft}$ with respect to $\boldsymbol{\theta}$. From the delta method, 
\begin{align}
    \sqrt{n'}( \hat{\mu}_{ft}(\boldsymbol{\theta}) - \mu_{ft}(\boldsymbol{\theta})) \overset{D}{\sim} \mathcal{N}(0, \nabla \mu_{ft}(\boldsymbol{\theta})^T \Sigma \nabla \mu_{ft}(\boldsymbol{\theta})),
\end{align}
where the CIs can be constructed accordingly. These CIs are asymptotically valid as $n',n \rightarrow \infty$. For other impurity metrics such as MSE, the CIs can be similarly derived by writing the corresponding $\mu_{ft}$ in terms of $\boldsymbol{\theta}$ and computing $\nabla \mu_{ft}(\boldsymbol{\theta})$. 

\section{Comparison of baseline implementations and \texttt{scikit-learn}}
\label{ch_6_1:ff_app_4_sklearn}

In this section, we compare our re-implementation of common baselines to those in popular packages to verify the accuracy of our re-implementation. Specifically, we compare our implementations of Random Forest Classifiers, Random Forest Regressors, Extremely Random Forest Classifiers, and Extremely Random Forest Regressors to those of $\texttt{scikit-learn}$. We omit comparisons of the Random Patches models because their correctness is implied by that of the Random Forest model, as the Random Patches model consists of applying the Random Forest model to subsampled data and features.

For classification, we compare our implementations on the 20 newsgroups dataset filtered to two newsgroups, $\texttt{alt.atheism}$ and $\texttt{sci.space}$. The dataset is embedded via TF-IDF and projected onto their top 100 principal components, following standard practice \cite{pedregosaScikitlearnMachineLearning2011}. The train-test split is the standard one provided by \texttt{scikit-learn}.

For all classification problems, we average the predicted probabilities of each tree in the forest ("soft voting") as opposed to only allowing each tree to vote for a single class ("hard voting"), following the implementation in \texttt{scikit-learn} \cite{pedregosaScikitlearnMachineLearning2011}.

For regression, we compare our implementations on the California Housing dataset, subsampled to 1,000 points as performing the regression on the full dataset of approximately 20,000 points is computationally prohibitive. The train-test split is the standard one provided by \texttt{scikit-learn} \cite{pedregosaScikitlearnMachineLearning2011}.

Table \ref{table:ff_app_2_scikit_learncomparison} presents our results. In all cases, our re-implemented baselines do not present a statistically significant difference in performance from the models present in $\texttt{scikit-learn}$, which suggests that our re-implementations are correct. Performance is measured over 20 random seeds to compute averages and standard deviations.

\begin{table}
\begin{center}
{\begin{tabular}{
|>{\centering\arraybackslash}m{2.0cm}
|>{\centering\arraybackslash}m{2.3cm}
|>{\centering\arraybackslash}m{1.4cm}
|>{\centering\arraybackslash}m{2.3cm}|}
\hline
Model & Task and Dataset & Performance Metric & Test Performance \\
\hline
RF (ours) & Classification: 20 Newsgroups & Accuracy & 74.1 $\pm$ 2.8\% \\
RF ($\texttt{scikit-learn}$) & Classification: 20 Newsgroups & Accuracy & 76.2 $\pm$ 1.7\% \\

\hline
ExtraTrees (ours) & Classification: 20 Newsgroups & Accuracy & 66.5 $\pm$ 5.1\% \\
ExtraTrees ($\texttt{scikit-learn}$) & Classification: 20 Newsgroups & Accuracy & 62.6 $\pm$ 2.8\% \\
\hline
RF (ours) & Regression: California Housing & MSE & 0.679 $\pm$ 0.022 \\
RF ($\texttt{scikit-learn}$) & Regression: California Housing & MSE & 0.672 $\pm$ 0.028 \\
\hline
ExtraTrees (ours) & Regression: California Housing & MSE & 0.696 $\pm$ 0.055 \\
ExtraTrees ($\texttt{scikit-learn}$) & Regression: California Housing & MSE & 0.695 $\pm$ 0.082 \\
\hline
\end{tabular}}
\end{center}
\caption[Comparison of our re-implementation of our baseline tree-based models with the implementations available in $\texttt{scikit-learn}$]
{Comparison of our re-implementation of baselines with the implementations available in $\texttt{scikit-learn}$. No statistically significant differences are apparent, which suggests that our re-implementations are accurate. }
\label{table:ff_app_2_scikit_learncomparison}
\end{table}

\section{Profiles}
\label{ch_6_1:ff_app_5_profiles}

In this work, we focused on the reducing the runtime at the \textit{algorithmic} level, i.e., reducing the complexity of computing the best feature-threshold split. In this section, we justify this choice by demonstrating that most of the time spent in our re-implementation of the baseline algorithms is spent in computing the best feature-threshold split.

Appendix Figure \ref{fig:ff_app_5_their_profiles} demonstrates the wall-clock time spent inside various functions when fitting a Random Forest classifier without \algname on two subsets of the MNIST dataset of sizes 5,000 and 10,000. Most of the time is spent inside the computation of the best feature-threshold split, which scales approximately as dataset size and motivates our focus on improving the performance of the split-identification subroutine. When using \algnamenospace, the time spent to identify the best feature-threshold split is reduced significantly (Appendix Figure \ref{fig:ff_app_5_our_profiles}).

Appendix Figure \ref{fig:ff_app_5_callgraph} also contains an example callgraph demonstrating callers and callees for the fitting procedure of a Random Forest, for easier interpretation of Appendix Figures \ref{fig:ff_app_5_their_profiles} and \ref{fig:ff_app_5_our_profiles}.

\begin{figure}[ht]
    \centering
    \begin{subfigure}{.9\textwidth}
      \centering
      \includegraphics[width=\linewidth]{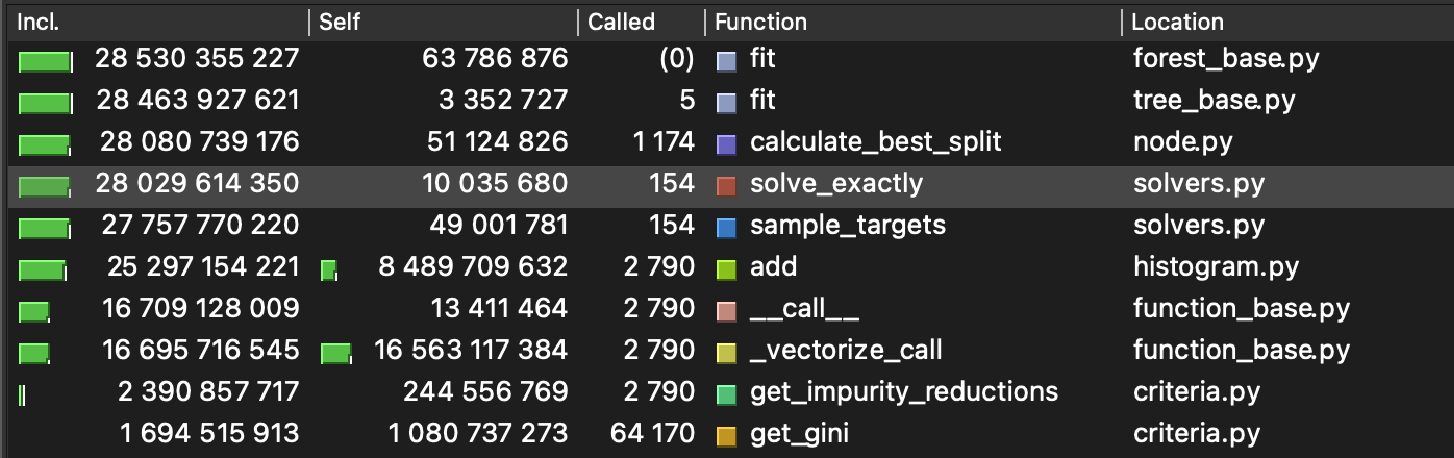}  
      \caption{}
    \end{subfigure}
    \begin{subfigure}{.9\textwidth}
      \centering
      \includegraphics[width=\linewidth]{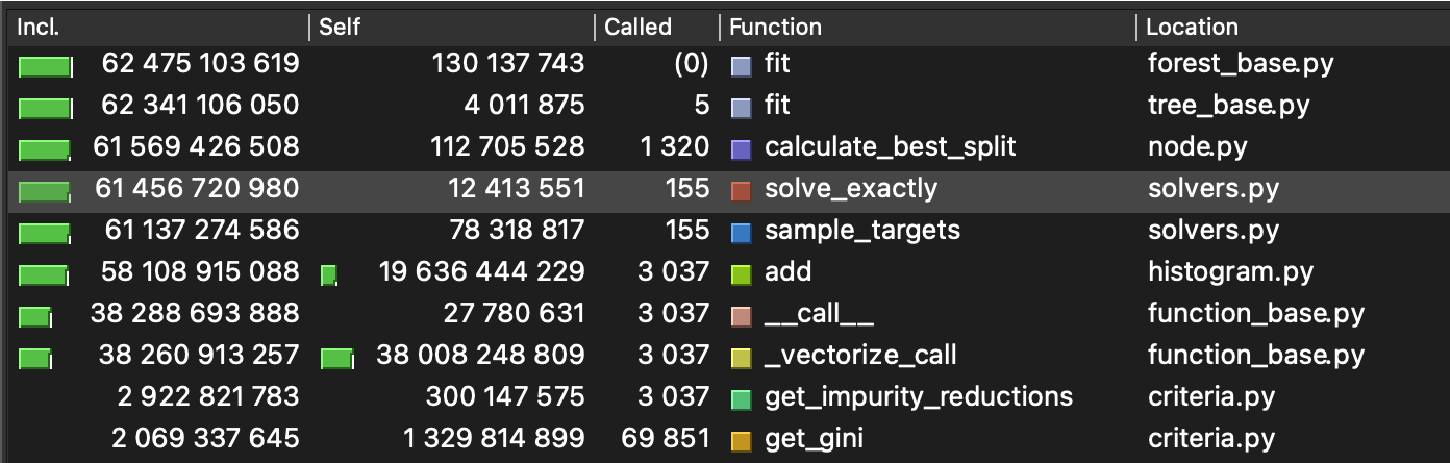}   
      \caption{}
    \end{subfigure}
\caption[Profiles for the node-splitting algorithm using the exact solver/na\"ive computation]
{Profiles for the node-splitting algorithm using the exact solver/na\"ive computation, the canonical algorithm for computing the best feature-threshold split, for 5,000 (top) and 10,000 (bottom) data point subsets of MNIST. The "Function" column is the name of the called function, the "Incl." column is the time spent in the function and any called subroutines, and the "Self" column is the time (in nanoseconds) spent in only the function and \textit{not} in any callees. All times are in nanoseconds. When increasing the dataset size, the overhead spent outside of the $\texttt{solve\_exactly}$ function grows negligibly from about 0.5 seconds to about 1 second. However, the time spent in the $\texttt{solve\_exactly}$ function and any called subroutines grows from about 28 seconds to about 61 seconds and constitutes approximately 98\% of the increase in wall-clock time. This observation motivates our focus on improving the subroutine used to identify the best feature-threshold split. This profile was generated with $\texttt{cProfile}$ and visualized with $\texttt{pyprof2calltree}$ \cite{lanaroAdvancedPythonProgramming2019}.}
\label{fig:ff_app_5_their_profiles}
\end{figure}

\begin{figure}[ht]
    \centering
    \begin{subfigure}{.9\textwidth}
      \centering
      \includegraphics[width=\linewidth]{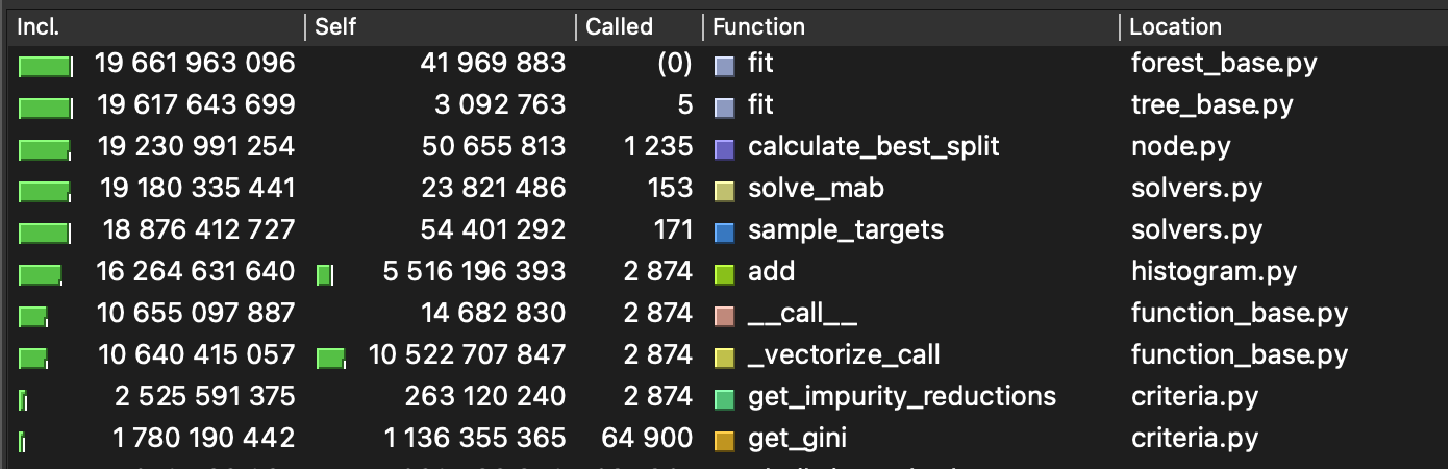}  
      \caption{}
    \end{subfigure}
    \begin{subfigure}{.9\textwidth}
      \centering
      \includegraphics[width=\linewidth]{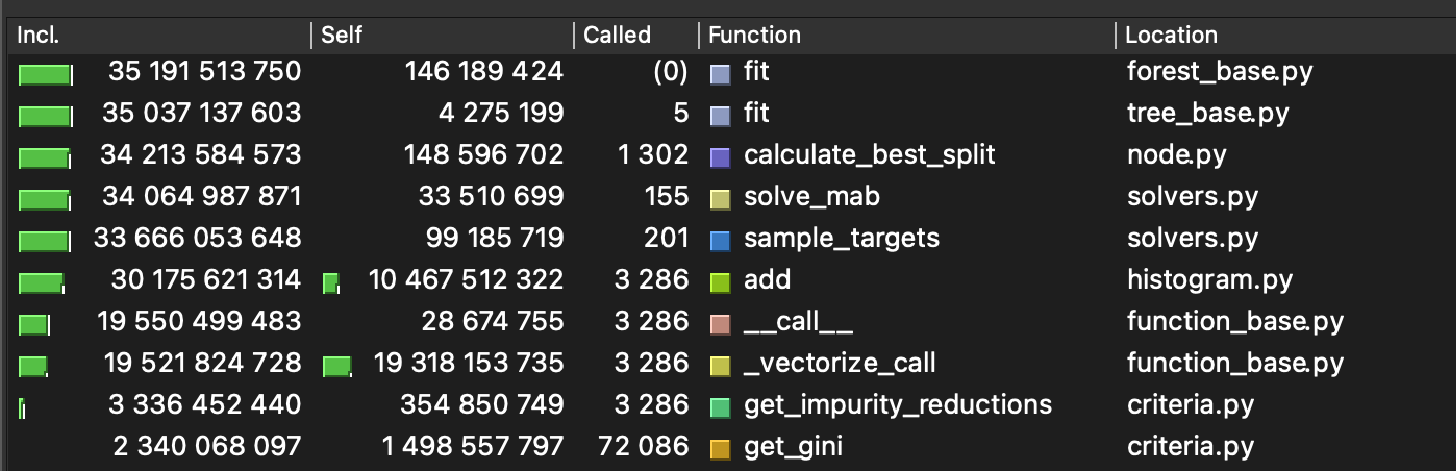}   
      \caption{}
    \end{subfigure}
    \caption[Profiles for the node-splitting algorithm using \algnamenospace, for 5,000 and 10,000 datapoint subsets of MNIST]
    {Profiles for the node-splitting algorithm using \algnamenospace, for 5,000 (top) and 10,000 (bottom) datapoint subsets of MNIST. The "Function" column is the name of the called function, the "Incl." column is the time spent in the function and any called subroutines, and the "Self" column is the time (in nanoseconds) spent in only the function and \textit{not} in any called sub-routines. All times are in nanoseconds. When increasing the dataset size, the time spent in the $\texttt{solve\_mab}$ function and any called subroutines only grows from approximately 20 seconds to approximately 35 seconds to identify the best feature-threshold split. This profile was generated with $\texttt{cProfile}$ and visualized with \texttt{pyprof2calltree} \cite{lanaroAdvancedPythonProgramming2019}.}
\label{fig:ff_app_5_our_profiles}
\end{figure}

\begin{figure}[ht]
    \centering
    \includegraphics[scale=0.5]{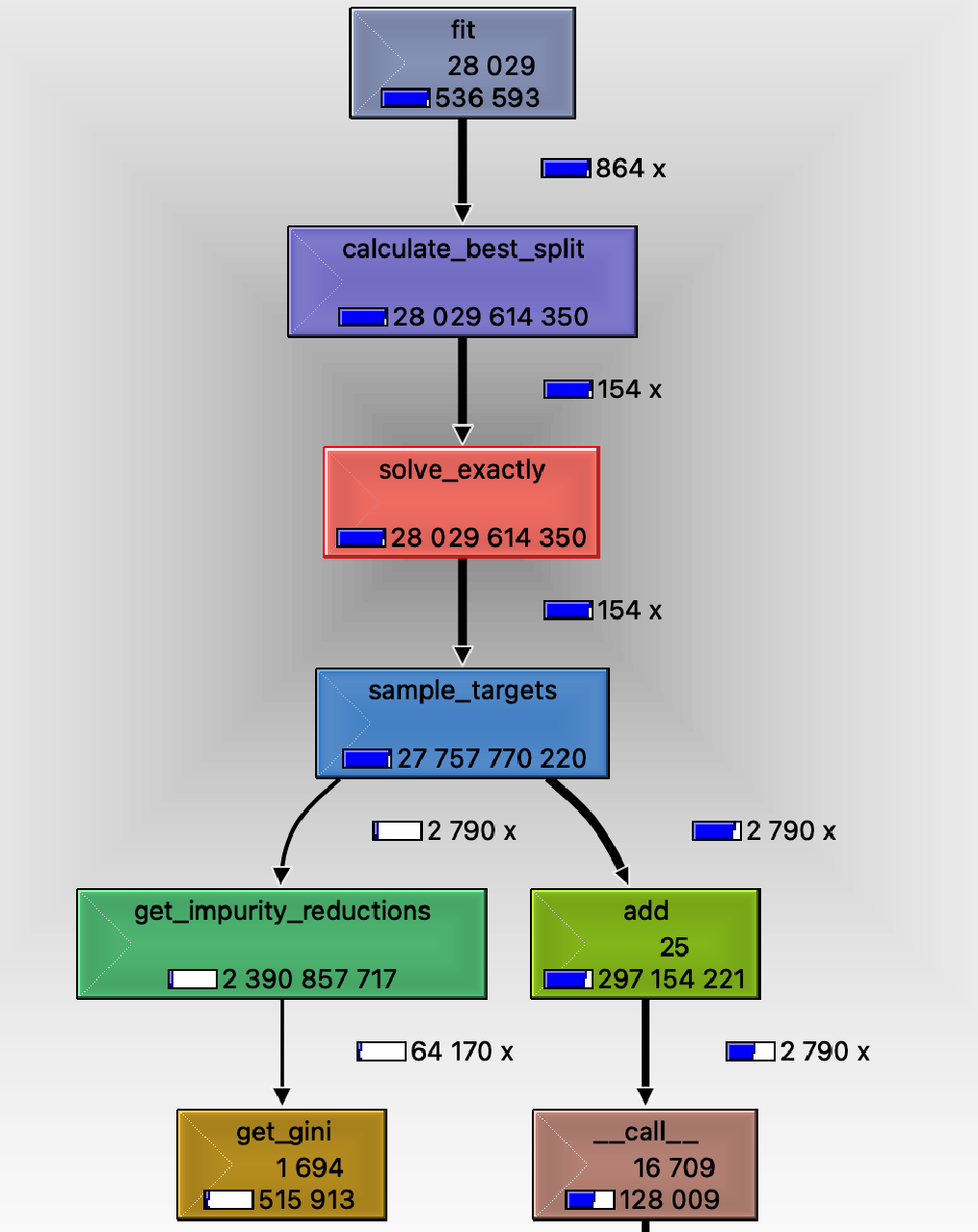}
    \caption[Example call graph of the $\texttt{fit}$ subroutine for the forest-based models in our re-implementation]
    {Example call graph of the $\texttt{fit}$ subroutine for the forest-based models in our re-implementation when the forest includes a single tree to be split only once. The $\texttt{fit}$ method of the forest calls the $\texttt{fit}$ method of its only tree, which calls $\texttt{calculate\_best\_split}$ method of the root node, which calls the respective solver ($\texttt{solve\_exactly}$ for the brute-force algorithm or $\texttt{solve\_exactly}$ for \algnamenospace), where the majority of wall-clock time is spent.}
\label{fig:ff_app_5_callgraph}
\end{figure}

\section{Experiment Details}
\label{ch_6_1:ff_app_6_experiment_details}

Here we provide full details for the experiments in Section \ref{ch3_6:ff_exps}. All experiments were run on 2021 MacBook Pro running MacOS 12.5.1 (Monterey) with an Apple M1 Max processor, and 64 GB RAM.

\subsection{Datasets}
\label{subsec:ff_app_6_datasets}
\paragraph{Classification Datasets: } We use the MNIST \cite{lecunGradientbasedLearningApplied1998}, APS Failure at Scania Trucks \cite{gondekPredictionFailuresAir2016, uciMachineLearningRepository2017}, and Forest Cover Type \cite{blackardComparativeAccuraciesArtificial1999, uciMachineLearningRepository2017} datasets. The MNIST dataset consists of 60,000 training and 10,000 test images of handwritten digits, where each black-and-white image is represented as a 784-dimensional vector and the task is to predict the digit represented by the image.
The APS Failure at Scania Trucks dataset consists of 60,000 datapoints with 171 features and the task is to predict component failure.
The Forest Covertype dataset consists of 581,012 datapoints with 54 feature and the task is to predict the type the forest cover type from cartographic variables.

\paragraph{Regression Datasets:} We use the Beijing Multi-Site Air-Quality \cite{zhangCautionaryTalesAirquality2017, uciMachineLearningRepository2017} and the SGEMM GPU Kernel Performance \cite{ballester-ripollSobolTensorTrains2019, nugterenCLTuneGenericAutotuner2015, uciMachineLearningRepository2017} datasets. The Beijing Multi-Site Air-Quality dataset consists of 420,768 datapoints with 18 features and the task is to predict the level of air pollution. The SGEMM GPU Kernel Performance dataset consists of 241,600 datapoints and the task is to predict the running time of a matrix multiplication.

For all datasets except MNIST (which has predefined training and test datasets), all datasets were randomized into 9:1 train-test splits. All datasets are publicly available.

\subsection{Runtime Experiments}

For the runtime experiments presented in Tables \ref{table:ff_classificationruntime}, all performances were measured from 5 random seeds. For all datasets, the maximum depth was set to 1 except for the MNIST dataset, in which the maximum depth was set to 5. The number of trees in each model was set to 5. All experiments used the Gini impurity criterion and the minimum impurity decrease required from performing a split was set to 0.005. For the Random Patches (RP) model, $\alpha_n$ was set to 0.7 and $\alpha_f$ was set to 0.85. 

For the regression runtime experiments presented in Table \ref{table:ff_regressionruntime}, all performances were measured from 5 random seeds. For the Beijing Multi-Site Air-Quality Dataset, the maximum depth was set to 1 and for the SGEMM GPU Kernel Performance Dataset, the maximum number of leaf nodes was set to 5. The number of trees in each model was set to 5. All experiments used the MSE impurity criterion and the minimum impurity decrease required from performing a split was set to 0.005. For the Random Patches (RP) model, $\alpha_n$ was set to 0.7 and $\alpha_f$ was set to 0.85. 

\subsection{Budget Experiments}

For the classification budget experiments presented in Table \ref{table:ff_classificationbudget}, all performances were measured from 5 random seeds. The budget for each model on the MNIST, APS Failure at Scania Trucks, and Forest Covertype datasets were set to 10,192,000, 784,000, and 9,408,000, respectively.  For the Random Patches (RP) model, $\alpha_n$ was set to 0.6 and $\alpha_f$ was set to 0.8. The maximum number of trees in any model was set to 100 and the maximum depth of each tree was set to 5.

For the regression budget experiments presented in Table \ref{table:ff_regressionbudget}, all performances were measured from 5 random seeds. The budget for each model on the Beijing Multi-Site Air-Quality Dataset was set to 76,800,000 and the budget for each model on the SGEMM GPU Kernel Performance Dataset was set to 24,000,000. For the Random Patches (RP) model, $\alpha_n$ was set to 0.8 and $\alpha_f$ was set to 0.5. The maximum number of trees in any model was set to 100 and the maximum depth of each tree was set to 5.

\subsection{Stability Experiments}

Two metrics for calculating feature importance  are used in Table \ref{table:ff_featureimportance}: out-of-bag Permutation Importance (OOB PI) and Mean Decrease in Impurity (MDI) \cite{nicodemusStabilityRankingPredictors2011, pilesFeatureSelectionStability2021}.
For a feature $f$, the OOB PI is calculated by measuring the difference between the trained model's out-of-bag error on the original data with its out-of-bag error on all the data with all out-of-bag datapoints' $f$ values shuffled.
The MDI for a feature $f$ is the average decrease in impurity of all nodes where $f$ is selected as the splitting criterion.

Once feature importances have been calculated, the top $k$ most important features for the model are selected and the stability of these $k$ features is measured via standard stability formulas \cite{nogueiraStabilityFeatureSelection2017}.

The results of the stability experiments are shown in Table \ref{table:ff_featureimportance}.
The Random Classification dataset is generated via \texttt{scikit-learn}'s \texttt{datasets.make\_classification} function with $\texttt{n\_samples=10000}$, $\texttt{n\_features=60}$, and $\texttt{n\_informative=5}$.
The Random Regression dataset is generated by \texttt{scikit-} \texttt{learn}'s \texttt{datasets.make\_regression} with $\texttt{n\_samples=10000, n\_features=100}$, and \newline \texttt{n\_informative=5}.

\section{Limitations}
\label{ch_6_1:ff_app_7_limitations}

\subsection{Theoretical Limitations}
\label{subsec:ff_app_7_theoretical_limitations}

Crucial to the success of \algname are the assumptions described before and after Theorem \ref{thm:ff_specific}. In particular, we assume that their is reasonable heterogeneity amongst the true impurity reductions of different feature-value splits. Such assumptions are common in the literature and have been validated on many real-world datasets \cite{bagariaMedoidsAlmostlinearTime2018,zhangAdaptiveMonteCarlo2019,baharavUltraFastMedoid2019a,tiwariBanditpamAlmostLinear2020,bagariaBanditbasedMonteCarlo2021,baharavApproximateFunctionEvaluation2022}.

We also note that the assumptions that each CI scales as $\sqrt{\tfrac{\log 1/\delta}{n'}}$ may be violated when using certain impurity metrics. For example, the derivative of the entropy impurity criterion with respect to some $p_k$ approaches $\infty$ when $p_k \rightarrow 0$. In this case, we cannot apply the delta method from Appendix \ref{ch_6_1:ff_app_3_mean_and_cis} to compute finite CIs that scale in the way we require. In such settings, it may be necessary to compute the CIs in other ways, e.g., following \cite{paninskiEstimationEntropyMutual2003} or \cite{basharinStatisticalEstimateEntropy1959}.

We note that in the worst case, even when all assumptions are violated, MABSplit is never worse than the na\"ive algorithm in terms of sample complexity. In the worst case, it is a batched version of the na\"ive algorithm.

\subsection{Practical Limitations}
\label{subsec:ff_app_7_practical_limitations}

We note that \algname may perform worse than na\"ive node-splitting on very small datasets, where the overhead of sampling the data in batches outweighs any potential benefits in sample complexity (see Appendix \ref{ch_6_1:ff_app_8_small_datasets} for further discussion).

In this work, we avoided a direct runtime comparison with \texttt{scikit-learn} because \texttt{scikit-learn} utilizes a number of low-level implementation optimizations that would make the comparison unfair. To provide a brief comparison to the popular \texttt{scikit-learn} implementation, however, we attempted to optimize our implementation using \texttt{Numba} \cite{lamNumbaLlvmbasedPython2015}, a package that translates Python code to optimized machine code. Our \texttt{Numba}-optimized implementation is 4x faster than \texttt{scikit-learn}'s \texttt{DecisionTreeClassifier} and achieves comparable performance on the MNIST dataset; see Appendix Table \ref{table:ff_app_7_sklearn_time_comparison}. 

\begin{table}
\resizebox{\textwidth}{!}{
\begin{tabular}{|ccc|}
\hline
\multicolumn{3}{|c|}{MNIST Dataset (Classification,  $N= 60,000$, maximum depth $= 8$)}                                                       \\
\multicolumn{1}{|c|}{Model}                                              & \multicolumn{1}{c|}{Wall-clock Training Time (s)} & Accuracy (\%)  \\ \hline
\multicolumn{1}{|c|}{ \texttt{scikit-learn} Decision Tree Classifier}     & \multicolumn{1}{c|}{34.665 ± 1.266}               & 91.061 ± 0.0   \\
\multicolumn{1}{|c|}{Histogrammed decision tree (Exact solver, ours)}    & \multicolumn{1}{c|}{86.514 ± 2.839}               & 90.923 ± 0.0   \\
\multicolumn{1}{|c|}{\textbf{Histogrammed decision tree (MABSplit solver, ours)}} & \multicolumn{1}{c|}{\textbf{8.538 ± 0.079} }               & \textbf{90.629 ± 0.234} \\ \hline
\end{tabular}
}
\vspace{0.2pt}
\caption[Comparison of accuracy and wall-clock training time of \texttt{scikit-learn}'s Decision Tree Classifier with our implementation on the MNIST digit classification task]
{Comparison of accuracy and wall-clock training time of \texttt{scikit-learn}'s Decision Tree Classifier with our implementation on the MNIST digit classification task. Our implementation of the histogrammed decision tree is slower than \texttt{scikit-learn}'s, but our optimized implementation is about 4x faster than \texttt{scikit-learn}'s. The slight performance degradation is likely due to discretization of the data during histogramming; this effect is also seen when histogramming the data and using the exact solver (i.e., when not using \algnamenospace).  A more heavily optimized version of our histogrammed decision tree when using \algname would likely result in even lower training times. Performance was measured over 5 random seeds.}
\label{table:ff_app_7_sklearn_time_comparison}
\end{table}

In order for practitioners to take full advantage of \algnamenospace, however, it may be necessary to implement \algname within the \texttt{scikit-learn} library.
In doing so, it may be possible that \algname makes it difficult or impossible to use existing optimizations in the \texttt{scikit-learn} library. An example of this is vectorization: because the na\"ive node-splitting algorithm queries the data in a predictable way, each datapoint can be queried more quickly than in \algnamenospace. Despite \algnamenospace's advantages in sample complexity, the disadvantages of being unable to use implementation optimizations like vectorization may outweigh \algnamenospace's benefits. 
Many of these risks may be ameliorated by addressed \algname into existing RF implementations such as the one in \texttt{scikit-learn}. We anticipate that many optimizations will still apply: for example pre-fetching data to have it in caches close to the CPU, manual loop unrolling, etc. 
We leave an optimization implementation of \algname inside the \texttt{scikit-learn} library to future work. 

\section{Comparison on Small Datasets}
\label{ch_6_1:ff_app_8_small_datasets}

In this section, we investigate the performance of \algname on small datasets. Appendix Figure \ref{fig:ff_app_8_small_exps_runtime} demonstrates the performance of \algnamenospace, both in wall-clock training time and sample complexity, for various subset sizes of MNIST. Our results that RF+MABSplit outperforms the standard RF algorithm, in both sample complexity and wall-clock time, when the dataset size exceeds approximately $1100$ datapoints. 

However, we also note that the main use case for MABSplit is when the data size is large and it is computationally challenging to run standard forest-based algorithms. Indeed, the use of big data in many applications that necessitate sampling was the primary motivation for our work \cite{bengioGflownetFoundations2021, srivastavaImitationGameQuantifying2022, dholeNlaugmenterFrameworkTasksensitive2021, tiwariDifferentiationActiveCorneal2022, mohsenImageCompressionClassification2021, arslanUsingGoogleSearch2020}.

\begin{center}
\begin{figure}
    \begin{subfigure}{0.49\textwidth}
        \includegraphics[width=\linewidth]{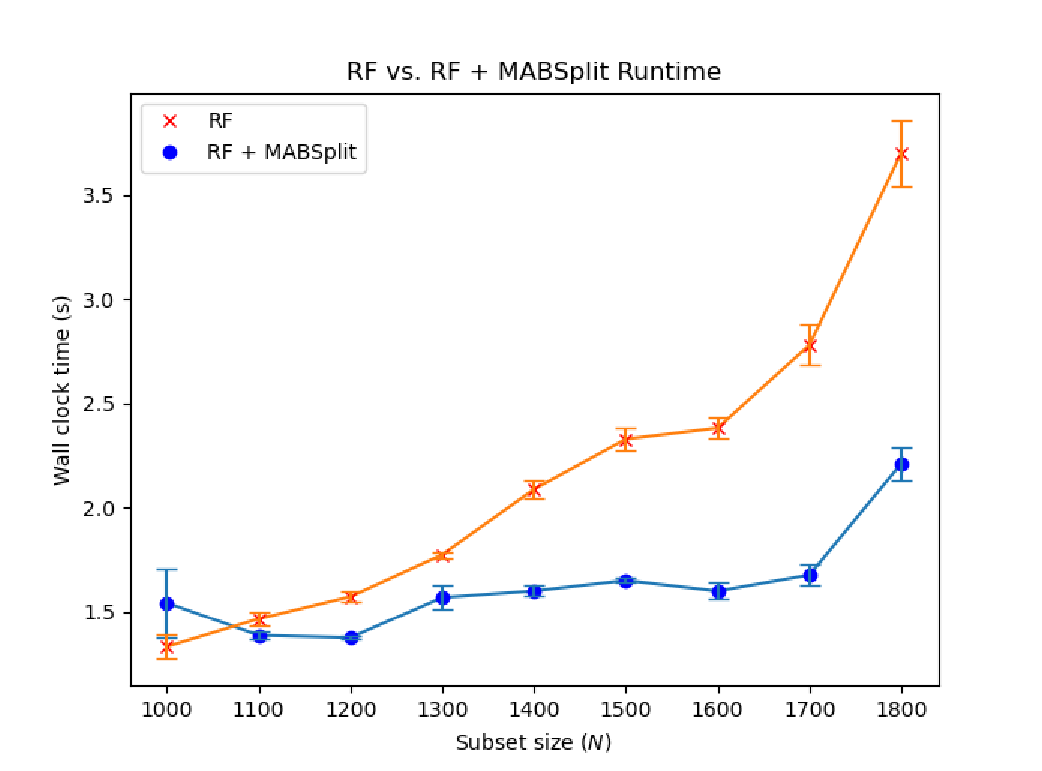} 
        \caption{}
    \end{subfigure}
    \begin{subfigure}{0.49\textwidth}
      \includegraphics[width=\linewidth]{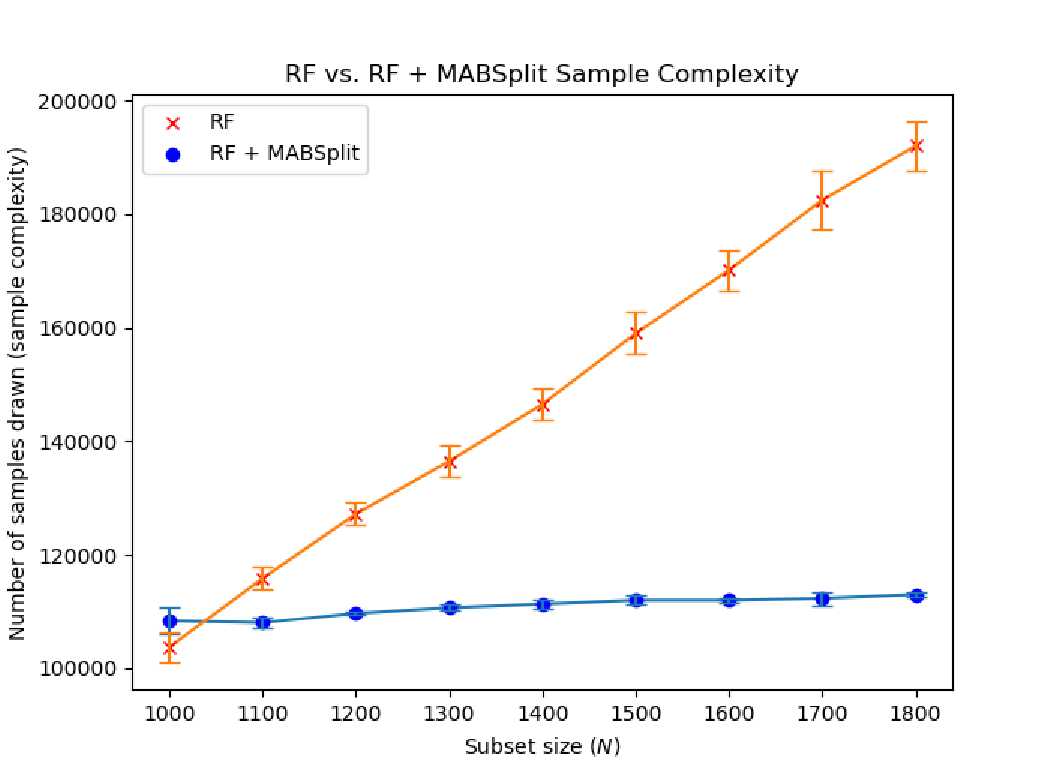}   
      \caption{}
    \end{subfigure}
\caption[Wall-clock training time and sample complexity of a random forest model with and without MABSplit, for various subset sizes of the MNIST dataset]
{(a) Wall-clock training times and (b) sample complexities of a random forest model with and without MABSPlit, for various subset sizes of MNIST. For dataset sizes below approximately $1000$, the exact random forest model performs better in terms of sample complexity and wall-clock time. Above $1100$ datapoints, the MABSplit version demonstrates better sample complexity and wall-clock time. Error bars were computed over 3 random seeds. Test performances were not different at a statistically significant level.}
\label{fig:ff_app_8_small_exps_runtime}
\end{figure}
\end{center}

\section{Description of Other Node-Splitting Algorithms}
\label{ch_6_1:ff_app_9_nodesplitting_algos}

For completeness, we provide a brief description of various baseline models' node-splitting algorithms here to enable easier comparison with \algnamenospace.

Consider a node with $n$ datapoints each with $m$ features, and $T$ possible thresholds at which to split each feature. We discuss the classification setting for simplicity, though the same arguments apply to regression.

A very na\"ive approach would be to iterate over all $mT$ feature-value splits, and compute the probabilities $p_{L, k}$ and $p_{R,k}$ from all $n$ datapoints. This results in complexity $O(mTn)$, which is $O(mn^2)$ when $T = n$ (for example, $T = n$ in the un-histogrammed setting).

Instead, the usual RF algorithm sorts all $n$ datapoints in $O(n\text{log}n)$ time for each of the $m$ features, resulting in total computational cost $O(mn\text{log}n)$. Then the algorithm scans linearly from lowest value to highest value for each feature and update the parameters $p_{L, k}$ and $p_{R,k}$ via simple counting to find the best impurity reduction for each of the $T$ potential splits. The complexity of this step is $O(mT + mn)$, where the ``$+mn$'' comes from the allocations of each data point to the left or right node during the scan (each data point is re-allocated only once per feature). Thus the total complexity of this approach is $O(mn\text{log}n + mT + mn) = O(mn\text{log}n + mT)$. This is $O(mn\text{log}n)$ when $T = n$.

The binned (a.k.a. histogrammed) method does not require the per-feature sort and avoids the $O(mn\text{log}n)$ computation when $T < n$,. Instead, each of the $n$ points must be inserted into the correct bin (which can be done in $O(1)$ time for each datapoint if the bins are equally spaced) for each of the $m$ features, incurring total computational cost $O(mn)$. Then, the same linear scanning approach as in the ``standard'' algorithm is performed with complexity $O(mT + mn)$. The total complexity of this approach is $O(mn + mT + mn) = O(m(n+T))$. This is $O(mn)$ when $T = n$.

In general, we do not assume $T = n$, i.e., that every feature value is a potential split point, unless otherwise specified. In our paper, the ``standard'' approach refers to the \underline{\textbf{un}}binned approach which requires an $O(mn\text{log}n)$ sort and ``linear'' refers to the binned approach that is $O(m(n+T))$, which is $O(mn)$ when $ T = O(n)$.

Crucially, when $T = o(N)$ (as is often the case in practice, e.g., for a constant number of bins) and the necessary gap assumptions are satisfied, MABSplit scales as $O(mT\text{log}n)$. In many cases, this is much better than $O(m(n+T))$, e.g., for large datasets, because the dependence on $n$ is reduced from linear to logarithmic. More concretely, treating $T$ as a constant and ignoring the dependence on $m$, we reduce the complexity of the binned algorithm from $O(n)$, what we refer to as ``linear,'' to $O(\text{log}n)$.

\chapter{Appendix For Chapter \ref{ch4}}
\label{app:ch4}
\section{Proofs of Theorems}
\label{ch_6_3:mips_app_1_proofs}

In this appendix, we present the proofs of Theorems \ref{thm:mips_specific} and \ref{thm:mips_optimal_weights}.

\subsection{Proof of Theorem \ref{thm:mips_specific}}

\begin{proof}
Following the multi-armed bandit literature, we refer to each index $i$ as an arm and refer to its optimization object $\mu_{i}$ as the arm parameter. 
We sometimes abuse the terminology and refer to the atom $\mathbf{v}_i$ as the arm, with the meaning clear from context.
Pulling an arm corresponds to uniformly sampling a coordinate $J$ and evaluating $v_{iJ} q_J$ and incurs an $O(1)$ computation.
This allows us to focus on the number of arm pulls, which translates directly to coordinate-wise sample complexity.

First, we prove that with probability at least $1-\delta$, all confidence intervals computed throughout the algorithm are valid in that they contain the true parameter $\mu_{i}$'s.
For a fixed atom $\mathbf{v}_i$ and a given iteration of the algorithm, the $\left(1-\frac{\delta}{2 n d_\text{used}^2}\right)$ confidence interval satisfies 
\begin{align*}
    \Pr\left( \left| \mu_{i} - \hat \mu_{i} \right| > C_{d_\text{used}} \right) \leq 2e^{-C_{d_\text{used}}^2 d_\text{used} / 2 \sigma^2} \leq \frac{\delta}{2 n d_\text{used}^2}
\end{align*}

by Hoeffding's inequality and the choice of $C_{d_\text{used}} = \sigma \sqrt{\frac{2 \text{log}(4 n d^2_\text{used} / \delta)}{d_\text{used}+1}}$. 
For a fixed arm $i$, for any value of $d_\text{used}$ we have that the confidence interval is correct with probability at least $1 - \frac{\delta}{n}$, where we used the fact that $1 + \frac{1}{2^2} + \frac{1}{3^2} + \ldots = \frac{\pi^2}{6} < 2$.
By another union bound over all $n$ arm indices, all confidence intervals constructed by the algorithm are correct with probability at least $1 - \delta$.

Next, we prove the correctness of  \algnamenospace.
Let $i^* = \argmax_{i \in [n]} \mu_{i}$ be the desired output of the algorithm.
First, observe that the main \texttt{while} loop in the algorithm can only run $d$ times, so the algorithm must terminate.
Furthermore, if all confidence intervals throughout the algorithm are valid, which is the case with probability at least $1-\delta$, $i^*$ cannot be removed from the set of candidate arms. 
Hence, $\mathbf{v}_{i^*}$ (or some $\mathbf{v}_i$ with $\mu_{i} = \mu_{i^*}$) must be returned upon termination with probability at least $1-\delta$. This proves the correctness of Algorithm \ref{alg:banditmips}.

Finally, we examine the complexity of \algnamenospace. 
Let $d_{\text{used}}$ be the total number of arm pulls computed for each of the arms remaining in the set of candidate arms at a given iteration in the algorithm.
Note that for any suboptimal arm $i \ne i^*$ that has not left the set of candidate arms $\mathcal{S}_{\text{solution}}$, we must have
$C_{d_\text{used}} \leq c_0 \sqrt{ \frac{\log (1/\delta)}{d_{\text{used}}}}$ by assumption (and this holds for our specific choice of $C_{d_\text{used}}$ in Algorithm \ref{alg:banditmips}).
With $\Delta_{i} = \mu_{i^*} - \mu_{i}$, if $d_{\text{used}} > \frac{16c_0^2}{\Delta_{i}^2} \log\frac{n}{\delta \Delta_i}$, then
\begin{align*}
4C_{d_\text{used}} &\leq 4 c_0 \sqrt{  \frac{{ \log\frac{n}{\delta \Delta_i} }}{{d_{\text{used}}}  }} < \Delta_{i}
\end{align*}
Furthermore, 
\begin{align*}
    \hat \mu_{i^*} - C_{d_\text{used}} &\geq \mu_{i^*} - 2C_{d_\text{used}} \\
    &= \mu_{i} + \Delta_{i} - 2C_{d_\text{used}} \\
    &> \mu_{i} + 2 C_{d_\text{used}} \\
    &> \hat \mu_{i} + C_{d_\text{used}}
\end{align*}
which means that $i$ must be removed from the set of candidate arms by the end of that iteration.

Hence, the number of data point computations $M_{i}$ required for any arm $i \ne i^*$ is at most
\begin{align*}
M_{i} \leq \min \left[ \frac{16c_0^2}{\Delta_{i}^2} \log\frac{n}{\delta \Delta_i} + 1, 2d \right]
\end{align*}

where we used the fact that the maximum number of computations for any arm is $2d$ when sampling with replacement.
Note that bound this holds simultaneously for all arms $i$ with probability at least $1-\delta$.
We conclude that the total number of arm pulls $M$ satisfies
\begin{align*}
M & \leq \sum_{i \in [n]}  \min \left[ \frac{16c_0^2}{\Delta_{i}^2} \log\frac{n}{\delta \Delta_i} + 1, 2d \right]
\end{align*}
with probability at least $1-\delta$. 

As argued before, since each arm pull involves an $O(1)$ computation, $M$ also corresponds to the total number of operations up to a constant factor.
\end{proof}

\subsection{Proof of Theorem \ref{thm:mips_optimal_weights}}

\begin{proof}
Since all the $X_{iJ}$'s are unbiased, optimizing Problem \eqref{eqn:mips_optimize} is equivalent to minimizing the combined second moment
\begin{align}
    \sum_{i \in [n]} \mathbb{E}_{J \sim P_\mathbf{w}} [X_{iJ}^2] & = \sum_{i \in [n]} \sum_{j \in [d]} \frac{1}{d^2 w_j} q_j^2 v_{ij}^2 \\
    & = \sum_{j \in [d]} \left( \frac{1}{d^2 w_j} q_j^2 \sum_{i \in [n]} v_{ij}^2 \right). 
\end{align}
The Lagrangian is given by
\begin{align}
    \mathcal{L}(\mathbf{w}, \nu) = \sum_{j \in [d]} \left( \frac{1}{d^2 w_j} q_j^2 \sum_{i \in [n]} v_{ij}^2 \right) + \nu 
    \left( 1 - \sum_{j \in [d]} w_j \right).
\end{align}
Furthermore, the derivatives are
\begin{align}
    & \frac{\partial \mathcal{L}(\mathbf{w}, \nu)}{\partial w_j} = - \frac{q_j^2 \sum_{i \in [n]} v_{ij}^2}{d^2 w_j^2} - \nu\\
    & \frac{\partial \mathcal{L}(\mathbf{w}, \nu)}{\partial \mu} = 1 - \sum_{j \in [d]} w_j.
\end{align}

By the Karush-Kuhn-Tucker (KKT) conditions, setting the derivatives to 0 gives 
\begin{align}
    w_j^* = \frac{\sqrt{q_j^2 \sum_{i \in [n]} v_{ij}^2}}{\sum_{j \in [d]} \sqrt{q_j^2 \sum_{i \in [n]} v_{ij}^2}}~~~~\text{ for }~j=1,\ldots,d.
\end{align}
\end{proof}

\section{Description of Datasets}
\label{ch_6_3:mips_app_2_datasets}

Here, we provide a more detailed description of the datasets used in our experiments. \\

\subsection{Synthetic Datasets}

In the \texttt{NORMAL\_CUSTOM} dataset, a parameter $\theta_i$ is drawn for each atom from a standard normal distribution, then each coordinate for that atom is drawn from $\mathcal{N}(\theta_i, 1)$. The signals are generated similarly.

In the \texttt{CORRELATED\_NORMAL\_CUSTOM} dataset, a parameter $\theta$ is for the signal $\mathbf{q}$ from a standard normal distribution, then each coordinate for that signal is drawn from $\mathcal{N}(\theta, 1)$. Atom $\mathbf{v}_i$ is generated by first sampling a random weight $w_i \sim \mathcal{N}(0, 1)$; then atom $\mathbf{v}_i$ is set to $w_i \mathbf{q}$ plus Gaussian noise.

Note that for the synthetic datasets, we can vary $n$ and $d$. The values of $n$ and $d$ chosen for each experiment are described in Subsection \ref{subsec:mips_app_2_experimental_settings}.

\subsection{Real-world datasets}

\textbf{Netflix Dataset:} We use a subset of the data from the Netflix Prize dataset \cite{bennettNetflixPrize2007} that contains the ratings of 6,000 movies by 400,000 customers.
We impute missing ratings by approximating the data matrix via a low-rank approximation. Specifically, we approximate the data matrix via a 100-factor SVD decomposition. 
The movie vectors are used as the query vectors and atoms and $d$ corresponds to the number of subsampled users. \\

\textbf{Movie Lens Dataset:} We use Movie Lens-1M dataset \cite{harperMovielensDatasetsHistory2015}, which consists of 1 million ratings of 4,000 movies by 6,000 users. As for the Netflix dataset, we impute missing ratings by obtaining a low-rank approximation to the data matrix. Specifically, we perform apply a Non-negative Matrix Factorization (NMF) with 15 factors to the dataset to impute missing values.
The movie vectors are used as the query vectors and atoms, with $d$ corresponding to the number of subsampled users. \\

We note that for all  datasets, the coordinate-wise inner products are sub-Gaussian random variables. In particular, this means the assumptions of Theorem \ref{thm:mips_specific} are satisfied and we can construct confidence intervals that scale as $O\left(\sqrt{\frac{\log 1/\delta'}{d'}}\right)$. We describe the setting for the sub-Gaussianity parameters in Section \ref{subsec:mips_app_2_experimental_settings}.

\subsection{Experimental Settings}
\label{subsec:mips_app_2_experimental_settings}

\textbf{Scaling Experiments:} In all scaling experiments, $\delta$ and $\epsilon$ were both set to $0.001$ for \algname and \algnamenospace-$\alpha$. For the \texttt{NORMAL\_CUSTOM} and \texttt{CORRELATED\_NORMAL\_CUSTOM} datasets, the sub-Gaussianity parameter was set to $1$. For the Netflix and Movie Lens datasets, the sub-Gaussianity parameter was set to $25$. For the \texttt{CryptoPairs}, \texttt{SIFT-1M}, and \texttt{SimpleSong} datasets described in Appendix \ref{ch_6_3:mips_app_5_high_dim}, the sub-Gaussianity parameters were set to $2.5e9$, $6.25e5$, and $25$, respectively. The number of atoms was set to $100$ and all other atoms used default values of hyperparameters for their sub-Gaussianity parameters.

\textbf{Tradeoff Experiments:} For the tradeoff experiments, the number of dimensions was fixed to $d = 10,000$.
The various values of speedups were obtained by varying the hyperparameters of each algorithm.
For NAPG-MIPS and HNSW-MIPS, for example, $M$ was varied from 4 to 32, $ef\_constructions$ was varied from $2$ to $500$, and $ef\_searches$ was varied from $2$ to $500$.
For Greedy-MIPS, $budget$ varied from 2 to 999. For LSH-MIPS, the number of hash functions and hash values vary from 1 to 10.
For H2ALSH, $\delta$ varies from $\frac{1}{2^4}$ to $\frac{1}{2}$, $c_0$ varies from 1.2 to 5, and $c$ varies from 0.9 to 2. 
For NEQ-MIPS, the number of codewords and codebooks vary from 1 to 100. 
For \algname, \algnamenospace-$\alpha$, and BoundedME, speedups were obtained by varying $\delta$ from $\frac{1}{10^{10}}$ to  $0.99$ and $\epsilon$ from $\frac{1}{10^{10}}$ to $3$. In our code submission, we include a one-line script to reproduce all of our results and plots.

All experiments were run on a 2019 Macbook Pro with a 2.4 GHz 8-Core Intel Core i9 CPU, 64 GB 2667 MHz DDR4 RAM, and an Intel UHD Graphics 630 1536 MB graphics card. Our results, however, should not be sensitive to hardware, as we used hardware-independent performance metrics (the number of coordinate-wise multiplications) for our results. 

\section{Additional Experimental Results}
\label{ch_6_3:mips_app_2_additional_experiments}

Here, we present the results for precision@$k$ versus algorithm speedup for various algorithms for $k =5$ and $10$; see Figures \ref{fig:mips_app_3_p5st} and \ref{fig:mips_app_3_p10st}.
The precision@$k$ is defined as the proportion of true top $k$ atoms surfaced in the algorithm's returned top $k$ atoms (the precision@$k$ is also the top-$k$ accuracy, i.e., the proportion of correctly identified top $k$ atoms).

\begin{center}
\begin{figure}
    \begin{subfigure}{00.49\textwidth}
        \includegraphics[width=\linewidth]{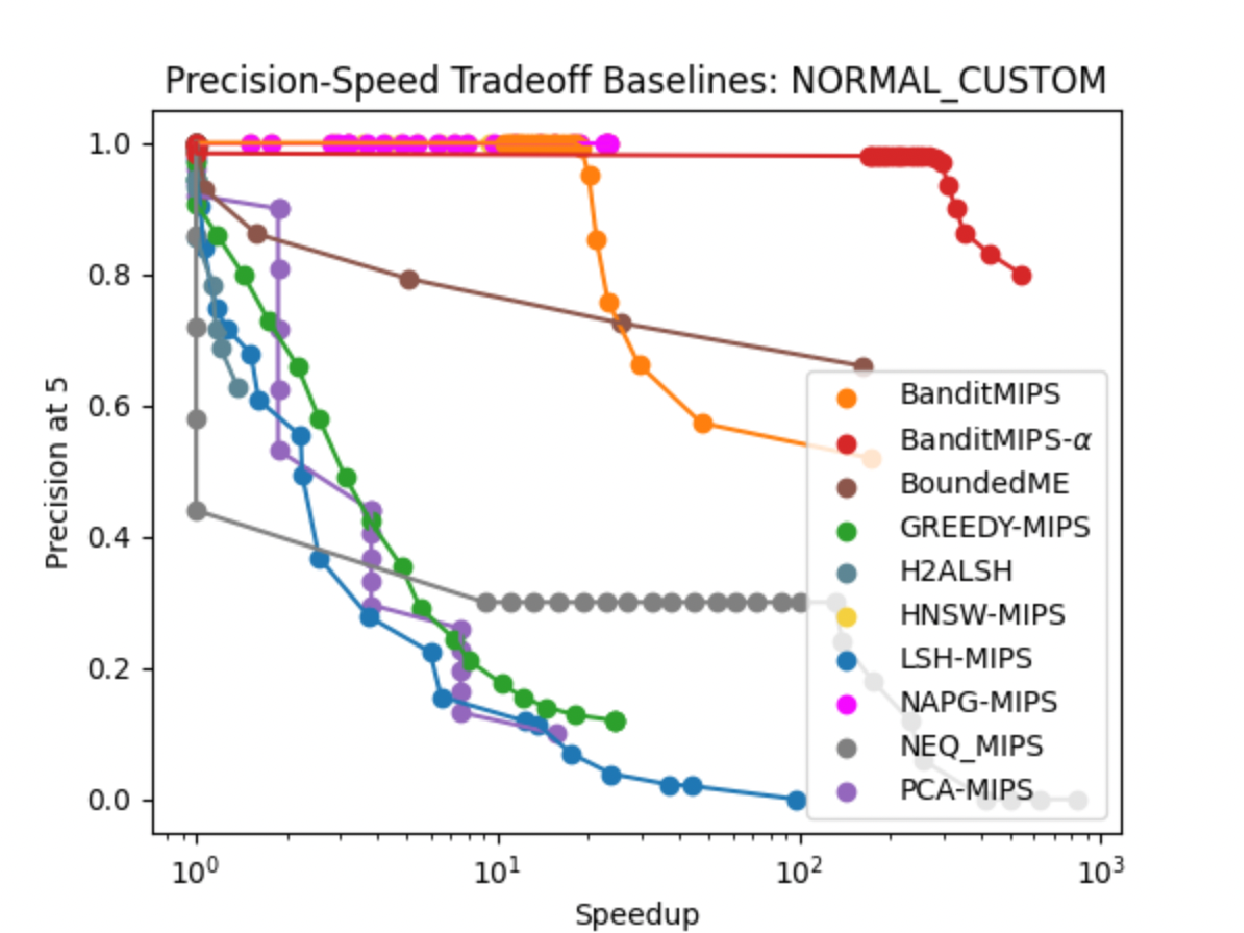}
    \end{subfigure}
    \begin{subfigure}{00.49\textwidth}
        \includegraphics[width=\linewidth]{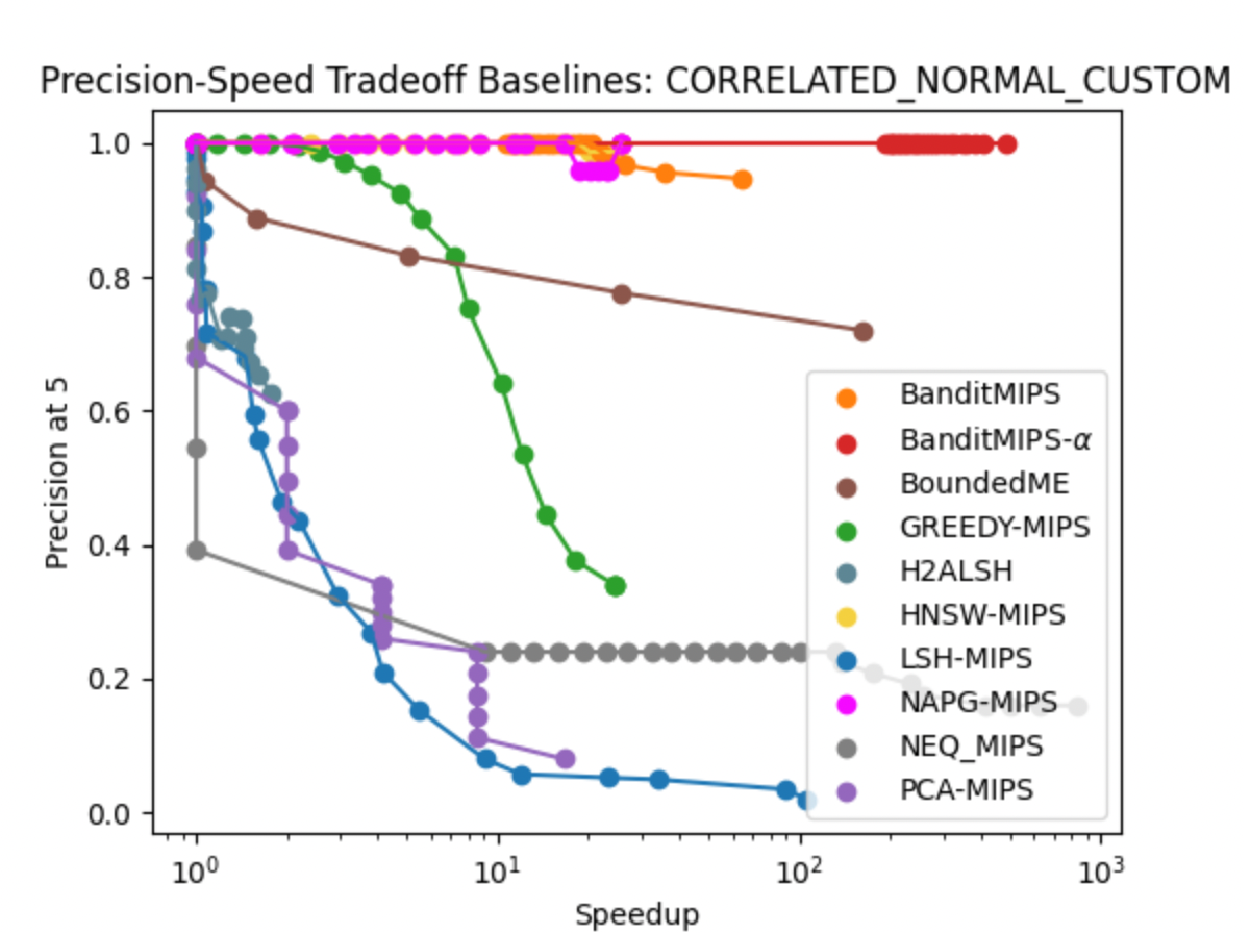}
    \end{subfigure}
    \begin{subfigure}{00.49\textwidth}
        \includegraphics[width=\linewidth]{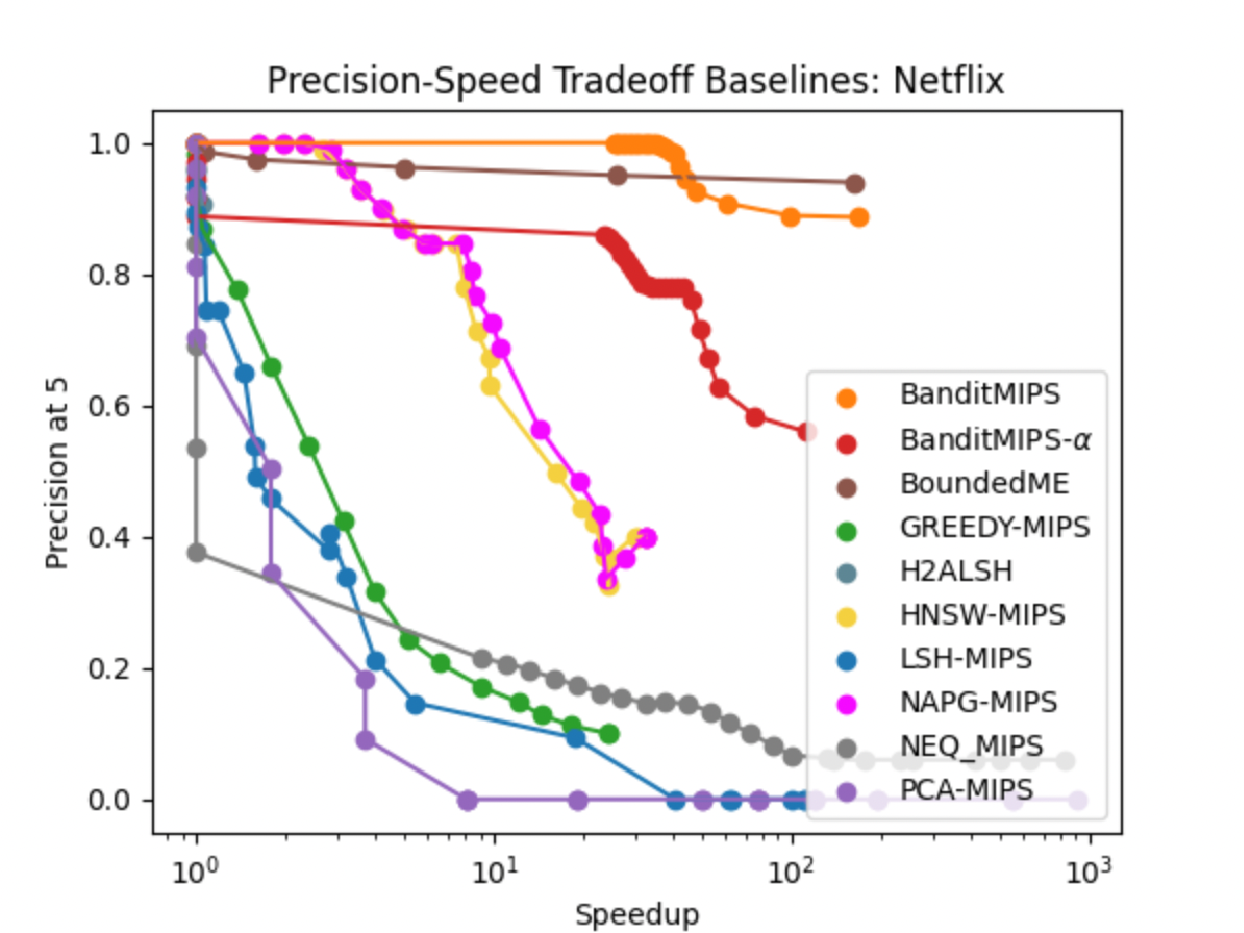}
    \end{subfigure}
    \begin{subfigure}{00.49\textwidth}
        \includegraphics[width=\linewidth]{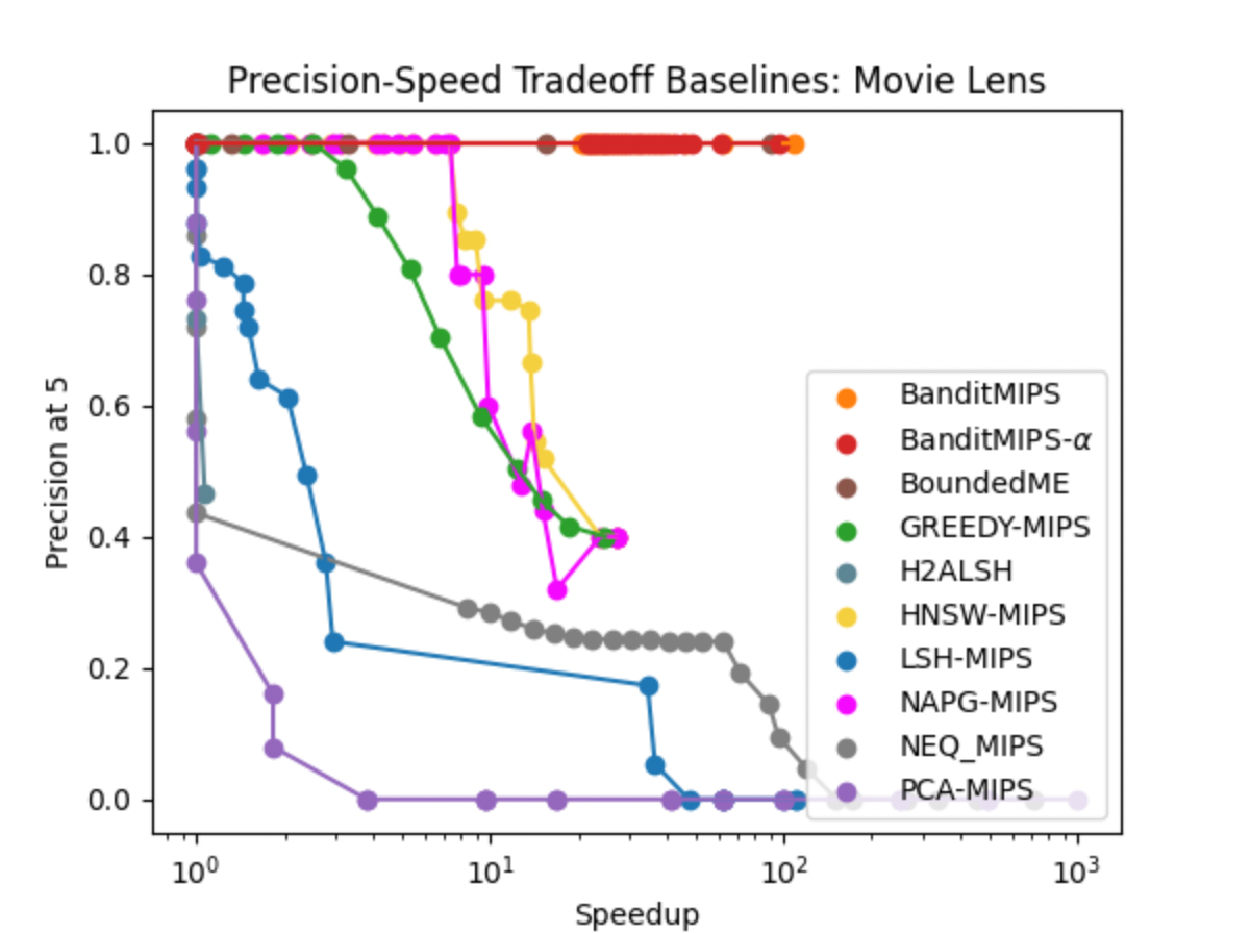}
    \end{subfigure}
\caption[Tradeoff between top-5 accuracy and speedup for various MIPS algorithms across four datasets]
{Trade-off between top-$5$ accuracy (precision@$5$) and speed for various algorithms across all four datasets. The $x$-axis represents the speedup relative to the naive $O(nd)$ algorithm and the $y$-axis shows the proportion of times an algorithm returned correct answer; higher is better. Each dot represents the mean across 10 random trials and the CIs are omitted for clarity. Our algorithms consistently achieve better accuracies at higher speedup values than the baselines.}
\label{fig:mips_app_3_p5st}
\end{figure}
\end{center}

\begin{center}
\begin{figure}
    \begin{subfigure}{00.49\textwidth}
        \includegraphics[width=\linewidth]{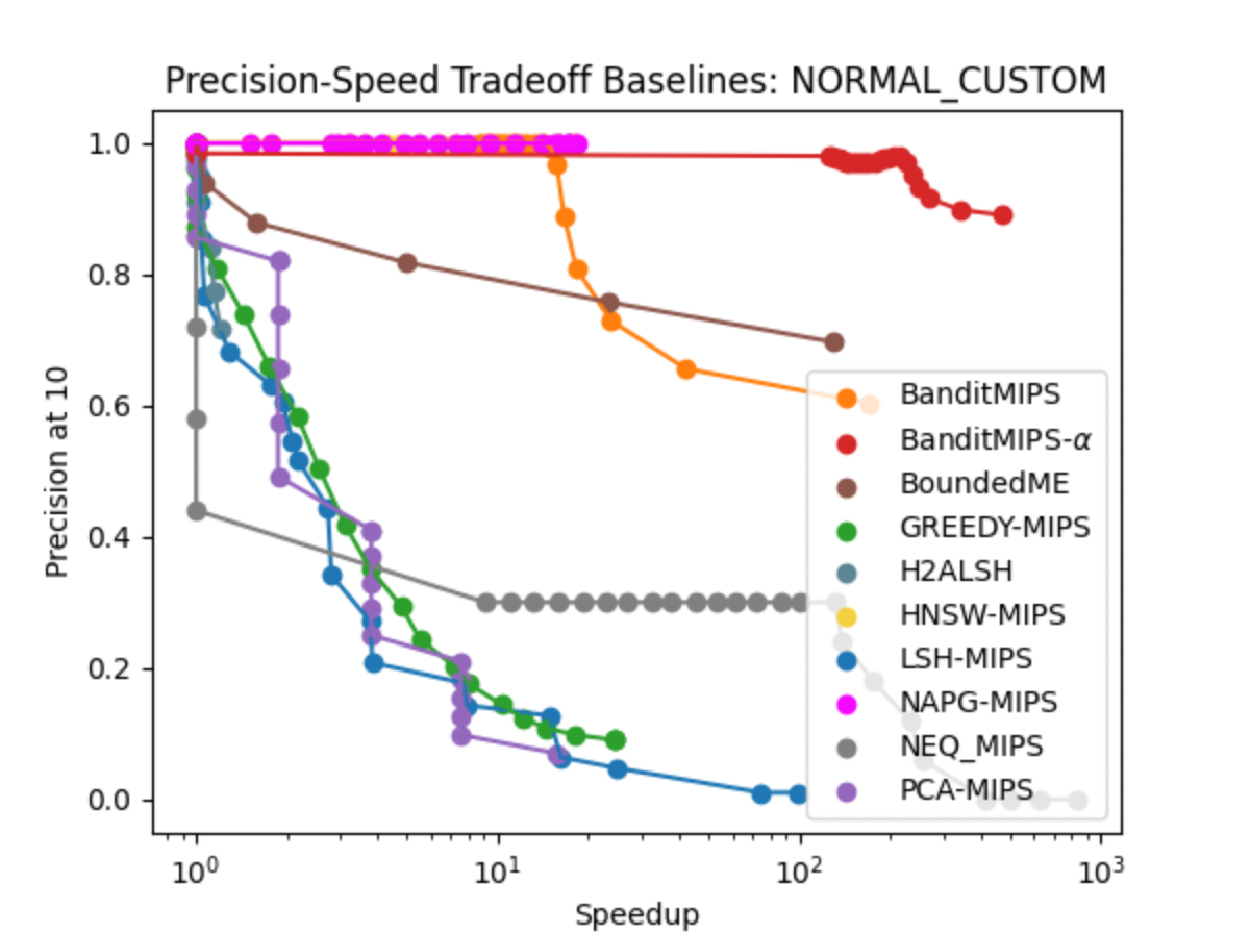}
    \end{subfigure}
    \begin{subfigure}{00.49\textwidth}
        \includegraphics[width=\linewidth]{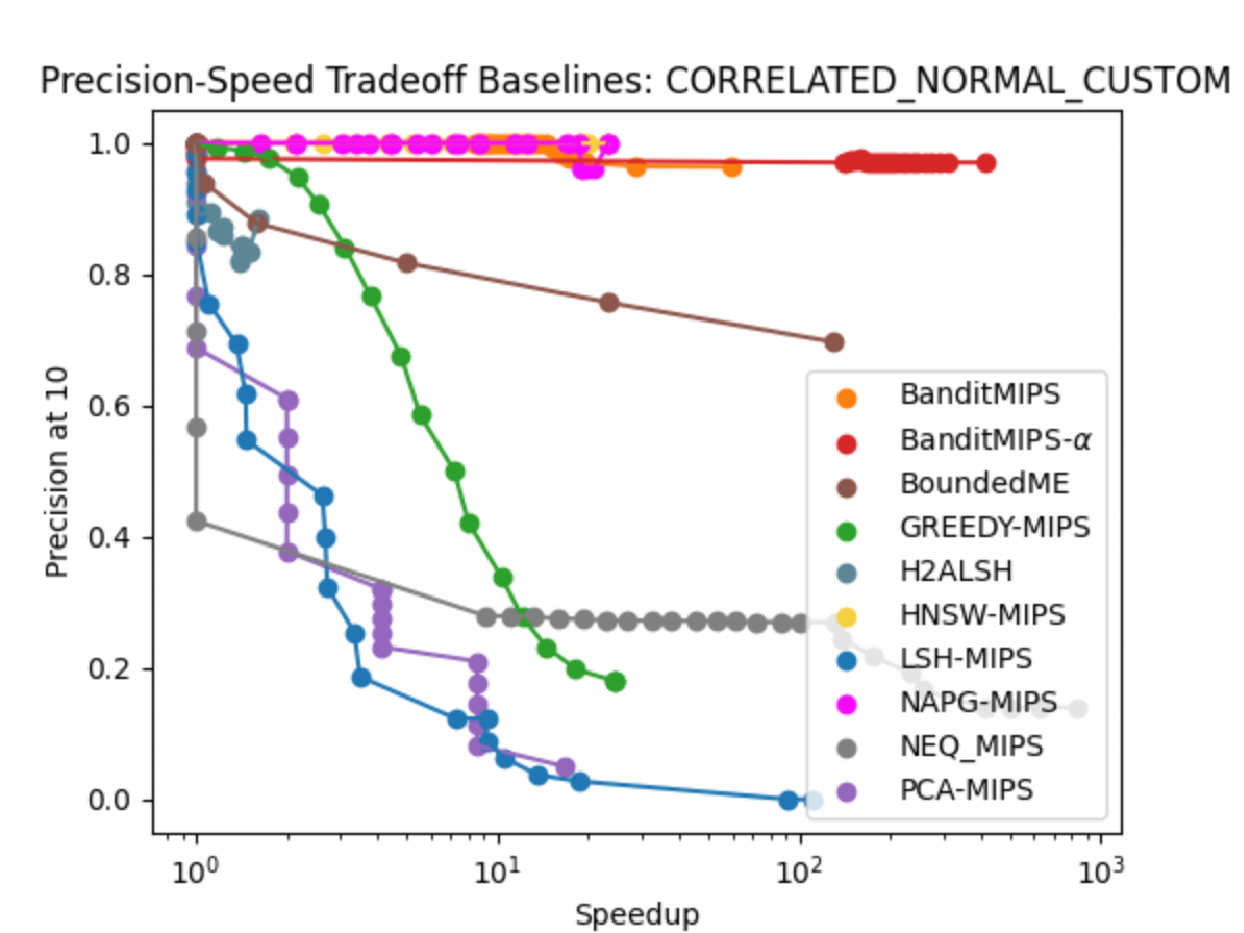}
    \end{subfigure}
    \begin{subfigure}{00.49\textwidth}
        \includegraphics[width=\linewidth]{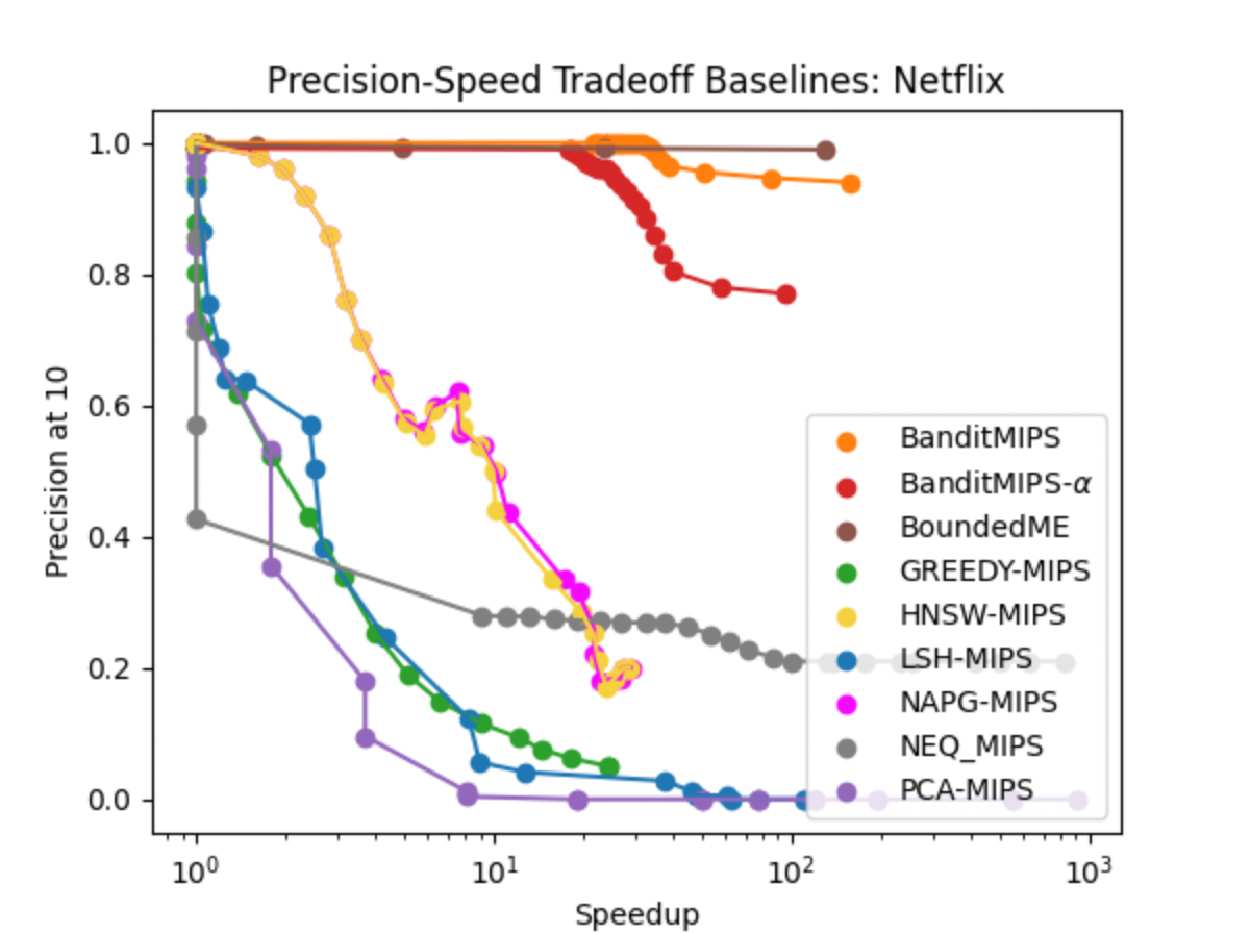}
    \end{subfigure}
    \begin{subfigure}{00.49\textwidth}
        \includegraphics[width=\linewidth]{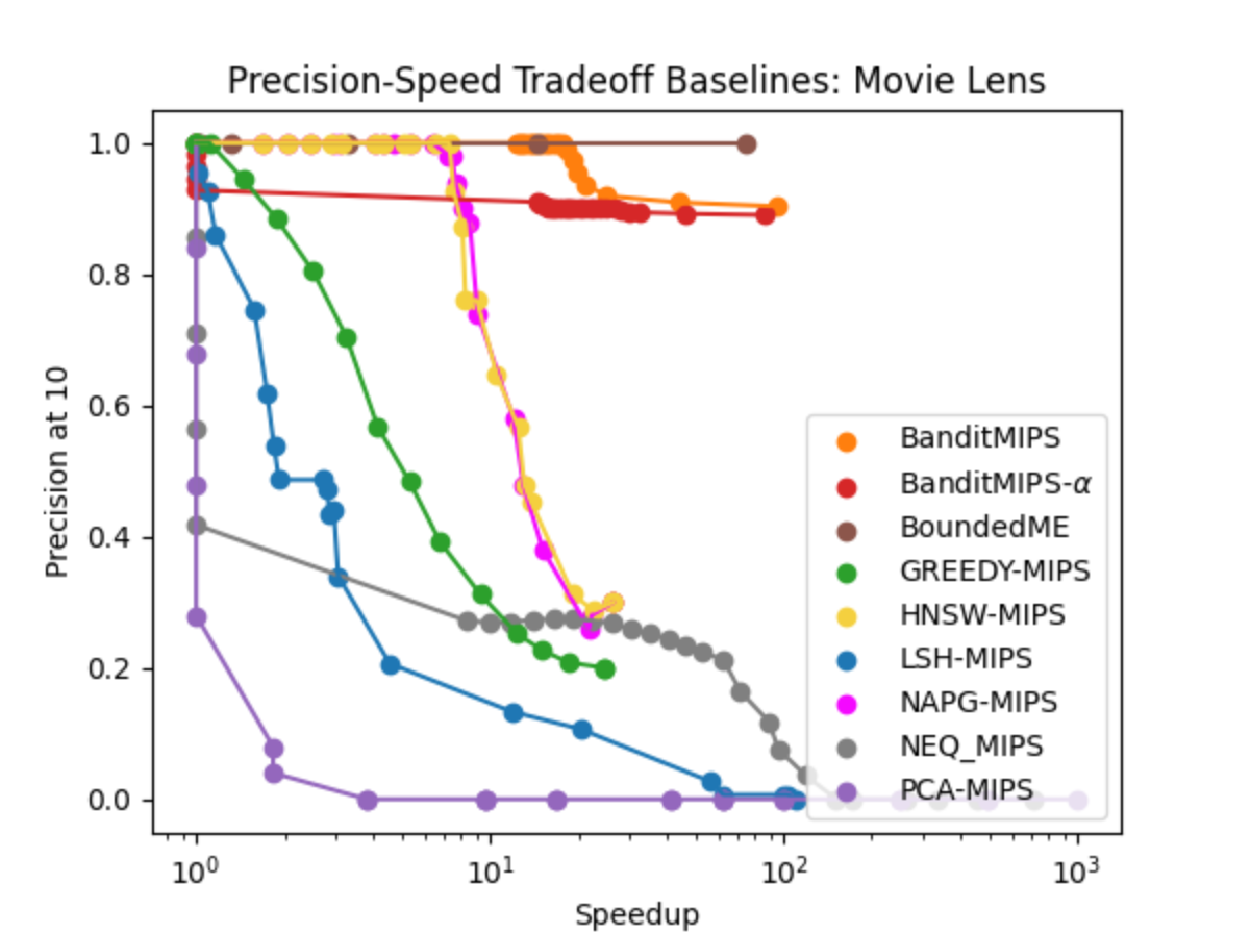}
    \end{subfigure}
\caption[Trade-off between top-5 accuracy and speedup for various MIPS algorithms across four datasets]
{Trade-off between top-$10$ accuracy (precision@$10$) and speed for various algorithms across all four datasets. The $x$-axis represents the speedup relative to the naive $O(nd)$ algorithm and the $y$-axis shows the proportion of times an algorithm returned correct answer; higher is better. Each dot represents the mean across 10 random trials and the CIs are omitted for clarity. Our algorithms consistently achieve better accuracies at higher speedup values than the baselines. }
\label{fig:mips_app_3_p10st}
\end{figure}
\end{center}

\section{Using Preprocessing Techniques with \algnamenospace}
\label{ch_6_3:mips_app_4_preprocessing}

In this Appendix, we discuss using preprocessing techniques with \algnamenospace. The specific form of preprocessing is binning by estimated norm. More precisely, we estimate the norm of each atom by sampling a constant number of coordinates from them. We then sort the atoms by estimated norm into bins, where each bin contains $k = 30$ atoms (note that $k$ is a hyperparameter). The top $k$ atoms with the highest estimated norm are sorted into the first bin, the next $k$ atoms are sorted into the second bin, and so on. 

When running \algnamenospace, we find the best atom in each bin but stop sampling an atom if the best atom we have found across all bins has a sampled inner product greater than another atom's \textit{maximum} potential inner product. Intuitively, this allows us to filter atoms with small estimated norm more quickly if an atom in another bin is very likely to be a better candidate.

We call this algorithm (\algname with this form of preprocessing) \texttt{Bucket\_AE}. Figure \ref{fig:mips_app_4_preprocessing} demonstrates that \texttt{Bucket\_AE} reduces the scaling with $n$ of \algnamenospace. Furthermore, \texttt{Bucket\_AE} still scales as $O(1)$ with $d$.

We leave an exact complexity analysis of this preprocessing's affect on the scaling with $n$ under various distributional assumptions to future work.

\begin{center}
\begin{figure}
    \begin{subfigure}{0.47\textwidth}
        \includegraphics[width=\linewidth]{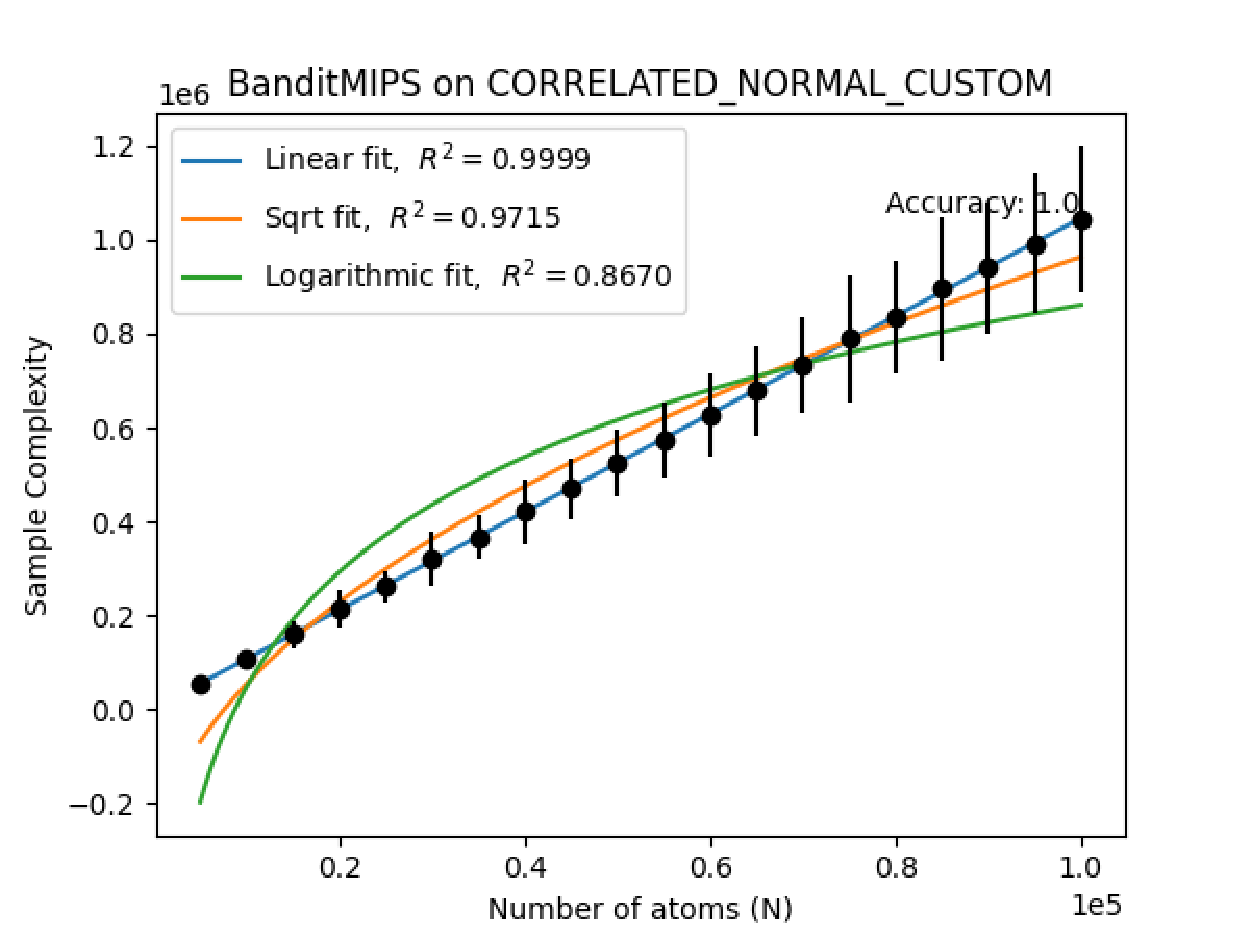}
        \label{fig:mips_app_4_bm_scaling_n_cnc}
    \end{subfigure} \hfill
    \begin{subfigure}{0.47\textwidth}
        \includegraphics[width=\linewidth]{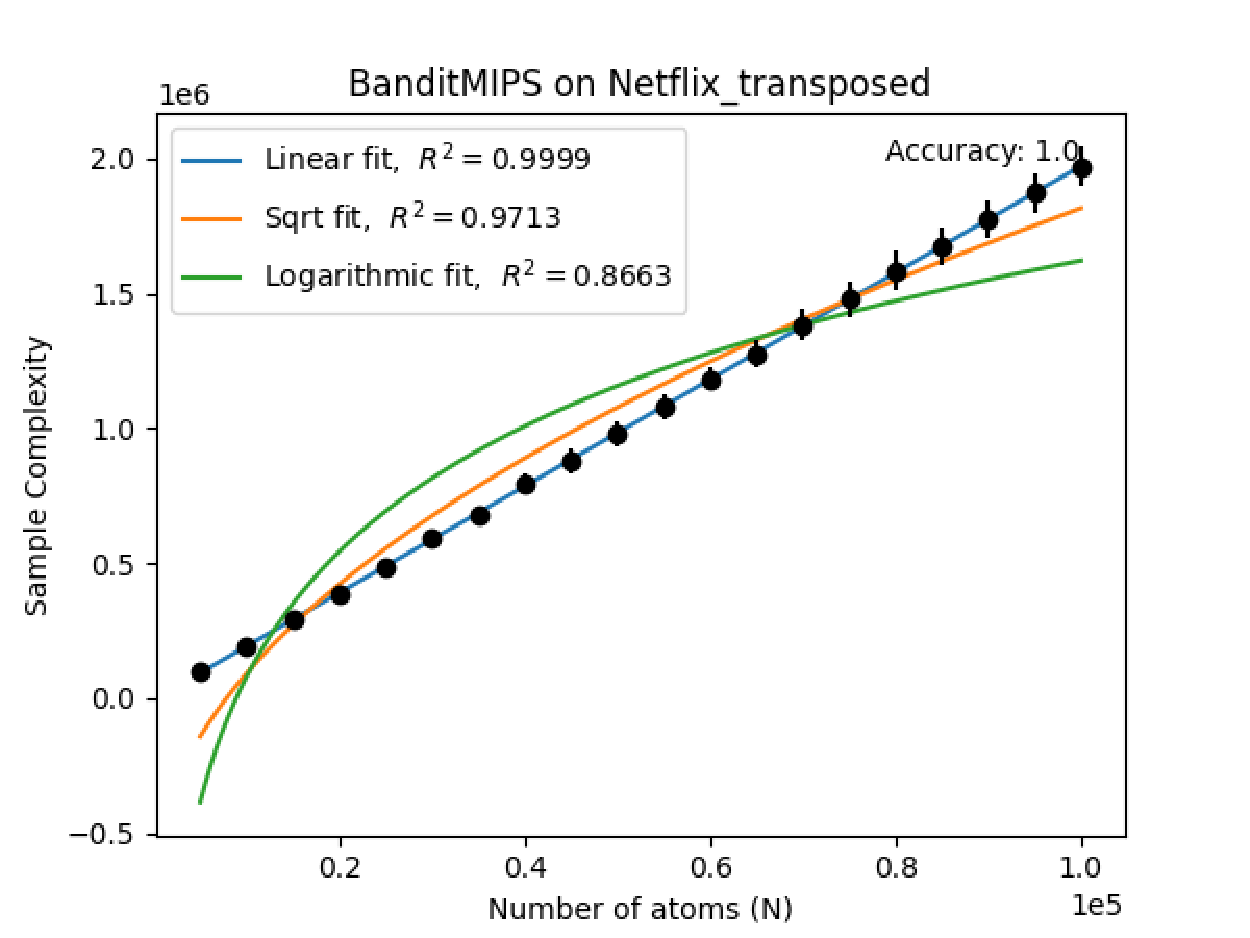}
        \label{fig:mips_app_4_bm_scaling_n_nf}
    \end{subfigure}
    \begin{subfigure}{0.47\textwidth}
        \includegraphics[width=\linewidth]{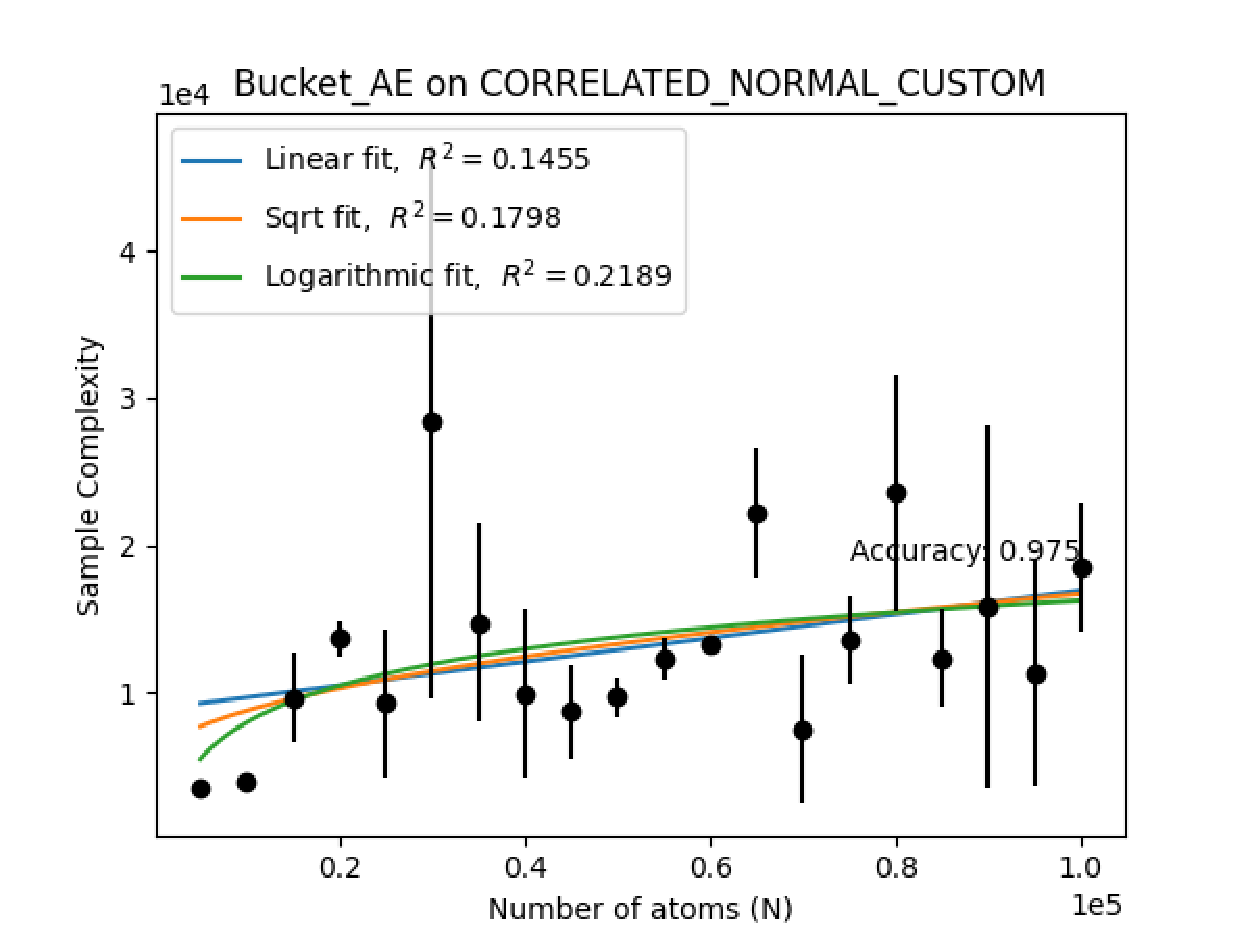}
        \label{fig:mips_app_4_bucket_scaling_n_cnc}
    \end{subfigure} \hfill
    \begin{subfigure}{0.47\textwidth}
        \includegraphics[width=\linewidth]{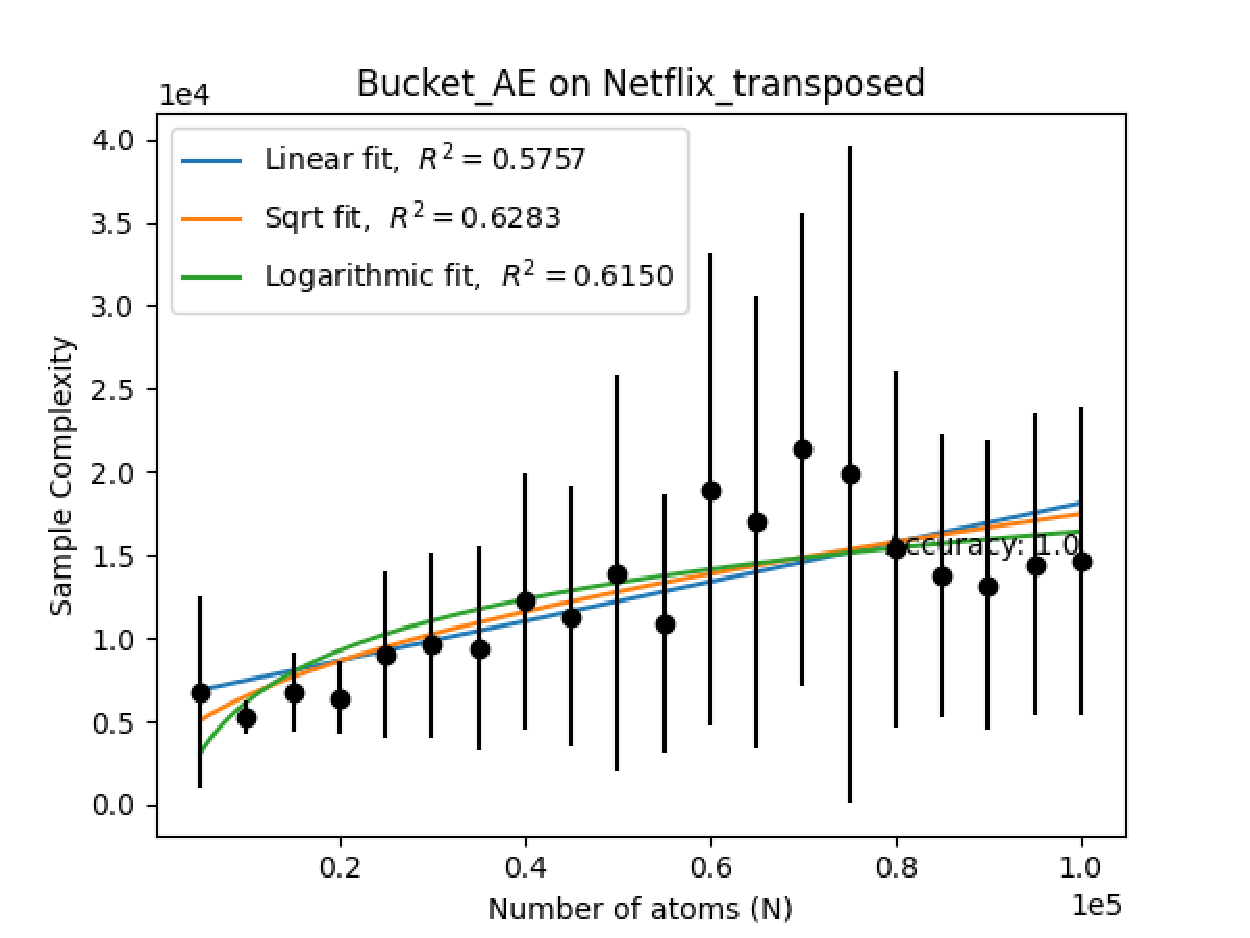}
        \label{fig:mips_app_4_bucket_scaling_n_nf}
    \end{subfigure}
    \begin{subfigure}{0.47\textwidth}
        \includegraphics[width=\linewidth]{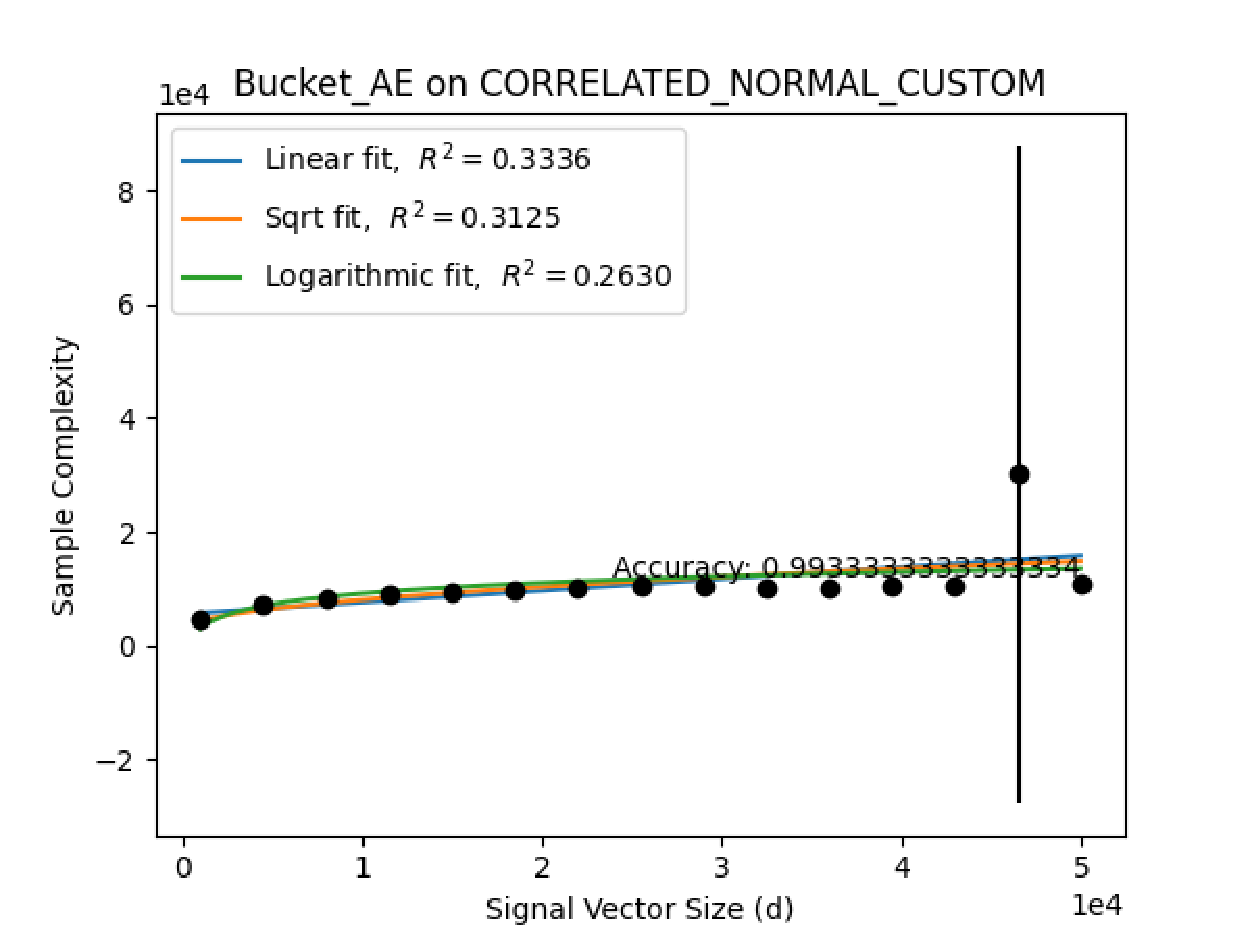}
        \label{fig:mips_app_4_bucket_scaling_d_cnc}
    \end{subfigure} \hfill
    \begin{subfigure}{0.47\textwidth}
        \includegraphics[width=\linewidth]{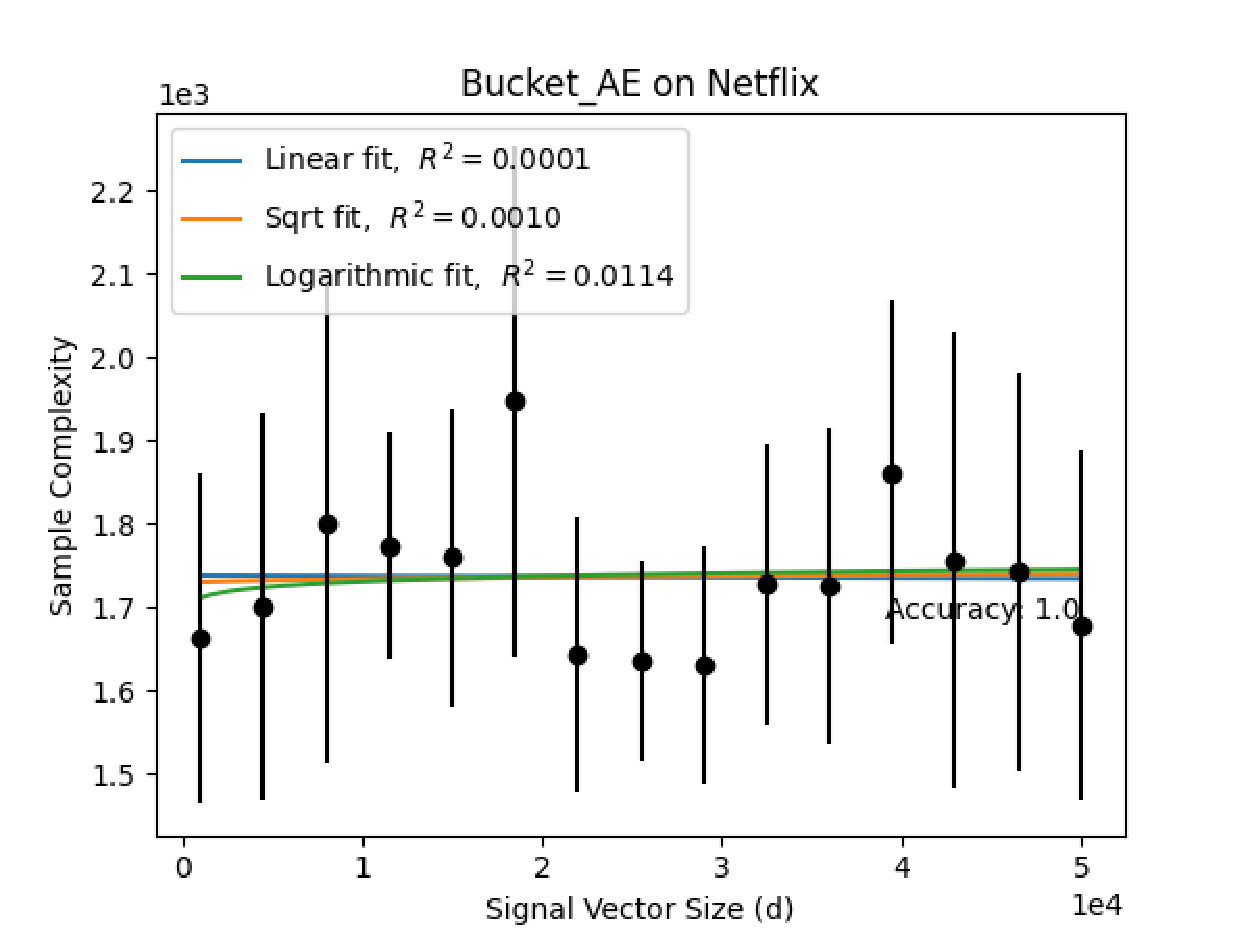}
        \label{fig:mips_app_4_bucket_scaling_d_nf}
    \end{subfigure}
\caption[Sample complexity of \algname and \texttt{Bucket\_AE} versus dataset size and dimensionality for several datasets]
{Top left and top right: sample complexity of \algname versus $n$ for the \texttt{CORRELATED\_NORMAL\_CUSTOM} and Netflix datasets. \algname scales linearly with $n$. Middle left and middle right: \algname with preprocessing \texttt{Bucket\_AE} scales sublinearly with $n$. This suggests that the form of preprocessing we apply is useful for reducing the complexity of our algorithm with $n$. Bottom left and bottom right: \texttt{Bucket\_AE} still scales as $O(1)$ with $d$. Means and uncertainties were obtained from 10 random seeds.}
\label{fig:mips_app_4_preprocessing}
\end{figure}
\end{center}
\section{Experiments High Dimensional Datasets and Application to Matching Pursuit}
\label{ch_6_3:mips_app_5_high_dim}

One of the advantages of \algname is that it has no dependence on dataset dimensionality when the necessary assumptions are satisfied. 
We also demonstrate the $O(1)$ scaling with $d$ of \algname explicitly on a high-dimensional synthetic dataset and discuss an application to the Matching Pursuit problem (MP).

\subsection{Matching Pursuit in High Dimensions: the \texttt{SimpleSong} Dataset}

\subsubsection{Description of the \texttt{SimpleSong} Dataset}
We construct a simple synthetic dataset, entitled the \texttt{SimpleSong} Dataset. In this dataset, the query and atoms are audio signals sampled at 44,100 Hz and each coordinate value represents the signal's amplitude at a given point in time. Common musical notes are represented as periodic sine waves with the frequencies given in Table \ref{table:mips_app_5_notes}.

\begin{table}
\centering
\begin{tabular}{l|l}
Note & Frequency (Hz) \\ \hline
C4   & 256            \\
E4   & 330            \\
G4   & 392            \\
C5   & 512            \\
E5   & 660            \\
G5   & 784            \\
\end{tabular}
\caption[Frequencies of various musical notes]{Frequencies of various musical notes.}
\label{table:mips_app_5_notes}
\end{table}

The query in this dataset is a simple song. The song is structured in 1 minute intervals, where the first interval -- called an A interval --  consists of a C4-E4-G4 chord and the second interval -- called a B interval --  consists of a G4-C5-E5 chord. The song is then repeated $t$ times, bringing its total length to $2t$ minutes. The dimensionality of the the signal is $d = 2t * 44,100 = 88,200t$. The weights of the C4, E4, and G4 waves in the A intervals and the G4, C5, and E5 waves in the B intervals are in the ratio 1:2:3:3:2.5:1.5. 

The atoms in this dataset are the sine waves corresponding to the notes with the frequencies show in Table \ref{table:mips_app_5_notes}, as well as notes of other frequencies. 

\subsubsection{Matching Pursuit and Fourier Transforms}

The Matching Pursuit problem (MP) is a problem in which a vector $\mathbf{q}$ is approximated as a linear combination of the atoms $\mathbf{v}_1, \ldots, \mathbf{v}_n$. A common algorithm for MP involves solving MIPS to find the atom $\mathbf{v}_{i^*}$ with the highest inner product with the query, subtracting the component of the query parallel to $\mathbf{v}_{i^*}$, and re-iterating this process with the residual. Such an approach solves MIPS several times as a subroutine. Thus, an algorithm which accelerates MIPS should also then accelerate MP. 

In the audio domain, we note that when the atoms $\mathbf{v}_1, \ldots, \mathbf{v}_n$ are periodic functions with predefined frequencies, MP becomes a form of Fourier analysis in which the atoms are the Fourier components and their inner products with the query correspond to Fourier coefficients. 
For a more detailed background on Fourier theory, we refer the reader to \cite{brighamFastFourierTransform1988}.

\subsubsection{Experimental Results}

For convenience, we restrict $t$ to be an integer in our experiments so a whole number of AB intervals are completed. We ran \algname with $\delta = \frac{1}{10,000}$ and $\sigma^2 = 6.25$ over $3$ random seeds for various values of $t$.

\algname is correctly able to recover the notes played in the song in order of decreasing strength: G4, C5, E4, E5, and C4 in each experiment. 
Furthermore, \algname is able to calculate their Fourier coefficients correctly.
Crucially, the complexity of \algname to identify these components does not scale with $d$, the length of the song.
Figure \ref{fig:mips_app_5_simple_song_scaling} demonstrates the total sample complexity of \algname to identify the first five Fourier components (five iterations of MIPS) of the song as the song length increases.

\begin{figure}
    \centering
    \includegraphics[scale=0.7]{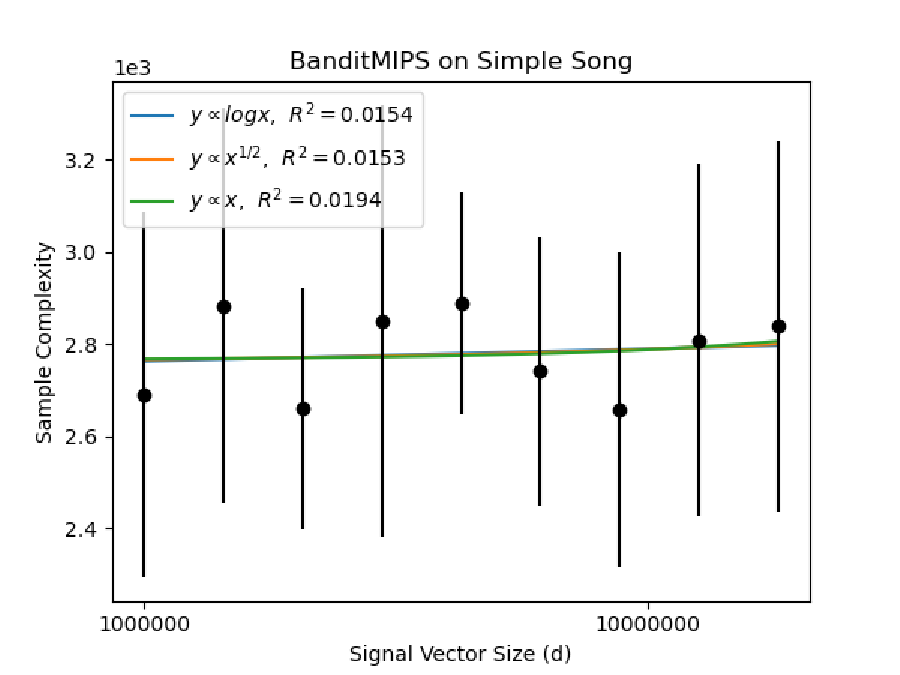}
    \caption[Sample complexity of Matching Pursuit when using \algname as a subroutine for MIPS on the \texttt{SimpleSong} dataset]
    {Sample complexity of MP when using \algname as a subroutine for MIPS on the \texttt{SimpleSong} dataset. The complexity of the solving the problem does not scale with the length of the song, $d$. Uncertainties and means were obtained from 3 random seeds. \algname returns the correct solution to MIPS in each trial.}
    \label{fig:mips_app_5_simple_song_scaling}
\end{figure}

Our approach may suggest an application to Fourier transforms, which aim to represent signals in terms of constituent signals with predetermined set of frequencies. We acknowledge, however, that Fourier analysis is a well-developed field and that further research is necessary to compare such a method to state-of-the-art Fourier transform methods, which may already be heavily optimized or sampling-based.

\section{\algname on a Highly Symmetric Dataset}
\label{ch_6_3:mips_app_6_symmetric}

In this section, we discuss a dataset on which the assumptions in Section \ref{subsec:mips_gaps} fail, namely when $\Delta$ scales with $d$. In this setting, \algname does not scale as $O(1)$ and instead scales linearly with $d$, as is expected.

We call this dataset the \texttt{SymmetricNormal} dataset. In this dataset, the signal has each coordinate drawn from $\mathcal{N}(0, 1)$ and each atom's coordinate is drawn i.i.d. from $\mathcal{N}(0, 1)$. Note that all atoms are therefore symmetric \textit{a priori}.

We now consider the quantity $\Delta_{i,j}(d) \coloneqq \mu_1(d) - \mu_2(d)$, i.e., the gap between the first and second arm, where our notation emphasizes we are studying each quantity as $d$ increases. 
Note that $\Delta_{i,j}(d) = \frac{\mathbf{v}_1^T q - \mathbf{v}_2^T q}{d}$. 
By the Central Limit Theorem, the sequence of random variables $\sqrt{d}\Delta_{i,j}(d)$ converges in distribution to $\mathcal{N}(0, \sigma^2_{i,j})$ for some constant $\sigma^2_{i,j}$.
Crucially, this implies that $\Delta_{i,j}(d)$ is on the order of $\frac{1}{\sqrt{d}}$. 

The complexity result from Theorem \ref{thm:mips_specific} then predicts that \algname scales linearly with $d$. Indeed, this is what we observe in Figure \ref{fig:mips_app_6_bm_sn}.

\begin{figure}
    \centering
    \includegraphics[scale=0.7]{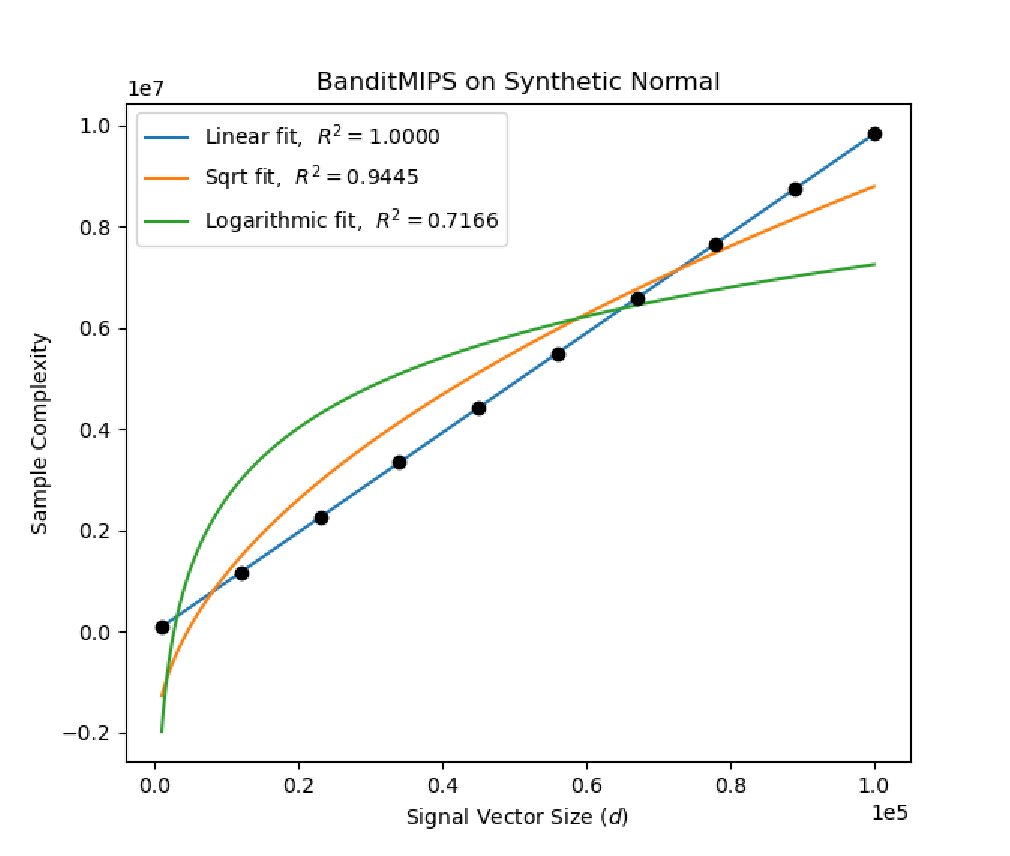}
    \caption[Sample complexity of \algname on the \texttt{SymmetricNormal} dataset]
    {Sample complexity of \algname on the \texttt{SymmetricNormal} dataset. The sample complexity of \algname is linear with $d$, as is expected. Uncertainties and means were obtained from 10 random seeds.}
    \label{fig:mips_app_6_bm_sn}
\end{figure}
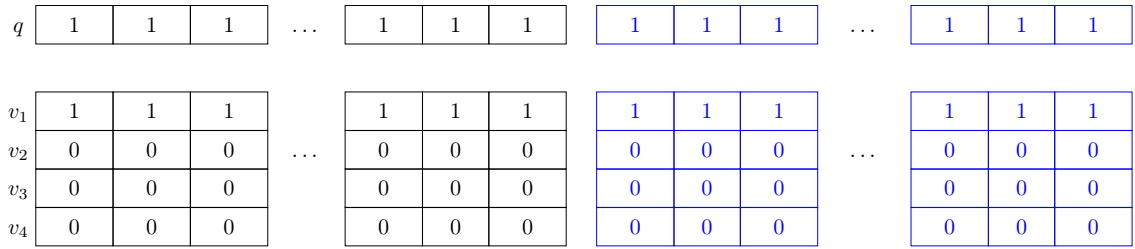
\begin{figure}
\noindent\resizebox{1\textwidth}{!}{\begin{tikzpicture}[cell/.style={rectangle,draw=black},
space/.style={minimum height=1.5em,matrix of nodes,row sep=-\pgflinewidth,column sep=-\pgflinewidth,column 1/.style={font=\ttfamily}},text depth=0.5ex,text height=2ex,nodes in empty cells]
+\matrix (first) [space, column 2/.style={minimum width=3em,nodes={cell,minimum width=3.5em}},column 3/.style={minimum width=3em,nodes={cell,minimum width=3.5em}}, column 4/.style={minimum width=3.5em,nodes={cell,minimum width=3.5em}},row 1/.style={row sep=0.75cm}, ampersand replacement=\&]
{
$q$   \& 1 \& 1 \& \node(a) {1}; \\   $v_1$ \& 1 \& 1 \& 1 \\   $v_2$ \& 0 \& 0 \& \node(c) {0}; \\  $v_3$ \& 0 \& 0 \& 0 \\ $v_4$ \& 0 \& 0 \& 0 \\  };

\matrix (second) [right=of first, space, column 2/.style={minimum width=3em,nodes={cell,minimum width=3.5em}},column 3/.style={nodes={cell,minimum width=3em}}, column 4/.style={minimum width=3.5em,nodes={cell,minimum width=3.5em}},row 1/.style={row sep=0.75cm}, ampersand replacement=\&] at (2,0)
{
   \& \node(b) {1}; \& 1 \& 1 \\  \& 1 \& 1 \& 1 \\  \& \node(d) {0}; \& 0 \& 0 \\  \& 0 \& 0 \& 0 \\ \& 0 \& 0 \& 0 \\  };

\path (a) -- node[auto=false]{\ldots} (b);
\path (c) -- node[auto=false]{\ldots} (d);

\matrix (third) [right=of second, space, column 2/.style={blue, minimum width=3em,nodes={cell,minimum width=3.5em}},column 3/.style={blue, nodes={cell,minimum width=3em}}, column 4/.style={blue, minimum width=3.5em,nodes={cell,minimum width=3.5em}}, cell/.append style={rectangle,draw=blue},row 1/.style={row sep=0.75cm}, ampersand replacement=\&] at (6,0)
{
   \& 1 \& 1 \& \node(e) {1}; \\  \& 1 \& 1 \& 1 \\  \& 0 \& 0 \& \node(g) {0}; \\  \& 0 \& 0 \& 0 \\ \& 0 \& 0 \& 0 \\  };

\matrix (fourth) [right=of third, space, column 2/.style={blue, minimum width=3em,nodes={cell,minimum width=3.5em}},column 3/.style={blue, nodes={cell,minimum width=3em}}, column 4/.style={blue, minimum width=3.5em,nodes={cell,minimum width=3.5em}}, cell/.append style={rectangle,draw=blue},row 1/.style={row sep=0.75cm}, ampersand replacement=\&] at (11,0)
{
   \& \node(f) {1}; \& 1 \& 1 \\  \& 1 \& 1 \& 1 \\  \& \node(h) {0}; \& 0 \& 0 \\  \& 0 \& 0 \& 0 \\ \& 0 \& 0 \& 0 \\  };

\path (e) -- node[auto=false]{\ldots} (f);
\path (g) -- node[auto=false]{\ldots} (h);

\end{tikzpicture}}
\caption[Toy example of the MIPS problem]
{A toy MIPS problem where the query vector $q$ and first atom $v_1$ consist of all $1$s, while every other atom is identically $0$. In this example, if the highest inner product atom can be identify from the first half of the coordinates (black), information from the second half of the coordinates (blue) is irrelevant. This toy example demonstrates why the scaling of BanditMIPS is $O(1)$ with respect to $d$. In this example, $\mu^* = 1$j, and $\mu_i = 0$ with $\Delta_i = 1$ $\forall i \neq 1$. None of these quantities depend on $d$.}
\label{fig:mips_app_7_toy_problem}
\end{figure}

A toy example of MIPS is shown in Figure \ref{fig:mips_app_7_toy_problem}.
A reasonable adaptive sampling algorithm should only depend on the average coordinate-wise gap $\Delta$ between the best and second-best atoms' products with the query vector.
Any such algorithm that is able to determine the highest inner product atom from only the first half of the dimensions need not sample the second half of the dimensions, provided the values in each half are drawn from similar distributions.
At first glance, the problem in Figure \ref{fig:mips_app_7_toy_problem} may seem artificially contrived.
However, we demonstrate that many real-world datasets follow a similar structure; we formalize the notion of this structure in Section \ref{ch4_5:mips_theory} and our algorithm in Section \ref{ch4_4:mips_algorithm}.

\bibliography{new_bib}

\end{document}